\documentclass{article}
\usepackage{multicol}


\usepackage[preprint]{neurips_2023}
\usepackage[table,xcdraw]{xcolor}
\usepackage{makecell}



\usepackage[utf8]{inputenc} 
\usepackage[T1]{fontenc}    
\usepackage{hyperref}       
\usepackage{url}            
\usepackage{booktabs}       
\usepackage{amsfonts}       
\usepackage{nicefrac}       
\usepackage{microtype}      
\usepackage{xcolor}         
\usepackage{amsmath}
\usepackage{amssymb}
\usepackage{mathtools}
\usepackage{amsthm}
\usepackage{amsfonts}       
\usepackage{amsmath}
\usepackage{amssymb}
\usepackage{amsfonts}
\usepackage{nicefrac}       
\usepackage{microtype}      
\usepackage{xcolor}         
\usepackage{graphicx}
\usepackage{enumitem}
\usepackage{multirow}
\usepackage{tikz}
\usepackage{tcolorbox}
\usepackage{wrapfig}
\setlength{\parskip}{0em}
\setlength{\parindent}{2em}
\newtcbox{\alertinline}[1][red]
  {on line, arc = 0pt, outer arc = 0pt,
    colback = #1!20!white, colframe = #1!50!black,
    boxsep = 0pt, left = 1pt, right = 1pt, top = 2pt, bottom = 2pt,
    boxrule = 0pt, bottomrule = 1pt, toprule = 1pt}

\usepackage{tikz}
\usepackage{tcolorbox}

\definecolor{myblue}{RGB}{0 114 199}
\definecolor{mylightblue}{RGB}{77 191 241}
\definecolor{darkgray}{HTML}{878787}
\definecolor{myorange}{RGB}{217 83 25}
\newtcolorbox{myorangebox}{colframe = myorange}
\newtcolorbox{mygraybox}{colframe = gray}
\newtcolorbox{mybluebox}{colframe = myblue}
\newcommand{\yplu}[1]{\textcolor{red}{(2prime: #1)}}
\newcommand{\honam}[1]{\textcolor{blue}{(Honam: #1)}}
\definecolor{fhcolor}{rgb}{0.523, 0.235, 0.625}

\theoremstyle{plain}
\newtheorem{theorem}{Theorem}[section]

\newtheorem{lemma}[theorem]{Lemma}

\theoremstyle{definition}
\newtheorem{definition}[theorem]{Definition}
\newtheorem{assumption}[theorem]{Assumption}
\theoremstyle{remark}
\DeclareMathOperator{\trace}{tr}

\DeclareMathOperator*{\argmin}{argmin}

\newtheorem{remark}{Remark}
\newtheorem{example}{Example}

\usepackage{url}

\title{Benign overfitting in Fixed Dimension via Physics-Informed Learning with Smooth Inductive Bias}

%

\author{%
 Honam Wong\\
Department of Computer Science and Engineering \\
The Hong Kong University of Science and Technology\\
\texttt{hnwongaf@connect.ust.hk}\\
\And
Wendao Wu\\ 
  School of Mathematical Sciences\\
  Peking University \\
  \texttt{wuwendao@stu.pku.edu.cn} \\
  \And
Fanghui Liu\\ 
  Department of Computer Science\\
  University of Warwick\\
  \texttt{fanghui.liu@warwick.ac.uk} \\
\And
  Yiping Lu\\ 
  Industrial Engineering \& Management Sciences\\
  Northwestern University\\
  \texttt{yiping.lu@northwestern.edu} \\
}

\begin{document}

\maketitle
\vspace{-0.4in}
\tableofcontents
\newpage

\begin{abstract}
Recent advances in machine learning have inspired a surge of research into reconstructing specific quantities of interest from measurements that comply with certain physical laws. These efforts focus on inverse problems that are governed by partial differential equations (PDEs). In this work, we develop an asymptotic Sobolev norm learning curve  for kernel ridge(less) regression when addressing (elliptical) linear inverse problems. Our results show that the PDE operators in the inverse problem can stabilize the variance and even behave benign overfitting for fixed-dimensional problems, exhibiting different behaviors from regression problems. Besides, our investigation also demonstrates the impact of various inductive biases introduced by minimizing different Sobolev norms as a form of implicit regularization. For the regularized least squares estimator, we find that all considered inductive biases can achieve the optimal convergence rate, provided the regularization parameter is appropriately chosen. The convergence rate is actually independent to the choice of (smooth enough) inductive bias for both ridge and ridgeless regression. Surprisingly, our smoothness requirement recovered the condition found in Bayesian setting and extend the conclusion to the minimum norm interpolation estimators.
\if 0
Recent advances in machine learning have inspired a surge of research into reconstructing specific quantities of interest from measurements that comply with certain physical laws. These efforts focus on inverse problems that are governed by partial differential equations (PDEs). In this work, we develop an asymptotic Sobolev norm learning curve for kernel ridge(less) regression when addressing (elliptical) linear inverse problems. Our results show that the PDE operators in the inverse problem can stabilize the variance and even behave benign overfitting for fixed-dimensional problems, exhibiting different behavior from regression problems. Furthermore, our investigation also demonstrates the impact of various inductive biases introduced by minimizing different Sobolev norms as a form of implicit regularization. The final convergence rate is independent to the choice of (smooth enough) inductive bias for both ridge and ridgeless regression. For the regularized least squares estimator, we find that all considered inductive biases can achieve the optimal convergence rate, provided the regularization parameter is appropriately chosen. Surprisingly, our smoothness requirement recovered the condition found in Bayesian setting and extend the conclusion to the minimum norm interpolation estimators.
\fi
\end{abstract}

\section{Introduction}
Inverse problems are widespread across science, medicine, and engineering, with research in this field yielding significant real-world impacts in medical
image reconstruction \cite{ronneberger2015u}, inverse scattering \cite{khoo2017solving} and 3D reconstruction \cite{sitzmann2020implicit}. The recent swift advancements in learning-based image generation present exciting possibilities in the field of inverse problems \cite{raissi2019physics,lu2021machine,nickl2020convergence}. In this paper, we study the statistical limit of machine learning methods for solving (elliptical) inverse problems. To be specific, we consider the problem of reconstructing a function from random sampled observations with statistical noise in measurements. When the observations are the direct observations of the function, the problem is a classical non-parametric function estimation \cite{de2005learning,tsybakov2004optimal}. Nevertheless, the observations may also come from certain physical laws described by a partial differential equation (PDE) \cite{stuart2010inverse,benning2018modern}. Since the linear inverse problem are always ill-posed, where a small noise in the observation can result in much larger errors in the answers. Further analysis \cite{knapik2011bayesian,nickl2020convergence,lu2021machine,lu2022sobolev,nickl2023bayesian,randrianarisoa2023variational} of how the ill-posed inverse problem would change the information-theoretical analysis is always needed. 

Formally, following the setting in \cite{lu2022sobolev}, we aim to reconstruct a function $f^*$ based on independently sampled data set $D=\{(x_i,y_i)\}_{i=1}^n$ from an unknown distribution $P$ on $\mathcal{X}\times \mathcal{Y}$, where $y_i$ is the noisy measurement of $f^*$ through a measurement procedure $\mathcal{A}$. For simplicity, we assume $\mathcal{A}$ is self-adjoint (elliptic) linear operator in this paper. The conditional mean function $u^*(x)=\mathbb{E}_P(Y|X=x)$ is the ground truth function for observation of $f^*$ through the measurement procedure $\mathcal{A}$, \emph{i.e.} $u^*=\mathcal{A}f^*$. Since the inverse problem $\mathcal{A}^{-1}$ is always ill-posed, thus directly inverse can be dangerous. Recently, over-parameterized machine learning models \cite{raissi2019physics,han2018solving,sirignano2018dgm} and interpolated estimators \cite{yang2021inference,chen2021solving} become success solutions to linear inverse problems and they can generalize well under noisy observation, \emph{i.e.}, benign overfitting \cite{bartlett2020benign,frei2022benign,cao2022benign,zhu2023benign}. Despite the success and popularity of adopting learning based approach for solving (linear) inverse problem, the following question still remains poorly answered:

\begin{center}
    \emph{Do over-parameterized models or interpolating estimators generalize effectively (i.e., exhibit benign overfitting) when addressing inverse problems? What are the conditions inherent to inverse problems that facilitate or impede benign overfitting?}
\end{center}
We further study the following question in our paper
\begin{center}
    \emph{How should one select the appropriate inductive bias for solving inverse problems? Additionally, how does inductive bias contribute to resolving linear inverse problems?}
\end{center}
{We provide affirmative answers to both questions. We discovered that the PDE operator in the inverse problem would stabilize the variance and leads to beginning over-fitting even in the fixed dimension setting. We also observed that inductive bias needs focus enough on the low frequency component to achieve best possible convergence rate}. To be specific, we consider a general class of norm, known as Reproducing Kernel Sobolev space (RKSS) \cite{fischer2020sobolev}, to quantize inductive bias in a certain space. The RKSS is a spectral transformed space with polynomial
transformation \cite{fischer2020sobolev,lu2022sobolev,zhai2024spectrally,zhang2023optimality,li2024asymptotic} which is a spectral characterization of Sobolev spaces \cite{fischer2020sobolev,adams2003sobolev}, which is widely used in characterizing the stability of (elliptic) inverse problems. Mathematically, given a non-negative real number $\beta > 0$, the $\beta$-power Sobolev space $\mathcal{H}^{\beta}$ associated with a kernel $K$ is defined as
\begin{equation*}
  \mathcal{H}^{\beta} := \left\{ \sum_{i \geq 1} a_i \lambda_i^{\beta/2} \psi_i: \sum_{i \geq 1} a_i^2 \leq \infty \right\} \subset L^2(\rho_\mathcal{X})\,,  \quad K(s,t)=\sum_{i=1}^\infty \lambda_i \psi_i(s)\psi_i(t)\,,
\end{equation*}
where $\psi_i$ is the eigenfunction of the kernel $K$ defined by the Mercer's spectral decomposition \cite{minh2006mercer,steinwart2008support}, where $\rho_\mathcal{X}$ is the marginal distribution of $P$ respect to $\mathcal{X}$  and $\mathcal{H}^{\beta}$ is equipped with the $\beta$-power norm via $\| \sum_{i \geq 1}{a_i \lambda_i^{\beta/2} \psi_i}\|_{\beta} := (\sum_{i\geq 1} a_i^2)^{1/2}$. 
Here $\beta \in [0,1]$ characterizes how much we are biased towards low frequency functions, see more details in Section~\ref{Appendix:notations}.
Regarding the learned model, we consider both regularized least square and minimum norm interpolation in this paper for solving the abstract inverse problem:

\begin{multicols}{2}
\begin{myorangebox}
\textbf{Regularized Least Square}\cite{knapik2011bayesian,nickl2020convergence,lu2022sobolev}:
\begin{equation}
 \begin{aligned}
 \hat f_\gamma:= &  \argmin_f \frac{1}{n} \sum_{i=1}^n\|{\color{blue}\mathcal{A}}f(x_i)-y_i\|^2\\&\qquad\qquad\qquad\qquad+\gamma_n\|f\|_{{\color{blue}H^\beta}}
    \end{aligned}
    \end{equation}
\end{myorangebox}

\begin{myorangebox}
\textbf{Minimum Norm Interpolation}\cite{chen2021solving,yang2021inference,sun2023manifold}:
\begin{equation}
 \begin{aligned}
\hat f:=   \argmin_f & \|f\|_{{\color{blue} H^\beta}}\\
\text{s.t.}&{\color{blue}\mathcal{A}}f(x_i)=y_i
    \end{aligned}
    \end{equation}
\vspace{0.18em}
\end{myorangebox}

\end{multicols}
\vspace{-0.1in}
In this paper, we have developed the generalization guarantees of Sobolev norm learning for both (Sobolev norm)-regularized least squares and minimum (Sobolev) norm interpolators in the context of elliptical linear inverse problems. 
Based on the derived results, we investigate the effects of various inductive biases (\emph{i.e.} $\beta$) that arise when minimizing different Sobolev norms. Minimizing these norms imposes an inductive bias from the machine learning algorithms.  In the case of the regularized least squares estimator, we demonstrate that all the smooth enough inductive biases are capable of achieving the optimal convergence rate, assuming the regularization parameter is selected correctly. Additionally, the choice of inductive bias does not influence the convergence rate for interpolators, e.g., the overparameterized/ridgeless estimators. This suggests that with a perfect spectrally transformed kernel, the convergent behavior of regression will not change. The only difference may occur when using empirical data to estimate the kernel, \emph{i.e.} under the semi-supervised learning setting \cite{zhou2008high,zhai2024spectrally}.

\subsection{Related Works}
\paragraph{Physics-informed Machine Learning:} Partial differential equations (PDEs) are widely used in many disciplines of science and engineering and play a prominent role in modeling and forecasting the dynamics of multiphysics and multiscale systems. The recent machine learning revolution transforming the computational sciences by enabling flexible, universal approximations for high-dimensional functions and functionals. This inspires researcher to tackle traditionally intractable high-dimensional partial differential equations via machine learning methods \cite{long2018pde,long2019pde,raissi2019physics,han2018solving,sirignano2018dgm,khoo2017solving,liu2020multi}. Theoretical convergence results for deep learning based PDE solvers has also received considerable attention recently. Specifically,  \cite{lu2021priori,grohs2020deep,marwah2021parametric,wojtowytsch2020some,xu2020finite,shin2020error,bai2021physics} investigated the regularity of PDEs approximated by a neural network and \cite{lu2021priori,luo2020two,duan2021convergence,jiao2021convergence,jiao2021error,jin2022minimax,doumeche2024physics} further provided generalization analyses. \cite{nickl2020convergence,lu2021machine,hutter2019minimax,manole2021plugin,huang2021solving,wang2023expert} provided information theoretical optimal lower and upper bounds for solving PDEs from random samples. However, previous analyses have concentrated on under-parameterized models, which do not accurately characterize large neural networks \cite{raissi2019physics,weinan2018deep} and interpolating estimators \cite{yang2021inference,chen2021solving}. Our analysis addresses this gap in theoretical research and provide the first unified upper bound from regularized least square estimators to beginning overfitted minimum norm interpolators under fixed dimensions.

\paragraph{Learning with kernel:} Supervised least square regression in RKHS has a long history and its generalization ability and mini-max optimality has been thoroughly studied \cite{caponnetto2007optimal,smale2007learning,de2005learning,rosasco2010learning,mendelson2010regularization}. The convergence of least square regression in Sobolev norm has been discussed recently in \cite{fischer2020sobolev,liu2020estimation,zhang2023optimality}. Recently, training neural networks with stochastic gradient descent in certain regimes has been found to be equivalent to kernel regression \cite{daniely2017sgd,lee2017deep,jacot2018neural}. Recently  \cite{lu2022sobolev,randrianarisoa2023variational,doumeche2024physics,randrianarisoa2023variational} use kernel based analysis to theoretically understand physics-informed  machine learning. Our work is different from this line of researches in two perspective. Firstly, we considered the family of spectrally transformed kernels \cite{zhai2024spectrally} to study how different smoothness inductive bias would affect the efficiency  of machine learning estimators. Secondly, We aim to analysis the statistical behavior of interpolators, e.g., overparameterized estimators. Thus we build the first rigorous upper bound for the excess risk of the min-norm interpolator in the fixed dimensional setting from benign overfitting to tempered overfitting in physics-informed machine learning.


\begin{table}[]
{\footnotesize
\centering
\begin{tabular}{c|>{\columncolor[HTML]{ECF4FF}}c |>{\columncolor[HTML]{ECF4FF}}c |>{\columncolor[HTML]{ECF4FF}}c |c | >{\columncolor[HTML]{ECF4FF}}c}
\hline\hline
\textbf{Param.} &  {$\lambda > 1$ }&  $r\in (0,1]$                    & $p<0$                          & $\mathcal{H}_{\beta}$  & $\mathcal{H}_{\beta'}$                   \\ 
\hline\hline
          &\makecell[t]{Eigendecay of\\ Kernel Matrix\\\emph{\scriptsize(Capacity Condition)}}& \makecell[t]{Smoothness of the\\ ground truth solution\\\emph{\scriptsize(Source Condition)}} & \makecell[t]{Order of the\\ Inverse Problem\\\emph{\scriptsize(Capacity Condition on $\mathcal{A}$)}} & \makecell[t]{norm used for\\ regularization\\$\beta \in [0,1]$ } & \makecell[t]{norm used for\\ evaluation\\$\beta' \in [0, \beta]$} \\ \hline
\end{tabular}
}
\caption{The parameters $\lambda$, $r$, $p$, ${\mathcal{H}_{\beta}}$ and ${\mathcal{H}_{\beta'}}$ are used to describe our problem. The blue-shaded blocks, $\lambda,r,p$ and $\beta'$, represent the parameters that are employed to characterize the inverse problem task, which should influence the minimax optimal risk.}
\vspace{-0.3in}
\label{label:parameter}
\end{table}

\subsection{Contribution and Technical Challenges}
\begin{itemize}[itemindent=0.5cm,leftmargin=0.5cm]
    \item Instead of considering regularizing RKHS norm \cite{lu2022sobolev,randrianarisoa2023variational} or interpolation while minimizing RKHS norm \cite{barzilai2023generalization,cheng2024characterizing}, we consider (implicit) regularization using a Kernel Sobolev norm \cite{fischer2020sobolev} or spectrally transformed kernel \cite{zhai2024spectrally}. Under such setting, we aim to study how different inductive bias will change the statistical properties of estimators. To this end, we derived the closed form solution for spectrally transformed kernel \cite{zhai2024spectrally} estimators for linear inverse problem via a generalized Representer theorem for inverse problem \cite{unser2021unifying} and extend previous non-asymptotic benigning overfitting bounds \cite{bartlett2020benign,cheng2024characterizing,barzilai2023generalization} to operator and inverse problem setting.
    \item Our non-asymptotic bound can cover both regularized and minimum norm interpolation estimators for solving (linear) inverse problems. For the regularized case, we recovered the minmax optimal rate for linear inverse problem presented in \cite{lu2022sobolev}. We provide the first rigorous upper bound for the excess risk of the min-norm interpolator in the fixed dimensional setting from benign overiftting to tempered overifting, and catastrophic overiftting in Physics-informed machine learning. \emph{\textbf{Our results show that the PDE operators in inverse problems possess the capability to stabilize variance and remarkably benign overfitting, even for problems with a fixed number of dimensions, a trait that distinguishes them from regression problems.}}
    
    \item  Our target is to examine the effects of various inductive biases that arise from minimizing different Sobolev norms, which serve as a form of inductive bias imposed by the machine learning algorithms. For regularized regression in fixed dimension, traditional research \cite{fischer2020sobolev,lu2022sobolev,guastavino2020convergence} show that proper regularized least square regression can achieve minimax optimal excess risk with \emph{\textbf{smooth enough}} implicit regularization of arbitrary spectral decay. Our bound concrete the similar phenomenon happens in the overparamterization/interpolate estimators where \textbf{\emph{the choice of smooth enough inductive bias also does not affect convergence speed}}. {The smoothness requirement of implicit bias $\beta$ should satisfies $\lambda\beta\ge\frac{\lambda r}{2}-p$, where $r$ is the smoothness of the target function (characterized by the source condition), $\lambda$ is the spectral decay of the kernel operator and $p$ is the order the elliptical inverse problem, see Table~\ref{label:parameter} for details.} Under the function estimation setting, the selection matches the empirical understanding in semi-supervised learning \cite{zhou2008high,zhou2011semi,smola2003kernels,chapelle2002cluster,dong2020cure,zhai2024spectrally} and \emph{\textbf{theoretically surprisingly matches the smoothness threshold deteremined for the Bayesian Inverse problems}} \cite{knapik2011bayesian,szabo2013empirical}.
\end{itemize}



\section{Preliminaries, Notations, and Assumptions}
\label{Appendix:notations}

In this section, we introduce the necessary notations and preliminaries for reproducing kernel Hilbert space (RKHS), including Mercer's decomposition, the integral operator techniques \cite{smale2007learning,de2005learning,caponnetto2007optimal,fischer2020sobolev,rosasco2010learning} and the relationship between RKHS and the Sobolev space \cite{adams2003sobolev}. 
The required assumptions are also introduced in this section.

We consider a Hilbert space $\mathcal{H}$ with inner product $\left<\cdot,\cdot\right>_{\mathcal{H}}$ is a separable Hilbert space of functions $\mathcal{H}\subset \mathbb{R}^{\mathcal{X}} $. We call this space a Reproducing Kernel Hilbert space if $f(x)=\left<f,K_x\right>_{\mathcal{H}}$ for all $K_x\in\mathcal{H}:t\rightarrow K(x,t),x \in \mathcal{X}$. Now we consider a distribution $\rho$ on $\mathcal{X}\times \mathcal{Y} (\mathcal{Y}\subset \mathbb{R})$ and denote $\rho_X$ as the margin distribution of $\rho$ on $\mathcal{X}$. We further assume $\mathbb{E}[K(x,x)]<\infty$ and $\mathbb{E}[Y^2]<\infty$. We define $g\otimes h=gh^\top$ is an operator from $\mathcal{H}$ to $\mathcal{H}$ defined as
$
g\otimes h:f\rightarrow \left<f,h\right>_{\mathcal{H}}g.
$ The integral operator technique \cite{smale2007learning,caponnetto2007optimal} consider the covariance operator on the Hilbert space $\mathcal{H}$ defined as $\Sigma = \mathbb{E}_{\rho_\mathcal{X}}K_x\otimes K_x$. Then for all $f\in\mathcal{H}$, using the reproducing property, we know that
$$
(\Sigma f)(z)
=\left<K_z,\Sigma f\right>_{\mathcal{H}}=\mathbb{E}[f(X)K(X,z)]=\mathbb{E}[f(X)K_z(X)].
$$

If we consider the mapping $S:\mathcal{H}\rightarrow L_2(\rho_\mathcal{X})$ defined as a parameterization of a vast class of functions in $\mathbb{R}^{\mathcal{X}}$ via $\mathcal{H}$ through the mapping $(Sg)(x)=\left<g,K_x\right>$ 
Its adjoint operator $S^\ast$ then can be defined as $S^\ast:\mathcal{L}_2\rightarrow\mathcal{H}: g\rightarrow \int_\mathcal{X}g(x)K_x \rho_X(dx)$. At the same time $\Sigma=SS^\ast$ is the same as the self-adjoint operator $S^\ast S$.  We further define the empirical operator $\hat{S}_n: \mathcal{H} \to \mathbb{R}^n$ as $\hat{S}_n f := (\langle f, K_{x_1} \rangle, \cdots, \langle f, K_{x_n} \rangle)$ and $\hat{S}^{*}_n: \mathbb{R}^n \to \mathcal{H}$ as $\hat{S}^{*}_n \theta = \sum_{i=1}^{n}{\theta_i K_{x_i}}$, then we know $\hat{S}_n \hat{S}^{*}_{n}: \mathbb{R}^n \to \mathbb{R}^n$ is the Kernel Matrix we denote it as $\hat{K}$, and $\frac{1}{n}\hat{S}^{*}_{n} \hat{S}_n: \mathcal{H} \to \mathcal{H}$ is the empirical covariance operator $\hat{\Sigma}$. 

Next we consider the eigen-decomposition of the integral operator $\mathcal{L}$ to construct the feature map mapping via Mercer's Theorem. There exists an orthogonal basis $\{\psi_i\}$ of $\mathcal{L}_2(\rho_{\mathcal{X}})$ consisting of eigenfunctions of kernel integral operator $\mathcal{L}$. The kernel function have the following representation 
\begin{equation}\label{eq:kernel}
  K(s,t)=\sum_{i=1}^\infty \lambda_i \psi_i(s)\psi_i(t).  
\end{equation}
where $\psi_i$ are orthogonal basis of $\mathcal{L}_2(\rho_\mathcal{X})$. Then $\psi_i$ is also the eigenvector of the covariance operator $\Sigma$ with eigenvalue $\lambda_i>0$, \emph{i.e.} $\Sigma \psi_i = \lambda_i \psi_i$.

Following the \cite{bartlett2020benign,cheng2024characterizing,barzilai2023generalization,tsigler2023benign}, we conduct the theoretical analysis using spectral decomposition. Thus, in this paper, we define the spectral feature map $\phi: \mathcal{H} \to \mathbb{R}^\infty$ via $\phi f:=(\left<f,\phi_i\right>_\mathcal{H})_{i=1}^\infty$ where $\phi_i=\sqrt{\lambda_i}\psi_i$ which forms an orthogonal basis of the reproducing Kernel Hilbert space. Then $\phi^*: \mathbb{R}^\infty \to \mathcal{H}$ takes $\theta$ to $\sum_{i=1}^{\infty}{\theta_i \phi_i}$. Then $\phi^* \phi = id:\mathcal{H}\rightarrow\mathcal{H}$, $\phi \phi^* = id: \mathbb{R}^{\infty}\rightarrow\mathbb{R}^{\infty}$. $\phi$ is an isometry i.e. for any function $f$ in $\mathcal{H}$ we have $\|f\|_{\mathcal{H}}^2 = \|\phi f\|^2$. Similarly we also define $\psi: \mathcal{H} \to \mathbb{R}^{\infty}$ via $\psi f := (\langle f, \psi_i \rangle_{\mathcal{H}})_{i=1}^{\infty}$, the motivation of defining this is this can simplify our computation in the lemmas, we define $\psi^*: \mathbb{R}^\infty \to \mathcal{H}$ takes $\theta$ to $\sum_{i=1}^{\infty}{\theta_i \psi_i}$.

We then define the operator $\Lambda_{\mathcal{X}}:\mathbb{R}^\infty\rightarrow\mathbb{R}^\infty$ corresponding to $\mathcal{X}$ is the operator such that $\mathcal{X}=\phi^* \Lambda_{\mathcal{X}} \phi$, which implies $\Lambda_{\mathcal{X}\mathcal{Y}}=\Lambda_{\mathcal{X}}\Lambda_{\mathcal{Y}}$. Followed by our notation, we can simplify the relationship between $\phi$ and $\psi$ as $\phi = \Lambda_{\Sigma}^{1/2} \psi$ and $\phi^* = \psi^* \Lambda_{\Sigma}^{1/2}$. 



\begin{definition}[Sobolev Norm]
    \label{definition:sobolev_norm}
    For $\beta > 0$, the $\beta$-power Reproducing Kernel Sobolev Space is
    $$\mathcal{H}^{\beta} := \{ \sum_{i \geq 1} a_i \lambda_i^{\beta/2} \psi_i: \sum_{i \geq 1} a_i^2 \leq \infty \} \subset L^2(\rho_\mathcal{X}),$$
    equipped with the $\beta$-power norm via $\| \sum_{i \geq 1}{a_i \lambda_i^{\beta/2} \psi_i}\|_{\beta} := (\sum_{i\geq 1} a_i^2)^{1/2}$.
\end{definition}
As shown in \cite{fischer2020sobolev}, $\mathcal{H}^\beta$ is an interpolation between Reproducing Kernel Hilbert Space and $\mathcal{L}_2$ space. Formally, $\|\mathcal{L}^{\beta/2}f\|_\beta=\|f\|_{L_2}$ where  $\mathcal{L}=SS^\ast$ and $\|f\|_\beta = \|\Sigma^{\frac{1-\beta}{2}}f\|_\mathcal{H}$ for $0\leq \beta \leq 1$. Thus when $\beta=1$, the $\mathcal{H}^\beta$ is the same as Reproducing Kernel Hilbert Space and  when $\beta=0$ the $\mathcal{H}^\beta$ is the same as $\mathcal{L}_2$ space. Reproducing Kernel Sobolev Space is introduced to characterize the misspecification in kernel regression \cite{zhang2023optimality,kanagawa2016convergence,pillaud2018statistical,steinwart2009optimal}. In our paper we use it as spectral charaterization of Sobolev space \cite{adams2003sobolev,wendland2004scattered} which is the most natural function space for PDE analysis.

\begin{assumption}[Assumptions on Kernel and Target Function]
\setlength{\itemsep}{0pt}
\setlength{\parsep}{0pt}
\setlength{\parskip}{0pt}
\label{assumption:kernel} We assume the standard capacity condition on kernel covariance operator with a source condition about the regularity of the target function following \cite{caponnetto2007optimal} and assumption of the inverse problem following \cite{lu2022sobolev}. These conditions are stated explicitly below:
\begin{itemize}[itemindent=0.5cm,leftmargin=0.5cm]
\setlength{\itemsep}{0pt}
\setlength{\parsep}{0pt}
\setlength{\parskip}{0pt}
\item \textbf{(a) Assumptions on boundedness.} 
The kernel feature are bounded almost surely, \emph{i.e.} $|k(x,y)|\le R$ and the observation $y$ is also bounded by $M$ almost surely.
\item \textbf{(b) Capacity condition.} Consider the spectral representation of the kernel covariance operator $\Sigma=\sum_i \lambda_i \psi_i\otimes \psi_i$, we assume polynomial decay of eigenvalues of the covariance matrix $\lambda_i\propto i^{-\lambda}$ for some $\lambda>1$.
This assumption satisfies for many useful kernels in the literature such as \cite{minh2006mercer}, neural tangent kernels \cite{bietti2020deep,chen2020deep}.

\item \textbf{(c) Source condition.} We also impose an assumption on the smoothness of the true function. There exists $r\in(0,1]$ such that $f^\ast=\mathcal{L}^{r/2}\phi$ for some $\phi\in L^2$. If $f^\ast(x)=\left<\theta_\ast,K_x\right>_{\mathcal{H}}$, the source condition can also be written as $\|\Sigma^{\frac{1-r}{2}}\theta_\ast\|_{\mathcal{H}}<\infty.$ The source condition can be understood as the target function lies in the $r$-power Sobolev space.
\end{itemize}
\end{assumption}
\begin{assumption}[Capacity conditions on $\mathcal{A}$] For theoretical simplicity, following \cite{knapik2011bayesian,cabannes2021overcoming,de2021convergence,lu2022sobolev}, we assume that the self-adjoint operator  $\mathcal{A}$ are co-diagonalizable in the same orthonormal basis $\psi_i$ with $\Sigma$. Thus we can assume $\mathcal{A} = \sum_{i=1}^\infty p_i\psi_i\otimes \psi_i,$, with $p_i\propto i^{-p}$ and $p < 0$, which holds when $\mathcal{A}$ is a differential operator. 
\end{assumption}
\begin{remark} 
    We provide several common examples in physics-informed learning that satisfy the co-diagonalizable assumption. The simplest example is $\mathcal{A} = \text{id}$ which corresponds to the function estimation setting. For solving PDE numerically we oftern take $\mathcal{A} = \Delta^k$, where
    the parameter $p$ in (d) is used to characterize the order of PDE. When the domain is sphere and the data distribution is uniform, the eigenfunctions are spherical harmonics, which are also the eigenfunctions of a wide class of kernels, including dot product kernels and the Neural Tangent Kernel \cite{jacot2018neural}. When the domain is torus, the eigen-functions are Fourier modes, and eigenfunctions of shift-invariant kernel are also Fourier modes by Bochner's theorem \footnote{We consider shift-invariant kernel $K(x,y) = \psi (x - y)$, from Bochner's theorem we have $K(x,y) = \sum_{i=1}^{n} \tilde{\psi} (w) e^{iws} e^{-wt}$. }. 
    We also demonstrate that empirically our findings still hold beyond co-diagonalization assumption  (Appendix~\ref{appendix:exp}).
\end{remark}


\vspace{-0.05in}

\begin{example}[Schr\"{o}dinger equation on a Hypercube] Consider solving Schr\"{o}dinger equation on a hypercube $-\Delta u + u = f$ on $\mathbb{T}^d=[0,1]_{\text{per}}^{d}$, where $\Delta$ is the Laplacian operator. To solve the Schr\"{o}dinger equation, one observe collocation points $x_i$ uniformly sampled from $\mathbb{T}^d$ with associated function values $y_i = f(x_i) + \varepsilon_i$ ($1 \leq i \leq n$) where $\varepsilon_i$ is a mean-zero i.i.d observational noise. 
\label{schrodinger:example}
\end{example}

\paragraph{Decomposition of Signals} Following \cite{Bartlett_2020,tsigler2023benign,cheng2024characterizing}, we decompose the risk estimation to the "low dimension" part which concentrates well and "higher dimension" part which performs as regularization. We define the decomposition operations in this paragraph. We first additionally define $\phi_{\leq k}: f \mapsto (\langle f, \phi_i \rangle_{\mathcal{H}})_{i = 1}^{k}$ which maps $\mathcal{H}$ to it's "low dimensional" features in $\mathbb{R}^{k}$, it intuitively means casting $f \in \mathcal{H}$ to its top $k$ features, similarly we can define $\phi_{>k}: f \mapsto (\langle f, \phi_i \rangle_{\mathcal{H}})_{i = k+1}^{\infty}$. We also define  $\phi^{*}_{\leq k}$ takes $\theta \in \mathbb{R}^k$ to $\sum_{i=1}^{k} \theta_i \phi_i$, similarly we can define $\phi^{*}_{>k}$ takes $\theta \in \mathbb{R}^{\infty}$ to $\sum_{i=k+1}^{\infty} \theta_{i-k} \phi_i$. For function $f \in \mathcal{H}$, we also define $f_{\leq k} := \phi_{\leq k}^* \phi_{\leq k} f = \sum_{i=1}^{k}{\langle f, \phi_i \rangle_{\mathcal{H}} \phi_i}$ which intuitively means only preserving the top $k$ features, for operator $\mathcal{A}: \mathcal{H} \to \mathcal{H}$, we also define $\mathcal{A}_{\leq k}: f \mapsto (\mathcal{A} f)_{\leq k}$. Similarly we could define $f_{>k}$ and $\mathcal{A}_{>k}$. We could show the decomposition $f = f_{\leq k} + f_{>k}$ and $\mathcal{A} = \mathcal{A}_{\leq k} + \mathcal{A}_{>k}$ holds for both signal and operators which is formally proved in  Lemma \ref{lemma:decomposition} in the appendix. 

We use $\| \cdot \|$ to denote standard $l^2$ norm for vectors, and operator norm for operators. We also use standard big-O notation $O(\cdot), o(\cdot), \Omega(\cdot), \tilde{O}(\cdot)$ (ignore logarithimic terms).
\section{Main Theorem: Excess Risk of Kernel Estimator for Inverse Problem}

Using the notations in Section \ref{Appendix:notations}, we can reformulate the data generating process as $y = \hat{S}_n \mathcal{A} f^* + \varepsilon$, where $y\in\mathbb{R}^n$ is the label we observed on the $n$ data points $\{x_i\}_{i=1}^n$, $f^*$ is the ground truth function and $\varepsilon \in \mathbb{R}^n$ is the noise. We first provide closed form solutions to ridge regression via the recently developed generalized representer theorem for inverse problem \cite{unser2021unifying}.
\begin{myorangebox}
\begin{lemma} The least square problem regularized by Reproducing Kernel Sobolev Norm 
\begin{equation}
    \label{eq:objective}
   \hat f_\gamma :=\argmin_{f\in\mathcal{H}^\beta} \frac{1}{n}\|\hat S_n\mathcal{A}f-y\|^2+\gamma_n \|f\|_{\mathcal{H}^\beta}^2.
\end{equation}
has the finite-dimensional representable closed form solution $\hat{f} = \mathcal{A}\Sigma^{\beta - 1}\hat{S}^{*}_n\hat{\theta}_n$  where
\begin{align*}
    \hat{\theta}_n &:= \underbrace{(\hat{S}_n \mathcal{A}^2 \Sigma^{\beta - 1} \hat{S}_n^* + n\gamma_n I}_{\tilde{K}^{\gamma}})^{-1}  y \in\mathbb{R}^n\ .
\end{align*}

\end{lemma}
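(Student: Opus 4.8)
The plan is to reduce the infinite-dimensional Tikhonov problem \eqref{eq:objective} to a finite-dimensional quadratic in $\mathbb{R}^n$ by a representer-type argument, and then solve that quadratic in closed form. First I would rewrite the data-fit term so that its dependence on $f$ is through $n$ linear functionals: since $\mathcal{A}$ is self-adjoint, $\hat S_n\mathcal{A}f = \big(\langle f,\mathcal{A}K_{x_i}\rangle_{\mathcal H}\big)_{i=1}^n$, so the objective depends on $f$ only through the vector $(\langle f,\mathcal{A}K_{x_i}\rangle)_i$ and the penalty $\|f\|_{\mathcal H^\beta}^2 = \|\Sigma^{(1-\beta)/2}f\|_{\mathcal H}^2$ (Definition~\ref{definition:sobolev_norm}). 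Writing $L:=\Sigma^{(1-\beta)/2}$, this is exactly the setting of the generalized representer theorem for inverse problems of \cite{unser2021unifying}, with regularization operator $L$ and measurement functionals $f\mapsto\langle f,\mathcal A K_{x_i}\rangle$.

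Second, I would invoke (or re-derive by splitting $\mathcal H^\beta = V\oplus V^\perp$, orthogonal in the inner product $\langle L\cdot,L\cdot\rangle_{\mathcal H}$, with $V := (L^*L)^{-1}\operatorname{span}\{\mathcal A K_{x_1},\dots,\mathcal A K_{x_n}\}$) the conclusion that the unique minimizer has the form $\hat f_\gamma = (L^*L)^{-1}\sum_{i=1}^n \hat\theta_i\,\mathcal A K_{x_i} = \Sigma^{\beta-1}\mathcal A\hat S_n^*\hat\theta_n$ for some $\hat\theta_n\in\mathbb R^n$: any component of $f$ in $V^\perp$ leaves every $\langle f,\mathcal A K_{x_i}\rangle$ unchanged while strictly increasing $\|Lf\|_{\mathcal H}^2$, hence must vanish at the optimum. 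Using the co-diagonalizability of $\mathcal A$ and $\Sigma$ I would commute the factors, $\Sigma^{\beta-1}\mathcal A = \mathcal A\Sigma^{\beta-1}$, to match the stated form $\hat f = \mathcal A\Sigma^{\beta-1}\hat S_n^*\hat\theta_n$.

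Third, I would substitute this ansatz back into \eqref{eq:objective}. The fit term becomes $\tfrac1n\|\hat S_n\mathcal A\,\mathcal A\Sigma^{\beta-1}\hat S_n^*\hat\theta_n - y\|^2 = \tfrac1n\|\tilde K\hat\theta_n - y\|^2$ with the $n\times n$ matrix $\tilde K := \hat S_n\mathcal A^2\Sigma^{\beta-1}\hat S_n^*\succeq 0$, and the penalty becomes $\|L\mathcal A\Sigma^{\beta-1}\hat S_n^*\hat\theta_n\|_{\mathcal H}^2 = \|\Sigma^{(\beta-1)/2}\mathcal A\hat S_n^*\hat\theta_n\|_{\mathcal H}^2 = \langle\hat\theta_n,\hat S_n\mathcal A^2\Sigma^{\beta-1}\hat S_n^*\hat\theta_n\rangle = \hat\theta_n^\top\tilde K\hat\theta_n$, again using self-adjointness and co-diagonalizability. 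Minimizing $\tfrac1n\|\tilde K\theta - y\|^2 + \gamma_n\theta^\top\tilde K\theta$ over $\theta\in\mathbb R^n$, the stationarity condition factors as $\tilde K\big((\tilde K + n\gamma_n I)\theta - y\big) = 0$; since $\gamma_n>0$ makes $\tilde K^{\gamma} := \tilde K + n\gamma_n I$ invertible, $\theta = (\tilde K^{\gamma})^{-1}y$ satisfies it, and by strict convexity of the original objective in $f$ it reproduces the unique minimizer, giving the claimed $\hat\theta_n$.

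The main obstacle is making the representer step rigorous when $\beta<1$: then $L^{-1}=\Sigma^{(\beta-1)/2}$ and, for a genuine differential operator, $\mathcal A$ are unbounded, so $\mathcal A K_{x_i}$ need not lie in $\mathcal H$ and the finiteness of the entries of $\tilde K = \hat S_n\mathcal A^2\Sigma^{\beta-1}\hat S_n^*$ is not automatic — it relies on the compatibility between the eigendecay $\lambda$, the operator order $p$, and the smoothing index $\beta-1$ built into Assumption~\ref{assumption:kernel} and the capacity condition on $\mathcal A$. The careful version works in the spectral coordinates $\phi$ of Section~\ref{Appendix:notations}: set $w=\phi f$, let $Z$ be the operator whose rows are $(\phi K_{x_i})^\top$, reparametrize $v=\Lambda_{\Sigma}^{(1-\beta)/2}w$ so the penalty is $\|v\|^2$ and the measurement map is $M:=Z\Lambda_{\mathcal A}\Lambda_{\Sigma}^{(\beta-1)/2}$, verify $M$ is a bounded map $\ell^2\to\mathbb R^n$ under the assumptions, solve the ordinary ridge problem there via the normal equations together with the push-through identity $(M^\top M+n\gamma_n I)^{-1}M^\top = M^\top(MM^\top + n\gamma_n I)^{-1}$, and map back through $\phi^*$; one also needs $\mathcal A$ densely defined and self-adjoint with the stated spectral form so that the adjoint manipulations and the identity $\mathcal A\Sigma^{\beta-1}\mathcal A = \mathcal A^2\Sigma^{\beta-1}$ are legitimate.
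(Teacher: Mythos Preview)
Your proposal is correct and follows essentially the same approach as the paper: both invoke the generalized representer theorem of \cite{unser2021unifying} to obtain the finite-dimensional form $\hat f=\mathcal A\Sigma^{\beta-1}\hat S_n^*\hat\theta_n$, then substitute back and solve the resulting quadratic in $\mathbb R^n$. The only cosmetic difference is that the paper first changes variables $g=\Sigma^{(1-\beta)/2}f$ so the penalty becomes the standard $\|\cdot\|_{\mathcal H}$ norm before applying the representer theorem, whereas you keep the regularization operator $L=\Sigma^{(1-\beta)/2}$ explicit; your added remarks on the unboundedness issues for $\beta<1$ and the spectral-coordinate workaround go beyond what the paper spells out.
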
 
\end{myorangebox}

\begin{proof} As mentioned in Definition \ref{definition:sobolev_norm}, we have
$\|f\|_{\mathcal{H}^{\beta}}=\|\Sigma^{\frac{1 - \beta}{2}} f\|_{\mathcal{H}}$ thus we can rewrite the objective function (\ref{eq:objective}) as 
$$\hat f_\gamma=\argmin \frac{1}{n} \|\hat{S}_n \mathcal{A} f-y\|^2 + \gamma_n \| \Sigma^{\frac{1 - \beta}{2}} f\|_{\mathcal{H}}\Leftrightarrow \Sigma^{\frac{1 - \beta}{2}}\hat f_\gamma=\argmin \frac{1}{n}\| \hat{S}_n \mathcal{A}\Sigma^{\frac{\beta-1}{2}} g-y\|^2 + \gamma_n \| g\|_{\mathcal{H}}.$$

By representer theorem for inverse problem \cite{unser2021unifying}, the solution of the optimization problem $g_\gamma=\arg\min \frac{1}{n}\| \hat{S}_n \mathcal{A}\Sigma^{\frac{\beta-1}{2}} g-y\|^2 + \gamma_n \| g\|_{\mathcal{H}}$ have the finite dimensional representation that $g_\gamma =\mathcal{A} \Sigma^{\frac{\beta-1}{2}} \hat{S}^{*}_{n} \hat{\theta}_n $ for some $\hat\theta_n\in \mathbb{R}^n$. Then we know the $\hat f_\gamma = \Sigma^{\frac{\beta-1}{2}}g_\gamma = \mathcal{A} \Sigma^{\beta - 1} \hat{S}^{*}_{n} \hat{\theta}_n$, for some $\hat\theta_n \in \mathbb{R}^n$. Plug the finite dimensional representation of $\hat f_\gamma$ to objective function (\ref{eq:objective}) thus we have
 $$
 \hat{\theta}_n=\argmin_{\theta_n\in\mathbb{R}^n} \frac{1}{n}\|\hat{S}_n\mathcal{A}^2 \Sigma^{\beta - 1} \hat{S}^{*}_{n} \hat{\theta}_n-y\|^2+\gamma_n \|\Sigma^{\frac{1-\beta}{2}} \mathcal{A} \Sigma^{{\beta-1}} \hat{S}^{*}_{n}\theta_n\|_{\mathcal{H}}^2.
 $$
Thus we have  $\hat{\theta}_n = (\hat{S}_n \mathcal{A}^2  \Sigma^{\beta-1} \hat{S}_n^*\hat{S}_n \mathcal{A}^2  \Sigma^{\frac{1 - \beta}{2}} \hat{S}_n^* + \gamma_n \hat{S}_n \mathcal{A}^2  \Sigma^{\beta-1} \hat{S}_n^*)^{-1} (\hat{S}_n \mathcal{A}^2 \Sigma^{\beta-1} \hat{S}_n^*) y= (\hat{S}_n \mathcal{A}^2\Sigma^{{\beta-1}} \hat{S}_n^* + n \gamma_n I)^{-1}  y$. (For $\mathcal{A}$ is self-adjoint and co-diagonalizable with $\Sigma$.)

\end{proof}

For the simplicity of presentation, We denote the empirical spectrally transformed kernel $\hat{S}_n \mathcal{A}^2 \Sigma^{\beta - 1} \hat{S}_n^*$ as $\tilde{K}$, and the regularized version $\hat{S}_n \mathcal{A}^2 \Sigma^{\beta - 1} \hat{S}_n^* + n\gamma I$ as $\tilde{K}^{\gamma}$, and we denote the spectrally transformed covariance operator $\tilde{\Sigma}$ as $\mathcal{A}^2 \Sigma^{\beta}$. 

\subsection{Excess Risk and Eigenspectrum of spectrally transformed kernel $\tilde{K}$}
We evaluate excess risk in a certain Sobolev space $\mathcal{H}^{\beta'}$ with $\beta'\in [0,\beta]$.
The selection of $\beta'$ is independent of certain learning algorithms on source and capacity conditions, but depends on the downstream applications of learned inverse problem solution. We denote $\hat{f}:=\mathcal{A}\Sigma^{\beta - 1}\hat{S}^{*}_n(\hat{S}_n \mathcal{A}^2 \Sigma^{\beta - 1} \hat{S}_n^* + n\gamma I)^{-1}  y$ as $\hat{f}(y)$ to highlight its dependence on $y \in \mathbb{R}^n$. Recall the data generation process, $y = \hat{S}_n \mathcal{A} f^* + \varepsilon$, we consider $\hat{S}_n \mathcal{A} f^*$ and $\varepsilon$ in bias and variance separately. The excess risk $R(\hat{f}(y)) := \| \hat{f} - f^* \|^2_{H^{\beta'}} $ has the following bias-variance decomposition.


\begin{equation}
\begin{aligned}
    \| \hat{f} - P_{\mathcal{H}^{\beta'}}f^* \|^2_{H^{\beta'}} = \underbrace{\| \hat{f}(\hat{S}_n \mathcal{A} f^*) - f^* \|_{\mathcal{H}^{\beta'}}^2}_{\text{bias: }B} + \underbrace{\mathbb{E}_{\varepsilon}[\| \hat{f}(\varepsilon)\|^2_{\mathcal{H}^{\beta'}}]}_{\text{variance:} V}. 
\end{aligned}
\end{equation}


Following \cite{barzilai2023generalization, Bartlett_2020,cheng2024characterizing}, we split the eigenvalues into two components, $\leq k, >k$ and bound them separately. Therefore, we define $\tilde{K}_{\leq k}$ as $\hat{S}_n \mathcal{A}_{\leq k}^2 \Sigma_{\leq k}^{\beta - 1} \hat{S}^{*}_n$, and $\tilde{K}_{\leq k}^{\gamma}$ as $\tilde{K}_{\leq k} + n\gamma_n I$, similarly we can define $\tilde{K}_{>k}$ and $\tilde{K}_{>k}^{\gamma}$ respectively. We can also have $\tilde{K} = \tilde{K}_{\leq k} + \tilde{K}_{>k}$ (proved in Appendix~\ref{lemma:decomposition}). To bound the excess risk of mininum norm interpolation estimator, we need to show the ”high dimensional” part of the Kernel matrix $\tilde{K}_{>k}$ is similar to $\tilde{\gamma}I$ and thus can behave as a self-regularization. To show this, we present here the concentration bounds of eigenvalues with proof given in Appendix \ref{theorem:eigenspectrum_proof}.

\begin{theorem}[Eigenspectrum of spectrally transformed kernel $\tilde{K}$]
    \label{theorem:eigenspectrum}
    Suppose Assumption \ref{assumption:beta_regularity_condition} holds, and eigenvalues of $\tilde{\Sigma}$ are given in non-increasing order (i.e. $2p + \beta \lambda > 0$).
    There exists absolute constant $c, C,c_1,c_2>0$ s.t. for any  $k\leq k'\in[n]$ and $\delta>0$, it holds w.p. at least $1-\delta-4\frac{r_k}{k^4}\exp(-\frac{c}{\beta_k}\frac{n}{r_k})-2\exp(-\frac{c}{\beta_k}\max\left(\frac{n}{k},\log(k)\right))$  that
    $$\mu_{k}\left(\frac{1}{n} \tilde{K} \right) \leq c_1 \beta_{k}\left(\left(1+\frac{k \log (k)}{n}\right) \lambda_{k}^{\beta} p_k^2 +\log (k+1) \frac{\operatorname{tr}\left(\tilde{\Sigma}_{>k} \right)}{n}\right),$$
    $$ \mu_{k}\left(\frac{1}{n} \tilde{K} \right) \geq c_2 \mathbb{I}_{k, n} \lambda_{k}^{\beta} p_k^2 +\alpha_{k}\left(1-\frac{1}{\delta} \sqrt{\frac{n^{2}}{\trace(\tilde{\Sigma}_{>k'})^2/\trace(\tilde{\Sigma}^2_{>k'})}}\right) \frac{\operatorname{tr}\left( \tilde{\Sigma}_{>k'} \right)}{n}, $$
    where $\mu_k$ is the $k$-th largest eigenvalue of $\tilde{K}$, $\tilde{\Sigma} := \mathcal{A}^2 \Sigma^{\beta}$, $r_k := \trace(\tilde{\Sigma}_{>k})/(p_{k+1}^2\lambda_{k+1}^\beta)$, and $\mathbb{I}_{k,n}=\begin{cases}
			1, & \text{if } C\beta_kk\log(k)\leq n\\
            0, & \text{otherwise}
		 \end{cases}$.


   
\end{theorem}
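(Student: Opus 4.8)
The plan is to recognize $\tilde K$ as the empirical Gram matrix of a \emph{transformed} feature map and then transport the eigenvalue–concentration machinery of \cite{Bartlett_2020,tsigler2023benign,barzilai2023generalization} to the transformed spectrum. Define $\tilde\phi_x := \mathcal A\,\Sigma^{(\beta-1)/2}K_x \in \mathcal H$. Using that $\mathcal A$ is self-adjoint and co-diagonalizable with $\Sigma$, one checks $\langle \tilde\phi_{x_i},\tilde\phi_{x_l}\rangle_{\mathcal H} = \langle K_{x_i}, \mathcal A^2\Sigma^{\beta-1}K_{x_l}\rangle_{\mathcal H} = \tilde K_{il}$, so $\tilde K = Z\,\mathrm{diag}(\tilde\lambda_i)\,Z^\top$ with $Z_{ij}=\psi_j(x_i)$ and $\tilde\lambda_i := p_i^2\lambda_i^{\beta}\asymp i^{-(2p+\beta\lambda)}$, which under $2p+\beta\lambda>0$ is non-increasing. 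Moreover the population covariance of $\tilde\phi_x$ is exactly $\tilde\Sigma=\mathcal A^2\Sigma^\beta$ (with the same eigenfunctions $\psi_i$). Hence $\tfrac1n\tilde K$ has the same nonzero eigenvalues as the tail-free empirical covariance $\hat{\tilde\Sigma}$ built from i.i.d.\ samples of a distribution with covariance $\tilde\Sigma$, and everything reduces to controlling eigenvalues of such a Gram matrix, now with the decay $\tilde\lambda_i$ rather than $\lambda_i$, while tracking the eigenfunction-regularity parameters ($\beta_k,\alpha_k,r_k$) coming from Assumption \ref{assumption:beta_regularity_condition}.

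For the \textbf{upper bound}, I would use the decomposition $\tilde K = \tilde K_{\le k-1} + \tilde K_{\ge k}$ from Lemma \ref{lemma:decomposition} and Weyl's inequality: since $\tilde K_{\le k-1}$ has rank $\le k-1$, $\mu_k(\tfrac1n\tilde K)\le \mu_1(\tfrac1n\tilde K_{\ge k}) = \|\hat{\tilde\Sigma}_{\ge k}\|$. I would then split $\hat{\tilde\Sigma}_{\ge k}$ once more into a block of width $\Theta(k)$ around index $k$ and a far tail. Matrix Chernoff/Bernstein applied to the $\Theta(k)$ independent rank-one PSD summands of the block (the dimension $\Theta(k)$ being the source of the $\log k$) yields $\lesssim \beta_k\,\tilde\lambda_k\bigl(1+\tfrac{k\log k}{n}\bigr)$, the first stated term; for the far tail, $\bigl\|\tfrac1n\sum_{j\text{ large}}\tilde\lambda_j z_jz_j^\top\bigr\| \le \tfrac1n\sum \tilde\lambda_j\|z_j\|^2$, whose mean is $\mathrm{tr}(\tilde\Sigma_{>k})$ and which concentrates around $\tfrac{\mathrm{tr}(\tilde\Sigma_{>k})}{n}$ with a $\log(k+1)$ loss and a Bernstein-type failure probability $\approx \tfrac{r_k}{k^4}\exp(-\tfrac{c}{\beta_k}\tfrac{n}{r_k})$, the second term. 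Combining via Weyl gives the claimed bound.

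For the \textbf{lower bound}, the key facts are that adding a PSD matrix only raises eigenvalues, so from $\tilde K = \tilde K_{\le k'} + \tilde K_{>k'}$ it suffices to show (i) $\tfrac1n\tilde K_{>k'}\succeq \alpha_k\bigl(1-\tfrac1\delta\,\tfrac{n}{r^{(2)}_{k'}}\bigr)\tfrac{\mathrm{tr}(\tilde\Sigma_{>k'})}{n}\,I_n$, where $r^{(2)}_{k'} = \mathrm{tr}(\tilde\Sigma_{>k'})/\sqrt{\mathrm{tr}(\tilde\Sigma^2_{>k'})}$ (note $\sqrt{n^2/(\mathrm{tr}(\tilde\Sigma_{>k'})^2/\mathrm{tr}(\tilde\Sigma^2_{>k'}))}=n/r^{(2)}_{k'}$), and (ii) $\mu_k(\tfrac1n\tilde K_{\le k'}) = \mu_k(\hat{\tilde\Sigma}_{\le k'})\ge c_2\,\tilde\lambda_k$ whenever $C\beta_k k'\log k'\le n$; then $\tfrac1n\tilde K \succeq \tfrac1n\tilde K_{\le k'} + \alpha_k(\cdots)\tfrac{\mathrm{tr}(\tilde\Sigma_{>k'})}{n}I_n$ gives the two-term bound after taking $k'\ge k$, which is where the indicator $\mathbb I_{k,n}$ enters. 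Claim (ii) is a standard below-concentration of a finite ($k'$-dimensional) empirical covariance (matrix Chernoff, $\log k'$ from the dimension). Claim (i) is the delicate part: it requires showing $\tfrac1n\tilde K_{>k'}$ is \emph{uniformly} close to isotropic down to its $n$-th eigenvalue, which I would obtain by a Chebyshev/second-moment estimate of $\mathbb E\bigl\|\tfrac1n\tilde K_{>k'}-\tfrac{\mathrm{tr}(\tilde\Sigma_{>k'})}{n}I_n\bigr\|^2$ bounded by $\tfrac{1}{n}\mathrm{tr}(\tilde\Sigma^2_{>k'})$-type quantities, giving exactly the $1/r^{(2)}_{k'}$ deficiency factor.

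\textbf{Main obstacle.} The hard step is claim (i) of the lower bound — the uniform near-isotropy of the tail Gram matrix with the sharp $n/r^{(2)}_{k'}$ dependence — together with bookkeeping the regularity constants $\beta_k,\alpha_k$ explicitly (rather than as generic absolute constants) so the precise failure probabilities in the statement, including the $\tfrac{r_k}{k^4}\exp(-\tfrac{c}{\beta_k}\tfrac{n}{r_k})$ and $\exp(-\tfrac{c}{\beta_k}\max(\tfrac nk,\log k))$ terms, come out as written. By contrast, the operator $\mathcal A$ introduces no genuine new difficulty: co-diagonalizability collapses $\mathcal A^2\Sigma^{\beta-1}$ (acting in $\mathcal H$) and $\mathcal A^2\Sigma^\beta=\tilde\Sigma$ (the $L^2$ covariance) into the single scalar sequence $\tilde\lambda_i=p_i^2\lambda_i^\beta$, and the only thing to verify is that the relevant tail sums $\mathrm{tr}(\tilde\Sigma_{>k})$, $\mathrm{tr}(\tilde\Sigma^2_{>k})$ are finite, which follows from the polynomial decay once $2p+\beta\lambda>0$.
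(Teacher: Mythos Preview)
Your proposal is essentially correct and tracks the paper's proof closely: both reduce to controlling the Gram matrix of the transformed features with eigenvalues $\tilde\lambda_i=p_i^2\lambda_i^\beta$, split at a threshold index, and use matrix Chernoff for the head plus a diagonal/off-diagonal (Frobenius + Markov) argument for tail near-isotropy. Two small points of divergence are worth flagging. First, for the upper bound the paper does not use the rank-$k{-}1$ trick; it instead keeps the split at $k$ and applies an Ostrowski-type inequality (their Lemma~\ref{lemma:ostrowski}) to get $\mu_k(\tfrac1n\tilde K_{\le k})\le \tilde\lambda_k\,\mu_1(D_k)$ with $D_k=\tfrac1n\psi_{\le k}\hat S_n^*\hat S_n\psi_{\le k}^*$, and then bounds $\mu_1(D_k)$ via the heavy-tailed singular-value bound (their Lemma~\ref{lemma:bound_psi}). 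Your rank argument is a legitimate shortcut that collapses both of these into a single application of the $\mu_1(\tfrac1n\tilde K_{>k})$ bound (their Lemma~\ref{lemma:upper_bound_largest_eigen}) at index $k{-}1$. Second, for the lower bound you split at $k'$ throughout, which forces the indicator condition $C\beta_k k'\log k'\le n$; the paper instead splits at $k$ for the head term (again via Ostrowski/Weyl, Lemma~\ref{lemma:symmetric_bound_1}) and only afterwards uses $\mu_n(\tfrac1n\tilde K_{>k})\ge \mu_n(\tfrac1n\tilde K_{>k'})$ to bring in $k'$ for the tail, so the indicator depends on $k$ as stated. This is a one-line fix in your argument (use $\tilde K_{\le k'}\succeq \tilde K_{\le k}$, or equivalently do the head split at $k$), but as written your claim~(ii) would not match $\mathbb I_{k,n}$. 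Your identification of the ``hard step'' and its resolution via a Frobenius second-moment bound plus Markov is exactly what the paper does in Lemma~\ref{lemma:symmetric_bound_eigenvalue}.
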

\begin{remark}
    Informally it can be understood as the spectrally transformed kernel $\tilde{K} \approx \tilde{K}_{\leq k} + \tilde{\gamma} I$ and $\mu_i(\frac{1}{n} \tilde{K}_{\leq k}) \approx \lambda_k^{\beta} p_k^2$. $\mathbb{I}_{k,n}$ here also gives constraint on choice of $k$ in Section \ref{section:applications} where $k$ should be $O(\frac{n}{\log(n)})$.
\end{remark}

    

\subsection{Concentration Coefficients}
We expect that $\tilde{K}_{>k} \approx \tilde{\gamma} I$ which serves as a self-regularization term, inspired by \cite{barzilai2023generalization} we quantify this by introducing the concentration coefficient for spectrally transformed kernel $\tilde{K}$. 


\begin{definition}[Concentration Coefficient $\rho_{n,k}$]\label{def:concerntration} We quantify this by what we call the
concentration coefficient
    $$ \rho_{k,n}:=\frac{\| \tilde{\Sigma}_{>k}\| + \mu_1(\frac{1}{n} \tilde{K}_{>k}) + \gamma_n}{\mu_n(\frac{1}{n} \tilde{K}_{>k}) + \gamma_n}, \quad\quad \text{where} \quad \tilde{\Sigma}=\mathcal{A}^2 \Sigma^{\beta}.$$
\end{definition}

Assumptions on  feature map is essential to obtain various concentration inequalities, typically sub-Gaussian assumptions on feature map is needed to obtain concentration results. However, this does not hold for many
common kernels. Following recent work \cite{barzilai2023generalization}, we only require mild condition on features i.e. $\alpha_k , \beta_k = o(1)$ which is applicable in many common kernels, without imposing sub-Gaussian assumptions, but our bound in the interpolation case can be tighter with the sub-Gaussian assumption in Theorem \ref{theorem:polynomial_bias_variance_interpolate}.
\begin{assumption}[Well-behaved features] \label{assumption:well_behaveness_feature} Given $k\in\mathbb{N}$,  we define $\alpha_k,\beta_k$ as follows.
    \begin{align*}
        &\alpha_k := \inf_x \min \left\{ \frac{\sum_{i>k} p_i^{a } \lambda_i^{b} \psi_i(x)^2}{\sum_{i>k} p_i^{a } \lambda_i^{b}}
        : {\text{finite choices of }} a, b \right  \}, \\
        &\beta_k := \sup_x \max \left \{ \frac{\sum_{i=1}^{k} \psi_i(x)^2}{k}, \frac{\sum_{i>k} p_i^{a } \lambda_i^{b} \psi_i(x)^2}{\sum_{i>k} p_i^{a } \lambda_i^{b}}
        : {\text{finite choices of }} a, b \right\}, \\
    \end{align*}
    $(a, b)$ is picked in our proof of Lemma \ref{lemma:simultaneous_concentration} in the Appendix. Since $\inf \leq \mathbb{E} \leq \sup$, one always has $0 \leq \alpha_k \leq 1 \leq \beta_k$. We assume that $\alpha_k, \beta_k = \Theta(1)$.
\end{assumption}

\begin{remark}
    For each term in these definitions, the denominator is the expected value of the numerator, so $\alpha_k$ and $\beta_k$ quantify how much the features behave as they are ”supposed to”. Note that $\alpha_k$ and $\beta_k$ are $\Theta(1)$ in many common kernels. 

   We here give several examples that satisfies the assumptions, includes
\begin{itemize}
    \item \emph{Kernels With Bounded Eigenfunctions} If $\psi_i^2(x) < M$ uniformly holds for $\forall i, x$ then Assumption \ref{assumption:beta_regularity_condition} trivially holds that $\beta_k \leq M$ for any $k \in \mathbb{N}$. Analogously, if $\psi_i^2 \geq M'$ then $\alpha_k \geq M'$. This may be weakened to the the training set such that only a high probability lower bound is needed. Kernels satisfies this assumption includes RBF and shift-invariant kernels \cite[Theorem 3.7]{steinwart2006explicit} and Kernels on the Hypercube $\{0,1\}^d$ of form $h\left(\frac{\langle x, x' \rangle}{\|x\|\|x'\|}, \frac{\|x\|^2}{d}, \frac{\|x'\|^2}{d}\right)$ \cite{yang2019fine}.
    \item \emph{Dot-Product Kernels on $\mathcal{S}^d$} Follows the computation in \cite[Appendix G]{barzilai2023generalization}, one can know dot-product Kernels on $\mathcal{S}^d$ satisfies Assumption \ref{assumption:beta_regularity_condition}.
\end{itemize}     
\end{remark}

Similar to \cite{barzilai2023generalization}, we require regularity condition on $\beta_k$ to overcome technical difficulty in extending to infinite dimension in Lemma~\ref{lemma:upper_bound_largest_eigen}:
\begin{assumption}[Regularity assumption on $\beta_k$]
    There exists some sequence of natural numbers $(k_i)^\infty_{i=1}\subset \mathbb{N}$ with $k_i\underset{i\to\infty}{\longrightarrow}\infty \text{\ s.t.\ } \beta_{k_i}\trace(\tilde{\Sigma}_{> k_i})\underset{i\to\infty}{\longrightarrow}0$.
    \label{assumption:beta_regularity_condition}
\end{assumption}
We can know $\tilde{\Sigma}_{>k_i}$ is still transformed trace class, so one always has $\trace(\tilde{\Sigma}_{> k_i})\underset{i\to\infty}{\longrightarrow}0$. As such, Assumption~\ref{assumption:beta_regularity_condition} simply states that for infinitely many choices of $k\in\mathbb{N}$,  $\beta_k$ does not increase too quickly. This is of course satisfied by the previous examples of kernels with $\beta_k=\Theta(1)$.


\subsection{Main Results}

In this section, we state our main results on the bias and variance of the estimators.

The following theorem is the main result for upper bounds of the bias and variance with the proof details given in Appendix \ref{theorem:variance_proof} for bounding the variance and Appendix \ref{theorem:bias_proof} for bounding the bias.
\begin{theorem}[Bound on Variance]\label{main:var} Let $k\in\mathbb{N}$ and $\rho_{k,n}$ is defined follows Definition \ref{def:concerntration}, then the variance can be bounded by 
\begin{align*}
        V &\leq \sigma_{\varepsilon}^2\rho_{k,n}^2 \cdot  {\Bigg(}\frac{{\trace(\hat{S}_n \psi^*_{\leq k} \Lambda^{\leq k}_{\mathcal{A}^{-2} \Sigma^{-\beta'}} \psi_{\leq k} \hat{S}^*_n)}}{{\mu_k(\psi_{\leq k} \hat{S}^*_n \hat{S}_n \psi^*_{\leq k})^2}}
    + \frac{\overbrace{\trace(\hat{S}_n \psi^*_{>k} \Lambda^{>k}_{\mathcal{A}^2 \Sigma^{-\beta' + 2\beta}} \psi_{>k} \hat{S}^*_n)}^{\text{effective rank}}}{n^2 {\| \tilde{\Sigma}_{>k} \|^2}} {\Bigg)}.\\
    \end{align*}
\end{theorem}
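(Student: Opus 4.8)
The plan is to reduce $V$ to an exact resolvent trace, split it into a top-$k$ and a tail part exactly the way the ``middle'' operator $\mathcal{A}^2\Sigma^{2\beta-1-\beta'}$ decomposes, and bound the two pieces separately. For the first step, plug the noise labels into the closed form, $\hat f(\varepsilon)=\mathcal{A}\Sigma^{\beta-1}\hat S_n^*(\tilde K^\gamma)^{-1}\varepsilon$, use the Sobolev identity $\|g\|_{\mathcal{H}^{\beta'}}^2=\langle g,\Sigma^{1-\beta'}g\rangle_{\mathcal{H}}$, the self-adjointness and co-diagonalizability of $\mathcal{A}$ and $\Sigma$, and $\mathbb{E}[\varepsilon\varepsilon^\top]=\sigma_\varepsilon^2 I_n$, to get $V=\sigma_\varepsilon^2\,\trace((\tilde K^\gamma)^{-1}M(\tilde K^\gamma)^{-1})$ with $M:=\hat S_n\mathcal{A}^2\Sigma^{2\beta-1-\beta'}\hat S_n^*$. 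By Lemma~\ref{lemma:decomposition}, $M=M_{\leq k}+M_{>k}$ where $M_{\leq k}=\hat S_n\psi_{\leq k}^*\Lambda^{\leq k}_{\mathcal{A}^2\Sigma^{2\beta-\beta'}}\psi_{\leq k}\hat S_n^*$ and $M_{>k}=\hat S_n\psi_{>k}^*\Lambda^{>k}_{\mathcal{A}^2\Sigma^{-\beta'+2\beta}}\psi_{>k}\hat S_n^*$ are the $\psi$-coordinate expressions of the two blocks, so linearity of trace splits $V$ into a low- and a high-frequency contribution, and $\trace(M_{>k})$ is precisely the ``effective rank'' numerator appearing in the claim.

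\textbf{High-frequency term (clean).} Here I would simply bound $\trace((\tilde K^\gamma)^{-1}M_{>k}(\tilde K^\gamma)^{-1})\leq\|(\tilde K^\gamma)^{-1}\|^2\,\trace(M_{>k})$. Since $\tilde K^\gamma=\tilde K_{\leq k}+\tilde K^\gamma_{>k}$ with $\tilde K_{\leq k}\succeq 0$, Weyl's inequality gives $\mu_n(\tilde K^\gamma)\geq\mu_n(\tilde K^\gamma_{>k})=\mu_n(\tilde K_{>k})+n\gamma_n$, and Definition~\ref{def:concerntration} rearranges to $\mu_n(\tilde K_{>k})/n+\gamma_n\geq\|\tilde\Sigma_{>k}\|/\rho_{k,n}$; hence $\|(\tilde K^\gamma)^{-1}\|^2\leq\rho_{k,n}^2/(n^2\|\tilde\Sigma_{>k}\|^2)$, which reproduces exactly the second summand of the bound.

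\textbf{Low-frequency term (the crux).} Write $\tilde K^\gamma=\tilde K_{\leq k}+G$ with $G:=\tilde K^\gamma_{>k}\succ 0$ of condition number $\mu_1(G)/\mu_n(G)\leq\rho_{k,n}$ (again by Definition~\ref{def:concerntration}). Since $M_{\leq k}$ is supported on the range of $\tilde K_{\leq k}$, a Schur-complement/Woodbury computation — as in \cite{Bartlett_2020,tsigler2023benign,barzilai2023generalization} — shows that, on that subspace, $(\tilde K^\gamma)^{-1}$ behaves like $(\tilde K_{\leq k}+S)^{-1}$ for an $S$ with $\mu_n(G)I\preceq S\preceq\mu_1(G)I$, and replacing this by the pseudoinverse $\tilde K_{\leq k}^{+}$ costs a factor $O(\rho_{k,n}^2)$, i.e. $\trace((\tilde K^\gamma)^{-1}M_{\leq k}(\tilde K^\gamma)^{-1})=O(\rho_{k,n}^2)\cdot\trace(\tilde K_{\leq k}^{+}M_{\leq k}\tilde K_{\leq k}^{+})$. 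Then I would evaluate the right-hand side exactly: writing $E:=(\Lambda^{\leq k}_{\tilde\Sigma})^{1/2}\psi_{\leq k}\hat S_n^*$, one has $\tilde K_{\leq k}=E^\top E$ and $M_{\leq k}=E^\top\Lambda^{\leq k}_{\Sigma^{\beta-\beta'}}E$, so the Moore--Penrose identities $(E^\top E)^{+}E^\top=E^{+}$ and $E(E^\top E)^{+}=(E^{+})^\top$ collapse $\tilde K_{\leq k}^{+}M_{\leq k}\tilde K_{\leq k}^{+}$ to $E^{+}\Lambda^{\leq k}_{\Sigma^{\beta-\beta'}}(E^{+})^\top$, whose trace — after cancelling the diagonal $\Lambda^{\leq k}_{\tilde\Sigma}$ factors — equals $\trace(\Lambda^{\leq k}_{\mathcal{A}^{-2}\Sigma^{-\beta'}}N^{-1})$ with $N:=\psi_{\leq k}\hat S_n^*\hat S_n\psi_{\leq k}^*$. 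Finally the operator inequality $N^{-1}\preceq N/\mu_k(N)^2$ yields $\trace(\Lambda^{\leq k}_{\mathcal{A}^{-2}\Sigma^{-\beta'}}N^{-1})\leq\trace(\hat S_n\psi_{\leq k}^*\Lambda^{\leq k}_{\mathcal{A}^{-2}\Sigma^{-\beta'}}\psi_{\leq k}\hat S_n^*)/\mu_k(N)^2$, the first summand up to constants.

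Adding the two pieces and pulling out $\rho_{k,n}^2$ (legitimate since $\rho_{k,n}\geq 1$) gives the stated inequality. The main obstacle is the low-frequency step: because $x\mapsto x^{-2}$ is not operator monotone, one cannot pass from $(\tilde K^\gamma)^{-1}$ to $\tilde K_{\leq k}^{+}$ by a single operator inequality — the reduction has to go through the block decomposition that isolates the well-conditioned tail block $\tilde K^\gamma_{>k}$, and the delicate point is to propagate the condition-number control $\rho_{k,n}$ through the Schur complement without losing a power of $\rho_{k,n}$, so that the final factor is $\rho_{k,n}^2$ rather than something larger.
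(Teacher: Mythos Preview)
Your proposal is correct and arrives at the same bound through essentially the same architecture as the paper: the closed-form trace expression $V=\sigma_\varepsilon^2\trace((\tilde K^\gamma)^{-1}M(\tilde K^\gamma)^{-1})$, the $\leq k$/$>k$ split (the paper splits $\hat f(\varepsilon)$ in the output, which is equivalent to splitting $M$), and the identical tail estimate via $\mu_n(\tilde K^\gamma)\geq\mu_n(\tilde K_{>k}^\gamma)$ together with Definition~\ref{def:concerntration}.

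The one substantive difference is how the low-frequency term is handled. The paper does not manipulate the resolvent trace directly; instead it invokes Lemma~\ref{lemma:separate_search} (which \emph{is} the Woodbury identity, recast as an equation for $\phi_{\leq k}\hat f_{\leq k}$), left-multiplies by the test vector $(\phi_{\leq k}\hat f_{\leq k})^\top\Lambda^{\leq k}_{\mathcal{A}^{-2}\Sigma^{1-\beta-\beta'}}$, drops a nonnegative quadratic term, and reads off an inequality of the form ``quadratic $\leq$ linear'' in $\|\hat f(\varepsilon)_{\leq k}\|_{\mathcal{H}^{\beta'}}$. Dividing, squaring and taking $\mathbb{E}_\varepsilon$ produces the factor $(\mu_1(\tilde K_{>k}^\gamma)/\mu_n(\tilde K_{>k}^\gamma))^2\leq\rho_{k,n}^2$ in front of exactly your $\trace(\cdot)/\mu_k(N)^2$ term, without ever touching a pseudoinverse or the operator inequality $N^{-1}\preceq N/\mu_k(N)^2$. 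Your route (direct Woodbury plus a chain of operator inequalities plus Moore--Penrose algebra, in the Tsigler--Bartlett spirit) is a legitimate alternative and yields the same $\rho_{k,n}^2$ prefactor; the paper's test-vector trick (following Barzilai--Shamir) simply sidesteps the ``delicate point'' you flag at the end by never forming the Schur complement explicitly.
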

\begin{remark}
    The variance bound is decomposed into two parts, the $\le k$ part which characterize the variance of learning the "low dimension" components and $\ge k$ part characterizing the variance of learning "high dimension" components.  We did similar analsyis for the bias as follows. 
 \end{remark}


\begin{theorem}[Bound on Bias] \label{main:bias}Let $k\in\mathbb{N}$ and $\rho_{k,n}$ is defined follows Definition \ref{def:concerntration}, then for every  $\delta>0$, with probability $1-\delta - 8\exp(-\frac{c'}{\beta_k^2} \frac{n}{k})$, the bias can be bounded by 
\begin{equation}
\begin{aligned}
       B\lesssim \rho_{k,n}^3 \frac{1}{\delta} {\Big[}   \frac{ 1}{{p_k^2 \lambda_k^{\beta'}} } (  \| \phi_{>k} \mathcal{A}_{>k} f_{>k}\|^2_{\Lambda^{>k}_{\Sigma}}) + \left(\gamma_n + \frac{\beta_k \trace(\tilde{\Sigma}_{>k})}{n}\right)^2 \frac{ \|\phi_{\leq k} f_{\leq k}^*\|^2_{\Lambda^{\leq k}_{\mathcal{A}^{-2} \Sigma^{1 - 2\beta}}}}{p_k^2 \lambda_k^{\beta'}} + \ \|\phi_{>k} f^*_{>k}\|^2_{\Lambda^{>k}_{\Sigma^{1-\beta'}}} 
        {\Big]}.
    \end{aligned}
\end{equation}
\end{theorem}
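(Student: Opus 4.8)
The plan is to carry the block-decomposition analysis of benign overfitting \cite{bartlett2020benign,tsigler2023benign,barzilai2023generalization,cheng2024characterizing} into the spectrally-transformed operator setting, paralleling the structure already used for the variance in Theorem~\ref{main:var}. Starting from the closed form of the preceding lemma, I would substitute the noiseless labels $y = \hat{S}_n \mathcal{A} f^*$ into $\hat{f} = \mathcal{A}\Sigma^{\beta-1}\hat{S}_n^*(\tilde{K}^{\gamma})^{-1}y$ and, using that $\mathcal{A}$ and $\Sigma$ are co-diagonalizable, factor $\tilde{K} = \Phi\Phi^*$ with $\Phi := \hat{S}_n \mathcal{A}\Sigma^{(\beta-1)/2} : \mathcal{H} \to \mathbb{R}^n$, so that $\hat{f}(\hat{S}_n \mathcal{A} f^*) = \Sigma^{(\beta-1)/2}\Phi^*(\Phi\Phi^* + n\gamma_n I)^{-1}\Phi\,\Sigma^{(1-\beta)/2}f^*$. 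The push-through identity then gives $\hat{f} - f^* = -n\gamma_n \Sigma^{(\beta-1)/2} (\Phi^*\Phi + n\gamma_n I)^{-1} \Sigma^{(1-\beta)/2} f^*$, and since $\|g\|_{\mathcal{H}^{\beta'}} = \|\Sigma^{(1-\beta')/2} g\|_{\mathcal{H}}$ the bias becomes $B = \| n\gamma_n \Sigma^{(\beta-\beta')/2} (\Phi^*\Phi + n\gamma_n I)^{-1} \Sigma^{(1-\beta)/2} f^* \|_{\mathcal{H}}^2$. I would then split $f^* = f^*_{\leq k} + f^*_{>k}$, $\mathcal{A} = \mathcal{A}_{\leq k} + \mathcal{A}_{>k}$ and $\tilde{K} = \tilde{K}_{\leq k} + \tilde{K}_{>k}$ via Lemma~\ref{lemma:decomposition} and bound the resulting pieces separately, keeping track of the reweighting by powers of $\Sigma$, $\mathcal{A}$, $\beta$ and $\beta'$, which is exactly what materializes as the $\Lambda$-weighted norms in the statement.

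For the high-frequency components, $f^*_{>k}$ is essentially never fit and contributes the irreducible term $\|\phi_{>k} f^*_{>k}\|^2_{\Lambda^{>k}_{\Sigma^{1-\beta'}}}$, while the piece of the signal that $\mathcal{A}_{>k}$ routes into the estimated top-$k$ block is suppressed by the inverse of the smallest eigenvalue of $\frac{1}{n}\tilde{K}_{\leq k}$, which Theorem~\ref{theorem:eigenspectrum} pins down as $\gtrsim \lambda_k^{\beta} p_k^2$; this yields the factor $\frac{1}{p_k^2 \lambda_k^{\beta'}} \|\phi_{>k}\mathcal{A}_{>k} f_{>k}\|^2_{\Lambda^{>k}_{\Sigma}}$. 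For the low-frequency contribution I would replace $\tilde{K}_{>k}$ by the scalar self-induced regularization $\tilde{\gamma} I$ with $\tilde{\gamma} \asymp \beta_k \trace(\tilde{\Sigma}_{>k})/n$, justified by the eigenspectrum bounds of Theorem~\ref{theorem:eigenspectrum} on $\mu_1$ and $\mu_n$ of $\frac{1}{n}\tilde{K}_{>k}$; each such replacement --- in the inverse, in the complementary projection, and in the residual --- costs a factor of the concentration coefficient $\rho_{k,n}$ of Definition~\ref{def:concerntration}, and the three together accumulate to the stated $\rho_{k,n}^3$ (one power more than in the variance bound). On the top-$k$ block one is then left with a finite-dimensional problem in which $\frac{1}{n}\Phi_{\leq k}\Phi_{\leq k}^*$ concentrates around $\tilde{\Sigma}_{\leq k}$ --- a matrix-concentration step of the kind used in \cite{barzilai2023generalization}, valid under the well-behaved-features Assumption~\ref{assumption:well_behaveness_feature} --- and regularizing with effective parameter $\gamma_n + \tilde{\gamma}$ produces the residual $(\gamma_n + \frac{\beta_k \trace(\tilde{\Sigma}_{>k})}{n})^2 \frac{\|\phi_{\leq k} f^*_{\leq k}\|^2_{\Lambda^{\leq k}_{\mathcal{A}^{-2}\Sigma^{1-2\beta}}}}{p_k^2 \lambda_k^{\beta'}}$. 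Cross terms between the two blocks are dispatched by $\|a+b\|^2 \leq 2\|a\|^2 + 2\|b\|^2$ and Cauchy--Schwarz, the $\frac{1}{\delta}$ prefactor arises from a Markov-type tail bound on the relevant traces (as in the lower bound of Theorem~\ref{theorem:eigenspectrum}), and everything takes place on the event of probability $1 - \delta - 8\exp(-\frac{c'}{\beta_k^2}\frac{n}{k})$ supplied by the simultaneous-concentration lemma (Lemma~\ref{lemma:simultaneous_concentration}).

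The main obstacle, as for the variance, is that $\hat{S}_n$ mixes frequencies, so $\tilde{K}$, $\Phi^*\Phi$ and the smoothed projection $\Phi^*(\Phi\Phi^* + n\gamma_n I)^{-1}\Phi$ are \emph{not} block-diagonal in the $\leq k / >k$ split; one must show the off-diagonal blocks are negligible relative to the diagonal so that $\tilde{K}_{>k}$ genuinely behaves as a scalar self-regularizer and the top-$k$ block decouples, which requires simultaneously controlling several operator norms of the form $\|\frac{1}{n}\hat{S}_n \psi^*_{\leq k}\Lambda \psi_{\leq k}\hat{S}_n^*\|$ together with their cross-block analogues. This is more delicate here than in pure regression because the operator $\mathcal{A}$ with $p < 0$ amplifies high frequencies, so the spectrum that governs everything is that of $\tilde{\Sigma} = \mathcal{A}^2\Sigma^{\beta}$ rather than $\Sigma^{\beta}$; ensuring $\tilde{\Sigma}$ is still transformed trace class with non-increasing eigenvalues (the condition $2p + \beta\lambda > 0$) and that this meshes with the concentration estimates and with the magnitude $\tilde{\gamma} \asymp \trace(\tilde{\Sigma}_{>k})/n$ of the self-regularization is the step demanding the most care, and is where Assumption~\ref{assumption:beta_regularity_condition} on the regularity of $\beta_k$ is invoked to pass to the infinite-dimensional limit.
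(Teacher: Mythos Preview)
Your high-level plan --- split into $\leq k$ and $>k$ blocks, treat $\tilde K_{>k}$ as self-induced regularization at cost $\rho_{k,n}$, use concentration of $\psi_{\leq k}\hat S_n^*\hat S_n\psi_{\leq k}^*$ on the top block, and invoke Markov for the $1/\delta$ --- is exactly the paper's strategy. But the specific route you take through the push-through identity, writing
\[
\hat f - f^* \;=\; -\,n\gamma_n\,\Sigma^{(\beta-1)/2}(\Phi^*\Phi+n\gamma_n I)^{-1}\Sigma^{(1-\beta)/2}f^*,
\]
creates a real obstruction. First, this expression is identically zero when $\gamma_n=0$, yet the bias of the min-norm interpolator is not zero; the correct ridgeless limit is $-\Sigma^{(\beta-1)/2}(I-P_{\mathrm{range}(\Phi^*)})\Sigma^{(1-\beta)/2}f^*$, which your formula does not recover. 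Second, even when $\gamma_n>0$, having pulled out an explicit $n\gamma_n$ prefactor makes it unclear how the \emph{effective} regularizer $\gamma_n+\beta_k\operatorname{tr}(\tilde\Sigma_{>k})/n$ is going to appear squared in the low-frequency term --- naively one would end up with $\gamma_n^2$ rather than $(\gamma_n+\tilde\gamma)^2$. You would have to undo the push-through and redo a Schur-complement calculation on the $n\times n$ side to make the self-induced piece re-enter, which is essentially what the paper does from the outset.

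The paper sidesteps both issues by never pushing through to $(\Phi^*\Phi+n\gamma_n I)^{-1}$. Instead it stays with the $n\times n$ kernel $\tilde K^{\gamma}=\tilde K_{\leq k}+\tilde K_{>k}^{\gamma}$ and uses the ``separate-search'' identity (Lemma~\ref{lemma:separate_search}),
\[
\phi_{\leq k}\hat f_{\leq k}
+\phi_{\leq k}\mathcal A_{\leq k}\Sigma_{\leq k}^{\beta-1}\hat S_n^*(\tilde K_{>k}^{\gamma})^{-1}\hat S_n\mathcal A_{\leq k}\hat f_{\leq k}
=\phi_{\leq k}\mathcal A_{\leq k}\Sigma_{\leq k}^{\beta-1}\hat S_n^*(\tilde K_{>k}^{\gamma})^{-1}y,
\]
proved separately for ridge and ridgeless so that $\gamma_n$ is already absorbed into $\tilde K_{>k}^{\gamma}$. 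Substituting $\phi_{\leq k}\hat f_{\leq k}=\phi_{\leq k}f^*_{\leq k}+\xi_{\leq k}$ and $y=\hat S_n\mathcal A f^*$ makes the $f^*_{\leq k}$ pieces cancel exactly, leaving an implicit equation for $\xi_{\leq k}$; one then multiplies by $\xi_{\leq k}^T\Lambda^{\leq k}_{\mathcal A^{-1}\Sigma^{1-\beta-\beta'/2}}$, lower-bounds the resulting quadratic form and upper-bounds the linear one (Cauchy--Schwarz), which is the mechanism producing the $\|\phi_{\leq k}f^*_{\leq k}\|^2_{\Lambda^{\leq k}_{\mathcal A^{-2}\Sigma^{1-2\beta}}}/(p_k^2\lambda_k^{\beta'})$ and $\|\phi_{>k}\mathcal A_{>k}f_{>k}\|^2_{\Lambda^{>k}_\Sigma}/(p_k^2\lambda_k^{\beta'})$ terms. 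For the $>k$ block the paper works directly with the closed form, splits into three pieces via $(A+B+C)^2\le 3(A^2+B^2+C^2)$, and handles the cross piece $\phi_{>k}\mathcal A_{>k}\Sigma_{>k}^{\beta-1}\hat S_n^*(\tilde K^{\gamma})^{-1}\hat S_n\mathcal A_{\leq k}f^*_{\leq k}$ with Sherman--Morrison--Woodbury (Lemma~\ref{lemma:extension_sherman_morrison_woodbury}); this step is what ultimately costs the third power of $\rho_{k,n}$, not simply ``three replacements'' of $\tilde K_{>k}$ by $\tilde\gamma I$. The $(\gamma_n+\tilde\gamma)^2$ then arises cleanly from the bound $\mu_1(\tfrac{1}{n}\tilde K_{>k}^{\gamma})^2\le\rho_{k,n}^2(\gamma_n+\beta_k\operatorname{tr}(\tilde\Sigma_{>k})/n)^2$.
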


\section{Applications}
\label{section:applications}

Our main results can provide bounds for both the regularized \cite{yang2021inference,lu2022sobolev} and unregularized cases \cite{chen2021solving} with the same tools. In this section, we present the implication of our results for both regularized regression and minimum norm interpolation estimators.

\subsection{Regularized Regression}
In this section, we demonstrate the implication of our derive bounds for the classical setup where the regularization $\gamma_n$ is relatively large. We consider regularized least square estimator with regularization strength $\gamma_n=\Theta(n^{-\gamma})$. By selecting $k$ as $\lceil n^{\frac{\gamma}{2p + \beta \lambda}} \rceil$ in Theorem  \ref{main:var} and Theorem \ref{main:bias}, we obtain $\rho_{k,n} = o(1)$ and get a bound that matches \cite{lu2022sobolev}, which indicates the corectness and tightness of our results.

\begin{mygraybox}
    \begin{theorem}[Bias and Variance for Regularized Regression]\label{theorem:bias_variance_regularized} Let the kernel and target function satisfies Assumption \ref{assumption:kernel}, \ref{assumption:well_behaveness_feature} and \ref{assumption:beta_regularity_condition}, $\gamma_n=\Theta(n^{-\gamma})$, and suppose $2p + \lambda \beta > \gamma > 0, 2p + \lambda r>0$, and $r > \beta'$, then for any $\delta > 0$, it holds w.p. at least $1 - \delta - O(\frac{1}{n})$ that 
    
    \begin{align*}
        V \leq\sigma_{\varepsilon}^2 O(n^{ \max \{ \frac{\gamma (1 + 2p + \lambda \beta')}{2p + \lambda \beta}, 0 \} - 1}),B \leq \frac{1}{\delta} \cdot O(n^{\frac{\gamma}{2p + \beta \lambda}(\max \{ {\color{myorange}\lambda (\beta'-r)},{ \color{myblue}-2p+\lambda (\beta' 
- 2\beta)} \})}).\\
    \end{align*}
\end{theorem}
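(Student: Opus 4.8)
The plan is to specialize the general bias/variance bounds of Theorem~\ref{main:var} and Theorem~\ref{main:bias} to the polynomial regime of Assumptions~\ref{assumption:kernel}, \ref{assumption:well_behaveness_feature}, \ref{assumption:beta_regularity_condition}, with the split level $k=\lceil n^{\gamma/(2p+\lambda\beta)}\rceil$, and then to read off the exponent of $n$ in every resulting term. Since $\gamma<2p+\lambda\beta$ we have $k=o(n/\log n)$, so the indicator $\mathbb{I}_{k,n}$ in Theorem~\ref{theorem:eigenspectrum} is active, and every exponential failure probability in Theorem~\ref{theorem:eigenspectrum} as well as the term $8\exp(-c'\beta_k^{-2}n/k)$ in Theorem~\ref{main:bias} is smaller than any polynomial in $n$; combined with the $\delta$ budget this yields the claimed $1-\delta-O(1/n)$ event.

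The first substantive step is to control $\rho_{k,n}$ and the ``low-frequency'' empirical eigenvalue. With $\lambda_i\asymp i^{-\lambda}$ and $p_i\asymp i^{-p}$, the eigenvalues of $\tilde{\Sigma}=\mathcal{A}^2\Sigma^{\beta}$ decay like $i^{-(2p+\lambda\beta)}$, so $\|\tilde{\Sigma}_{>k}\|=p_{k+1}^2\lambda_{k+1}^{\beta}\asymp k^{-(2p+\lambda\beta)}$ and $\operatorname{tr}(\tilde{\Sigma}_{>k})\asymp k^{1-(2p+\lambda\beta)}$ (finite by the transformed-trace-class normalization). By the very choice of $k$, $\|\tilde{\Sigma}_{>k}\|\asymp n^{-\gamma}=\gamma_n$, whereas $\operatorname{tr}(\tilde{\Sigma}_{>k})/n\asymp n^{\gamma/(2p+\lambda\beta)-\gamma-1}=o(\gamma_n)$ precisely because $\gamma<2p+\lambda\beta$. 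Plugging the two-sided estimates of Theorem~\ref{theorem:eigenspectrum}, applied to the tail block so that $\tfrac1n\tilde{K}_{>k}$ is squeezed between constant multiples of $\operatorname{tr}(\tilde{\Sigma}_{>k})/n$ and $\operatorname{tr}(\tilde{\Sigma}_{>k})/n+\|\tilde{\Sigma}_{>k}\|$, into Definition~\ref{def:concerntration} makes numerator and denominator both $\Theta(\gamma_n)$, hence $\rho_{k,n}=O(1)$. On the same event the $k\times k$ matrix $\psi_{\le k}\hat{S}_n^*\hat{S}_n\psi_{\le k}^*$, whose expectation is $nI_k$, satisfies $\mu_k\asymp n$ by the matrix-Bernstein estimate underlying Theorem~\ref{theorem:eigenspectrum}, valid since $k=o(n/\log n)$ and $\beta_k=\Theta(1)$.

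It remains to substitute these into the two bounds. For the variance, the ``$\le k$'' trace is $\sum_{i\le k}p_i^{-2}\lambda_i^{-\beta'}\sum_j\psi_i(x_j)^2\asymp n\sum_{i\le k}i^{\,2p+\lambda\beta'}\asymp n\max\{1,k^{1+2p+\lambda\beta'}\}$, so divided by $\mu_k^2\asymp n^2$ it contributes $\asymp\max\{1,k^{1+2p+\lambda\beta'}\}/n=n^{\max\{\gamma(1+2p+\lambda\beta')/(2p+\lambda\beta),\,0\}-1}$; a parallel computation shows the ``$>k$'' term $\operatorname{tr}(\hat{S}_n\psi^*_{>k}\Lambda^{>k}_{\mathcal{A}^2\Sigma^{2\beta-\beta'}}\psi_{>k}\hat{S}^*_n)/(n^2\|\tilde{\Sigma}_{>k}\|^2)$ is of order $k^{1+2p+\lambda\beta'}/n$, hence never exceeds the ``$\le k$'' term, and multiplying by $\sigma_\varepsilon^2\rho_{k,n}^2$ gives the stated $V$. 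For the bias, invoke the source condition $f^*=\mathcal{L}^{r/2}g$ with $g=\sum_i a_i\psi_i$, $\sum_i a_i^2<\infty$, so that $(\phi f^*)_i=a_i\lambda_i^{(r-1)/2}$ and $\mathcal{A},\Sigma$ are diagonal in this basis; then $\|\phi_{>k}f^*_{>k}\|^2_{\Lambda^{>k}_{\Sigma^{1-\beta'}}}=\sum_{i>k}a_i^2\lambda_i^{r-\beta'}\le\lambda_{k+1}^{r-\beta'}\|g\|_{L^2}^2\asymp k^{-\lambda(r-\beta')}$ (using $r>\beta'$, so the weights decrease), $\tfrac1{p_k^2\lambda_k^{\beta'}}\|\phi_{>k}\mathcal{A}_{>k}f_{>k}\|^2_{\Lambda^{>k}_{\Sigma}}=\tfrac1{p_k^2\lambda_k^{\beta'}}\sum_{i>k}p_i^2a_i^2\lambda_i^{r}\le\tfrac{p_{k+1}^2\lambda_{k+1}^r}{p_k^2\lambda_k^{\beta'}}\|g\|_{L^2}^2\asymp k^{\lambda(\beta'-r)}$ (using $2p+\lambda r>0$, so $p_i^2\lambda_i^r$ decreases), and since $(\gamma_n+\beta_k\operatorname{tr}(\tilde{\Sigma}_{>k})/n)^2\asymp\gamma_n^2=k^{-2(2p+\lambda\beta)}$ and $\|\phi_{\le k}f^*_{\le k}\|^2_{\Lambda^{\le k}_{\mathcal{A}^{-2}\Sigma^{1-2\beta}}}=\sum_{i\le k}a_i^2p_i^{-2}\lambda_i^{r-2\beta}\lesssim\max\{1,k^{\,2p+2\lambda\beta-\lambda r}\}$, the middle term is $\asymp\max\{k^{-2p+\lambda(\beta'-2\beta)},\,k^{\lambda(\beta'-r)}\}$. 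Taking the maximum of the three contributions, and noting that the threshold $\lambda r\gtrless 2p+2\lambda\beta$ is exactly what selects which exponent dominates, we obtain $B\lesssim\delta^{-1}\rho_{k,n}^3\,k^{\max\{\lambda(\beta'-r),\,-2p+\lambda(\beta'-2\beta)\}}=\delta^{-1}O\!\big(n^{\frac{\gamma}{2p+\lambda\beta}\max\{\lambda(\beta'-r),\,-2p+\lambda(\beta'-2\beta)\}}\big)$.

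The main obstacle is not any single estimate but the bookkeeping: in each bound one must verify that the term singled out genuinely dominates --- in particular that the ``$>k$'' variance contribution and the self-induced regularization $\operatorname{tr}(\tilde{\Sigma}_{>k})/n$ are both negligible next to, respectively, the ``$\le k$'' variance term and $\gamma_n$ (both facts hinging on $\gamma<2p+\lambda\beta$), and that the worst case over the source ball $\{g:\|g\|_{L^2}\le 1\}$ in the middle bias term is realized so as to reproduce the $\max$ in the exponent. A secondary technical point is that several of these ``traces'' are empirical $n\times n$ objects whose concentration to their population values must be invoked uniformly over the index ranges involved; this is where Assumptions~\ref{assumption:well_behaveness_feature} and~\ref{assumption:beta_regularity_condition} (well-behaved features, $\beta_k=\Theta(1)$) enter, and also where one tacitly uses the transformed-trace-class normalization keeping $\operatorname{tr}(\tilde{\Sigma}_{>k})$ finite.
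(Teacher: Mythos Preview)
Your proposal is correct and follows essentially the same route as the paper: choose $k=\lceil n^{\gamma/(2p+\lambda\beta)}\rceil$, use the eigenvalue/concentration machinery to show $\rho_{k,n}=O(1)$ (numerator and denominator both $\Theta(\gamma_n)$) and $\mu_k(\psi_{\le k}\hat S_n^*\hat S_n\psi_{\le k}^*)\asymp n$, then substitute the polynomial decays into the general bias/variance bounds and collect exponents. The only noteworthy divergence is that you work directly from the one-sided source condition $f^*=\mathcal L^{r/2}g$, $\sum_i a_i^2<\infty$, to bound each bias term, whereas the paper's appendix lemma assumes the stronger two-sided pointwise decay $[\phi f^*]_i\asymp i^{-r'}$ with $r'=\tfrac{1-\lambda(1-r)}{2}$; your reading is closer to the stated assumption and yields the same exponents.
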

\end{mygraybox}

\begin{remark} Once proper regularization norm is selected, \emph{i.e.  $\lambda \beta \geq \frac{\lambda r}{2} - p$},  with optimally selected ${\gamma }=\frac{2p + \lambda \beta}{(2p + \lambda+2r)}$ which balance the variance $n^{\frac{\gamma (1 + 2p + \lambda \beta')}{2p + \lambda \beta} - 1}$  and the bias $n^{\frac{\gamma(\lambda(\beta'-r))}{2p+\beta\lambda}}$, our bound  can achieve final bound: $n^{\frac{\lambda(\beta'-r)}{2p+\lambda r+1}}$ matches with the convergence rate build in the literature \cite{knapik2011bayesian,lu2022sobolev}
\end{remark}


\subsection{Min-norm Interpolation from benign overfitting to tempered overfitting}

We now shift our attention to the overparameterized interpolating estimators. Recently, \cite{mallinar2022benign} distinguished between three regimes: one where the risk explodes to infinity (called catastrophic overfitting), another where the risk remains bounded (called tempered overfitting), and a third regime involving consistent estimators whose risk goes to zero (called benign overfitting).  These two regimes are significantly different. In the tempered overfitting regime, when the noise is small, estimator can still achieve a low risk despite overfitting. This means that the bias goes to zero, and the variance cannot diverge too quickly. Recent work \cite{rakhlin2019consistency,cui2021generalization,barzilai2023generalization,cheng2024characterizing} showed that  minimum (kernel) norm interpolators are nearly tempered over-fitting. However, as shown in Theorem \ref{theorem:polynomial_bias_variance_interpolate}, \emph{\textbf{the PDE operator in the inverse problem can stabilize the variance term and make the min-norm interpolation estimators benign over-fitting even in fixed-dimension setting.}}

\begin{mygraybox}
    \begin{theorem}[Bias and Variance for Interpolators] \label{theorem:polynomial_bias_variance_interpolate}
     Let the kernel and target function satisfies Assumption \ref{assumption:kernel}, \ref{assumption:well_behaveness_feature} and \ref{assumption:beta_regularity_condition}, and suppose $2p + \lambda \min\{r,\beta\}> 0$ and $r > \beta'$, then for any $\delta > 0$ it holds w.p. at least $1 - \delta - O(\frac{1}{\log(n)})$ that
    \begin{align*}
        V \leq \sigma_{\varepsilon}^2 \rho_{k,n}^2\tilde{O}(n^{\max\{2p+\lambda \beta',-1\}}), B \leq  \frac{\rho_{k,n}^3}{\delta}\tilde{O}(n^{\max  \{ {\color{myorange}\lambda (\beta'-r)}, {\color{myblue}-2p + \lambda(\beta' - 2\beta)} \}\}}).
    \end{align*}
\end{theorem}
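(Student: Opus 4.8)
The plan is to instantiate the general bounds of Theorem~\ref{main:var} and Theorem~\ref{main:bias} at the ridgeless level $\gamma_n=0$ and at the single scale $k=\Theta(n/\log n)$, which up to the $\Theta(1)$ constant $\beta_k$ is the largest $k$ with $C\beta_k k\log k\le n$, i.e.\ the largest $k$ for which the indicator $\mathbb{I}_{k,n}$ of Theorem~\ref{theorem:eigenspectrum} equals $1$ and hence the two-sided estimate $\mu_k(\tfrac1n\tilde{K})\asymp\beta_k\lambda_k^{\beta}p_k^2$ (plus an effective-rank term) is in force. With this choice, Theorem~\ref{theorem:eigenspectrum}, together with the concentration lemmas behind Theorems~\ref{main:var}--\ref{main:bias} and the assumption $\alpha_k,\beta_k=\Theta(1)$, lets me replace every empirical spectral quantity appearing in the two bounds — the bottom eigenvalue $\mu_k(\psi_{\le k}\hat{S}_n^*\hat{S}_n\psi^*_{\le k})\asymp n$ of the top block, the extreme eigenvalues $\mu_1(\tfrac1n\tilde{K}_{>k})$ and $\mu_n(\tfrac1n\tilde{K}_{>k})$ of the tail block, and the empirical traces — by their population analogues up to $\mathrm{polylog}(n)$ factors; in particular the tail lower bound keeps $\mu_n(\tfrac1n\tilde{K}_{>k})$ bounded away from $0$, so that $\rho_{k,n}$ is finite and is simply carried through to the final statement. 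Summing the failure probabilities of Theorem~\ref{theorem:eigenspectrum} and Theorem~\ref{main:bias} at $k=\Theta(n/\log n)$ yields the claimed confidence $1-\delta-O(1/\log n)$.

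For the variance I would plug the polynomial profiles $\lambda_i\asymp i^{-\lambda}$, $p_i\asymp i^{-p}$ into Theorem~\ref{main:var}: the eigenvalues of $\tilde{\Sigma}=\mathcal{A}^2\Sigma^{\beta}$ are $\asymp i^{-(2p+\lambda\beta)}$, so $\|\tilde{\Sigma}_{>k}\|\asymp k^{-(2p+\lambda\beta)}$ and $\trace(\tilde{\Sigma}_{>k})\asymp k^{1-(2p+\lambda\beta)}$, while the two traces in the variance bound, evaluated with the diagonal weights $\Lambda_{\mathcal{A}^{-2}\Sigma^{-\beta'}}$ and $\Lambda_{\mathcal{A}^2\Sigma^{2\beta-\beta'}}$ and the concentration $\tfrac1n\hat{S}_n^*\hat{S}_n\approx\Sigma$ on their respective blocks, both reduce after a short computation to $\asymp k^{\,1+2p+\lambda\beta'}/n$ (or to $O(1/n)$ when the underlying series $\sum_i i^{2p+\lambda\beta'}$ converges). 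With $k=\Theta(n/\log n)$ this is $\tilde{O}\big(n^{\max\{2p+\lambda\beta',-1\}}\big)$; this vanishes exactly when the PDE order forces $2p+\lambda\beta'<0$, which is the mechanism by which $\mathcal{A}$ stabilizes the variance in fixed dimension.

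For the bias I would use the source condition to write $\langle f^*,\phi_i\rangle_{\mathcal{H}}^2=\lambda_i^{\,r-1}a_i^2$ with $\sum_i a_i^2<\infty$, together with the diagonal action of $\mathcal{A}$, and then bound the three terms of Theorem~\ref{main:bias} separately. The two tail terms $(p_k^2\lambda_k^{\beta'})^{-1}\|\phi_{>k}\mathcal{A}_{>k}f_{>k}\|^2_{\Lambda^{>k}_{\Sigma}}$ and $\|\phi_{>k}f^*_{>k}\|^2_{\Lambda^{>k}_{\Sigma^{1-\beta'}}}$ use $2p+\lambda r>0$ (so $p_i^2\lambda_i^r$ is non-increasing, which follows from $2p+\lambda\min\{r,\beta\}>0$) and $r>\beta'$, and both collapse to $O(\lambda_k^{\,r-\beta'})=\tilde{O}(n^{\lambda(\beta'-r)})$. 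The middle term $\big(\beta_k\trace(\tilde{\Sigma}_{>k})/n\big)^2\,\|\phi_{\le k}f^*_{\le k}\|^2_{\Lambda^{\le k}_{\mathcal{A}^{-2}\Sigma^{1-2\beta}}}/(p_k^2\lambda_k^{\beta'})$ produces the second branch of the maximum: its inner sum $\sum_{i\le k}p_i^{-2}\lambda_i^{\,r-2\beta}a_i^2$ is governed by its first term when $i\mapsto p_i^{-2}\lambda_i^{r-2\beta}$ is non-increasing — the smooth enough regime $\lambda\beta\ge\lambda r/2-p$ — giving again $\tilde{O}(n^{\lambda(\beta'-r)})$, and by its $k$-th term otherwise, giving $\tilde{O}\big(n^{-2p+\lambda(\beta'-2\beta)}\big)$; taking the worse of the two recovers the two exponents in the stated $\max$. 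Since $r>\beta'$ makes all three contributions decay, the estimator is consistent (bias $\to 0$), so the interpolator is tempered or benign according to the sign of the variance exponent.

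\textbf{Main obstacle.} The hard part is not the exponent bookkeeping but the high-probability transfer from the population operators $\tilde{\Sigma},\Sigma$ to their empirical counterparts that underlies both displayed bounds: each trace in Theorems~\ref{main:var}--\ref{main:bias} involves $\hat{S}_n$ (equivalently $\hat{\Sigma}$) on the top-$k$ block, and $\rho_{k,n}$ involves the two extreme eigenvalues of the \emph{infinite}-dimensional tail operator $\tilde{K}_{>k}$, so one has to apply Theorem~\ref{theorem:eigenspectrum} at the single scale $k=\Theta(n/\log n)$, which already sits at the boundary of the admissible regime $C\beta_k k\log k\le n$, and then check that the logarithmic slack there costs only $\tilde{O}$ factors and that the tail lower bound is strong enough to keep $\rho_{k,n}<\infty$. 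A secondary subtlety, already flagged in the text, is that the sharpest form of the ``$\le k$'' variance estimate needs a sub-Gaussian feature assumption; under only the well-behaved-features Assumption~\ref{assumption:well_behaveness_feature} one incurs an extra $\rho_{k,n}$-type factor, which is precisely why $\rho_{k,n}$ is left explicit in the statement rather than absorbed as in the regularized Theorem~\ref{theorem:bias_variance_regularized}.
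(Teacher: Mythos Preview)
Your plan is essentially the paper's: choose $k=\Theta(n/\log n)$, invoke the concentrated forms of Theorems~\ref{main:var} and~\ref{main:bias}, substitute the polynomial spectra, and leave $\rho_{k,n}$ explicit. Your exponent bookkeeping for both $V$ and $B$ is correct and matches the paper line for line (the paper writes the bias exponent as $1-2r'-\lambda(1-\beta')$ with $r'=\tfrac{1-\lambda(1-r)}{2}$, which is exactly your $\lambda(\beta'-r)$).

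The one real gap is in your treatment of $\mu_n(\tfrac1n\tilde K_{>k})$. You assert that ``the tail lower bound keeps $\mu_n(\tfrac1n\tilde K_{>k})$ bounded away from $0$'' while working ``at the single scale $k=\Theta(n/\log n)$''. But the Markov-based lower bound in Theorem~\ref{theorem:eigenspectrum} (equivalently Lemma~\ref{lemma:symmetric_bound_eigenvalue}) has the prefactor $1-\tfrac{1}{\delta}\sqrt{n^2\big/\big(\trace(\tilde\Sigma_{>k'})^2/\trace(\tilde\Sigma_{>k'}^2)\big)}$, and for polynomial spectra $\trace(\tilde\Sigma_{>k'})^2/\trace(\tilde\Sigma_{>k'}^2)\asymp k'$. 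Taking $k'=k=\Theta(n/\log n)$ makes the square root $\asymp\sqrt{n\log n}$ and the lower bound is vacuous for any $\delta$ that decays. The paper resolves this with a \emph{second} scale $k'=n^2\log^4(n)$ (used only for the lower tail eigenvalue via Weyl, $\mu_n(\tilde K_{>k})\ge\mu_n(\tilde K_{>k'})$), which makes the prefactor $1-\log n\cdot\log^{-2}n=1-o(1)$ at the cost of a Markov failure probability $1/\log n$; this is precisely where the $O(1/\log n)$ term in the confidence comes from, and it also yields the worst-case estimate $\rho_{k,n}=\tilde O(n^{2p+\beta\lambda-1})$. You correctly flag the tail lower bound as the main obstacle, but your ``single scale'' phrasing would not survive as written; incorporate the auxiliary $k'$ and the argument goes through.
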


\end{mygraybox}

\begin{remark}

For well-behaved sub-Gaussian features, the concentration coefficients $\rho_{k,n} = \Theta(1)$ \cite{barzilai2023generalization} and in the worst case $\rho_{k,n}$ can become $\tilde{O}(n^{2p + \beta \lambda - 1})$ which is shown in the appendix. Our bound can recover the results in \cite{barzilai2023generalization} by setting $p=0,\beta=1,\beta'=0$ and recover the results in \cite{cui2021generalization} when $\sigma_\epsilon=0,\beta'=0$ and $\rho_{k,n} = 1$. \emph{Since the $p$ considered for PDE inverse problems is a negative number (See Assumption \ref{assumption:kernel}), our bound showed that the structure of PDE inverse problem made benign over-fitting possible even in the fixed dimesional setting. This result differs the behavior of regression with inverse problem when large over-parameterized model is applied.}  The more negative $p$  leads to smaller bound over the variance which indicates Sobolev training is more stable to noise, matches with empirical evidence \cite{son2021sobolev,yu2021gradient,lu2022sobolev}. 

\end{remark}

\subsection{Implication of Our Results}

\paragraph{Selection of Inductive Bias:} As demonstrated in Theorem \ref{theorem:bias_variance_regularized}
and Theorem \ref{theorem:polynomial_bias_variance_interpolate}, variance is independent of the inductive bias (i.e., $\beta$) and the only dependency is appeared in bounding the bias. At the same time, the upper bound for the bias is a maximum of the {\color{myorange}orange} part and the {\color{myblue}blue} part. The {\color{myorange}orange} part is independent of the inductive bias and only depend on the inverse problem (i.e., $r$ and $\lambda$) and evaluation metric (i.e., $\beta'$), while the {\color{myblue}blue} part is the only part depending on the inductive bias used in the regularization. With properly selected inductive bias $\beta$, one can achieve the best possible convergence rate which only depends on the {\color{myorange}orange} part.  When the inductive bias does not focus much on the low frequency eigenfunctions (i.e., $\lambda\beta\le \frac{\lambda r}{2}-p$), that means, regularized with kernel which is not smooth enough, the rate is dominated by the {\color{myblue}blue} part and is potential sub-optimal. Under the function estimation setting, the selection matches the empirical understanding in semi-supervised learning \cite{zhou2008high,zhou2011semi,smola2003kernels,chapelle2002cluster,dong2020cure,zhai2024spectrally} and \emph{\textbf{theoretically surprisingly matches the smoothness requirement determined in the Bayesian inverse problem literature.}}\cite{knapik2011bayesian,szabo2013empirical}.

\paragraph{Takeaway to Practitioners:} Our theory demonstrated that to attain optimal performance in physics-informed machine learning, incorporating sufficiently smooth inductive biases is necessary. For PINNs applied to higher-order PDEs, one needs smoother activation functions. This is because the value of $p$ for higher-order PDEs is a negative number with a larger absolute value, thus making the term $\frac{\lambda r}{2}-p$ larger. A larger value of $\frac{\lambda r}{2}-p$ necessitates the use of smoother activation functions \cite{bietti2020deep,chen2020deep} to ensure the solution satisfies the required smoothness conditions imposed by the higher-order PDE. Another implication of the theory is {\bf the variance stabilization effects} as mentioned before brought about by the PDE operator in the inverse problem. Higher-order PDEs would benefit from more substantial stabilization effects. This motivates the idea that Sobolev training \cite{son2021sobolev,yu2021gradient} may not only aid optimization \cite{lu2022sobolev} but also contribute to improved generalization error for overparameterized models. However, as previously demonstrated, utilizing a neural network with smoother activations is necessary to leverage these benefits.

\section{Experiments}
\begin{figure*}[htbp]
    \centering
    \vspace{-0.1in}
    \includegraphics[width=0.3\linewidth]{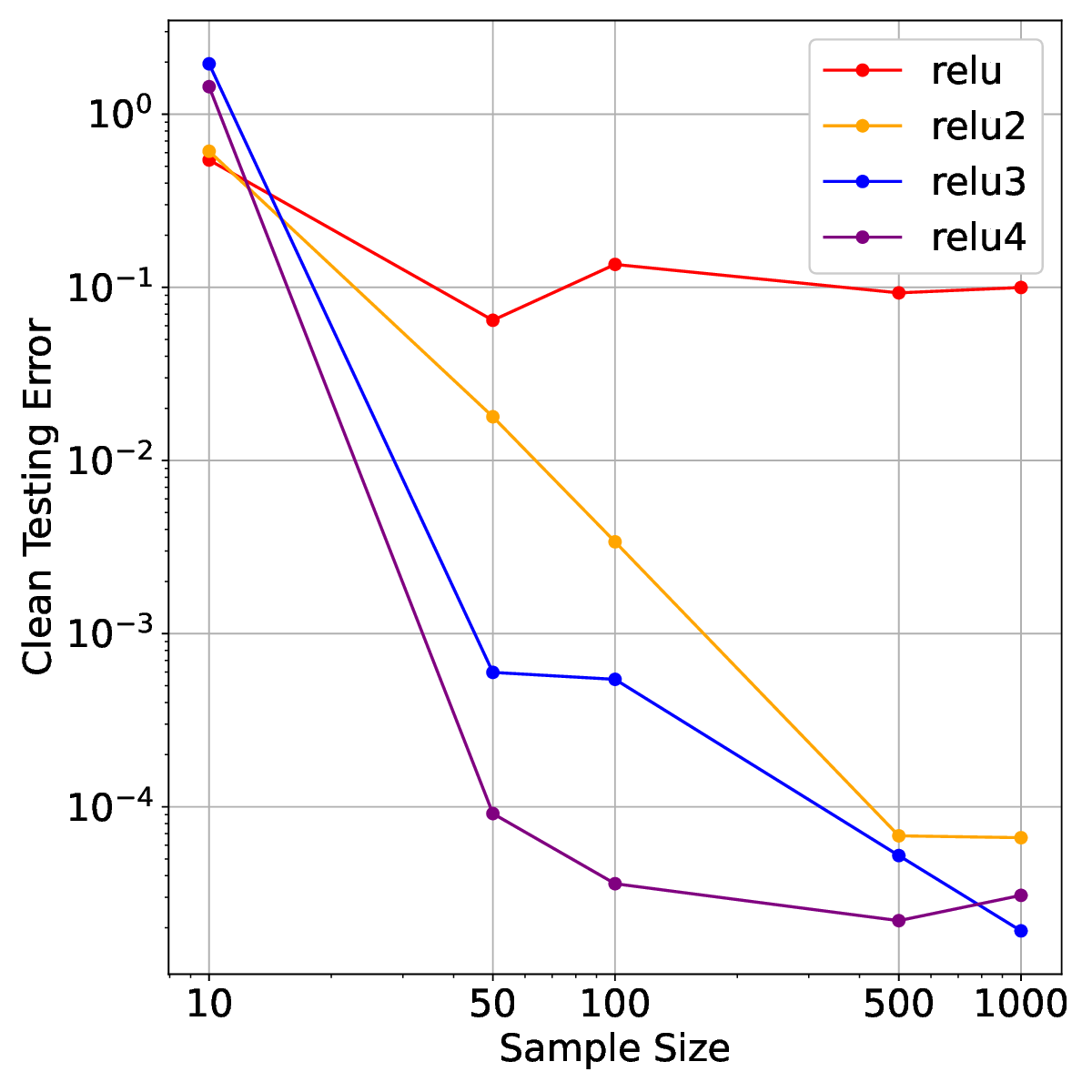}\includegraphics[width=0.3\linewidth]{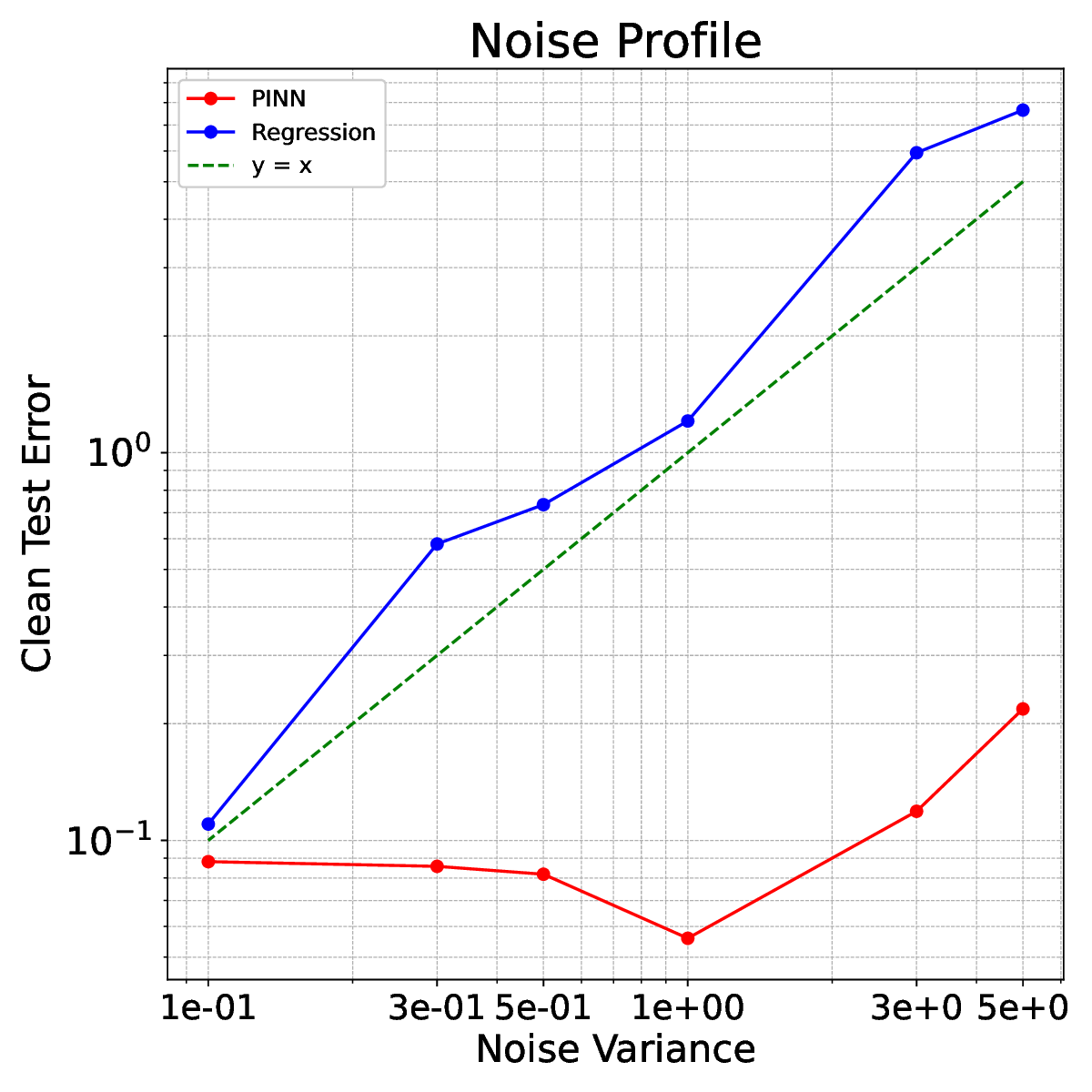}\includegraphics[width=0.3\linewidth]{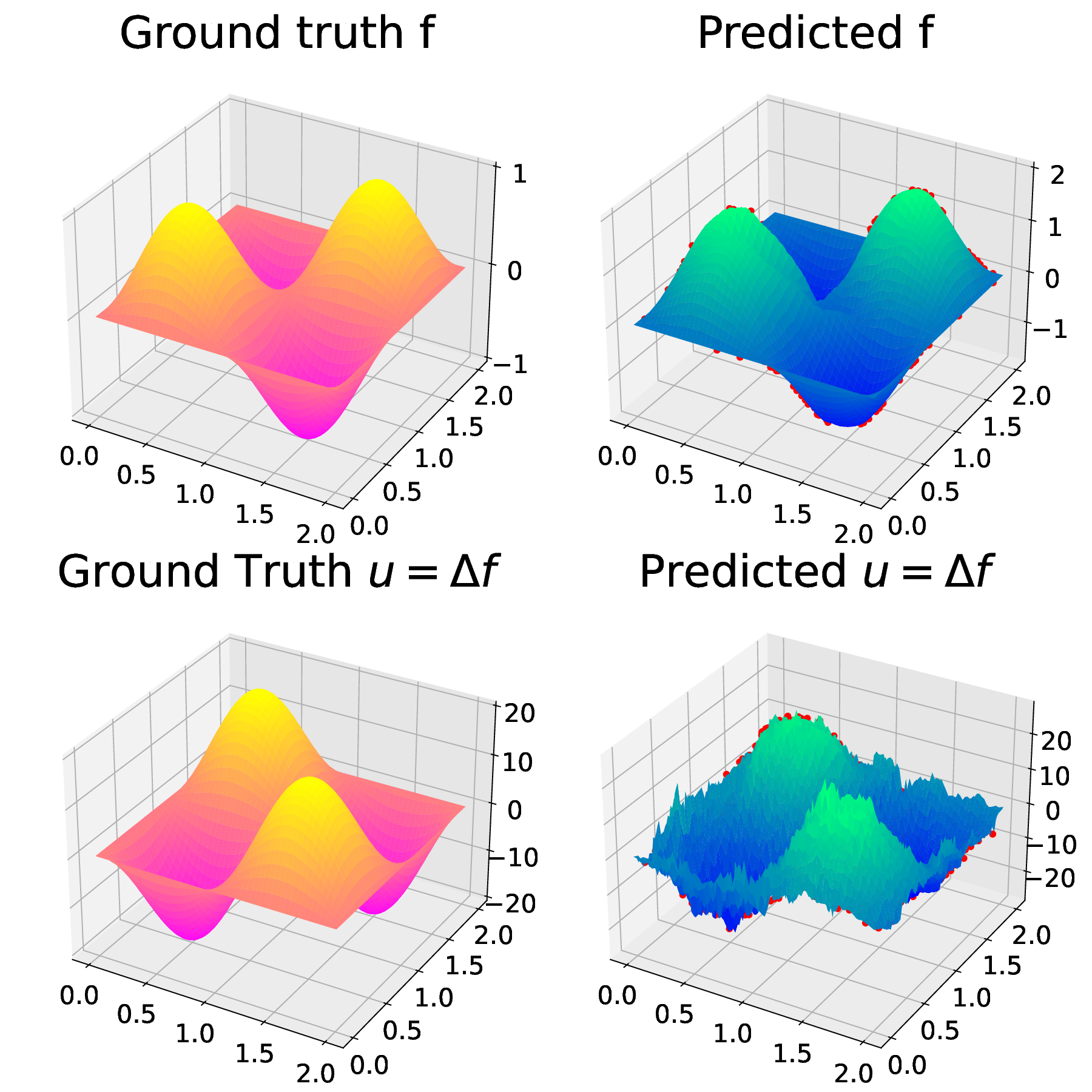}
    \vspace{-0.15in}
    \caption{We verified our finding beyond kernel estimators. For all the plotted figure, we learn two dimensional Poisson equation. \textbf{(Left)} We examine the impact of smooth inductive bias on convergence. Our findings demonstrate that when the activation function is sufficiently smooth, the inductive bias has a limited effect on improving convergence, which aligns with our theoretical predictions. \textbf{(Middle)} Noise profile of Physics-informed interpolator and regression Interpolator. The physics-informed interpolator exhibits benign overfitting, unlike the regression interpolator. \textbf{(Right)}  Visualization of the ground truth and the learned solutions for $f$ and $u = \Delta f$. The learned solution for $f$ effectively smooths out the high-frequency components in the error of $\Delta f$. }
    \label{fig:convergence_rate}
\end{figure*}
We verify our theoretical findings beyond kernel estimators, we conducted experiments on Poisson equation and other PDE problems (refer to Appendix~\ref{appendix:exp}). To be specific, we consider the Poisson equation $u = \Delta f$ in $\Omega = [0,2]^2$ with value zero in $\partial \Omega$, where the ground truth $f(x_1,x_2) = \sin(\pi x_1) \sin(\pi x_2)$. The data points $\{(x_i, y_i)\}_{i=1}^{n}$ are uniformly sampled from $\Omega$, and $y_i = \Delta f(x_i) + \varepsilon$ with $\varepsilon \sim \mathcal{N}(0, \sigma^2)$. Our experiments illustrate our theory from the following three aspects.

\paragraph{Effect of Smoothness of the Inductive Bias}
We examine convergence rate of using activation function ReLU, $\text{ReLU}^2$, $\text{ReLU}^3$ by fixing noise level variance $\sigma^2 = 0.1$, and vary number of samples as 50, 100, 500, 1000 and plot the test error against number of samples. The result in Figure \ref{fig:convergence_rate}(Left) verifies our finding that when the inductive bias is not smooth enough, the convergence will benefit from smoother activation function. However, by comparing convergence rate of $\text{ReLU}^3$ and $\text{ReLU}^4$ in Figure \ref{fig:convergence_rate}(Left), when the activation function is smooth enough, the convergence behavior would not be affected too much.

\paragraph{Benign Overfitting of Physics-Informed Interpolator} Following \cite{benning2018modern}, we verify the benign overfitting behavior by plotting the noise profiles of the Physics-Informed interpolator and comparing with the regressor in Figure \ref{fig:convergence_rate}(Middle). A noise profile characterizes the sensitivity of a learning procedure to noise in the training set, specifically how the asymptotic risk varies with the variance of additive Gaussian noise. The figure demonstrates that the physics-informed interpolator is robust to noise while the regression interpolator exhibits tempered overfitting, as expected from \cite{mallinar2022benign}.
This supports our theory that Physics-Informed interpolator can still generalize well over noisy data, i.e., benign overfitting.

\paragraph{The Noise Stabilization Effect} We also plotted the final output of the neural network in Figure \ref{fig:convergence_rate}.  The intuition behind our theory of benign overfitting in inverse problems differs from that of standard regression because we predict \( \Delta^{-1} u \) rather than \( u \) in the regression setting. The operator \( \Delta^{-1} \) functions as a kernel smoothing mechanism, where the Green's function serves as the kernel. This smoothing process attenuates high-frequency components, which are the dominant contributors to the prediction error, and thus effectively alleviates their impact. For general PDEs governing physical laws, most behave like differential operators, where the forward problem amplifies high-frequency components. Consequently, solving the inverse problem tends to attenuate these high-frequency components, resulting in a similar noise stabilization effect.

\section{Conclusions}

In conclusion, we study the behavior of kernel ridge and ridgeless regression methods for linear inverse problems governed by elliptic partial differential equations (PDEs). Our asymptotic analysis reveals that the PDE operator can stabilize the variance and even lead to benign overfitting in fixed-dimensional problems, exhibiting distinct behavior compared to regression problems. Another key focus of our investigation was the impact of different inductive biases introduced by minimizing various Sobolev norms as a form of (implicit) regularization. Interestingly, we found that the final convergence rate is independent of the choice of smooth enough inductive bias for both ridge and ridgeless regression methods.
For the regularized least-squares estimator, our results demonstrate that all considered inductive biases can achieve the minimax optimal convergence rate, provided the regularization parameter is appropriately chosen. Notably, our analysis recovered the smoothness condition found by Empirical Bayes in the function regression setting and extended it to the minimum norm interpolation and inverse problem settings.

\bibliographystyle{unsrt}
\bibliography{kernel}

\newpage
\tableofcontents

\newpage
\appendix
\addcontentsline{toc}{section}{Appendices}
\section{Additional Notations and Some useful lemmas}
For brevity, we denote simplified notation for $\leq k$ and $>k$, for function $f \in \mathcal{H}$, we define $f_{\leq k} := \phi_{\leq k}^* \phi_{\leq k} f$, for operator $\mathcal{A}: \mathcal{H} \to \mathcal{H}$, we also define $\mathcal{A}_{\leq k}: f_{\leq k} \mapsto \phi_{\leq k}^* \phi_{\leq k} \mathcal{A} f_{\leq k}$.

\noindent We denote $\mu_n(M)$ as the $n$-th largest eigenvalue of some matrix $M$. We also define $id_{\leq k}$ and $id_{>k}$. We denote $[n]$ as integers between 1 and $n$.

$\phi_{\leq k} \hat{S}_n^*$ is the map from $\mathbb{R}^n \to \mathbb{R}^k$, therefore, we can consider it as $k \times n$ matrix, where each column is the top $k$ features of the data points. $\hat{S}^*_n \phi_{\leq k}$ is the map from $\mathbb{R}^k \to \mathbb{R}^n$, therefore, we can consider it $n \times k$ matrix, and $(\phi_{\leq k} \hat{S}_n^*)^T = \hat{S}^*_n \phi_{\leq k}$. Similar reasoning holds for $>k$ case.

Note that for simplicity, we always convert to using $\psi$ for convenient computation, by using the following: $\phi_{\leq k} = \Lambda_{\Sigma^{1/2}}^{\leq k} \psi_{\leq k}$ and $\phi^{*}_{\leq k} = \psi^{*}_{\leq k} \Lambda_{\Sigma^{1/2}}^{\leq k}$, also similar for $>k$. This is because $\mathbb{E}([\hat{S}_n \psi^*_{>k}]_{ji}^2) = 1$ by Lemma \ref{lemma:psi_expectation_one}. 

Next we deliver several useful lemmas.


The following lemma justifies our $<k$ and $\geq k$ decomposition.
\begin{lemma}[Decomposition lemma] 
    \label{lemma:decomposition}
    The following holds:
    \begin{enumerate}
        \item For any function $f \in \mathcal{H}$, $f = f_{\leq k} + f_{>k}$;
        \item For any operator $\mathcal{A}: \mathcal{H} \to \mathcal{H}$, $\mathcal{A} = \mathcal{A}_{\leq k} + \mathcal{A}_{>k}$;
        \item For the spectrally transformed kernel matrix $\tilde{K}$, $\tilde{K} = \tilde{K}_{\leq k} + \tilde{K}_{>k}$.
    \end{enumerate}
\end{lemma}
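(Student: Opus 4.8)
The plan is to prove each of the three decompositions by expanding the relevant objects in the spectral basis $\{\phi_i\}$ (equivalently $\{\psi_i\}$) and checking that the $\leq k$ and $>k$ pieces partition the coefficients. Recall the definitions: $\phi_{\leq k}f = (\langle f,\phi_i\rangle_{\mathcal H})_{i=1}^k$, $\phi_{\leq k}^*$ sends $\theta\in\mathbb R^k$ to $\sum_{i=1}^k \theta_i\phi_i$, and $f_{\leq k} := \phi_{\leq k}^*\phi_{\leq k}f = \sum_{i=1}^k \langle f,\phi_i\rangle_{\mathcal H}\phi_i$; analogously $f_{>k} = \sum_{i>k}\langle f,\phi_i\rangle_{\mathcal H}\phi_i$.

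For part (1): since $\{\phi_i\}_{i\geq 1}$ is an orthonormal basis of $\mathcal H$ (the excerpt states $\phi^*\phi = \mathrm{id}$ and that $\phi$ is an isometry), every $f\in\mathcal H$ has the expansion $f = \sum_{i\geq 1}\langle f,\phi_i\rangle_{\mathcal H}\phi_i$, and splitting this sum at index $k$ gives exactly $f = f_{\leq k} + f_{>k}$. For part (2): an operator $\mathcal A:\mathcal H\to\mathcal H$ applied to $f$ yields $\mathcal A f\in\mathcal H$, and by part (1) applied to $\mathcal A f$ we get $\mathcal A f = (\mathcal A f)_{\leq k} + (\mathcal A f)_{>k} = \mathcal A_{\leq k}f + \mathcal A_{>k}f$ using the definitions $\mathcal A_{\leq k}: f\mapsto(\mathcal A f)_{\leq k}$ and $\mathcal A_{>k}: f\mapsto(\mathcal A f)_{>k}$; since $f$ was arbitrary, $\mathcal A = \mathcal A_{\leq k} + \mathcal A_{>k}$ as operators.

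For part (3): recall $\tilde K = \hat S_n\mathcal A^2\Sigma^{\beta-1}\hat S_n^*$ and $\tilde K_{\leq k} := \hat S_n \mathcal A_{\leq k}^2 \Sigma_{\leq k}^{\beta-1}\hat S_n^*$, with $\tilde K_{>k}$ defined analogously. The key point is that $\mathcal A$, $\Sigma$, and hence all their powers are co-diagonalizable in the basis $\{\psi_i\}$, so $\mathcal A^2\Sigma^{\beta-1} = \sum_{i\geq 1} p_i^2\lambda_i^{\beta-1}\,\psi_i\otimes\psi_i$ is itself diagonal in this basis, and it therefore splits cleanly as $(\mathcal A^2\Sigma^{\beta-1})_{\leq k} + (\mathcal A^2\Sigma^{\beta-1})_{>k} = \mathcal A_{\leq k}^2\Sigma_{\leq k}^{\beta-1} + \mathcal A_{>k}^2\Sigma_{>k}^{\beta-1}$ (the cross terms vanish because the ranges of the truncated projections are orthogonal). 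Conjugating by $\hat S_n$ on the left and $\hat S_n^*$ on the right is linear, so it distributes over this sum, yielding $\tilde K = \tilde K_{\leq k} + \tilde K_{>k}$.

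The only mild subtlety — and the step I expect to require the most care — is part (3): one must verify that the notation $\mathcal A_{\leq k}^2\Sigma_{\leq k}^{\beta-1}$ really coincides with $(\mathcal A^2\Sigma^{\beta-1})_{\leq k}$, i.e., that truncating commutes with taking powers and products for co-diagonalizable operators. This is immediate once everything is written in the $\psi_i$-basis as diagonal operators, but it genuinely uses the co-diagonalizability assumption on $\mathcal A$ (Assumption on $\mathcal A$); without it, $\mathcal A_{\leq k}^2\neq(\mathcal A^2)_{\leq k}$ in general and the clean splitting would fail. Everything else is bookkeeping with orthonormal expansions, so the proof should be short.
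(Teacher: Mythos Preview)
Your proposal is correct and follows essentially the same approach as the paper's proof: part (1) by expanding in the orthonormal basis $\{\phi_i\}$ and splitting the sum at $k$, part (2) by applying part (1) to $\mathcal{A}f$, and part (3) by using co-diagonalizability to write $\mathcal{A}^2\Sigma^{\beta-1} = \mathcal{A}_{\leq k}^2\Sigma_{\leq k}^{\beta-1} + \mathcal{A}_{>k}^2\Sigma_{>k}^{\beta-1}$ and then distributing $\hat S_n(\cdot)\hat S_n^*$ over the sum. Your explicit remark that part (3) genuinely relies on the co-diagonalizability assumption (so that truncation commutes with products/powers) is a useful addition that the paper leaves implicit.
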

\begin{proof}
    We first prove (1), 
    \begin{align*}
        f_{\leq k} + f_{>k} 
       & = \ \phi^*_{\leq k} \begin{pmatrix}
            \langle f, \phi_1 \rangle_{\mathcal{H}}\\
            \langle f, \phi_2 \rangle_{\mathcal{H}}\\
            \cdots\\
            \langle f, \phi_k \rangle_{\mathcal{H}}
        \end{pmatrix} + \phi^*_{> k} \begin{pmatrix}
            \langle f, \phi_{k+1} \rangle_{\mathcal{H}}\\
            \langle f, \phi_{k+2} \rangle_{\mathcal{H}}\\
            \cdots\\
            \end{pmatrix} = \sum_{i=1}^{k} \langle f, \phi_i \rangle_{\mathcal{H}} \phi_i + \sum_{i=k+1}^{\infty} \langle f, \phi_i \rangle_{\mathcal{H}} \phi_i \\
        &=\sum_{i=1}^{\infty} \langle f, \phi_i \rangle_{\mathcal{H}} \phi_i = \ f.
    \end{align*}
    Then we move on to (2), for any $f \in \mathcal{H}$, we have
    \begin{align*}
        &(\mathcal{A}_{\leq k} + \mathcal{A}_{>k}) f 
        = \ (\mathcal{A} f)_{\leq k} + (\mathcal{A} f)_{>k} 
        = \ \mathcal{A} f. \ \ \ \text{\color{gray} (By (1))} \qquad    
    \end{align*}
Finally we prove the statement (3) , this is because
    \begin{align*}
        \tilde{K} = \hat{S}_n \mathcal{A}^2 \Sigma^{\beta - 1}\hat{S}_n^*= \hat{S}_n (\mathcal{A}_{\leq k}^2 \Sigma_{\leq k}^{\beta - 1} + \mathcal{A}_{> k}^2 \Sigma_{> k}^{\beta - 1}) \hat{S}_n^* = \hat{S}_n \mathcal{A}_{\leq k}^2 \Sigma_{\leq k}^{\beta - 1} \hat{S}_n^* + \hat{S}_n  \mathcal{A}_{> k}^2 \Sigma_{> k}^{\beta - 1} \hat{S}_n^* = \tilde{K}_{\leq k} + \tilde{K}_{>k}.
    \end{align*}
\end{proof}

\noindent In the following lemma modified from \cite{barzilai2023generalization}, we give a lemma which is useful for bounding $\hat{f}(y)_{\leq k}$'s norm in bounding bias and variance in \ref{lemma:concentration_variance}, \ref{lemma:intermediate_ub_bias}.

\begin{lemma}
\label{lemma:separate_search}
Denote $\hat f(y):=\mathcal{A} \Sigma^{\beta - 1} \hat{S}_n^* (\tilde{K}^{\gamma})^{-1}  y$ (highlight its dependence on $y$), we have  
$$
\underbrace{\phi_{\leq k} \hat{f}(y)_{\leq k}}_{k\times 1} + \underbrace{\phi_{\leq k} \mathcal{A}_{\leq k} \Sigma^{\beta - 1}_{\leq k} \hat{S}_n^*}_{k\times n} \underbrace{(\tilde{K}_{>k}^{\gamma})^{-1}}_{n\times n} \underbrace{\hat{S}_n \mathcal{A}_{\leq k} \hat{f}(y)_{\leq k}}_{n\times 1} = \underbrace{\phi_{\leq k}\mathcal{A}_{\leq k} \Sigma^{\beta - 1}_{\leq k} \hat{S}_n^* }_{k\times n} \underbrace{(\tilde{K}^{\gamma}_{>k})^{-1}}_{n\times n} \underbrace{y}_{n\times 1},
$$
where $\tilde{K}_{> k}^{\gamma}$ is the regularized version of spectrally transformed matrix, defined as $\hat{S}_n \mathcal{A}_{> k}^2 \Sigma_{> k}^{\beta - 1} \hat{S}^{*}_n + n\gamma_n I$.

\end{lemma}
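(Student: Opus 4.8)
Proof proposal for Lemma \ref{lemma:separate_search}The plan is to derive the stated identity by writing out $\hat f(y)_{\leq k}$ explicitly in terms of the closed-form solution $\hat f(y)=\mathcal{A}\Sigma^{\beta-1}\hat S_n^*(\tilde K^\gamma)^{-1}y$ and then manipulating the resulting matrix equation so that the "low-frequency" block $(\tilde K^\gamma)^{-1}$ is replaced by the decoupled block $(\tilde K^\gamma_{>k})^{-1}$. First I would apply $\phi_{\leq k}$ to the closed form and use the co-diagonalizability of $\mathcal{A}$ and $\Sigma$ together with the decomposition $\mathcal{A}=\mathcal{A}_{\leq k}+\mathcal{A}_{>k}$, $\Sigma^{\beta-1}=\Sigma_{\leq k}^{\beta-1}+\Sigma_{>k}^{\beta-1}$ (Lemma \ref{lemma:decomposition}), noting that $\phi_{\leq k}\mathcal{A}\Sigma^{\beta-1}=\phi_{\leq k}\mathcal{A}_{\leq k}\Sigma_{\leq k}^{\beta-1}$ because the $>k$ features are annihilated by $\phi_{\leq k}$. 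This gives
\[
\phi_{\leq k}\hat f(y)_{\leq k}=\phi_{\leq k}\mathcal{A}_{\leq k}\Sigma_{\leq k}^{\beta-1}\hat S_n^*(\tilde K^\gamma)^{-1}y .
\]

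Next I would rewrite $(\tilde K^\gamma)^{-1}$ using $\tilde K^\gamma=\tilde K^\gamma_{>k}+\tilde K_{\leq k}$ (from part (3) of Lemma \ref{lemma:decomposition}, shifting the $n\gamma_n I$ into the $>k$ block) and the identity $(\tilde K^\gamma)^{-1}=(\tilde K^\gamma_{>k})^{-1}-(\tilde K^\gamma_{>k})^{-1}\tilde K_{\leq k}(\tilde K^\gamma)^{-1}$, which is just the first step of the Woodbury/resolvent expansion. Multiplying through by $\phi_{\leq k}\mathcal{A}_{\leq k}\Sigma_{\leq k}^{\beta-1}\hat S_n^*$ on the left and $y$ on the right yields
\[
\phi_{\leq k}\hat f(y)_{\leq k}
=\phi_{\leq k}\mathcal{A}_{\leq k}\Sigma_{\leq k}^{\beta-1}\hat S_n^*(\tilde K^\gamma_{>k})^{-1}y
-\phi_{\leq k}\mathcal{A}_{\leq k}\Sigma_{\leq k}^{\beta-1}\hat S_n^*(\tilde K^\gamma_{>k})^{-1}\tilde K_{\leq k}(\tilde K^\gamma)^{-1}y .
\]
The remaining task is to recognize the last term as $\phi_{\leq k}\mathcal{A}_{\leq k}\Sigma_{\leq k}^{\beta-1}\hat S_n^*(\tilde K^\gamma_{>k})^{-1}\hat S_n\mathcal{A}_{\leq k}\hat f(y)_{\leq k}$. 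For this I would substitute $\tilde K_{\leq k}=\hat S_n\mathcal{A}_{\leq k}^2\Sigma_{\leq k}^{\beta-1}\hat S_n^*$ and observe that $\mathcal{A}_{\leq k}^2\Sigma_{\leq k}^{\beta-1}\hat S_n^*(\tilde K^\gamma)^{-1}y$ is exactly the coordinate representation (after applying $\hat S_n\mathcal{A}_{\leq k}$ and tracking the $\Sigma^{\beta-1}$ versus the evaluation map) of $\hat f(y)_{\leq k}$; concretely, since $\hat f(y)_{\leq k}=\mathcal{A}_{\leq k}\Sigma_{\leq k}^{\beta-1}\hat S_n^*(\tilde K^\gamma)^{-1}y$ and $\mathcal{A}_{\leq k}$ is self-adjoint and commutes with the spectral projections, one has $\hat S_n\mathcal{A}_{\leq k}\hat f(y)_{\leq k}=\hat S_n\mathcal{A}_{\leq k}^2\Sigma_{\leq k}^{\beta-1}\hat S_n^*(\tilde K^\gamma)^{-1}y=\tilde K_{\leq k}(\tilde K^\gamma)^{-1}y$. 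Plugging this back in and moving the second term to the left-hand side gives precisely the claimed identity.

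The main obstacle I anticipate is purely bookkeeping: keeping the spectral-projection notation consistent so that $\phi_{\leq k}\mathcal{A}\Sigma^{\beta-1}$ genuinely collapses to $\phi_{\leq k}\mathcal{A}_{\leq k}\Sigma_{\leq k}^{\beta-1}$ and that $\hat f(y)_{\leq k}$ really equals $\mathcal{A}_{\leq k}\Sigma_{\leq k}^{\beta-1}\hat S_n^*(\tilde K^\gamma)^{-1}y$ rather than picking up spurious cross-terms between the $\leq k$ and $>k$ blocks. This relies crucially on the co-diagonalizability assumption on $\mathcal{A}$ (Assumption on $\mathcal{A}$), which ensures all the operators $\mathcal{A}$, $\Sigma$, and their powers are simultaneously diagonal in the $\psi_i$ basis, so that $\mathcal{A}_{\leq k}$ and $\Sigma_{>k}^{\beta-1}$ multiply to zero. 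Once that is set up cleanly, the rest is just one application of the resolvent identity and re-substitution, with no analytic difficulty.
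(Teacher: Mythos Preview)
Your proposal is correct and is essentially the paper's own argument for the regularized case $\gamma_n>0$: the paper also writes $\hat f(y)_{\leq k}=\mathcal{A}_{\leq k}\Sigma_{\leq k}^{\beta-1}\hat S_n^*(\tilde K_{>k}^\gamma+\tilde K_{\leq k})^{-1}y$, substitutes into the left-hand side, uses $\hat S_n\mathcal{A}_{\leq k}\cdot\mathcal{A}_{\leq k}\Sigma_{\leq k}^{\beta-1}\hat S_n^*=\tilde K_{\leq k}$, and collapses $(\tilde K_{>k}^\gamma)^{-1}(\tilde K_{>k}^\gamma+\tilde K_{\leq k})(\tilde K_{>k}^\gamma+\tilde K_{\leq k})^{-1}$ to the identity --- which is exactly your resolvent step read in the other direction. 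The one cosmetic difference is that the paper treats the ridgeless case $\gamma_n=0$ separately via a variational/orthogonality argument (characterizing $\hat f_{>k}$ as the minimum-$\mathcal H^\beta$-norm solution of the residual constraint and then imposing $\mathcal H^\beta$-orthogonality of $v(\phi_{\leq k}\hat f_{\leq k})$ to perturbations), whereas your algebraic route handles both cases at once; since the statement already presupposes the invertibility of $\tilde K^\gamma$ and $\tilde K_{>k}^\gamma$, your uniform treatment is legitimate and arguably cleaner.
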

\begin{proof}
First we discuss the ridgeless case i.e. $\gamma_n = 0$,
where $\hat{f}$ is the minimum norm solution, then $\hat{f}_{>k}$ is also the minimum norm solution to $\hat{S}_n \mathcal{A}_{>k} \hat{f}_{>k} = y - \hat{S}_n \mathcal{A}_{\leq k} \hat{f}_{\leq k}$, then similar to \ref{eq:objective} we can write
$$
\hat{f}_{>k} = \mathcal{A} \Sigma^{\beta - 1} \hat{S}_n^* (\hat{S}_n \mathcal{A}_{>k}^2 \Sigma_{>k}^{\beta - 1} \hat{S}_n^*)^{-1} (y - \hat{S}_n \mathcal{A}_{\leq k}\hat{f}_{\leq k}).
$$
Therefore,
$$
\phi_{>k} \hat{f}_{>k} = \Lambda^{>k}_{\mathcal{A} \Sigma^{\beta - 1}} \phi_{>k} \hat{S}_n^* (\hat{S}_n \mathcal{A}^2_{>k} \Sigma^{\beta - 1}_{>k} \hat{S}_n^*)^{-1} (y - \hat{S}_n  \phi^* _{\leq k} \Lambda^{\leq k}_{\mathcal{A}} \phi_{\leq k}\hat{f}_{\leq k}).
$$
As such, we obtain min norm interpolator is the the minimizer of following 
\begin{align*}
    \phi \hat{f}(y) =& \arg\min_{\hat{f}_{\leq k}} v(\phi_{\leq k} \hat{f}_{\leq k}) \\
    :=& [( \phi_{\leq k} \hat{f}_{\leq k})^T, (y - \hat{S}_n  \phi^* _{\leq k} \Lambda^{\leq k}_{\mathcal{A}} \phi_{\leq k}\hat{f}_{\leq k})^T  (\hat{S}_n \mathcal{A}^2_{>k} \Sigma^{\beta - 1}_{>k} \hat{S}_n^*)^{-1} (\phi_{>k} \hat{S}_n^*)^T \Lambda^{>k}_{\mathcal{A} \Sigma^{\beta - 1}}].
\end{align*}
The vector $\phi \hat{f}(y)$ gives minimum norm if for any additional vector $\eta_{\leq k} \in \mathbb{R}^k$ we have $v(\phi_{\leq k} \hat{f}_{\leq k}(y)) \perp v(\phi_{\leq k} \hat{f}_{\leq k}(y) + \eta_{\leq k}) - v(\phi_{\leq k} \hat{f}_{\leq k}(y))$ in $\mathcal{H}^{\beta}$ norm.

We first write out the second vector
$$
v(\phi_{\leq k} \hat{f}_{\leq k}(y) + \eta_{\leq k}) - v(\phi_{\leq k} \hat{f}_{\leq k}(y)) = [\eta_{\leq k}^T, -\eta_{\leq k}^T \Lambda^{\leq k}_{\mathcal{A}} (\hat{S}_n  \phi^* _{\leq k})^T (\hat{S}_n \mathcal{A}_{> k}^2 \Sigma_{> k}^{\beta - 1} \hat{S}_n^*)^{-1} (\phi_{>k} \hat{S}_n^*)^T \Lambda^{>k}_{\mathcal{A} \Sigma^{\beta - 1}}].
$$
Then we compute the inner product w.r.t. $\mathcal{H}^{\beta}$ norm, by \ref{lemma:equiv_sobolev_matrix_norm} we have:
\begin{align*}
&\eta_{\leq k}^T \Lambda^{\leq k}_{\Sigma^{1 - \beta}} (\phi_{\leq k} \hat{f}_{\leq k}) \\
-& \eta_{\leq k}^T \Lambda^{\leq k}_{\mathcal{A}} (\hat{S}_n  \phi^* _{\leq k})^T \underbrace{(\hat{S}_n \mathcal{A}_{>k}^2 \Sigma^{\beta - 1}_{>k} \hat{S}_n^*)^{-1}}_{(1)}  \underbrace{(\phi_{>k} \hat{S}_n^*)^T \Lambda^{>k}_{\mathcal{A} \Sigma^{\beta - 1}} \Lambda^{> k}_{\Sigma^{1 - \beta}} \Lambda^{>k}_{\mathcal{A} \Sigma^{\beta - 1}} (\phi_{>k} \hat{S}_n^*)}_{(2)} \\
& (\hat{S}_n \mathcal{A}^2_{>k} \Sigma^{\beta - 1}_{>k} \hat{S}_n^*)^{-1} (y - \hat{S}_n  \phi^* _{\leq k} \Lambda^{\leq k}_{\mathcal{A}} \phi_{\leq k}\hat{f}_{\leq k}) = 0.
\end{align*}
Note that (1) and (2) cancel out, and since the equality above holds for any $\eta_{\leq k}$, we have:
$$
 \Lambda^{\leq k}_{\Sigma^{1 - \beta}} (\phi_{\leq k} \hat{f}_{\leq k}) - \Lambda^{\leq k}_{\mathcal{A}} (\hat{S}_n  \phi^* _{\leq k})^T (\hat{S}_n \mathcal{A}_{>k}^2 \Sigma^{\beta - 1}_{>k} \hat{S}_n^*)^{-1} (y - \hat{S}_n  \phi^* _{\leq k} \Lambda^{\leq k}_{\mathcal{A}} \phi_{\leq k}\hat{f}_{\leq k}) = 0.
$$
Therefore,
$$
\phi_{\leq k} \hat{f}_{\leq k} - \Lambda^{\leq k}_{\mathcal{A} \Sigma^{\beta - 1}} \phi_{\leq k} \hat{S}_n^* (\tilde{K}^{\gamma}_{>k})^{-1} (y - \hat{S}_n \mathcal{A} \hat{f}_{\leq k}) = 0.
$$
With some simple algebraic manipulation we can obtain the required identity
$$
\phi_{\leq k} \hat{f}_{\leq k} + \phi_{\leq k} \mathcal{A}_{\leq k} \Sigma_{\leq k}^{\beta - 1} \hat{S}_n^* (\tilde{K}^{\gamma}_{>k})^{-1} \hat{S}_n \mathcal{A} \hat{f}_{\leq k}  = \phi_{\leq k} \mathcal{A}_{\leq k} \Sigma_{\leq k}^{\beta - 1}  \hat{S}_n^* (\tilde{K}^{\gamma}_{>k})^{-1} y.
$$
This finishes our discussion on ridgeless case.

For the regularized case i.e. $\gamma_n > 0$, first we prove 
$$
 \hat{f}(y)_{\leq k} +  \mathcal{A}_{\leq k} \Sigma^{\beta - 1}_{\leq k} \hat{S}_n^* (\tilde{K}_{>k}^{\gamma})^{-1} \hat{S}_n \mathcal{A}_{\leq k} \hat{f}(y)_{\leq k} = \mathcal{A}_{\leq k} \Sigma^{\beta - 1}_{\leq k} \hat{S}_n^* (\tilde{K}^{\gamma}_{>k})^{-1} y.
$$
We know by \ref{lemma:decomposition} $\tilde{K}^{\gamma} = \tilde{K} + n\gamma I = (\tilde{K}_{>k} + n\gamma I) + \tilde{K}_{\leq k} = \tilde{K}_{>k}^{\gamma} + \tilde{K}_{\leq k}$, we split $\tilde{K}^{\gamma}$ into two parts: $\tilde{K}_{>k}^{\gamma}$ and $\tilde{K}_{\leq k}$.
Accordingly, $ \hat{f}(y)_{\leq k}$ can be represented as
    \begin{align*}
        \hat{f}(y)_{\leq k} &= \phi_{\leq k}^* \phi_{\leq k} \hat{f}(y) = \phi_{\leq k}^* \phi_{\leq k} \mathcal{A} \Sigma^{\beta - 1} \hat{S}_n^* (\tilde{K}^{\gamma})^{-1}  y \\
        &= \mathcal{A}_{\leq k} \Sigma_{\leq k}^{\beta - 1} \hat{S}_n^* (\tilde{K}_{>k}^{\gamma} +  \tilde{K}_{\leq k})^{-1}  y  \,.
    \end{align*}
    Therefore, taking it back to LHS, we have
    \begin{align*}
        & \hat{f}(y)_{\leq k} + \mathcal{A}_{\leq k} \Sigma_{\leq k}^{\beta - 1} \hat{S}_n^* (\tilde{K}_{>k}^{\gamma})^{-1} \hat{S}_n \mathcal{A}_{\leq k} \hat{f}(y)_{\leq k} \text{ (LHS)}\\
        =& \ \mathcal{A}_{\leq k} \Sigma_{\leq k}^{\beta - 1} \hat{S}_n^* (\tilde{K}_{>k}^{\gamma} +  \tilde{K}_{\leq k})^{-1}  y \\
        +& \ \mathcal{A}_{\leq k} \Sigma_{\leq k}^{\beta - 1} \hat{S}_n^* 
        (\tilde{K}_{>k}^{\gamma})^{-1}\underbrace{ \hat{S}_n \mathcal{A}_{\leq k} \mathcal{A}_{\leq k} \Sigma_{\leq k}^{\beta - 1} \hat{S}_n^*}_{\text{equals to } \tilde{K}_{\leq k}} (\tilde{K}_{>k}^{\gamma} +  \tilde{K}_{\leq k})^{-1}  y \qquad \text{  (Expand $\hat{f}(y)_{\leq k}$)}\\
        =& \ \mathcal{A}_{\leq k} \Sigma^{\beta - 1}_{\leq k} \hat{S}^{*}_{n} (\tilde{K}_{>k}^{\gamma})^{-1} (\tilde{K}_{>k}^{\gamma} + \tilde{K}_{\leq k})  (\tilde{K}_{>k}^{\gamma} + \tilde{K}_{\leq k})^{-1} y \\
        =& \ \mathcal{A}_{\leq k} \Sigma^{\beta - 1}_{\leq k} \hat{S}_n^* (\tilde{K}^{\gamma}_{>k})^{-1}y \text{   (RHS)}\,.
    \end{align*}
    We project LHS and RHS back to $\mathbb{R}^k$ for convenient usage in \ref{lemma:concentration_variance}, \ref{lemma:intermediate_ub_bias}, we project the functions in $\mathcal{H}$ back to $\mathbb{R}^k$ so we use $\phi_k$ in both two sides and we obtain
    $$
    \phi_{\leq k} \hat{f}(y)_{\leq k} + \phi_{\leq k} \mathcal{A}_{\leq k} \Sigma^{\beta - 1}_{\leq k} \hat{S}_n^* (\tilde{K}_{>k}^{\gamma})^{-1} \hat{S}_n \mathcal{A}_{\leq k} \hat{f}(y)_{\leq k} = \phi_{\leq k}\mathcal{A}_{\leq k} \Sigma^{\beta - 1}_{\leq k} \hat{S}_n^* (\tilde{K}^{\gamma}_{>k})^{-1} y,
    $$
    which concludes the proof.
\end{proof}

\noindent This lemma justifies we can switch between using Sobolev norm and matrix norm by using $\phi$.
\begin{lemma}[Equivalence between Sobolev norm and Matrix norm]
    \label{lemma:equiv_sobolev_matrix_norm}
    For any function $f \in \mathcal{H}^{\beta'}$, we have
    $$
    \|f\|^2_{\mathcal{H}^{\beta'}} = \|\phi f\|^2_{\Lambda_{\Sigma^{1 - \beta'}}}.
    $$
    And additionally, $
    \|f_{\leq k}\|^2_{\mathcal{H}^{\beta'}} = \|\phi_{\leq k} f_{\leq k}\|^2_{\Lambda^{\leq k}_{\Sigma^{1 - \beta'}}}
    $, $
    \|f_{> k}\|^2_{\mathcal{H}^{\beta'}} = \|\phi_{> k} f_{> k}\|^2_{\Lambda^{> k}_{\Sigma^{1 - \beta'}}}.
    $
\end{lemma}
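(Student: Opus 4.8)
The plan is to unwind the definitions of the $\beta'$-power Sobolev norm and the feature map $\phi$, and show that both sides compute the same weighted $\ell^2$ sum of coefficients. Recall from Definition~\ref{definition:sobolev_norm} that for $0 \le \beta' \le 1$ we have the identity $\|f\|_{\mathcal{H}^{\beta'}} = \|\Sigma^{\frac{1-\beta'}{2}} f\|_{\mathcal{H}}$, so writing $f = \sum_i \langle f, \phi_i\rangle_{\mathcal{H}} \phi_i$ in the orthonormal basis $\{\phi_i\}$ of $\mathcal{H}$, applying $\Sigma^{\frac{1-\beta'}{2}}$ multiplies the $i$-th coefficient by $\lambda_i^{\frac{1-\beta'}{2}}$ (since $\phi_i$ is an eigenfunction of $\Sigma$ with eigenvalue $\lambda_i$, and $\phi_i = \sqrt{\lambda_i}\psi_i$). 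Hence $\|f\|_{\mathcal{H}^{\beta'}}^2 = \sum_i \lambda_i^{1-\beta'} \langle f, \phi_i\rangle_{\mathcal{H}}^2$. On the other side, $\phi f = (\langle f, \phi_i\rangle_{\mathcal{H}})_{i\ge 1}$ by definition, and $\Lambda_{\Sigma^{1-\beta'}}$ is the diagonal operator on $\mathbb{R}^\infty$ with entries $\lambda_i^{1-\beta'}$ (this follows from the relation $\mathcal{X} = \phi^* \Lambda_{\mathcal{X}} \phi$ with $\mathcal{X} = \Sigma^{1-\beta'}$ and the fact that $\Sigma^{1-\beta'}$ acts diagonally in the $\phi_i$ basis). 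Therefore $\|\phi f\|_{\Lambda_{\Sigma^{1-\beta'}}}^2 = \sum_i \lambda_i^{1-\beta'} \langle f, \phi_i\rangle_{\mathcal{H}}^2$, which matches.

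For the truncated statements, I would use the projection operators $\phi_{\le k}, \phi_{\le k}^*$ introduced in the ``Decomposition of Signals'' paragraph. Since $f_{\le k} = \phi_{\le k}^* \phi_{\le k} f = \sum_{i=1}^k \langle f, \phi_i\rangle_{\mathcal{H}} \phi_i$ lies in the span of the first $k$ eigenfunctions, its $\mathcal{H}^{\beta'}$ norm is $\sum_{i=1}^k \lambda_i^{1-\beta'}\langle f, \phi_i\rangle_{\mathcal{H}}^2$ by the same computation, and $\phi_{\le k} f_{\le k} = (\langle f, \phi_i\rangle_{\mathcal{H}})_{i=1}^k$, whose $\Lambda^{\le k}_{\Sigma^{1-\beta'}}$-weighted norm is the same finite sum. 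The $>k$ case is identical with the index range $i > k$. One small point to check is that $f_{\le k} \in \mathcal{H}^{\beta'}$ and $f_{>k} \in \mathcal{H}^{\beta'}$ whenever $f \in \mathcal{H}^{\beta'}$, which is immediate since the coefficient sequence of a projection is a subsequence (padded with zeros) of the original.

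The main (and only mild) obstacle is bookkeeping: being careful about which basis ($\phi_i$ versus $\psi_i$) and which operator convention ($\Sigma^{1-\beta'}$ acting on $\mathcal{H}$ versus $\Lambda_{\Sigma^{1-\beta'}}$ acting on $\mathbb{R}^\infty$) is in play, and confirming that the $\beta'$-power norm formula $\|f\|_{\mathcal{H}^{\beta'}} = \|\Sigma^{\frac{1-\beta'}{2}}f\|_{\mathcal{H}}$ from the text is being applied with the right exponent. Once the diagonal structure is made explicit, the equalities are term-by-term identities. I expect the proof to be three or four lines: state $\|f\|_{\mathcal{H}^{\beta'}}^2 = \|\Sigma^{\frac{1-\beta'}{2}}f\|_{\mathcal{H}}^2$, expand in the $\phi_i$ basis to get $\sum_i \lambda_i^{1-\beta'}(\phi f)_i^2 = \|\phi f\|^2_{\Lambda_{\Sigma^{1-\beta'}}}$, and then note the truncated versions follow by restricting the sum.
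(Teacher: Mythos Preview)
Your proposal is correct and follows essentially the same approach as the paper: start from $\|f\|_{\mathcal{H}^{\beta'}} = \|\Sigma^{(1-\beta')/2} f\|_{\mathcal{H}}$, recognize that in the $\phi_i$ basis this becomes the $\Lambda_{\Sigma^{1-\beta'}}$-weighted $\ell^2$ norm of $\phi f$, and then observe that the truncated statements follow because $\phi f_{\le k}$ has vanishing $>k$ coordinates. The only cosmetic difference is that the paper phrases the first step via the isometry $\|g\|_{\mathcal{H}} = \|\phi g\|$ and the commutation $\phi \Sigma^{(1-\beta')/2} = \Lambda_{\Sigma^{(1-\beta')/2}}\phi$, whereas you write out the coordinate sum $\sum_i \lambda_i^{1-\beta'}\langle f,\phi_i\rangle_{\mathcal{H}}^2$ explicitly.
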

\begin{proof}
According to the definition of Sobolev norm, we have
    \begin{align*}
        \text{LHS} &= \| \Sigma^{\frac{1 - \beta'}{2}} f \|^2_{\mathcal{H}} \\
        &= \| \phi \Sigma^{(1 - \beta') / 2} f \|^2  \qquad\text{\color{gray} (by isometry i.e. $\|f\|_{\mathcal{H}} = \|\phi f\|^2$)}\\
        &= \| \Lambda_{\Sigma^{(1 - \beta') / 2}} \phi f \|^2 \qquad \text{ \color{gray}(by $\phi \phi^* = id: \mathbb{R}^{\infty} \to \mathbb{R}^{\infty}$)}\\
        &= \| \phi f\|_{\Lambda_{\Sigma^{1 - \beta'}}}^2 = \text{RHS}.
    \end{align*}

    Then for the $\leq k$ case, we have
    $$\|f_{\leq k}\|_{\mathcal{H}^{\beta'}} = \|\phi f_{\leq k}\|^2_{\Lambda_{\Sigma^{1 - \beta'}}}$$
    Since $(\phi f_{\leq k})_{\leq k} = \phi_{\leq k} f_{\leq k}$, all its $>k$ entries are zero, then 
    $$\|\phi f_{\leq k}\|^2_{\Lambda_{\Sigma^{1 - \beta'}}} = (\phi f_{\leq k})^T \Lambda_{\Sigma^{1 - \beta'}} (\phi f_{\leq k}) =  (\phi f_{\leq k})^T \Lambda^{\leq k}_{\Sigma^{1 - \beta'}} (\phi f_{\leq k}) = \|\phi_{\leq k} f_{\leq k}\|^2_{\Lambda^{\leq k}_{\Sigma^{1 - \beta'}}}.$$

    The proof above works similarly for the $>k$ case.
\end{proof}

\begin{lemma}[Separation of $<k$ and $>k$ case]
    \label{lemma:separate_smaller_k_large_eq_k}
    For any function $f \in \mathcal{H}^{\beta'}$, then 
    $$
    \|f\|^2_{\mathcal{H}^{\beta'}} = \|f_{\leq k}\|^2_{\mathcal{H}^{\beta'}} + \|f_{>k}\|^2_{\mathcal{H}^{\beta'}}.
    $$
\end{lemma}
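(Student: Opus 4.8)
The plan is to prove Lemma~\ref{lemma:separate_smaller_k_large_eq_k} by reducing it to the already-established Lemma~\ref{lemma:equiv_sobolev_matrix_norm}, which translates every Sobolev norm into a weighted $\ell^2$ norm of the coordinate vector $\phi f$, together with the elementary orthogonal-decomposition fact that splitting a vector's coordinates at index $k$ partitions its squared weighted norm. First I would invoke Lemma~\ref{lemma:equiv_sobolev_matrix_norm} to write $\|f\|^2_{\mathcal{H}^{\beta'}} = \|\phi f\|^2_{\Lambda_{\Sigma^{1-\beta'}}}$, $\|f_{\leq k}\|^2_{\mathcal{H}^{\beta'}} = \|\phi_{\leq k} f_{\leq k}\|^2_{\Lambda^{\leq k}_{\Sigma^{1-\beta'}}}$, and $\|f_{>k}\|^2_{\mathcal{H}^{\beta'}} = \|\phi_{>k} f_{>k}\|^2_{\Lambda^{>k}_{\Sigma^{1-\beta'}}}$.

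Next I would expand the right-hand side weighted norm coordinatewise. Since $\Lambda_{\Sigma^{1-\beta'}}$ is diagonal in the $\phi_i$-basis with entries $\lambda_i^{1-\beta'}$, we have $\|\phi f\|^2_{\Lambda_{\Sigma^{1-\beta'}}} = \sum_{i\geq 1}\lambda_i^{1-\beta'}\langle f,\phi_i\rangle_{\mathcal{H}}^2$. The key observation, already used in the proof of Lemma~\ref{lemma:equiv_sobolev_matrix_norm}, is that $\phi_{\leq k} f_{\leq k}$ is the vector whose first $k$ coordinates are $\langle f,\phi_i\rangle_{\mathcal{H}}$ and whose remaining coordinates vanish (because $f_{\leq k} = \phi^*_{\leq k}\phi_{\leq k}f$ only retains the top $k$ features), and symmetrically $\phi_{>k}f_{>k}$ carries the coordinates $\langle f,\phi_i\rangle_{\mathcal{H}}$ for $i>k$ and zeros elsewhere. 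Hence $\|\phi_{\leq k} f_{\leq k}\|^2_{\Lambda^{\leq k}_{\Sigma^{1-\beta'}}} = \sum_{i=1}^{k}\lambda_i^{1-\beta'}\langle f,\phi_i\rangle_{\mathcal{H}}^2$ and $\|\phi_{>k}f_{>k}\|^2_{\Lambda^{>k}_{\Sigma^{1-\beta'}}} = \sum_{i>k}\lambda_i^{1-\beta'}\langle f,\phi_i\rangle_{\mathcal{H}}^2$; summing these two recovers $\sum_{i\geq 1}\lambda_i^{1-\beta'}\langle f,\phi_i\rangle_{\mathcal{H}}^2 = \|f\|^2_{\mathcal{H}^{\beta'}}$, which is the claim.

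There is essentially no obstacle here — the statement is a bookkeeping consequence of the decomposition $f = f_{\leq k} + f_{>k}$ from Lemma~\ref{lemma:decomposition} and the diagonality of $\Lambda_{\Sigma^{1-\beta'}}$. The only point requiring mild care is making sure that one uses the \emph{restricted} weight operators $\Lambda^{\leq k}$ and $\Lambda^{>k}$ consistently, so that the cross terms between the $\leq k$ and $>k$ blocks genuinely do not appear; this is immediate since those blocks live on complementary coordinate ranges and $\Lambda_{\Sigma^{1-\beta'}}$ is diagonal, so no cross terms can arise. A one-line alternative framing, which I might state as a remark, is that $f_{\leq k}$ and $f_{>k}$ are orthogonal not only in $\mathcal{H}$ but in every power space $\mathcal{H}^{\beta'}$ because the $\phi_i$ remain a (reweighted) orthogonal system there, so the Pythagorean identity applies directly.
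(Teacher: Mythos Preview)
Your proposal is correct and follows essentially the same approach as the paper: both reduce the Sobolev norm to a coordinatewise sum via the isometry $\|f\|^2_{\mathcal{H}^{\beta'}} = \|\phi\Sigma^{(1-\beta')/2}f\|^2$ (equivalently $\|\phi f\|^2_{\Lambda_{\Sigma^{1-\beta'}}}$), then split that sum at index $k$. If anything, your version is slightly more modular because you explicitly invoke Lemma~\ref{lemma:equiv_sobolev_matrix_norm} for all three norms, whereas the paper redoes that computation inline.
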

\begin{proof}
    \begin{align*}
        \|f\|^2_{\mathcal{H}^{\beta'}} &= \|\phi \Sigma^{(1 - \beta') / 2} f\|^2 \\ 
        &= \ \sum_{i = 1}^{\infty}{[\phi \Sigma^{(1 - \beta') / 2} f]_i^2} =\sum_{i = 1}^{k}{[\phi \Sigma^{(1 - \beta') / 2} f]_i^2} + \sum_{i = k+1}^{\infty}{[\phi \Sigma^{(1 - \beta') / 2} f]_i^2} \\
        &= \|\phi_{\leq k} \Sigma_{\leq k}^{(1 - \beta') / 2} f_{\leq k}\|^2 + \|\phi_{>k} \Sigma_{>k}^{(1 - \beta') / 2} f_{>k}\|^2 \\
        &= \|f_{\leq k}\|^2_{\mathcal{H}^{\beta'}} + \|f_{>k}\|^2_{\mathcal{H}^{\beta'}}.
    \end{align*}
\end{proof}

\begin{lemma}
    \label{lemma:psi_expectation_one}
    $
    \mathbb{E}([\hat{S}_n \psi^*_{>k}]_{ji}^2) = 1
    $ holds for any $i > k, j \in [n]$.
\end{lemma}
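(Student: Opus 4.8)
## Proof Proposal

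The plan is to unwind the definitions on the left-hand side and reduce the claim to the normalization of the eigenfunctions $\psi_i$ in $L^2(\rho_{\mathcal{X}})$. First I would recall that $\hat{S}_n: \mathcal{H} \to \mathbb{R}^n$ acts by $\hat{S}_n f = (\langle f, K_{x_1}\rangle_{\mathcal{H}}, \dots, \langle f, K_{x_n}\rangle_{\mathcal{H}}) = (f(x_1), \dots, f(x_n))$ via the reproducing property, and that its adjoint $\hat{S}^*_n: \mathbb{R}^n \to \mathcal{H}$ takes $\theta \mapsto \sum_{l=1}^n \theta_l K_{x_l}$. Meanwhile $\psi^*_{>k}: \mathbb{R}^\infty \to \mathcal{H}$ takes $\theta \mapsto \sum_{i>k} \theta_{i} \psi_i$ (indexing the $>k$ block appropriately). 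Therefore the composition $\hat{S}_n \psi^*_{>k}$ is the linear map $\mathbb{R}^\infty \to \mathbb{R}^n$ whose $(j,i)$ entry is the value of the $i$-th eigenfunction at the $j$-th data point, i.e. $[\hat{S}_n \psi^*_{>k}]_{ji} = \psi_i(x_j)$ for $i > k$. This identification is the one content-bearing step; once it is in place, everything else is immediate.

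With that identity, $\mathbb{E}\big([\hat{S}_n \psi^*_{>k}]_{ji}^2\big) = \mathbb{E}[\psi_i(x_j)^2]$, where the expectation is over the i.i.d. draw $x_j \sim \rho_{\mathcal{X}}$. Since $\{\psi_i\}$ is an orthonormal basis of $L^2(\rho_{\mathcal{X}})$, we have $\mathbb{E}_{\rho_{\mathcal{X}}}[\psi_i^2] = \|\psi_i\|_{L^2(\rho_{\mathcal{X}})}^2 = 1$, which gives the claim for every $i > k$ and every $j \in [n]$. I would state this orthonormality as the property recorded in Section~\ref{Appendix:notations} (the $\psi_i$ are the orthonormal eigenfunctions of the covariance operator, equivalently of the integral operator $\mathcal{L}$), so no extra hypothesis is needed beyond the Mercer setup already assumed.

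The only mild subtlety—and the step I would be most careful about—is bookkeeping the indexing in $\psi^*_{>k}$ versus $\psi^*$, namely making sure that the $i$-th column of $\hat{S}_n \psi^*_{>k}$ (for $i>k$) really does pick out $\psi_i$ rather than a shifted index, consistent with how $\phi^*_{>k}$ and $\psi^*_{>k}$ were defined in the ``Decomposition of Signals'' paragraph. This is purely notational, but it is worth a sentence in the write-up. I would therefore structure the proof as: (1) compute $[\hat{S}_n \psi^*_{>k}]_{ji} = \psi_i(x_j)$ by evaluating the composition of the two maps and using the reproducing property; (2) take expectation over $x_j \sim \rho_{\mathcal{X}}$ and invoke $\|\psi_i\|_{L^2(\rho_{\mathcal{X}})} = 1$; (3) conclude. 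No inequalities, concentration, or spectral-decay assumptions enter.
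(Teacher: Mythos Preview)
Your proposal is correct and matches the paper's own proof essentially line for line: the paper writes the one-liner $\mathbb{E}([\hat{S}_n \psi^*_{>k}]_{ji}^2) = \mathbb{E}(\langle \psi_i, K_{x_j} \rangle_{\mathcal{H}}^2) = \mathbb{E}(\psi_i(x_j)^2) = 1$, which is exactly your steps (1)--(3) compressed. Your added remark about the indexing convention is a reasonable clarification but not needed for correctness.
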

\begin{proof}
    \begin{align*}
        \mathbb{E}([\hat{S}_n \psi^*_{>k}]_{ji}^2) = \mathbb{E}([\langle \psi_i, K_{x_j} \rangle_{\mathcal{H}}^2]) = \mathbb{E}(\psi_i(x_j)^2) = 1.
    \end{align*}
\end{proof}

Last we present a lemma which is useful in $>k$ case in deriving bias's bound.
\begin{lemma}
    \label{lemma:extension_sherman_morrison_woodbury}
    $$
    (A + UCV)^{-1} U = A^{-1} U (I + CVA^{-1}U)^{-1}.
    $$
\end{lemma}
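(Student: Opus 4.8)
The final statement to prove is Lemma~\ref{lemma:extension_sherman_morrison_woodbury}, the push-through identity $(A+UCV)^{-1}U = A^{-1}U(I+CVA^{-1}U)^{-1}$.

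The plan is to prove it purely by algebraic manipulation, assuming that all the inverses appearing in the statement exist (as is implicit in the lemma's usage). The cleanest route is to start from a simple commutation identity and push-through. First I would observe the elementary factorization
\[
U(I + CVA^{-1}U) = U + UCVA^{-1}U = (A + UCV)A^{-1}U.
\]
This is just distributing and regrouping: $UCVA^{-1}U$ appears on both sides, and $UI = U = AA^{-1}U$. Call this identity $(\star)$.

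Next, provided $A+UCV$ is invertible, I would left-multiply $(\star)$ by $(A+UCV)^{-1}$ to get
\[
(A+UCV)^{-1}U(I+CVA^{-1}U) = A^{-1}U.
\]
Then, provided $I+CVA^{-1}U$ is invertible, I would right-multiply by $(I+CVA^{-1}U)^{-1}$ to obtain
\[
(A+UCV)^{-1}U = A^{-1}U(I+CVA^{-1}U)^{-1},
\]
which is exactly the claim. I would also note that $A$ itself must be invertible for the right-hand side to make sense, and this is used in deriving $(\star)$ via $U = AA^{-1}U$.

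There is essentially no main obstacle here — this is a one-line identity whose only subtlety is bookkeeping about which operators are assumed invertible; I would state at the outset that $A$, $A+UCV$, and $I+CVA^{-1}U$ are all assumed invertible (which holds in every invocation of the lemma in the paper, where $A$ is a sum of a positive-definite kernel block and $n\gamma_n I$ and the relevant Woodbury term is well-defined). If one wanted to avoid even that, one could alternatively verify the identity by checking that multiplying the proposed right-hand side on the left by $(A+UCV)$ recovers $U$, again using $(\star)$; but the two-sided multiplication argument above is the most transparent and is what I would write.
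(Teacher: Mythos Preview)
Your proof is correct and takes a genuinely different route from the paper. The paper invokes the full Sherman--Morrison--Woodbury formula
\[
(A+UCV)^{-1} = A^{-1} - A^{-1}U(C^{-1}+VA^{-1}U)^{-1}VA^{-1},
\]
right-multiplies by $U$, and then simplifies the resulting expression through several algebraic steps. Your push-through argument is both shorter and strictly more general: it never forms $C^{-1}$, so it applies even when $C$ is not invertible, whereas the paper's derivation tacitly assumes $C$ is invertible via the Woodbury identity. The only price you pay is that you must explicitly assume $I+CVA^{-1}U$ is invertible, but the paper needs that too (it appears inside its final expression), so nothing is lost. In short, your approach is the cleaner one here.
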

\begin{proof}
    By Sherman-Morrison-Woodbury formula we have
    $$
    (A+UCV)^{-1} = A^{-1} - A^{-1} U (C^{-1} + VA^{-1} U )^{-1} VA^{-1}
    $$
    Therefore,
    \begin{align*}
        (A+UCV)^{-1} U =& \ A^{-1} U - A^{-1} U (C^{-1} + VA^{-1} U )^{-1} VA^{-1} U \\
        =& \ A^{-1} U (I - (C^{-1} + VA^{-1} U )^{-1} VA^{-1} U) \\
        =& \ A^{-1} U (I - (C^{-1} + VA^{-1} U )^{-1} (C^{-1} + VA^{-1} U) + (C^{-1} + VA^{-1} U )^{-1} C^{-1}) \\
        =& \ A^{-1} U (I - I + (C (C^{-1} + VA^{-1}U))^{-1}) \\
        =& A^{-1} U (I + CVA^{-1}U)^{-1}.
    \end{align*}
    
\end{proof}

\section{Concentration Lemmas}
Here we present several lemmas for bounding several quantities in \ref{appendix:ub_variance}, \ref{appendix:ub_bias}.

\begin{lemma}
    \label{lemma:concentration_1}
    Let $k \in [n]$, $a$ be the power of $\mathcal{A}$, and $b$ be the power of $\Sigma$, we bound the trace of this $n \times n$ matrix, w.p. at least $\displaystyle 1 - 2\exp(-\frac{1}{2\beta_k^2} n)$  we have
    $$
    \frac{1}{2} n\sum_{i>k} p_i^{a} \lambda_i^{b} \leq \trace(\hat{S}_n \psi^*_{>k} \Lambda^{>k}_{\mathcal{A}^{a} \Sigma^{b}} \psi_{>k} \hat{S}^*_n) \leq \frac{3}{2} n\sum_{i>k} p_i^{a} \lambda_i^{b}.
    $$
\end{lemma}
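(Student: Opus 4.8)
The plan is to recognize the trace of the $n \times n$ matrix $\hat{S}_n \psi^*_{>k} \Lambda^{>k}_{\mathcal{A}^{a} \Sigma^{b}} \psi_{>k} \hat{S}^*_n$ as a sum of independent (across the $n$ data points) random variables, each concentrating around its mean. First I would use the cyclic property of the trace together with the definitions to rewrite
\[
\trace(\hat{S}_n \psi^*_{>k} \Lambda^{>k}_{\mathcal{A}^{a} \Sigma^{b}} \psi_{>k} \hat{S}^*_n)
= \sum_{j=1}^{n} \sum_{i > k} p_i^{a} \lambda_i^{b}\, \psi_i(x_j)^2,
\]
which follows because $[\hat{S}_n \psi^*_{>k}]_{ji} = \langle \psi_i, K_{x_j}\rangle_{\mathcal{H}} = \psi_i(x_j)$ and $\Lambda^{>k}_{\mathcal{A}^{a}\Sigma^{b}}$ is diagonal with entries $p_i^{a}\lambda_i^{b}$. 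By Lemma~\ref{lemma:psi_expectation_one} (or rather the underlying orthonormality $\mathbb{E}_{\rho_\mathcal{X}}\psi_i(x)^2 = 1$), the expectation of each inner term over $x_j$ is $\sum_{i>k} p_i^{a}\lambda_i^{b}$, so the expectation of the whole trace is exactly $n\sum_{i>k} p_i^{a}\lambda_i^{b}$, which sits at the center of the claimed two-sided bound.

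Next I would set up a bounded-differences / Bernstein-type argument. Define $Z_j := \sum_{i>k} p_i^{a}\lambda_i^{b}\,\psi_i(x_j)^2$; these are i.i.d.\ nonnegative random variables with mean $m := \sum_{i>k} p_i^{a}\lambda_i^{b}$. The key deterministic control comes from Assumption~\ref{assumption:well_behaveness_feature}: the well-behavedness of the features gives, for the relevant choice of $(a,b)$, that $Z_j \le \beta_k \cdot m$ almost surely (this is precisely the statement $\sup_x \frac{\sum_{i>k} p_i^{a}\lambda_i^{b}\psi_i(x)^2}{\sum_{i>k} p_i^{a}\lambda_i^{b}} \le \beta_k$). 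Hence each $Z_j$ lies in $[0, \beta_k m]$. Applying Hoeffding's inequality to $\frac{1}{n}\sum_j Z_j$ with these bounds yields, with probability at least $1 - 2\exp(-2t^2 n/\beta_k^2)$, a deviation of at most $t\,m$ from $m$; taking $t = 1/2$ gives deviation at most $m/2$, i.e.\ the trace lies in $[\tfrac{1}{2}nm, \tfrac{3}{2}nm]$, with the stated failure probability $2\exp(-n/(2\beta_k^2))$ (matching Hoeffding with the factor $2t^2 = 1/2$).

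The main obstacle — really the only subtle point — is handling the infinite sum over $i > k$ rigorously: one must ensure the series $\sum_{i>k} p_i^{a}\lambda_i^{b}\psi_i(x)^2$ converges (pointwise a.s.\ and in $L^1(\rho_\mathcal{X})$) so that the trace is well-defined and the exchange of sum and expectation is legitimate. This is where the transformed-trace-class property of $\tilde{\Sigma}$ (and its tails) and Assumption~\ref{assumption:beta_regularity_condition} come in: $\sum_{i>k} p_i^{a}\lambda_i^{b} < \infty$ for the admissible $(a,b)$ picked in the proof of Lemma~\ref{lemma:simultaneous_concentration}, and $\beta_k < \infty$ bounds the partial sums uniformly, so dominated convergence applies. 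Once convergence is secured, the concentration step is routine. I would close by noting that the same argument applies verbatim with $\psi$ replaced by the bounded-eigenfunction or sub-Gaussian feature cases if a sharper tail were needed, but Hoeffding with the $\beta_k$ bound suffices here.
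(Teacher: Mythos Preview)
Your proposal is correct and mirrors the paper's own proof essentially line for line: rewrite the trace as $\sum_{j=1}^n Z_j$ with $Z_j=\sum_{i>k}p_i^a\lambda_i^b\psi_i(x_j)^2$, use orthonormality to get $\mathbb{E}Z_j=\sum_{i>k}p_i^a\lambda_i^b$, bound $0\le Z_j\le \beta_k\sum_{i>k}p_i^a\lambda_i^b$ via the definition of $\beta_k$, and apply Hoeffding with deviation $\tfrac{1}{2}n\sum_{i>k}p_i^a\lambda_i^b$. Your extra care about convergence of the infinite series is a nice addition that the paper leaves implicit.
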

\begin{proof}
    Note that $\Lambda^{>k}_{\mathcal{A}^{a} \Sigma^{b}}$ is a diagonal matrix with entry $p_i^{a} \lambda_i^{b}$ ($i>k$).
    \begin{align*}
        &\trace(\hat{S}_n \psi^*_{>k} \Lambda^{>k}_{\mathcal{A}^{a} \Sigma^{b}} \psi_{>k} \hat{S}^*_n)
        = \sum_{j=1}^{n} \ [(\hat{S}_n \psi^*_{>k})(\Lambda^{>k}_{\mathcal{A}^{a} \Sigma^{b}})(\psi_{>k} \hat{S}^*_n))]_{jj} = \sum_{j=1}^{n} \underbrace{\sum_{i=k+1}^{\infty}{p_i^{a} \lambda_i^{b} [\hat{S}_n \psi_{>k}^*]_{ji}}^2}_{v_j}.
    \end{align*}
    Here we denote the term inside $j$ summation as $v_j$, then by \ref{lemma:psi_expectation_one},  the expectation of the trace is 
    $$n\sum_{i>k} p_i^{a} \lambda_i^{b}.$$ 
    We also know that $v_j$ is lower bounded by $0$ and by def. of $\beta_k$ \ref{assumption:well_behaveness_feature}, it can be upper bounded by 
    \begin{align*}
        v_j =& \sum_{i = k+1}^{\infty}{p_i^{a} \lambda_i^{b} \psi_i(x_j)^2} 
        \leq \ \underbrace{\beta_k \sum_{i = k+1}^{\infty}{p_i^{a} \lambda_i^{b}}}_{\text{denoted as $M$}}.
    \end{align*}



    \noindent Then we have $0 \leq v_j \leq M$ for all $j$ and $v_j$ is independent, we can apply the Hoeffding's inequality to bound $\sum_{j=1}^n v_j$:
    $$
    \mathbb{P}(| \sum_{j=1}^{n} v_j -  n\sum_{i>k} p_i^{a} \lambda_i^{b}| \geq t) \leq 2\exp \left(\frac{-2t^2}{nM^2} \right).
    $$
    We then pick $t := \frac{n}{2} \sum_{i>k}{p_i^{a} \lambda_i^{b}}$, and we get $\displaystyle\frac{-2t^2}{nM^2} = -\frac{1}{2\beta_k^2} n$, and we know the trace value exactly corresponds to $\sum_{j=1}^{n} v_j$.

    Therefore, w.p.at least $\displaystyle 1 - 2\exp(-\frac{1}{2\beta_k^2} n)$,
        $$
    \frac{1}{2} n\sum_{i>k} p_i^{a} \lambda_i^{b} \leq \trace(\hat{S}_n \psi^*_{>k} \Lambda^{>k}_{\mathcal{A}^{a} \Sigma^{b}} \psi_{>k} \hat{S}^*_n) \leq \frac{3}{2} n\sum_{i>k} p_i^{a} \lambda_i^{b}.
    $$
    
\end{proof}

Here we present the modified version of Lemma 2 in \cite{barzilai2023generalization}, we rewrite it to fit into our framework for completeness.
\begin{lemma}
    \label{lemma:bound_psi}
    For any $k \in [n]$ there exists some absolute constant $c', c_2 > 0$ s.t. the following hold simultaneously w.p. at least $1 - 2\exp(-\frac{c'}{\beta_k} \max \{ \frac{n}{k}, \log(k)\})$
    \begin{enumerate}
        \item $\mu_k( \underbrace{\psi_{\leq k} \hat{S}^*_n \hat{S}_n  \psi^*_{\leq k}}_{k \times k}) \geq \max \{ \sqrt{n} - \sqrt{\frac{1}{2} \max \{ n, \beta_k (1 + \frac{1}{c'} k \log(k))\}} , 0\}^2$;
        \item $\mu_1(\underbrace{\psi_{\leq k} \hat{S}^*_n \hat{S}_n  \psi^*_{\leq k}}_{k \times k}) \leq c_2 \max \{ n, \beta_k k \log(k) \}$.
    \end{enumerate}
    Moreover, there exists some $c > 0$ s.t. if $c\beta_k k \log(k) \leq n$ then w.p. at least $1 - 2\exp(-\frac{c'}{\beta_k} \frac{n}{k})$ and some absolute constant $c_1 > 0$ it holds that 
    $$
    c_1 n \leq \mu_k(\psi_{\leq k} \hat{S}^*_n \hat{S}_n  \psi^*_{\leq k}) \leq \mu_1(\psi_{\leq k} \hat{S}^*_n \hat{S}_n  \psi^*_{\leq k}) \leq c_2 n.
    $$
\end{lemma}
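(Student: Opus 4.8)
**Proof proposal for Lemma \ref{lemma:bound_psi} (bounds on eigenvalues of $\psi_{\leq k}\hat S_n^*\hat S_n\psi_{\leq k}^*$).**The plan is to view $\psi_{\leq k}\hat S_n^* \in \mathbb{R}^{k\times n}$ as a random matrix whose columns are the i.i.d.\ vectors $\zeta_j := (\psi_1(x_j),\dots,\psi_k(x_j))^\top$, so that $\psi_{\leq k}\hat S_n^*\hat S_n\psi_{\leq k}^* = \sum_{j=1}^n \zeta_j\zeta_j^\top$ and, by Lemma~\ref{lemma:psi_expectation_one} and orthonormality of the $\psi_i$ in $L^2(\rho_{\mathcal X})$, $\mathbb{E}[\zeta_j\zeta_j^\top] = I_k$. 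The singular values of interest are thus the singular values of an $n\times k$ matrix with isotropic rows; the two-sided bound will follow from a matrix concentration argument controlling $\bigl\|\tfrac1n\sum_j \zeta_j\zeta_j^\top - I_k\bigr\|$. The key boundedness input is Assumption~\ref{assumption:well_behaveness_feature}: $\sup_x \tfrac1k\sum_{i=1}^k\psi_i(x)^2 \le \beta_k$, i.e.\ $\|\zeta_j\|^2 \le \beta_k k$ almost surely, which gives the effective intrinsic-dimension parameter that appears as $\beta_k k\log(k)$.

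First I would set up the deviation bound via the matrix Bernstein / matrix Chernoff inequality for sums of independent PSD matrices $\zeta_j\zeta_j^\top$ with $\|\zeta_j\zeta_j^\top\|\le \beta_k k$ and $\mathbb{E}[\zeta_j\zeta_j^\top]=I_k$: matrix Chernoff gives, for $t\in(0,1)$, $\Pr[\mu_k(\sum_j\zeta_j\zeta_j^\top) \le (1-t)n] \le k\exp(-t^2 n/(2\beta_k k))$ and similarly $\Pr[\mu_1(\sum_j\zeta_j\zeta_j^\top)\ge (1+t)n]\le k\exp(-t^2 n/(3\beta_k k))$. Absorbing the $k$ prefactor as $\log k$ in the exponent produces the stated failure probability $2\exp(-\tfrac{c'}{\beta_k}\max\{n/k,\log k\})$: the $n/k$ term dominates when $\beta_k k\log k \le n$, while in the general regime one instead uses the cruder bound $\mu_1 \le c_2\max\{n,\beta_k k\log k\}$ (Chernoff in the non-concentrated regime) and $\mu_k \ge (\sqrt n - \sqrt{\tfrac12\max\{n,\beta_k(1+\tfrac1{c'}k\log k)\}})_+^2$, which is exactly the Bernstein-type one-sided singular value bound $\sigma_{\min}\ge \sqrt n - \sqrt{\text{(intrinsic dim)}}$ à la Koltchinskii–Lounici / Rudelson. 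This is essentially Lemma~2 of \cite{barzilai2023generalization} and I would follow its argument, substituting our $\beta_k$ in place of their boundedness constant; the second half of the statement (the clean $c_1 n \le \mu_k \le \mu_1 \le c_2 n$ under $c\beta_k k\log k\le n$) is the specialization $t=\Theta(1)$, for which $(1-t)n$ and $(1+t)n$ collapse to $c_1 n$ and $c_2 n$ and the exponent becomes $-\tfrac{c'}{\beta_k}\tfrac nk$.

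The one subtlety I want to flag as the main obstacle is passing from the finite-dimensional $\mathbb{R}^k$ statement to something robust: although this particular lemma is genuinely finite-dimensional (the $\psi_{\leq k}$ truncation is already applied), the bounded-but-not-sub-Gaussian nature of $\zeta_j$ means one cannot use the sharpest sub-Gaussian covariance estimation bounds, and the factor $\beta_k k$ rather than $k$ in the effective dimension is unavoidable — this is why the $\log k$ appears and why the hypothesis $c\beta_k k\log k\le n$ (not just $k\le n$) is needed for the two-sided $\Theta(n)$ conclusion. A second, more routine, care point is bookkeeping the union bound over the two one-sided events and keeping the constants $c, c', c_1, c_2$ consistent across the two regimes; I would state them as absolute constants extracted from the matrix Chernoff inequality and not optimize them. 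Finally, to invoke matrix Chernoff in infinite ambient dimension is not needed here since the sum lives in $\mathbb{R}^{k\times k}$, so no truncation-to-finite-rank argument (as used elsewhere via Assumption~\ref{assumption:beta_regularity_condition}) is required for this lemma.
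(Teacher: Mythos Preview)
Your proposal is correct and follows essentially the same route as the paper: both view $\hat S_n\psi_{\leq k}^*$ as an $n\times k$ matrix with i.i.d.\ isotropic rows of norm at most $\sqrt{\beta_k k}$, and both credit Lemma~2 of \cite{barzilai2023generalization}. The only cosmetic difference is the concentration tool invoked: the paper cites Vershynin's heavy-tailed-row theorem (\cite{vershynin2011introduction}, Theorem~5.41) directly, which immediately yields the $\sqrt{n}\pm t\sqrt{k\beta_k}$ form of items (1)--(2), whereas you phrase the argument via matrix Chernoff/Bernstein on $\sum_j\zeta_j\zeta_j^\top$; these are equivalent up to constants, and your closing remark about the Rudelson-type singular value bound is exactly the content of Theorem~5.41.
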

\begin{proof}
    We will bound the singular values $\sigma_i(\underbrace{\hat{S}_n \psi^*_{\leq k}}_{n \times k} )$ since $\sigma_i(A)^2 = \mu_i(A^T A)$ for any matrix $A$.

    We know rows of this matrix are independent isotropic random vectors in $\mathbb{R}^k$, where randomness is over the choice of $x$, where by the definition of $\beta_k$ \ref{assumption:well_behaveness_feature} the rows are heavy-tailed having norm bounded by
    \begin{equation*}
        \| \mbox{each row of~} \hat{S}_n \psi^\ast_{\le k} \|  \leq \sqrt{k\beta_k}.  
    \end{equation*}

    Here we can use \cite{vershynin2011introduction}[Theorem 5.41] which is applicable for heavy-tailed rows, there is some absolute constant $c' > 0$ s.t. for every $t \geq 0$, one has that w.p. at least $1 - 2k \exp(-2c't^2)$
    $$
    \sqrt{n} - t \sqrt{k \beta_k} \leq \sigma_k(\hat{S}_n \psi^*_{\leq k} ) \leq \sigma_1(\hat{S}_n \psi^*_{\leq k} )  \leq \sqrt{n} + t \sqrt{k \beta_k}. 
    $$
    We pick $t = \sqrt{\frac{1}{2\beta_k} \max \{ \frac{n}{k}, \log(k)\} + \frac{\log(k)}{2c'}}$, then w.p. at least $1 - 2\exp(\frac{-c'}{\beta_k} \max \{ \frac{n}{k}, \log(k)\})$ it holds that
    
    \begin{align*}
    \sigma_{1}\left(\hat{S}_n \psi^*_{\leq k}\right)^{2} & \leq\left(\sqrt{n}+\sqrt{\frac{1}{2} \max (n, k \log (k))+k \log (k) \frac{\beta_{k}}{2 c^{\prime}}}\right)^{2} \\
    & \leq\left(\sqrt{n}+\frac{1}{\sqrt{2}} \sqrt{n+\left(1+\frac{\beta_{k}}{c^{\prime}}\right) k \log (k)}\right)^{2} \\
    & \leq 3 n+\left(1+\frac{\beta_{k}}{c^{\prime}}\right) k \log (k),
    \end{align*}
    where the last inequality followed from the fact that $(a+b)^2 \leq 2(a^2 + b^2)$ for any $a, b \in \mathbb{R}$. Since $\beta_k \geq 1$ \ref{assumption:well_behaveness_feature}, we obtain $\sigma_{1}\left(\hat{S}_n \psi^*_{\leq k}\right)^{2} \leq c_2 \max \{n, \beta_k k \log (k) \}$ for a suitable $c_2 > 0$, proving (2).

    For the lower bound, we simultaneously have 
    \begin{align*}
\sigma_{k}\left(\hat{S}_n \psi^*_{\leq k}\right) & \geq \sqrt{n}-\frac{1}{\sqrt{2}} \sqrt{\frac{1}{2} \max (n, k \log (k))+k \log (k) \frac{\beta_{k}}{2 c^{\prime}}} \\
& \geq \sqrt{n}-\sqrt{\frac{1}{2} \max \left(n, \beta_{k}\left(1+\frac{1}{c^{\prime}}\right) k \log (k)\right)}.
\end{align*}
    Since the singular values are non-negative, the above implies 
    $$
        \sigma_{k}\left(\hat{S}_n \psi^*_{\leq k}\right) \geq \max \{ \sqrt{n}-\sqrt{\frac{1}{2} \max \left(n, \beta_{k}\left(1+\frac{1}{c^{\prime}}\right) k \log (k)\right)}, 0\}^2
    $$
    which proves (1).

    Next we move on to prove the moreover part, taking $c = (1 + \frac{1}{c'})$ we now have by assumption that $\frac{n}{k} \geq c\beta_k \log(k) \geq \log(k)$ (where we used the fact that $c \geq 1$ and $\beta_k \geq 1$), the probability that (1) and (2) hold is $1 - 2\exp(-\frac{c'}{\beta_k} \frac{n}{k})$.
    Furthermore, plugging $c \beta_k k \log(k) \leq n$ into the lower bound (1) obtains the following
    \begin{align*}
\mu_{k}\left(\psi_{\leq k} \hat{S}_n^* \hat{S}_n \psi_{\leq k}^* \right) & \geq \max \left(\sqrt{n}-\sqrt{\frac{1}{2} \max \left(n, c \beta_{k} k \log (k)\right)}, 0\right)^{2} \ \\
& \geq\left(\sqrt{n}-\sqrt{\frac{n}{2}}\right)^{2}=\left(1-\frac{1}{\sqrt{2}}\right)^{2} n .
\end{align*}
    Similarly since $\beta_k k \log(k) \leq n$, the upper bound (2) becomes 
    $$
    \mu_{1}\left(\psi_{\leq k} \hat{S}_n^* \hat{S}_n \psi_{\leq k}^* \right) \leq c_2 n.
    $$
    
\end{proof}

\begin{lemma}
\label{lemma:simultaneous_concentration}
   There exists some constant $c, c', c_1, c_2 > 0$ s.t. for any $k \in \mathbb{N}$ with $c\beta_k k \log(k) \leq n$, it holds w.p. at least $1 - 8\exp(-\frac{c'}{\beta_k^2} \frac{n}{k})$, the following hold simultaneously
   \begin{enumerate}
       \item $c_1 n \sum_{i>k} p_i^{-2} \lambda_i^{-\beta'} \leq \trace(\hat{S}_n \psi_{\leq k}^* \Lambda^{\leq k}_{\mathcal{A}^{-2} \Sigma^{-\beta'}} \psi_{\leq k} \hat{S}_n^*) \leq c_2 n \sum_{i>k}  p_i^{-2} \lambda_i^{-\beta'}$;
       \item $c_1 n \sum_{i>k} p_i^2 \lambda_i^{-\beta' + 2\beta} \trace(\hat{S}_n \psi_{\leq k}^* \Lambda^{\leq k}_{\mathcal{A}^{2} \Sigma^{-\beta' + 2\beta}} \psi_{\leq k} \hat{S}_n^*) \leq c_2 n \sum_{i>k} p_i^2 \lambda_i^{-\beta' + 2\beta}$;
       \item $\mu_k(\psi_{\leq k} \hat{S}^*_n \hat{S}_n  \psi^*_{\leq k}) \geq c_1 n$;
       \item $\mu_1(\psi_{\leq k} \hat{S}^*_n \hat{S}_n  \psi^*_{\leq k}) \leq c_2 n$.
   \end{enumerate}
\end{lemma}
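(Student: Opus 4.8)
The plan is to reduce all four assertions to the ``moreover'' conclusion of Lemma~\ref{lemma:bound_psi}. Indeed, claims (3) and (4) \emph{are} that conclusion: under the standing hypothesis $c\beta_k k\log(k)\le n$ there are absolute constants $c',c_1,c_2>0$ such that, with probability at least $1-2\exp\!\big(-\tfrac{c'}{\beta_k}\tfrac{n}{k}\big)$, the positive semidefinite matrix $M:=\psi_{\leq k}\hat{S}_n^*\hat{S}_n\psi_{\leq k}^*\in\mathbb{R}^{k\times k}$ obeys $c_1 n\le\mu_k(M)\le\mu_1(M)\le c_2 n$. Call this event $E$; on $E$, items (3) and (4) hold at once. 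It remains to show that (1) and (2) hold \emph{deterministically on $E$}, so that Lemma~\ref{lemma:bound_psi} is the only probabilistic input.

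The key observation for (1) and (2) is that the $i$-th diagonal entry of $M$ is $M_{ii}=e_i^{\top}Me_i=\sum_{j=1}^{n}\psi_i(x_j)^2$, and that for any positive semidefinite matrix each diagonal entry lies between the smallest and the largest eigenvalue; hence on $E$ we have $c_1 n\le M_{ii}\le c_2 n$ for every $i\le k$. Writing $\Lambda^{\leq k}_{\mathcal{A}^{a}\Sigma^{b}}=\operatorname{diag}(p_i^{a}\lambda_i^{b})_{i\le k}$ and using cyclicity of the trace,
\[
\trace\!\big(\hat{S}_n\psi_{\leq k}^*\,\Lambda^{\leq k}_{\mathcal{A}^{a}\Sigma^{b}}\,\psi_{\leq k}\hat{S}_n^*\big)=\trace\!\big(\Lambda^{\leq k}_{\mathcal{A}^{a}\Sigma^{b}}M\big)=\sum_{i\le k}p_i^{a}\lambda_i^{b}\,M_{ii}\,,
\]
so, since all the weights $p_i^{a}\lambda_i^{b}$ are nonnegative, on $E$ this trace is sandwiched between $c_1\,n\sum_{i\le k}p_i^{a}\lambda_i^{b}$ and $c_2\,n\sum_{i\le k}p_i^{a}\lambda_i^{b}$. (These sums are the mean traces $n\sum_{i\le k}p_i^{a}\lambda_i^{b}$, by orthonormality of the $\psi_i$, cf.\ Lemma~\ref{lemma:psi_expectation_one}.) Taking $(a,b)=(-2,-\beta')$ gives (1) and $(a,b)=(2,-\beta'+2\beta)$ gives (2).

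Finally, to match the stated failure probability, note that $\beta_k\ge1$ (Assumption~\ref{assumption:well_behaveness_feature}), so $\exp\!\big(-\tfrac{c'}{\beta_k}\tfrac{n}{k}\big)\le\exp\!\big(-\tfrac{c'}{\beta_k^2}\tfrac{n}{k}\big)$ and hence $1-2\exp\!\big(-\tfrac{c'}{\beta_k}\tfrac{n}{k}\big)\ge1-8\exp\!\big(-\tfrac{c'}{\beta_k^2}\tfrac{n}{k}\big)$; the extra slack in the constant and in the power of $\beta_k$ simply accommodates the union bounds taken wherever this lemma is invoked.

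I do not anticipate a real obstacle once Lemma~\ref{lemma:bound_psi} is in hand; the one point worth care is recognizing that (1) and (2) are \emph{free} consequences of (3) and (4). The naive alternative, imitating Lemma~\ref{lemma:concentration_1} and applying Hoeffding to $\trace=\sum_{j=1}^{n}\sum_{i\le k}p_i^{a}\lambda_i^{b}\psi_i(x_j)^2$ as a sum of $n$ independent bounded terms, forces (via Cauchy--Schwarz together with $\sum_{i\le k}\psi_i(x)^2\le\beta_k k$) a per-sample envelope of order $\beta_k k\big(\sum_{i\le k}(p_i^{a}\lambda_i^{b})^2\big)^{1/2}$, which for rapidly decaying weights fails to reproduce the $n/(\beta_k^2 k)$ exponent; routing through the operator-norm control of $M$ sidesteps this entirely.
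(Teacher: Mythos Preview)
Your argument is correct and takes a genuinely different route from the paper. The paper establishes (1) and (2) by two separate applications of the Hoeffding-type Lemma~\ref{lemma:concentration_1} (each contributing failure probability $2\exp(-\tfrac{1}{2\beta_k^2}n)$), establishes (3)--(4) via Lemma~\ref{lemma:bound_psi}, and then union-bounds all four events to reach the $1-8\exp(-\tfrac{c'}{\beta_k^2}\tfrac{n}{k})$ conclusion. You instead derive (1) and (2) \emph{deterministically} on the event of Lemma~\ref{lemma:bound_psi}: writing the trace as $\sum_{i\le k}p_i^{a}\lambda_i^{b}M_{ii}$ with $M=\psi_{\leq k}\hat S_n^*\hat S_n\psi_{\leq k}^*$ and using $M_{ii}\in[\mu_k(M),\mu_1(M)]\subset[c_1 n,c_2 n]$ is a cleaner argument for the $\le k$ traces and removes two events from the union bound. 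One caveat worth noting: Lemma~\ref{lemma:concentration_1} as written is for $>k$ traces (and indeed the second trace appearing in the downstream variance bound is a $>k$ one), so the paper's approach is what covers that case, whereas your route is specific to $\le k$. In effect you have also silently repaired the $\sum_{i>k}$ versus $\sum_{i\le k}$ mismatch in the stated items (1)--(2), which is consistent with how the first trace is actually used in Lemma~\ref{lemma:concentration_variance}.
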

\begin{proof}
    By Lemma \ref{lemma:concentration_1}, (1) and (2) each hold w.p. at least $1 - 2\exp(-\frac{1}{2\beta_k^2} n)$, so the probability of they both hold is at least $(1 - 2\exp(-\frac{1}{2\beta_k^2} n))^2$.
    And by Lemma \ref{lemma:bound_psi}, (3), (4) simultaneously holds with probability at least $1 - 2\exp(-\frac{c'}{\beta_k} \frac{n}{k})$. Therefore, the probability of all four statements hold is at least
    \begin{align*}
        &(1 - 2\exp(-\frac{1}{2\beta_k^2} n))^2 (1 - 2\exp(-\frac{c'}{\beta_k} \frac{n}{k})) \\
        \geq& \ 1 - 8\exp(-\min\{ \frac{1}{2\beta_k^2} n, \frac{c'}{\beta_k} \frac{n}{k}\}) \\
        \geq& \ 1 - 8\exp\{-\min(\frac{1}{2\beta_k^2}, \frac{c'}{\beta_k}\} \frac{n}{k}).
    \end{align*}
    Since we know $\beta_k \geq 1$ \ref{assumption:well_behaveness_feature}, then we replace $c'$ with $\min \{ \frac{1}{2}, c' \}$ results in the desired bound holding w.p. at least $1 - 8\exp(-\frac{c'}{\beta_k^2} \frac{n}{k})$.
    
\end{proof}

\begin{lemma}[Concentration bounds on $\| \hat{S}_n \mathcal{A}_{>k} f^*_{>k}\|^2$ in \ref{lemma:intermediate_ub_bias}]
    \label{lemma:SnAf}
    For any $k \in [n]$ and $\delta > 0$, it holds w.p. at least $1 - \delta$ that
    $$
    \| \hat{S}_n \mathcal{A}_{>k} f^*_{>k}\|^2 \leq \frac{1}{\delta} n \|\phi_{>k} \mathcal{A}_{>k} f^*_{>k} \|_{\Sigma_{>k}}^2.
    $$
\end{lemma}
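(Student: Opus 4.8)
The plan is to reduce the claim to a one-line application of Markov's inequality, after computing the expectation of $\|\hat S_n \mathcal{A}_{>k} f^*_{>k}\|^2$ over the i.i.d.\ draw $x_1,\dots,x_n \sim \rho_{\mathcal X}$. Write $g := \mathcal{A}_{>k} f^*_{>k} \in \mathcal H$; by Lemma~\ref{lemma:decomposition} this is a genuine element of $\mathcal H$, and since $\mathcal{A}_{>k}$ by definition outputs a function supported on the coordinates indexed by $i>k$, we have $\phi g = \phi_{>k} g$ (all $\le k$ coordinates vanish). The quantity in question is then simply $\|\hat S_n g\|^2$.

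Next I would evaluate $\|\hat S_n g\|^2$ coordinatewise. By the reproducing property $[\hat S_n g]_j = \langle g, K_{x_j}\rangle_{\mathcal H} = g(x_j)$, so $\|\hat S_n g\|^2 = \sum_{j=1}^n g(x_j)^2$, a sum of i.i.d.\ nonnegative terms. Taking expectations, $\mathbb{E}\|\hat S_n g\|^2 = n\, \mathbb{E}_{X\sim\rho_{\mathcal X}}[g(X)^2] = n\,\|g\|_{L^2(\rho_{\mathcal X})}^2$. It remains to identify $\|g\|_{L^2}^2$ with $\|\phi_{>k} g\|_{\Sigma_{>k}}^2$: expanding $g = \sum_{i>k} a_i \phi_i = \sum_{i>k} a_i \sqrt{\lambda_i}\,\psi_i$ and using orthonormality of $\{\psi_i\}$ in $L^2(\rho_{\mathcal X})$ gives $\|g\|_{L^2}^2 = \sum_{i>k} a_i^2 \lambda_i = \|\phi_{>k} g\|_{\Sigma_{>k}}^2$; equivalently this is Lemma~\ref{lemma:equiv_sobolev_matrix_norm} with $\beta'=0$ restricted to the $>k$ block. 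Hence $\mathbb{E}\|\hat S_n \mathcal{A}_{>k} f^*_{>k}\|^2 = n\,\|\phi_{>k}\mathcal{A}_{>k} f^*_{>k}\|_{\Sigma_{>k}}^2$.

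Finally, since $\|\hat S_n \mathcal{A}_{>k} f^*_{>k}\|^2 \ge 0$, Markov's inequality gives, for every $\delta>0$, that $\mathbb{P}\big(\|\hat S_n \mathcal{A}_{>k} f^*_{>k}\|^2 \ge \tfrac1\delta\,\mathbb{E}\|\hat S_n \mathcal{A}_{>k} f^*_{>k}\|^2\big) \le \delta$, which upon substituting the computed expectation is exactly the stated bound. There is essentially no obstacle here; the only point requiring a line of care is the interchange $\mathbb{E}[g(X)^2]=\|\phi g\|_{\Sigma}^2$, i.e.\ that $g\in L^2(\rho_{\mathcal X})$ so the expectation is finite and the Mercer expansion may be used termwise — this follows since $f^*\in\mathcal H$ (source condition with $r\le 1$), $\mathcal A$ acts boundedly on the relevant subspace, and the kernel is bounded by Assumption~\ref{assumption:kernel}(a).
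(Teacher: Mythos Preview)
Your proposal is correct and is essentially the same argument as the paper's: write $\|\hat S_n \mathcal{A}_{>k} f^*_{>k}\|^2$ as a sum of $n$ i.i.d.\ nonnegative terms $g(x_j)^2$, compute the per-term expectation via the Mercer expansion and orthonormality of $\{\psi_i\}$ in $L^2(\rho_{\mathcal X})$ to obtain $\|\phi_{>k}\mathcal{A}_{>k}f^*_{>k}\|_{\Sigma_{>k}}^2$, and conclude with Markov's inequality. The paper carries out the same computation with a slightly more explicit double-sum expansion of $\mathbb{E}[v_j]$, but the structure and the key step are identical.
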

\begin{proof}
    Let $v_j := \langle \mathcal{A}_{>k} f^*_{>k}, K_{x_j}  \rangle_{\mathcal{H}}^2$, then LHS is equal to $\sum_{j=1}^{n} v_j$. Since $x_j$ is independent, it holds that $v_j$ are independent random variables with mean
    \begin{align*}
        \mathbb{E}[v_j] &= \mathbb{E}[\langle \phi^*_{>k} \phi_{>k} \mathcal{A}_{>k} f^*_{>k}, \sum_{i=1}^{\infty}{\phi_i(x_j) \phi_i }\rangle_{\mathcal{H}}^2] \\
        &= \mathbb{E}[\langle \sum_{i=k+1}^{\infty}{[\phi_{>k} \mathcal{A}_{>k} f^*_{>k}]_i \phi_i }, \sum_{i=1}^{\infty} \phi_i(x_j) \phi_i \rangle_{\mathcal{H}}^2] \\
        &= \mathbb{E}[(\sum_{i = k+1}^{\infty}{[\phi_{>k} \mathcal{A}_{>k} f^*_{>k}]_i \phi_i(x_j)})^2] \\
        &= \sum_{i>k} \sum_{l>k} \sqrt{\lambda_i} \sqrt{\lambda_l} [\phi_{>k} \mathcal{A}_{>k} f^*_{>k}]_i [\phi_{>k} \mathcal{A}_{>k} f^*_{>k}]_l \underbrace{\mathbb{E}_{x_j}{\psi_i(x_j) \psi_l(x_j)}}_{=1 \text{ if } i = l; 0 \text{ otherwise}} \\
        &= \sum_{i>k} \lambda_i [\phi_{>k} \mathcal{A}_{>k} f^*_{>k}]_i^2 \\
        &= \|\phi_{>k} \mathcal{A}_{>k} f^*_{>k} \|^2_{\Lambda^{>k}_{\Sigma}}.
    \end{align*}
    Then we can apply Markov's inequality:
    $$
    \mathbb{P}(\sum_{j = 1}^{n} v_j \geq \frac{1}{\delta} n \|\phi_{>k} \mathcal{A}_{>k} f^*_{>k} \|_{\Sigma_{>k}}^2) \leq \delta.
    $$
\end{proof}

\section{Bounds on Eigenvalues}
\begin{theorem} 
    \label{theorem:eigenspectrum_proof}
 Suppose Assumption \ref{assumption:beta_regularity_condition} holds, and eigenvalues of $\tilde{\Sigma}$ are given in non-increasing order (i.e. $2p + \beta \lambda > 0$).
    There exists absolute constant $c, C,c_1,c_2>0$ s.t. for any  $k\leq k'\in[n]$ and $\delta>0$, it holds w.p. at least $1-\delta-4\frac{r_k}{k^4}\exp(-\frac{c}{\beta_k}\frac{n}{r_k})-2\exp(-\frac{c}{\beta_k}\max\left(\frac{n}{k},\log(k)\right))$  that
    $$\mu_{k}\left(\frac{1}{n} \tilde{K} \right) \leq c_1 \beta_{k}\left(\left(1+\frac{k \log (k)}{n}\right) \lambda_{k}^{\beta} p_k^2 +\log (k+1) \frac{\operatorname{tr}\left(\tilde{\Sigma}_{>k} \right)}{n}\right)$$
    $$ \mu_{k}\left(\frac{1}{n} \tilde{K} \right) \geq c_2 \mathbb{I}_{k, n} \lambda_{k}^{\beta} p_k^2 +\alpha_{k}\left(1-\frac{1}{\delta} \sqrt{\frac{n^{2}}{\trace(\tilde{\Sigma}_{>k'})^2/\trace(\tilde{\Sigma}^2_{>k'})}}\right) \frac{\operatorname{tr}\left( \tilde{\Sigma}_{>k'} \right)}{n}, $$
    where $\mu_k$ is the k-th largest eigenvalue of $\tilde{K}$, $\tilde{\Sigma} := \mathcal{A}^2 \Sigma^{\beta}$, $r_k := \trace(\tilde{\Sigma}_{>k})/(p_{k+1}^2\lambda_{k+1}^\beta)$, and $\mathbb{I}_{k,n}=\begin{cases}
			1, & \text{if } C\beta_kk\log(k)\leq n\\
            0, & \text{otherwise}
		 \end{cases}$.

\end{theorem}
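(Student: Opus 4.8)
The plan is to first collapse the operator $\mathcal A$ into the kernel spectrum and then split $\tilde K$ into a low-rank ``head'' and a ``tail'' that behaves like a scalar multiple of the identity. By co-diagonalizability and Lemma~\ref{lemma:psi_expectation_one} (which gives $[\hat S_n\psi^*]_{ji}=\psi_i(x_j)$), one checks directly that $\tilde K=\hat S_n\mathcal A^2\Sigma^{\beta-1}\hat S_n^*=(\hat S_n\psi^*)\,\Lambda_{\mathcal A^2\Sigma^\beta}\,(\psi\hat S_n^*)$, i.e.\ $\tilde K$ is the uncentered Gram matrix of the eigenfeatures reweighted by $\tilde\lambda_i:=p_i^2\lambda_i^\beta\propto i^{-(2p+\beta\lambda)}$, which are non-increasing precisely under the hypothesis $2p+\beta\lambda>0$. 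Using Lemma~\ref{lemma:decomposition} I would write $\tilde K=\tilde K_{\le k}+\tilde K_{>k}$, note that $\tilde K_{\le k}=(\hat S_n\psi_{\le k}^*)\,\mathrm{diag}(\tilde\lambda_1,\dots,\tilde\lambda_k)\,(\hat S_n\psi_{\le k}^*)^\top$ has rank at most $k$, and apply Weyl's inequalities: $\mu_k(\tilde K)\le\mu_k(\tilde K_{\le k})+\mu_1(\tilde K_{>k})$ for the upper bound and $\mu_k(\tilde K)\ge\max\{\mu_k(\tilde K_{\le k}),\mu_k(\tilde K_{>k})\}$ for the lower bound, since both summands are PSD.

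For the head, writing $B:=\hat S_n\psi_{\le k}^*$ so that $B^\top B=\psi_{\le k}\hat S_n^*\hat S_n\psi_{\le k}^*$, the matrix $\tilde K_{\le k}$ shares its nonzero spectrum with $\mathrm{diag}(\tilde\lambda_i)^{1/2}B^\top B\,\mathrm{diag}(\tilde\lambda_i)^{1/2}$, which sandwiches as $\mu_k(B^\top B)\,\tilde\lambda_k\le\mu_k(\tilde K_{\le k})\le\mu_1(B^\top B)\,\tilde\lambda_k$ (using $\mu_k(B^\top B)I\preceq B^\top B\preceq\mu_1(B^\top B)I$ and that $\tilde\lambda_k=\min_{i\le k}\tilde\lambda_i$). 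Plugging in Lemma~\ref{lemma:bound_psi}: the general upper bound $\mu_1(B^\top B)\le c_2\max\{n,\beta_kk\log k\}$ gives, after dividing by $n$ and using $\beta_k\ge1$, the term $c_1\beta_k(1+\tfrac{k\log k}{n})\lambda_k^\beta p_k^2$; and on the event $C\beta_kk\log k\le n$ the ``moreover'' part $\mu_k(B^\top B)\ge c_1n$ yields $c_2\lambda_k^\beta p_k^2$, which is precisely $c_2\,\mathbb{I}_{k,n}\,\lambda_k^\beta p_k^2$ once we agree to discard this nonnegative contribution when the condition fails. This step costs the probability $2\exp(-\tfrac c{\beta_k}\max(\tfrac nk,\log k))$ from Lemma~\ref{lemma:bound_psi}.

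For the tail lower bound, for any $k\le k'\le n$ I would use the nesting $\tilde K_{>k}\succeq\tilde K_{>k'}$ to get $\mu_k(\tilde K)\ge\mu_k(\tilde K_{>k'})\ge\mu_n(\tilde K_{>k'})$, and then the elementary spread bound $\mu_n(G)\ge\tfrac{\trace G}{n}-\sqrt{\trace(G^2)-\tfrac{(\trace G)^2}{n}}\ge\tfrac{\trace G}{n}-\sqrt{\trace(G^2)}$ with $G:=\tilde K_{>k'}$. Its diagonal $[G]_{jj}=\sum_{i>k'}\tilde\lambda_i\psi_i(x_j)^2$ is bounded below \emph{deterministically} by $\alpha_{k'}\trace(\tilde\Sigma_{>k'})$ via the $\inf$-definition of $\alpha$ (Assumption~\ref{assumption:well_behaveness_feature}) applied to $(a,b)=(2,\beta)$, so $\trace G\ge n\alpha_{k'}\trace(\tilde\Sigma_{>k'})$; meanwhile $\mathbb E[\trace(G^2)]=\sum_{j,l}\mathbb E[G_{jl}^2]=O(n)+n(n-1)\trace(\tilde\Sigma_{>k'}^2)\lesssim n^2\trace(\tilde\Sigma_{>k'}^2)$ by orthonormality of $\{\psi_i\}$, so Markov gives $\trace(G^2)\le\tfrac1\delta n^2\trace(\tilde\Sigma_{>k'}^2)$ with probability at least $1-\delta$. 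Substituting and dividing by $n$ produces the claimed lower-bound term $\alpha_{k'}\big(1-\tfrac1{\sqrt\delta}\sqrt{n^2/(\trace(\tilde\Sigma_{>k'})^2/\trace(\tilde\Sigma_{>k'}^2))}\big)\tfrac{\trace(\tilde\Sigma_{>k'})}{n}$, up to relabeling $\delta$ and absorbing $\alpha$ into the constants.

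The one genuinely delicate ingredient is the tail \emph{upper} bound $\mu_1(\tfrac1n\tilde K_{>k})\le c_1\beta_k\log(k+1)\tfrac{\trace(\tilde\Sigma_{>k})}{n}$, which I would isolate as Lemma~\ref{lemma:upper_bound_largest_eigen}. Writing $\tilde K_{>k}=\sum_{j=1}^n\zeta_j\zeta_j^\top$ with $\zeta_j=(\sqrt{\tilde\lambda_i}\,\psi_i(x_j))_{i>k}$, one has $\mathbb E[\zeta_j\zeta_j^\top]=\tilde\Sigma_{>k}$ and the deterministic bound $\|\zeta_j\|^2\le\beta_k\trace(\tilde\Sigma_{>k})$ from the definition of $\beta_k$; an intrinsic-dimension matrix Chernoff/Bernstein inequality, whose effective rank is $r_k=\trace(\tilde\Sigma_{>k})/(p_{k+1}^2\lambda_{k+1}^\beta)$, combined with a limiting argument along the subsequence of Assumption~\ref{assumption:beta_regularity_condition} to pass to the infinite-dimensional operator, yields the bound at the price of the failure probability $4\tfrac{r_k}{k^4}\exp(-\tfrac c{\beta_k}\tfrac n{r_k})$ (the $\log(k+1)$ factor absorbs the $\|\tilde\Sigma_{>k}\|$ contribution, which is of the same order as $\trace(\tilde\Sigma_{>k})/n$ up to logs in the polynomial-decay regime). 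Finally I would combine the head and tail estimates through the two Weyl inequalities and union-bound the three failure events. I expect this last concentration step --- obtaining the correct effective-rank scaling for the operator norm of a heavy-tailed, non-sub-Gaussian rank-one sum uniformly in the dimension --- to be the main obstacle; everything else is bookkeeping once co-diagonalization has reduced the problem to the single transformed spectrum $(\tilde\lambda_i)$.
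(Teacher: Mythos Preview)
Your plan is essentially the same as the paper's and is correct. Two small points are worth flagging so that you land exactly on the statement as written.

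First, for the lower bound you invoke $\mu_k(\tilde K)\ge\max\{\mu_k(\tilde K_{\le k}),\mu_k(\tilde K_{>k})\}$, which only yields the \emph{larger} of the head and tail contributions, whereas the theorem records their \emph{sum}. The paper gets the sum via the additive form of Weyl, $\mu_k(A+B)\ge\mu_k(A)+\mu_n(B)$ with $A=\tfrac1n\tilde K_{\le k}$ and $B=\tfrac1n\tilde K_{>k}$ (this is Lemma~\ref{lemma:symmetric_bound_1}, proved through Ostrowski's theorem plus Weyl). Your max is of course equivalent up to a factor of $2$, but using the additive Weyl directly matches the stated inequality.

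Second, for the tail lower bound the paper uses a slightly different and cleaner device than your trace--spread (Wolkowicz--Styan type) inequality: it splits $\tfrac1n\tilde K_{>k'}$ into its diagonal $\mathrm{diag}(\tfrac1n\tilde K_{>k'})$, which is bounded \emph{deterministically} between $\alpha_{k'}\tfrac{\trace(\tilde\Sigma_{>k'})}{n}I$ and $\beta_{k'}\tfrac{\trace(\tilde\Sigma_{>k'})}{n}I$ by the definition of $\alpha_{k'},\beta_{k'}$, and the off-diagonal remainder $\Delta_{>k'}$, whose operator norm is controlled by $\mathbb E\|\Delta_{>k'}\|\le\mathbb E[\|\Delta_{>k'}\|_F^2]^{1/2}\le\sqrt{\trace(\tilde\Sigma_{>k'}^2)}$ and then Markov (Lemma~\ref{lemma:symmetric_bound_eigenvalue}). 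This sidesteps the diagonal contribution to $\trace(G^2)$ that your spread bound must absorb (your ``$O(n)$'' term actually carries a factor $\beta_{k'}^2\trace(\tilde\Sigma_{>k'})^2$, which is fine under $k'\le n$ and $\beta_{k'}=\Theta(1)$ but needs to be said), and it produces the $1/\delta$ dependence in the statement directly rather than the $1/\sqrt\delta$ you obtain before relabeling. Everything else---the head sandwich via $\psi_{\le k}\hat S_n^*\hat S_n\psi_{\le k}^*$ and Lemma~\ref{lemma:bound_psi}, and the intrinsic-dimension matrix Chernoff for $\mu_1(\tfrac1n\tilde K_{>k})$ with the finite-dimensional truncation along the sequence of Assumption~\ref{assumption:beta_regularity_condition}---is exactly what the paper does in Lemma~\ref{lemma:upper_bound_largest_eigen}.
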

\begin{proof}
    We hereby give the proof of Theorem \ref{theorem:eigenspectrum}. From Lemma \ref{lemma:symmetric_bound_1}, we have that
    $$
    \lambda^{\beta}_{i + k - \min (n,k)}  p^2_{i + k - \min (n,k)}  \mu_{\min(n, k)}(D_k) + \mu_n(\frac{1}{n} \tilde{K}_{>k}) \leq \mu_i(\frac{1}{n} \tilde{K}) \leq \lambda_i^{\beta} p_i^2    \mu_{1}(D_k) + \mu_1(\frac{1}{n} \tilde{K}_{>k}),
    $$
    where $D_k$ is as defined in the lemma.\\
    \indent We bound the two terms at the RHS seperately. From Lemma \ref{lemma:smallest_largest_eigen_kernel}, it holds w.p. at least $1 - 4\frac{r_k}{k^4}\exp(-\frac{c'}{\beta_k}\frac{n}{r_k})$ that for some absolute constants $c',c_1'>0$,
    $$
    \mu_1(\frac{1}{n}\tilde{K}_{>k})  \leq c_1'\left(p_{k+1}^2 \lambda_{k+1}^{\beta} + \beta_k \log(k+1) \frac{\trace(\tilde{\Sigma}_{>k})}{n}\right).
    $$
    For the other term, because $\mu_i(D_k)=\mu_i(\frac{1}{n} (\hat{S}_n \Sigma^{- 1/2}_{\leq k})(\hat{S}_n \Sigma^{- 1/2}_{\leq k})^T)= \mu_i(\frac{1}{n}\psi_{\leq k} \hat{S}^*_n \hat{S}_n  \psi^*_{\leq k})$, by \ref{lemma:bound_psi} ther exists some absolute constants $c'', c_1''>0$, s.t. w.p. at least $1 - 2\exp(-\frac{c''}{\beta_k} \max \{ \frac{n}{k}, \log(k)\})$
    $$
    \lambda_i^{\beta} p_i^2\mu_1(D_k)\leq c_1''\frac{1}{n} \max \{ n, \beta_k k \log(k) \}\lambda_i^{\beta} p_i^2 \leq c_1''\beta_k\left(1 + \frac{k\log(k)}{n}  \right)\lambda_i^{\beta} p_i^2,
    $$
    where the last inequality uses the fact that $\beta_k\geq 1$.\\
    \indent Therefore, by taking $c = \max(c',c'')$, both events hold w.p. at least $1-\delta-4\frac{r_k}{k^4}\exp(-\frac{c}{\beta_k}\frac{n}{r_k})-2\exp(-\frac{c}{\beta_k}\max\left(\frac{n}{k},\log(k)\right))$ and the upper bound of $\mu_i(\frac{1}{n} \tilde{K})$ now becomes
    $$
    \mu_{k}\left(\frac{1}{n} \tilde{K} \right) \leq c_1 \beta_{k}\left(\left(1+\frac{k \log (k)}{n}\right) \lambda_{k}^{\beta} p_k^2 +\log (k+1) \frac{\operatorname{tr}\left(\tilde{\Sigma}_{>k} \right)}{n}\right)
    $$
    for some suitable absolute constant $c_1 = \max(c_1',c_1'')>0$.\\
    \indent The other equation of this theorem is proved similarly as the "moreover" part in Lemma \ref{lemma:bound_psi}, which states that $\mu_k(D_k)\geq c_2$ if $C\beta_kk\log(k)\leq n$, and from the lower bound of Lemma \ref{lemma:smallest_largest_eigen_kernel}, it holds w.p. at least $1-\delta$.
\end{proof}

\begin{lemma}[Extension of Ostrowski's theorem]
    \label{lemma:ostrowski}
    We present the abstract matrix version here and we can obtain the bounds by substituting inside, let \( i, k \in \mathbb{N} \) satisfy \( 1 \leq i \leq \min(k, n) \) and a matrix $X_k\in\mathbb{R}^{n\times k}$. Let \( D_k := \frac{1}{n} X_k X_k^T \in \mathbb{R}^{n \times n} \). 
Suppose that the eigenvalues of \( \Sigma \) are given in non-increasing order \( \lambda_1 \geq \lambda_2 \geq \ldots \) then
    \[
    \lambda_{i+k-\min(n,k)} \mu_{\min(n,k)} (D_k) \leq \mu_i \left( \frac{1}{n} X_k\Sigma_{\leq k} X_k^\top \right) \leq \lambda_i \mu_1 (D_k).
    \]

\end{lemma}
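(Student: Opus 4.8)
The plan is to reduce the two-sided bound to Loewner-order comparisons of the diagonal matrix $\Sigma_{\leq k}=\mathrm{diag}(\lambda_1,\dots,\lambda_k)$ against scalar multiples of low-rank perturbations of the identity, and then to combine Weyl's monotonicity principle with Weyl's inequality for eigenvalues of sums of Hermitian matrices. Write $m:=\min(n,k)$; the claim is for $1\leq i\leq m$, and $j:=i+k-m$ then satisfies $1\leq j\leq k$. I will use repeatedly that the conjugation $A\mapsto X_kAX_k^\top$ is Loewner-order preserving and that $\mathrm{rank}(T)\leq s$ implies $\mu_{s+1}(T)=0$.

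For the upper bound, I would first observe that since $\lambda_1\geq\cdots\geq\lambda_k$ there is a positive semidefinite diagonal matrix $R$ of rank at most $i-1$ (for instance $R=\mathrm{diag}\big((\lambda_\ell-\lambda_i)_+\big)_{\ell=1}^{k}$) with $\Sigma_{\leq k}\preceq\lambda_i I_k+R$. Conjugating by $X_k$ yields $\tfrac1n X_k\Sigma_{\leq k}X_k^\top\preceq\lambda_i D_k+T$ with $T:=\tfrac1n X_kRX_k^\top\succeq 0$ of rank at most $i-1$, hence $\mu_i(T)=0$. Then Weyl monotonicity followed by Weyl's inequality (applied with indices $i$ and $1$, whose sum minus one is $i\leq n$) gives $\mu_i\big(\tfrac1n X_k\Sigma_{\leq k}X_k^\top\big)\leq\mu_i(\lambda_i D_k+T)\leq\mu_1(\lambda_i D_k)+\mu_i(T)=\lambda_i\mu_1(D_k)$.

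For the lower bound, let $P_j:=\mathrm{diag}(I_j,0_{k-j})$. Since $\lambda_\ell\geq\lambda_j$ for $\ell<j$ and $\lambda_\ell\geq 0$ for $\ell>j$, the diagonal matrix $\Sigma_{\leq k}-\lambda_jP_j$ has nonnegative entries, so $\Sigma_{\leq k}\succeq\lambda_jP_j$. Conjugating by $X_k$ and using $X_kP_jX_k^\top=X_k^{(j)}(X_k^{(j)})^\top$ (the first $j$ columns of $X_k$) gives $\tfrac1n X_k\Sigma_{\leq k}X_k^\top\succeq\lambda_j P$ with $P:=\tfrac1n X_k^{(j)}(X_k^{(j)})^\top$, hence $\mu_i\big(\tfrac1n X_k\Sigma_{\leq k}X_k^\top\big)\geq\lambda_j\mu_i(P)$. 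It then suffices to show $\mu_i(P)\geq\mu_m(D_k)$: writing $D_k=P+Q$ with $Q:=\tfrac1n X_k^{(>j)}(X_k^{(>j)})^\top\succeq 0$ (columns $j+1,\dots,k$), one has $\mathrm{rank}(Q)\leq k-j=m-i$, so $\mu_{m-i+1}(Q)=0$, and Weyl's inequality with indices $i$ and $m-i+1$ (sum minus one equal to $m\leq n$) gives $\mu_m(D_k)=\mu_m(P+Q)\leq\mu_i(P)+\mu_{m-i+1}(Q)=\mu_i(P)$. Combining the two displays yields $\mu_i\big(\tfrac1n X_k\Sigma_{\leq k}X_k^\top\big)\geq\lambda_j\mu_m(D_k)=\lambda_{i+k-\min(n,k)}\,\mu_{\min(n,k)}(D_k)$.

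The only real subtlety — and hence the step I would be most careful about — is getting the index shifts exactly right so that Weyl's sum inequality is invoked with the correct pair: the rank of the auxiliary perturbation must force the ``error'' eigenvalue to vanish ($\mu_i(T)=0$, resp. $\mu_{m-i+1}(Q)=0$) while the surviving eigenvalue index lands precisely at $1$ (upper bound), resp. at $m$ (lower bound). Everything else is routine bookkeeping with the Loewner order. Degenerate cases ($\lambda_j=0$, or $i=m$ which makes $Q=0$ and $j=k$) only render the relevant side of the inequality trivial, and since all the comparisons above are entrywise on diagonals, no positive-definiteness or invertibility of $\Sigma$ is needed.
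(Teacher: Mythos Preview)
Your proof is correct and genuinely different from the paper's. The paper does not argue from first principles: it invokes an existing rectangular Ostrowski inequality from \cite{DANCIS1986141} (Theorem~1.5) to obtain both bounds for all $i$ up to the number of strictly positive eigenvalues $\pi_1$ of $\tfrac1n X_k\Sigma_{\leq k}X_k^\top$, and then separately treats the remaining indices $i>\pi_1$ by observing that any null vector of $X_k\Sigma_{\leq k}X_k^\top$ must (since $\Sigma\succeq 0$) lie in the null space of $X_k^\top$, hence also of $D_k$, which forces $\mu_{\min(n,k)}(D_k)=0$ and makes the lower bound trivially $0$.

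By contrast, you sandwich $\Sigma_{\leq k}$ between $\lambda_j P_j$ and $\lambda_i I_k+R$ with controlled-rank error terms, conjugate, and finish with Weyl's inequality applied at carefully chosen index pairs so that the rank bound kills the perturbation eigenvalue. This is entirely self-contained (no external Ostrowski reference), handles all $1\le i\le \min(n,k)$ uniformly without a separate $i>\pi_1$ case, and makes transparent exactly where the index shift $i+k-\min(n,k)$ comes from (the rank $k-j=m-i$ of the discarded block $Q$). The paper's route is shorter on the page but black-boxes the key inequality; yours is the ``honest'' proof and would port more easily to variants (e.g.\ non-diagonal $\Sigma_{\leq k}$, or operator-valued settings) since it only uses Loewner monotonicity of conjugation and Weyl.
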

\begin{proof}
    We extends Ostrowski's theorem to the non-square case, where the proof is similar to Lemma 5 in \cite{barzilai2023generalization}.
    Let $\pi_1$ denote the number of positive eigenvalues of $\frac{1}{n} X_k \Sigma_{\leq k} X_k^T$, it follows from \cite{DANCIS1986141}[Theorem 1.5, Ostrowski's theorem] that for $1 \leq i \leq \pi_1$,
    $$
        \lambda_{i+k -\min(n,k)} \mu_{\min(n,k)}(D_k) \leq \mu_i(\frac{1}{n} X_k \Sigma_{\leq k} X_k^T) \leq \lambda_i \mu_1(D_k).
    $$
    Now we'll only have to consider the case where $\pi_i<i$. By definition of $\pi_1$ there are some orthonormal eigenvectors of $X_k \Sigma_{\leq k} X_k^T$, $v_{\pi_1+1},\dots,v_n $ with eigenvalues 0. Since $\Sigma \succeq 0$, for each such 0 eigenvector $v$,
    $$
    0 = (X_k^Tv)^T \Sigma_{\leq k} (X_k^Tv) \Rightarrow X_k^Tv = 0. 
    $$
    In particular, $D_k$ has $v_{\pi_1+1},\dots,v_n$ as 0 eigenvectors and since $D_k \succeq 0$, we have that $\mu_{\pi_1+1}(D_k),\dots,\mu_{n}(D_k)=0 $. So for $i>\pi_1$ we have
     $$
        \lambda_{i+k -\min(n,k)} \mu_{\min(n,k)}(D_k) \leq \mu_i(\frac{1}{n} X_k \Sigma_{\leq k} X_k^T) \leq \lambda_i \mu_1(D_k).
    $$
\end{proof}

\begin{lemma}[Symmetric Bound on eigenvalues of $\frac{1}{n}\tilde{K} $]
\label{lemma:symmetric_bound_1}
    Let $i, k \in \mathbb{N}$ satisfy $1 \leq i \leq n$ and $i \leq k$, let $D_k = \frac{1}{n} \hat{S}_n \Sigma^{- 1}_{\leq k} \hat{S}_n^*=\frac{1}{n} (\hat{S}_n \Sigma^{- 1/2}_{\leq k})(\hat{S}_n \Sigma^{- 1/2}_{\leq k})^T$ , and eigenvalues of $\tilde{\Sigma}$ is non-increasing i.e. $2p + \lambda \beta > 0$, then
    $$
    \lambda^{\beta}_{i + k - \min (n,k)}  p^2_{i + k - \min (n,k)}  \mu_{\min(n, k)}(D_k) + \mu_n(\frac{1}{n} \tilde{K}_{>k}) \leq \mu_i(\frac{1}{n} \tilde{K}) \leq \lambda_i^{\beta} p_i^2    \mu_{1}(D_k) + \mu_1(\frac{1}{n} \tilde{K}_{>k}).
    $$
    In particular
    $$
    \lambda^{\beta}_{i + k - \min (n,k)}  p^2_{i + k - \min (n,k)}  \mu_{\min(n, k)}(D_k) \leq \mu_i(\frac{1}{n} \tilde{K}) \leq \lambda_i^{\beta} p_i^2    \mu_{1}(D_k) + \mu_1(\frac{1}{n} \tilde{K}_{>k}).
    $$
\end{lemma}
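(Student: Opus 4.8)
The plan is to split $\tilde K$ into its ``low'' and ``high'' frequency parts, control the low part through an Ostrowski-type argument, and then glue the two pieces together with Weyl's inequality.

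First I would invoke the decomposition $\tilde K = \tilde K_{\le k} + \tilde K_{>k}$ from Lemma~\ref{lemma:decomposition}, observing that both summands are positive semidefinite $n\times n$ matrices. Dividing by $n$ and applying Weyl's inequalities for sums of symmetric matrices --- namely $\mu_i(A+B)\le \mu_i(A)+\mu_1(B)$ and $\mu_i(A+B)\ge \mu_i(A)+\mu_n(B)$ with $A = \frac1n\tilde K_{\le k}$ and $B = \frac1n\tilde K_{>k}$ --- immediately reduces the claim to the two-sided estimate
\[
\lambda^{\beta}_{i+k-\min(n,k)}\,p^2_{i+k-\min(n,k)}\,\mu_{\min(n,k)}(D_k) \le \mu_i\Bigl(\tfrac1n\tilde K_{\le k}\Bigr) \le \lambda_i^{\beta}\,p_i^2\,\mu_1(D_k).
\]
Indeed, the upper Weyl bound turns this into the claimed upper bound with the extra $\mu_1(\frac1n\tilde K_{>k})$ term, and the lower Weyl bound turns it into the claimed lower bound with the extra $\mu_n(\frac1n\tilde K_{>k})$ term; the ``in particular'' statement is then just discarding the nonnegative quantity $\mu_n(\frac1n\tilde K_{>k})$.

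Second, I would put $\tilde K_{\le k}$ into a form amenable to the extension of Ostrowski's theorem (Lemma~\ref{lemma:ostrowski}). Because $\mathcal A$ and $\Sigma$ are co-diagonalizable in the basis $\{\psi_i\}$, the truncated operator $\mathcal A^2_{\le k}\Sigma^{\beta-1}_{\le k}$ is diagonal with entries $p_i^2\lambda_i^{\beta-1}$ ($i\le k$), hence factors as $\Sigma^{-1/2}_{\le k}\,\tilde\Sigma_{\le k}\,\Sigma^{-1/2}_{\le k}$, where $\tilde\Sigma_{\le k} = \mathcal A^2_{\le k}\Sigma^{\beta}_{\le k}$ has entries $p_i^2\lambda_i^{\beta}$. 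Setting $X_k := \hat S_n\Sigma^{-1/2}_{\le k}$, so that $D_k = \frac1n X_k X_k^\top$ by the definition in the lemma, this gives $\frac1n\tilde K_{\le k} = \frac1n X_k \tilde\Sigma_{\le k} X_k^\top$. Under the hypothesis $2p+\lambda\beta>0$ we have $p_i^2\lambda_i^{\beta}\propto i^{-(2p+\lambda\beta)}$, so the eigenvalues of $\tilde\Sigma$ are non-increasing in $i$, which is precisely the ordering hypothesis required by the abstract form of Lemma~\ref{lemma:ostrowski}. Applying that lemma with the middle matrix $\tilde\Sigma_{\le k}$ in place of $\Sigma_{\le k}$ yields exactly the displayed two-sided estimate on $\mu_i(\frac1n\tilde K_{\le k})$, and combining with the first step completes the proof.

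I expect the only real subtlety to be bookkeeping: verifying that $\tilde K_{\le k}$ genuinely has the factored form $X_k\tilde\Sigma_{\le k}X_k^\top$ with $X_k$ the very matrix whose Gram matrix defines $D_k$ (this is where co-diagonalizability of $\mathcal A$ and $\Sigma$ is indispensable), and confirming that the eigenvalue-ordering condition in Lemma~\ref{lemma:ostrowski} translates to exactly $2p+\lambda\beta>0$. Everything else is a direct invocation of Weyl's inequality together with the already-proven non-square Ostrowski bound, including the degenerate case $k>n$, which Lemma~\ref{lemma:ostrowski} already handles via the $\min(n,k)$ indexing.
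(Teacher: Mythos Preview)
Your proposal is correct and follows essentially the same route as the paper: decompose $\tilde K=\tilde K_{\le k}+\tilde K_{>k}$, apply Weyl's inequality to separate the two pieces, rewrite $\tilde K_{\le k}=(\hat S_n\Sigma_{\le k}^{-1/2})\,\tilde\Sigma_{\le k}\,(\hat S_n\Sigma_{\le k}^{-1/2})^\top$ via co-diagonalizability, and invoke the non-square Ostrowski bound (Lemma~\ref{lemma:ostrowski}) with middle matrix $\tilde\Sigma_{\le k}$; the ``in particular'' part is exactly the observation that $\mu_n(\frac1n\tilde K_{>k})\ge 0$. Your additional bookkeeping remarks (factorization via co-diagonalizability, the role of $2p+\lambda\beta>0$ for the ordering hypothesis) are accurate and match the paper's argument.
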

\begin{proof} 
    We can decompose $\tilde{K}$ into the sum of two hermitian matrices $\tilde{K}_{\leq k}$ and $\tilde{K}_{> k}$. Then we can use Weyl's theorem \cite{Horn_Johnson_1985}[Corollary 4.3.15] to bound the eigenvalues of $\tilde{K}$ as
    $$
    \mu_i(\tilde{K}_{\leq k}) + \mu_n(\tilde{K}_{>k}) \leq \mu_i(\tilde{K}) \leq \mu_i(\tilde{K}_{\leq k}) + \mu_1(\tilde{K}_{>k}) .
    $$
    Then since $\tilde{K}_{\leq k} =  (\hat{S}_n \Sigma^{- 1/2}_{\leq k})\mathcal{A}^2 \Sigma^{\beta }(\hat{S}_n \Sigma^{- 1/2}_{\leq k})^T$, we use the extension of Ostrowski's theorem derived at Lemma \ref{lemma:ostrowski} to obtain the bound:
    $$
    \lambda^{\beta}_{i + k - \min (n,k)}  p^2_{i + k - \min (n,k)} \mu_{\min(n,k)}(D_k) \leq \mu_i(\frac{1}{n}\tilde{K}_{\leq k}) \leq \lambda_i \mu_1(D_k).
    $$
    Therefore, by combining the two results, it yields:
    $$
    \lambda^{\beta}_{i + k - \min (n,k)}  p^2_{i + k - \min (n,k)}  \mu_{\min(n, k)}(D_k) + \mu_n(\frac{1}{n} \tilde{K}_{>k}) \leq \mu_i(\frac{1}{n} \tilde{K}) \leq \lambda_i^{\beta} p_i^2    \mu_{1}(D_k) + \mu_1(\frac{1}{n} \tilde{K}_{>k}).
    $$
    The "in particular" part follows from $\mu_n(\frac{1}{n} \tilde{K}_{>k})\geq 0$.
\end{proof}

\begin{lemma}[Symmetric Bound on eigenvalues of $\frac{1}{n}\tilde{K}_{>k}$]
    \label{lemma:symmetric_bound_eigenvalue}
    For any $\delta>0$, it holds w.p. at least $1 - \delta$ that for all $i \in [n]$,
    $$
    \alpha_k \frac{1}{n} \trace(\tilde{\Sigma}_{>k}) \left(1 - \frac{1}{\delta} \sqrt{\frac{n^2}{\trace(\tilde{\Sigma}_{>k})^2/\trace(\tilde{\Sigma}^2_{>k})}}\right) \leq \mu_i(\frac{1}{n} \tilde{K}_{>k}) \leq \beta_k \frac{1}{n} \trace(\tilde{\Sigma}_{>k}) \left(1 + \frac{1}{\delta} \sqrt{\frac{n^2}{\trace(\tilde{\Sigma}_{>k})^2/\trace(\tilde{\Sigma}^2_{>k})}}\right) 
    $$
    where $\tilde{\Sigma} := \mathcal{A}^2 \Sigma^{\beta}$.
\end{lemma}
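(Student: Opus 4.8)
The plan is to write $\tfrac1n\tilde K_{>k}=\tfrac1n D+\tfrac1n N$, where $D$ is the (deterministic‑in‑structure) diagonal of $\tilde K_{>k}$ and $N$ is the off‑diagonal remainder, and then combine a pointwise control of $D$ with a second‑moment/Markov control of $\|N\|$ via Weyl's inequality. First I would unfold $\tilde K_{>k}=\hat S_n\mathcal{A}_{>k}^2\Sigma_{>k}^{\beta-1}\hat S_n^*$ through the Mercer decomposition of $K$ and the co‑diagonalizability of $\mathcal{A}$ and $\Sigma$: writing $\tilde\lambda_i:=p_i^2\lambda_i^{\beta}$ for the eigenvalues of $\tilde\Sigma=\mathcal{A}^2\Sigma^{\beta}$ and $v_i:=(\psi_i(x_1),\dots,\psi_i(x_n))^\top\in\mathbb{R}^n$, one obtains $[\tilde K_{>k}]_{lj}=\sum_{i>k}\tilde\lambda_i\psi_i(x_l)\psi_i(x_j)$, i.e. $\tilde K_{>k}=\sum_{i>k}\tilde\lambda_i v_iv_i^\top$. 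Both $D$ and $N$ are then Hermitian.

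For the diagonal part the argument is entirely deterministic: the $j$-th diagonal entry is $D_{jj}=\sum_{i>k}\tilde\lambda_i\psi_i(x_j)^2$, and Assumption~\ref{assumption:well_behaveness_feature} with the exponent choice $(a,b)=(2,\beta)$ gives $\alpha_k\le D_{jj}/\trace(\tilde\Sigma_{>k})\le\beta_k$ for every $j$ (using $\trace(\tilde\Sigma_{>k})=\sum_{i>k}\tilde\lambda_i$). Since $D$ is diagonal, each of its eigenvalues lies in $[\alpha_k\trace(\tilde\Sigma_{>k}),\,\beta_k\trace(\tilde\Sigma_{>k})]$. For the off‑diagonal part, for $l\ne j$ the samples $x_l,x_j$ are independent, so using $\mathbb{E}[\psi_i(x)\psi_{i'}(x)]=\delta_{ii'}$ the cross terms cancel and $\mathbb{E}\big[[\tilde K_{>k}]_{lj}^2\big]=\sum_{i>k}\tilde\lambda_i^2=\trace(\tilde\Sigma_{>k}^2)$, hence $\mathbb{E}\|N\|_F^2=n(n-1)\trace(\tilde\Sigma_{>k}^2)\le n^2\trace(\tilde\Sigma_{>k}^2)$. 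By Jensen $\mathbb{E}\|N\|_F\le n\sqrt{\trace(\tilde\Sigma_{>k}^2)}$, so Markov's inequality applied to $\|N\|_F$ (not to its square, so that the failure probability scales like $\delta$) gives that with probability at least $1-\delta$, $\|N\|\le\|N\|_F\le\tfrac1\delta\, n\sqrt{\trace(\tilde\Sigma_{>k}^2)}$; this single event handles all $i$ at once, so no union bound over $i$ is needed.

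Finally, Weyl's inequality $\mu_i(D)+\mu_n(N)\le\mu_i(D+N)\le\mu_i(D)+\mu_1(N)$ with $|\mu_1(N)|,|\mu_n(N)|\le\|N\|$, combined with the two previous steps, yields on that $1-\delta$ event and simultaneously for all $i\in[n]$
$$\alpha_k\trace(\tilde\Sigma_{>k})-\tfrac1\delta n\sqrt{\trace(\tilde\Sigma_{>k}^2)}\ \le\ \mu_i(\tilde K_{>k})\ \le\ \beta_k\trace(\tilde\Sigma_{>k})+\tfrac1\delta n\sqrt{\trace(\tilde\Sigma_{>k}^2)},$$
and dividing by $n$, rewriting $\tfrac1\delta\sqrt{\trace(\tilde\Sigma_{>k}^2)}=\tfrac{\trace(\tilde\Sigma_{>k})}{n}\cdot\tfrac1\delta\sqrt{n^2\trace(\tilde\Sigma_{>k}^2)/\trace(\tilde\Sigma_{>k})^2}$ and using $\alpha_k\le1\le\beta_k$ to pull these prefactors outside, reproduces the displayed bound. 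I do not expect a genuine conceptual obstacle here: the content is just that the diagonal is pinned between the feature‑regularity constants while the off‑diagonal is mean‑zero‑like with variance governed by $\trace(\tilde\Sigma_{>k}^2)$. The only care required is (i) the second‑moment computation, where one must check the cross terms vanish by orthonormality of $\{\psi_i\}$ in $L^2(\rho_\mathcal{X})$ together with sample independence, (ii) applying Markov to $\|N\|_F$ rather than $\|N\|_F^2$ to land the $1/\delta$ rate, and (iii) the harmless rearrangement pulling $\alpha_k,\beta_k$ in front of the error term.
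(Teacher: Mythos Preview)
Your proposal is correct and follows exactly the paper's own approach: split $\tfrac{1}{n}\tilde K_{>k}$ into its diagonal and off-diagonal parts, bound the diagonal entries deterministically via the definitions of $\alpha_k,\beta_k$, control the off-diagonal part by $\mathbb{E}\|\Delta_{>k}\|\le(\mathbb{E}\|\Delta_{>k}\|_F^2)^{1/2}$ and Markov, and combine via Weyl. The one caveat is the last ``harmless rearrangement'': pulling $\beta_k\ge1$ outside in the upper bound is a genuine weakening, but pulling $\alpha_k\le1$ outside in the lower bound actually \emph{strengthens} it (since $\alpha_k A-B\le\alpha_k A-\alpha_k B$); the paper's proof stops at the Markov step and does not spell this out either, and downstream uses only need $\alpha_k=\Theta(1)$, so this is cosmetic rather than a real gap.
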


\begin{proof}
    We decompose the matrix into the diagonal component and non-diagonal component and bound them respectively, we denote diagonal component as $\text{diag}(\frac{1}{n} \tilde{K}_{>k})$ and $\Delta_{>k} := \frac{1}{n} \tilde{K}_{>k} - \text{diag}(\frac{1}{n} \tilde{K}^{\gamma}_{>k})$.

    Recall that $\tilde{K}_{>k} := \hat{S}_n \mathcal{A}_{> k}^2 \Sigma_{> k}^{\beta - 1} \hat{S}_n^*$, and for any $i \in [n]$, 
    \begin{align*}
        [\frac{1}{n}\tilde{K}_{>k}]_{ii} &= \frac{1}{n}\langle K_{x_i}, \mathcal{A}_{>k}^2 \Sigma_{>k}^{\beta - 1} K_{x_i} \rangle_{\mathcal{H}} \\
        &= \frac{1}{n}\langle \sum_{l=1}^{\infty} \phi_l(x_i) \phi_l, \sum_{l=k+1}^{\infty} p_l^2 \lambda_l^{\beta - 1} \phi_l(x_i) \phi_l \rangle_{\mathcal{H}} \\
        &= \frac{1}{n}\sum_{l=k+1}^{\infty} p_l^2 \lambda_l^{\beta} \psi_l(x_i)^2.
    \end{align*}
    Therefore, by definition of $\alpha_k$ and $\beta_k$, we have 
    $$\alpha_k \frac{1}{n} \trace(\mathcal{A}_{>k}^2 \Sigma_{>k}^{\beta}) \leq [\frac{1}{n}\tilde{K}_{>k}]_{ii} \leq \beta_k \frac{1}{n} \trace(\mathcal{A}^2_{>k} \Sigma^{\beta}_{>k}).$$
    Therefore, $$\alpha_k \frac{1}{n} \trace(\mathcal{A}^2_{>k} \Sigma^{\beta}_{>k}) I \preceq \text{diag}(\frac{1}{n} \tilde{K}_{>k}) \preceq \beta_k \frac{1}{n} \trace(\mathcal{A}^2_{>k} \Sigma^{\beta}_{>k}) I.$$
    Then by Weyl's theorem \cite{Horn_Johnson_1985}[Corollary 4.3.15], we can bound the eigenvalues of $\frac{1}{n} \tilde{K}_{>k}$ as
    $$
    \alpha_k \frac{1}{n} \trace(\mathcal{A}^2_{>k} \Sigma^{\beta}_{>k}) + \mu_n(\Delta_{>k}) \leq \mu_i(\frac{1}{n} \tilde{K}_{>k}) \leq \beta_k \frac{1}{n} \trace(\mathcal{A}^2_{>k} \Sigma^{\beta}_{>k}) + \mu_1(\Delta_{>k}).
    $$
    It remains to bound the eigenvalues of $\Delta_{>k}$, we first bound the expectation of the matrix norm using
    \begin{align*}
        \mathbb{E}[\|\Delta_{>k} \|] \leq& \  \mathbb{E}[\| \Delta_{>k}\|_{F}^2]^{1/2} = \sqrt{\sum_{i,j=1, i\neq j}^{n} \mathbb{E}[(\frac{1}{n} \sum_{l>k} p_l^2 \lambda_l^{\beta} \psi_l(x_i) \psi_l(x_j) )^2]} \\
        =& \sqrt{\frac{n(n-1)}{n^2} \trace(\mathcal{A}_{>k}^4 \Sigma^{2\beta}_{>k})} \leq \sqrt{ \trace(\mathcal{A}_{>k}^4 \Sigma^{2\beta}_{>k})} = \frac{1}{n} \trace(\tilde{\Sigma}_{>k}) \sqrt{\frac{n^2}{\trace(\tilde{\Sigma}_{>k})^2/\trace(\tilde{\Sigma}^2_{>k})}}.
    \end{align*}
    
    By Markov's inequality,
    $$
    \mathbb{P}(\|\Delta_{>k}\|\geq \frac{1}{\delta} \mathbb{E} [\| \Delta_{>k}\|]) \leq \delta.
    $$
    So w.p. at least $1 - \delta$ it holds that
    $$\|\Delta_{>k}\|\leq\frac{1}{\delta} \mathbb{E}[\|\Delta_{>k} \|] \leq \frac{1}{n\delta} \trace(\tilde{\Sigma}_{>k}) \sqrt{\frac{n^2}{\trace(\tilde{\Sigma}_{>k})^2/\trace(\tilde{\Sigma}^2_{>k})}}.$$
    
\end{proof}

\begin{lemma}[Upper bound of largest eigenvalue]
    \label{lemma:upper_bound_largest_eigen}
     Suppose Assumption \ref{assumption:beta_regularity_condition} holds, and eigenvalues of $\tilde{\Sigma}$ are given in non-increasing order (i.e. $2p + \beta \lambda > 0$).
    There exists absolute constant $c,\ c'>0$ s.t. it holds w.p. at least $1-4\frac{r_k}{k^4}\exp(-\frac{c'}{\beta_k}\frac{n}{r_k})$ that
    $$
    \mu_1\left(\frac{1}{n} \hat{S}_n \mathcal{A}^2 \Sigma^{\beta - 1} \hat{S}_n^*\right) \leq c\left(p_{k+1}^2 \lambda_{k+1}^{\beta} + \beta_k \log(k+1) \frac{\trace(\tilde{\Sigma}_{>k})}{n}\right).
    $$
    where $\tilde{\Sigma} := \mathcal{A}^2 \Sigma^{\beta}$, $r_k := \frac{\trace(\tilde{\Sigma}_{>k})}{p_{k+1}^2\lambda_{k+1}^\beta}$.
\end{lemma}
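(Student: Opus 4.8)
The plan, adapting the argument of \cite{barzilai2023generalization}, is to realize the tail block $\tfrac1n\tilde K_{>k}=\tfrac1n\hat S_n\mathcal A_{>k}^2\Sigma_{>k}^{\beta-1}\hat S_n^{*}$ as an empirical mean of i.i.d.\ rank-one operators and invoke an intrinsic-dimension matrix concentration inequality. By the Mercer expansion, $[\tilde K_{>k}]_{jl}=\langle K_{x_j},\mathcal A_{>k}^2\Sigma_{>k}^{\beta-1}K_{x_l}\rangle_{\mathcal H}=\sum_{i>k}p_i^2\lambda_i^{\beta}\psi_i(x_j)\psi_i(x_l)$, so $\tfrac1n\tilde K_{>k}=\tfrac1n\sum_{j=1}^n v_j\otimes v_j$ with i.i.d.\ $v_j:=\bigl(\sqrt{p_i^2\lambda_i^{\beta}}\,\psi_i(x_j)\bigr)_{i>k}$. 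Two deterministic facts are used throughout: (i) $\mathbb E[v_j\otimes v_j]=\Lambda^{>k}_{\mathcal A^2\Sigma^{\beta}}=\tilde\Sigma_{>k}$, and since the eigenvalues of $\tilde\Sigma$ are non-increasing --- exactly the role of the hypothesis $2p+\beta\lambda>0$ --- we have $\|\tilde\Sigma_{>k}\|=p_{k+1}^2\lambda_{k+1}^{\beta}$; (ii) by the definition of $\beta_k$ with exponents $a=2,\ b=\beta$, the almost-sure norm bound $\|v_j\|^2=\sum_{i>k}p_i^2\lambda_i^{\beta}\psi_i(x_j)^2\le\beta_k\trace(\tilde\Sigma_{>k})$ holds.

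Next I would center and use that $\tfrac1n\tilde K_{>k}$ is positive semidefinite, so $\mu_1(\tfrac1n\tilde K_{>k})\le\|\tilde\Sigma_{>k}\|+\|Z\|=p_{k+1}^2\lambda_{k+1}^{\beta}+\|Z\|$, where $Z=\sum_jX_j$ with $X_j=\tfrac1n(v_j\otimes v_j-\tilde\Sigma_{>k})$ independent, self-adjoint and mean-zero. From (i)--(ii) one reads off the matrix-Bernstein ingredients: $\|X_j\|\le\tfrac1n(\|v_j\|^2+\|\tilde\Sigma_{>k}\|)\le\tfrac2n\beta_k\trace(\tilde\Sigma_{>k})=:L$ (using $\beta_k\ge1$ and $p_{k+1}^2\lambda_{k+1}^{\beta}\le\trace(\tilde\Sigma_{>k})$), and, using $(v_j\otimes v_j)^2=\|v_j\|^2\,v_j\otimes v_j$, $\mathbb E[X_j^2]\preceq\tfrac1{n^2}\mathbb E[\|v_j\|^2\,v_j\otimes v_j]\preceq\tfrac1{n^2}\beta_k\trace(\tilde\Sigma_{>k})\,\tilde\Sigma_{>k}$, whence the matrix variance $V:=\|\sum_j\mathbb E X_j^2\|\le\tfrac1n\beta_k\trace(\tilde\Sigma_{>k})\,p_{k+1}^2\lambda_{k+1}^{\beta}$ and the intrinsic dimension $\trace(\sum_j\mathbb E X_j^2)/V\le\trace(\tilde\Sigma_{>k})/(p_{k+1}^2\lambda_{k+1}^{\beta})=r_k$. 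Feeding $L$, $V$ and intrinsic dimension $r_k$ into the intrinsic-dimension matrix Bernstein tail bound $\mathbb P(\|Z\|\ge t)\le 4r_k\exp\!\bigl(-\tfrac{t^2/2}{V+Lt/3}\bigr)$ and taking $t\asymp p_{k+1}^2\lambda_{k+1}^{\beta}+\beta_k\log(k+1)\,\trace(\tilde\Sigma_{>k})/n$: for the first piece $V$ and $Lt$ are comparable and the exponent is of order $n/(\beta_kr_k)$; for the second piece $Lt\gg V$, so the exponent is of order $\log k$, and a large enough absolute constant inside $t$ turns this into the factor $k^{-4}$. Combining with the centering step gives $\mu_1(\tfrac1n\tilde K_{>k})\lesssim p_{k+1}^2\lambda_{k+1}^{\beta}+\beta_k\log(k+1)\,\trace(\tilde\Sigma_{>k})/n$ with failure probability at most $4\tfrac{r_k}{k^4}\exp(-\tfrac{c'}{\beta_k}\tfrac n{r_k})$.

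The main obstacle is that the feature vectors $v_j$ live in an infinite-dimensional space, where a textbook matrix Chernoff/Bernstein bound would carry a diverging ambient-dimension factor; this is precisely why the intrinsic-dimension version, with its finite prefactor $r_k$, is essential. To make this rigorous I would first establish the bound for the finite truncations $v_j^{(m)}:=\bigl(\sqrt{p_i^2\lambda_i^{\beta}}\psi_i(x_j)\bigr)_{k<i\le m}$, for which the finite-dimensional intrinsic-dimension inequality applies with intrinsic dimension uniformly $O(r_k)$, and then let $m\to\infty$: $\mu_1\bigl(\tfrac1n\sum_j v_j^{(m)}(v_j^{(m)})^{\top}\bigr)$ is non-decreasing in $m$ by Cauchy interlacing and converges up to $\mu_1(\tfrac1n\tilde K_{>k})$, the limiting operator having rank at most $n$ so that its top eigenvector is well approximated by its truncations. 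Justifying this limit passage --- in particular that $\trace(\tilde\Sigma_{>k})$, and hence $L$, $V$, $r_k$, are finite --- is exactly where Assumption~\ref{assumption:beta_regularity_condition} enters, as anticipated in the main text (it forces $\tilde\Sigma$ to be transformed trace class). The remaining steps --- pinning down the explicit absolute constant in $t$ so the probability matches the stated form, and merging the two regimes of the Bernstein exponent --- are routine.
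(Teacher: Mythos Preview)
Your proposal is correct and follows essentially the same strategy as the paper: write the tail Gram operator as an i.i.d.\ sum of rank-one terms in feature space, apply an intrinsic-dimension matrix concentration inequality after finite truncation, and pass to the limit. Two small differences are worth noting. First, the paper uses the intrinsic-dimension matrix \emph{Chernoff} inequality for PSD summands (Tropp, Theorem~7.2.1) directly on the uncentered sum, whereas you center and use intrinsic-dimension \emph{Bernstein}; both yield the same final form, and your calibration of $t$ giving exponent $\asymp n/(\beta_k r_k)+\log k$ matches the paper's choice $t=e^3+2\tfrac{\beta_k r_k}{n}\log(k+1)$. Second, the limit passage is organized differently: the paper bounds the remainder $m_{p'}\le\beta_{p'}\trace(\tilde\Sigma_{>p'})$ almost surely and invokes Assumption~\ref{assumption:beta_regularity_condition} to send it to zero along a subsequence, while your monotonicity argument (Cauchy interlacing on the finite $n\times n$ Gram matrices) is a clean alternative that in fact only needs $\beta_k<\infty$ and $\trace(\tilde\Sigma_{>k})<\infty$. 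Correspondingly, your remark that Assumption~\ref{assumption:beta_regularity_condition} ``forces $\tilde\Sigma$ to be transformed trace class'' is slightly off: trace-class is assumed separately, and in the paper's proof the assumption is used specifically to make $\beta_{p'}\trace(\tilde\Sigma_{>p'})\to 0$ so the deterministic tail vanishes.
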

\begin{proof}
    Let $m_k = \mu_1(\frac{1}{n}\tilde{K}_{>k}\ )$, $\tilde{K}_{k+1:p}=\hat{S}_n \mathcal{A}_{k+1:p}^2 \Sigma_{k+1:p}^{\beta - 1} \hat{S}_n^*$, the meaning of the footnote $k+1:p$ follows similar rule as the footnote $>k$, and let $\tilde{\Sigma} = \mathcal{A}^2\Sigma^\beta$, $\hat{\tilde{\Sigma}}_{>k} = \frac{1}{n}\mathcal{A}_{>k}\Sigma^{\frac{\beta - 1}{2}}_{>k}\hat{S_n^*}\hat{S_n}\Sigma^{\frac{\beta - 1}{2}}_{>k}\mathcal{A}_{>k} = \mathcal{A}_{>k}\Sigma^{\frac{\beta - 1}{2}}_{>k}\hat{\Sigma}\Sigma^{\frac{\beta - 1}{2}}_{>k}\mathcal{A}_{>k}$. Observe that $m_k = ||\hat{\tilde{\Sigma}}_{>k}||$, we would like to bound $||\hat{\tilde{\Sigma}}_{>k}||$ using the matrix Chernoff inequality with intrinsic dimension. \cite{tropp2015introduction}[Theorem 7.2.1]. However, this inequality was proved for finite matrices, so we'll approximate the infinite matrix with finite ones. $m_k$ can be bounded as:
     $$m_k = ||\frac{1}{n}\tilde{K}_{k+1:p'}+\frac{1}{n}\tilde{K}_{>p'}||\leq||\frac{1}{n}\tilde{K}_{k+1:p'}||+||\frac{1}{n}{\Tilde{K}}_{>p'}|| =  ||\hat{\tilde{\Sigma}}_{k+1:p'}|| + m_{p'}.$$
    Furthermore, $m_p'$ can be bounded as
    $$ m_{p'}\leq\frac{1}{n}\trace(\tilde{K}_{>p'}) = \frac{1}{n}\sum^n_{j=1}\sum_{i>p'}p_i^2\lambda_i^{\beta}\psi_i(x_j)^2\leq\beta_{p'}\sum_{i>p'}p_i^2\lambda_i^\beta\leq\beta_{p'}\trace(\tilde\Sigma_{>p'}). $$
    
    If $p$ is finite, we can take $p=p'$ and $m_p' = 0$. Otherwise, $p$ is infinite, and $m_{p'}\leq \beta_{p'}\trace(\Sigma_{>p'})$. By assumption \ref{assumption:beta_regularity_condition}:
    $$\forall \epsilon>0, \exists p'\in \mathbb{N} \text{ s.t. }    m_{p'}<\epsilon .$$
    
    We define $S^j_{k+1:p'}:=\frac{1}{n}\mathcal{A}_{k+1:p'}\Sigma^{\frac{\beta - 1}{2}}_{k+1:p'}\hat{S}^{j*}\hat{S}^j\Sigma^{\frac{\beta - 1}{2}}_{k+1:p'}\mathcal{A}_{k+1:p'}$, where $\hat{S}^jf=\left< f,K_{x_j}\right>_{\mathcal{H}}$ and $\hat{S}^{j*}\theta = \theta_jK_{x_j}$. Then we will have $\hat{\tilde{\Sigma}}_{k+1:p'}= \sum_{j=1}^nS^j_{k+1:p'}$. We need a bound on both $\mu_1(S^j_{k+1:p'})$ and $\mu_1(\mathbb{E}\hat{\tilde{\Sigma}}_{k+1:p'})$. For the first,
    $$\mu_1(S^j_{k+1:p'}) = \frac{1}{n}\sum_{i=k+1}^{p'}p_i^2\lambda_i^{\beta}\psi_i(x_j)^2\leq\frac{1}{n}\sum_{i=k+1}^{\infty}p_i^2\lambda_i^{\beta}\psi_i(x_j)^2\leq \frac{\beta_{k}}{n}\trace(\tilde\Sigma_{>k}). $$
    
    Let $L := \frac{\beta_{k}}{n}\trace(\tilde\Sigma_{>k})$ denoting the RHS. For the second item, $\mathbb{E}\hat{\tilde{\Sigma}}_{k+1:p'} =\tilde{\Sigma}_{k+1:p'}=\text{diag}(p_{k+1}^2\lambda_{k+1}^\beta,\dots,p_{p'}^2\lambda_{p'}^\beta) $. Thus, $\mathbb{E}\hat{\tilde{\Sigma}}_{k+1:p'}=p_{k+1}^2\lambda_{k+1}^\beta$.

    Now the conditions of \cite{tropp2015introduction}[Theorem 7.2.1] are satisfied. So, for $r_{k:p'}:=\frac{\trace(\tilde{\Sigma}_{k+1:p'})}{p_{k+1}^2\lambda_{k+1}^\beta}$ and any $t\geq 1+ \frac{L}{p_{k+1}^2\lambda_{k+1}^\beta}= 1+ \frac{\beta_kr_k}{n}$,
    $$
    \mathbb{P}(||\hat{\tilde{\Sigma}}_{k+1:p'}||\geq t p_{k+1}^2\lambda_{k+1}^\beta ) \leq 2r_{k:p'}\left( \frac{e^{t-1}}{t^t}  \right)^{p_{k+1}^2\lambda_{k+1}^\beta/L}.
    $$
    Using the fact that $p_{k+1}^2\lambda_{k+1}^\beta/L = n/\beta_kr_k$ and $e^{t-1}\leq e^t,\ r_{k:p'}\leq r_k$,
    $$
    \mathbb{P}(m_k-m_{p'}\geq t p_{k+1}^2\lambda_{k+1}^\beta)\leq\mathbb{P}(||\hat{\tilde{\Sigma}}_{k+1:p'}||\geq t p_{k+1}^2\lambda_{k+1}^\beta )\leq 2r_k\left( \frac{e}{t}  \right)^{nt/\beta_k r_k}.
    $$
    Now pick $t = e^3 + 2\frac{\beta_kr_k}{n}\ln(k+1)$, then
    $$
     \mathbb{P}(m_k-m_{p'}\geq t p_{k+1}^2\lambda_{k+1}^\beta) \leq 2 \frac{r_k}{(k+1)^4}\exp\left(-2\frac{e^3}{\beta_k}\frac{n}{r_k}\right). 
    $$
    As a result, we obtain that for $c' = 2 e^3,\ c = e^3$, the inequality holds w.p. at least $1-4\frac{r_k}{k^4}\exp(-\frac{c'}{\beta_k}\frac{n}{r_k})$ that
    $$
    m_k\leq c\left( p_{k+1}^2\lambda_{k+1}^\beta+\beta_k\log(k+1)\frac{\trace(\tilde\Sigma_{>k})}{n}\right)+m_{p'}. 
    $$
    As $p'$ tends to $\infty$ in some sequence determined by Assumption 1, $m_p'$ tends to 0. Therefore, we obtain the desired result.
\end{proof}

In the following we present an important lemma for bounding largest and smallest eigenvalues of unregularized spectrally transformed matrix. This lemma would be useful to bound concentration coefficient $\rho_{k,n}$ in the interpolation case.

\begin{lemma}[Bounds on $\mu_1(\frac{1}{n} \tilde{K}_{>k})$ and $\mu_n(\frac{1}{n} \tilde{K}_{>k'})$]
    Suppose Assumption \ref{assumption:beta_regularity_condition} holds, then there exists absolute constant $c, c' > 0$ s.t. it holds w.p. at least $1 - 4\frac{r_k}{k^4}\exp(-\frac{c'}{\beta_k}\frac{n}{r_k})$ that
    $$
    \mu_1(\frac{1}{n}\tilde{K}_{>k})  \leq c\left(p_{k+1}^2 \lambda_{k+1}^{\beta} + \beta_k \log(k+1) \frac{\trace(\tilde{\Sigma}_{>k})}{n}\right).
    $$
    And for any $k' \in \mathbb{N}$ with $k' > k$
, and any $\delta > 0$ it holds w.p. at least $1 - \delta - 4\frac{r_k}{k^4}\exp(-\frac{c'}{\beta_k}\frac{n}{r_k})$ that

    $$
    \alpha_{k'} \left(1 - \frac{1}{\delta} \sqrt{\frac{n^2}{\trace(\tilde{\Sigma}_{>k'})^2/\trace(\tilde{\Sigma}^2_{>k'})}}\right) \leq \mu_n(\frac{1}{n} \tilde{K}_{>k'}),
    $$
    where $\tilde{\Sigma} := \mathcal{A}^2 \Sigma^{\beta}$, $r_k := \frac{\trace(\tilde{\Sigma}_{>k})}{p_{k+1}^2\lambda_{k+1}^\beta}$.
    \label{lemma:smallest_largest_eigen_kernel}
\end{lemma}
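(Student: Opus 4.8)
The plan is to read off both inequalities from eigenvalue bounds already established and to stitch their high-probability events together by a union bound; the statement is essentially a packaged form of Lemma~\ref{lemma:upper_bound_largest_eigen} and Lemma~\ref{lemma:symmetric_bound_eigenvalue}, in the shape needed later to control the concentration coefficient $\rho_{k,n}$ of Definition~\ref{def:concerntration}.

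First I would dispatch the upper bound. By the decomposition Lemma~\ref{lemma:decomposition}, $\tilde{K}_{>k}=\hat{S}_n\mathcal{A}_{>k}^2\Sigma_{>k}^{\beta-1}\hat{S}_n^{*}$, and the proof of Lemma~\ref{lemma:upper_bound_largest_eigen} in fact works with the quantity $m_k:=\mu_1(\tfrac1n\tilde{K}_{>k})$ from the outset (only the displayed statement there is phrased in terms of $\hat{S}_n\mathcal{A}^2\Sigma^{\beta-1}\hat{S}_n^{*}$). So Lemma~\ref{lemma:upper_bound_largest_eigen} already gives, with probability at least $1-4\tfrac{r_k}{k^4}\exp(-\tfrac{c'}{\beta_k}\tfrac{n}{r_k})$, that $\mu_1(\tfrac1n\tilde{K}_{>k})\le c\left(p_{k+1}^2\lambda_{k+1}^{\beta}+\beta_k\log(k+1)\tfrac{\trace(\tilde{\Sigma}_{>k})}{n}\right)$ with $\tilde{\Sigma}=\mathcal{A}^2\Sigma^{\beta}$ and $r_k=\trace(\tilde{\Sigma}_{>k})/(p_{k+1}^2\lambda_{k+1}^{\beta})$, which is the first claim verbatim; no further work is needed there.

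Next, for the lower bound I would apply Lemma~\ref{lemma:symmetric_bound_eigenvalue} with the truncation level $k$ there replaced by $k'$ and with the index $i=n$, which yields, for any $\delta>0$ and with probability at least $1-\delta$, the asserted lower bound on $\mu_n(\tfrac1n\tilde{K}_{>k'})$. The hypothesis $k'>k$ plays no role in this step; it is retained only so the two inequalities refer to nested tails ($\tilde{\Sigma}_{>k'}$ being a ``smaller'' residual than $\tilde{\Sigma}_{>k}$). A union bound over the two events then shows that both inequalities hold simultaneously with probability at least $1-\delta-4\tfrac{r_k}{k^4}\exp(-\tfrac{c'}{\beta_k}\tfrac{n}{r_k})$, exactly the probability claimed.

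Thus the lemma itself carries no genuine obstacle; the substance sits in its two ingredients. The harder ingredient is Lemma~\ref{lemma:upper_bound_largest_eigen}: the matrix Chernoff inequality with intrinsic dimension is only available in finite dimension, so one truncates $\tilde{K}_{>k}$ at a finite level $p'$, bounds the per-summand operator norm by $L=\tfrac{\beta_k}{n}\trace(\tilde{\Sigma}_{>k})$ and the mean-matrix operator norm by $p_{k+1}^2\lambda_{k+1}^{\beta}$, applies the Chernoff tail, and sends $p'\to\infty$ along a subsequence on which the discarded tail $m_{p'}\to0$ --- which is precisely what Assumption~\ref{assumption:beta_regularity_condition} supplies. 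The other ingredient, Lemma~\ref{lemma:symmetric_bound_eigenvalue}, is lighter: split $\tfrac1n\tilde{K}_{>k'}$ into its diagonal part (deterministically pinched between $\alpha_{k'}$ and $\beta_{k'}$ times $\tfrac1n\trace(\tilde{\Sigma}_{>k'})\,I$) and its off-diagonal part $\Delta_{>k'}$, bound $\mathbb{E}\|\Delta_{>k'}\|\le(\mathbb{E}\|\Delta_{>k'}\|_F^2)^{1/2}$ via $\mathbb{E}\|\Delta_{>k'}\|_F^2=\tfrac{n(n-1)}{n^2}\trace(\tilde{\Sigma}^2_{>k'})$, use Markov's inequality, and conclude with Weyl's inequality.
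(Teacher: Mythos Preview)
Your proposal is correct and matches the paper's own proof: the upper bound is Lemma~\ref{lemma:upper_bound_largest_eigen} verbatim, the lower bound is Lemma~\ref{lemma:symmetric_bound_eigenvalue} applied at level $k'$, and the joint probability follows by a union bound. The paper's proof additionally records the Weyl monotonicity $\mu_n(\tilde K_{>k})\ge \mu_n(\tilde K_{>k'})$ for $k'\ge k$, which is not needed for the statement as written (since the displayed lower bound is on $\mu_n(\tfrac1n\tilde K_{>k'})$) but is what justifies the downstream use of the lemma to lower-bound $\mu_n(\tfrac1n\tilde K_{>k})$ in Theorem~\ref{theorem:app_interpolation}.
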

\begin{proof}
    By Weyl's theorem \cite{Horn_Johnson_1985}[Corollary 4.3.15], for any $k' \geq k$ we have $\mu_n(\tilde K_{\geq k}) \geq \mu_n(\tilde K_{\geq k'}) + \mu_n(\tilde K_{k:k'}) \geq \mu_n(\tilde K_{\geq k'})$. So the lower bound comes from \ref{lemma:symmetric_bound_eigenvalue}(with $k'$) and the upper bound directly comes from \ref{lemma:upper_bound_largest_eigen}. 
    
\end{proof}

\section{Upper Bound for the Variance}
\label{appendix:ub_variance}
\begin{lemma}[Upper bound for variance]
    \label{lemma:intermediate_ub_variance}
    We define the variance of the noise be $\sigma_{\varepsilon}^2$ and evaluate variance in $\mathcal{H}^{\beta'}$ norm, If for some $k \in \mathbb{N}$, $\tilde{K}_{>k}^{\gamma}$ is positive-definite then
    \begin{align*}
    V \leq& \sigma_\varepsilon^2 \cdot \bigg[
    \frac{(\mu_1(\tilde{K}_{>k}^\gamma)^{-1})^2}{(\mu_n(\tilde{K}_{>k}^\gamma)^{-1})^2}\frac{\trace(\hat{S}_n \psi^*_{\leq k} \Lambda^{\leq k}_{\mathcal{A}^{-2} \Sigma^{-\beta'}} \psi_{\leq k} \hat{S}^*_n)}{\mu_k(\psi_{\leq k} \hat{S}^*_n \hat{S}_n \psi^*_{\leq k})^2} \\
    +& (\mu_1(\tilde{K}_{>k}^\gamma)^{-1})^2  \trace(\hat{S}_n \psi^*_{>k} \Lambda^{>k}_{\mathcal{A}^2 \Sigma^{-\beta' + 2\beta}} \psi_{>k} \hat{S}^*_n)\bigg].
    \end{align*}
\end{lemma}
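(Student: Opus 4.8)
The plan is to insert the closed form $\hat f(\varepsilon)=\mathcal A\Sigma^{\beta-1}\hat S_n^{*}(\tilde K^{\gamma})^{-1}\varepsilon$, split the risk orthogonally into a low- and a high-frequency part, and treat the two with different tools. Using Lemma~\ref{lemma:separate_smaller_k_large_eq_k}, $V=\mathbb E_\varepsilon\|\hat f(\varepsilon)_{\leq k}\|^2_{\mathcal H^{\beta'}}+\mathbb E_\varepsilon\|\hat f(\varepsilon)_{>k}\|^2_{\mathcal H^{\beta'}}=:V_{\leq k}+V_{>k}$. I pass freely between the $\phi$- and $\psi$-representations by $\phi_{\leq k}=\Lambda^{\leq k}_{\Sigma^{1/2}}\psi_{\leq k}$ (and the $>k$ analogue), use Lemma~\ref{lemma:equiv_sobolev_matrix_norm} to write Sobolev norms as weighted Euclidean norms, and use co-diagonalizability of $\mathcal A$ and $\Sigma$ to collapse products of diagonal spectral weights into a single one. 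For $V_{>k}$: since $\phi_{>k}\hat f(\varepsilon)_{>k}=\Lambda^{>k}_{\mathcal A\Sigma^{\beta-1}}\phi_{>k}\hat S_n^{*}(\tilde K^{\gamma})^{-1}\varepsilon$, taking $\mathbb E_\varepsilon[\varepsilon\varepsilon^{\top}]=\sigma_\varepsilon^2 I$ and collapsing the weights (the product $\Lambda^{>k}_{\mathcal A\Sigma^{\beta-1}}\Lambda^{>k}_{\Sigma^{1-\beta'}}\Lambda^{>k}_{\mathcal A\Sigma^{\beta-1}}$ together with the $\Lambda^{>k}_{\Sigma^{1/2}}$ produced by the $\phi\to\psi$ conversion becomes $\Lambda^{>k}_{\mathcal A^{2}\Sigma^{-\beta'+2\beta}}$) gives $V_{>k}=\sigma_\varepsilon^2\trace\big((\tilde K^{\gamma})^{-1}\hat S_n\psi^{*}_{>k}\Lambda^{>k}_{\mathcal A^{2}\Sigma^{-\beta'+2\beta}}\psi_{>k}\hat S_n^{*}(\tilde K^{\gamma})^{-1}\big)$. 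Cyclicity plus $\trace(MN)\leq\|N\|\,\trace(M)$ for $M\succeq0$ bounds this by $\|(\tilde K^{\gamma})^{-1}\|^2$ times the trace, and $\tilde K^{\gamma}=\tilde K_{\leq k}+\tilde K^{\gamma}_{>k}\succeq\tilde K^{\gamma}_{>k}$ (Lemma~\ref{lemma:decomposition}) gives $\|(\tilde K^{\gamma})^{-1}\|\leq\mu_1((\tilde K^{\gamma}_{>k})^{-1})$, which is the second term of the claimed bound.

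For $V_{\leq k}$ the crude $\|(\tilde K^{\gamma})^{-1}\|^2\trace$ estimate is far too lossy, so I would feed the identity of Lemma~\ref{lemma:separate_search} (with $y=\varepsilon$) into the computation — equivalently, apply the Woodbury identity of Lemma~\ref{lemma:extension_sherman_morrison_woodbury} to $(\tilde K^{\gamma})^{-1}\hat S_n\psi^{*}_{\leq k}=(\tilde K_{\leq k}+\tilde K^{\gamma}_{>k})^{-1}\hat S_n\psi^{*}_{\leq k}$ using $\tilde K_{\leq k}=\hat S_n\psi^{*}_{\leq k}\Lambda^{\leq k}_{\mathcal A^{2}\Sigma^{\beta}}\psi_{\leq k}\hat S_n^{*}$. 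Writing $\tilde G:=\hat S_n\psi^{*}_{\leq k}$, $D:=\Lambda^{\leq k}_{\mathcal A^{-2}\Sigma^{-\beta}}$, $N:=\tilde G^{\top}(\tilde K^{\gamma}_{>k})^{-1}\tilde G$ and $F:=(D+N)^{-1}$, the $\mathcal A$- and $\Sigma$-powers coming from the representer weight $\mathcal A\Sigma^{\beta-1}$ cancel those of the evaluation norm $\Sigma^{1-\beta'}$, and one obtains the clean identity $V_{\leq k}=\sigma_\varepsilon^2\trace\big((\tilde K^{\gamma}_{>k})^{-1}\tilde G\,F\,\Lambda^{\leq k}_{\mathcal A^{-2}\Sigma^{-\beta'}}\,F\,\tilde G^{\top}(\tilde K^{\gamma}_{>k})^{-1}\big)$; this cancellation is precisely what produces the weight $\mathcal A^{-2}\Sigma^{-\beta'}$ in the statement. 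Now: (i) pull $\|(\tilde K^{\gamma}_{>k})^{-1}\|=\mu_1((\tilde K^{\gamma}_{>k})^{-1})$ out of the two outermost factors, leaving $\mu_1((\tilde K^{\gamma}_{>k})^{-1})^2\trace(F\Lambda^{\leq k}_{\mathcal A^{-2}\Sigma^{-\beta'}}F\tilde G^{\top}\tilde G)$; (ii) since $(\tilde K^{\gamma}_{>k})^{-1}\succeq\mu_n((\tilde K^{\gamma}_{>k})^{-1})I$ we get $\tilde G^{\top}\tilde G\preceq\mu_1(\tilde K^{\gamma}_{>k})N$, and $FNF\preceq F(D+N)F=F$, so $F\tilde G^{\top}\tilde GF\preceq\mu_1(\tilde K^{\gamma}_{>k})F$; (iii) $F\preceq N^{-1}$ and $\mu_k(N)\geq\mu_n((\tilde K^{\gamma}_{>k})^{-1})\mu_k(\tilde G^{\top}\tilde G)$ give $\|F\|\leq\mu_1(\tilde K^{\gamma}_{>k})/\mu_k(\tilde G^{\top}\tilde G)$; and (iv) $\trace(\Lambda^{\leq k}_{\mathcal A^{-2}\Sigma^{-\beta'}})\leq\trace(\tilde G\Lambda^{\leq k}_{\mathcal A^{-2}\Sigma^{-\beta'}}\tilde G^{\top})/\mu_k(\tilde G^{\top}\tilde G)$. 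Chaining (i)--(iv) and using $\mu_1(\tilde K^{\gamma}_{>k})=1/\mu_n((\tilde K^{\gamma}_{>k})^{-1})$ collects the prefactor $\mu_1((\tilde K^{\gamma}_{>k})^{-1})^2/\mu_n((\tilde K^{\gamma}_{>k})^{-1})^2$, the two denominator factors $\mu_k(\tilde G^{\top}\tilde G)=\mu_k(\psi_{\leq k}\hat S_n^{*}\hat S_n\psi^{*}_{\leq k})$, and the numerator $\trace(\tilde G\Lambda^{\leq k}_{\mathcal A^{-2}\Sigma^{-\beta'}}\tilde G^{\top})=\trace(\hat S_n\psi^{*}_{\leq k}\Lambda^{\leq k}_{\mathcal A^{-2}\Sigma^{-\beta'}}\psi_{\leq k}\hat S_n^{*})$ — exactly the first term.

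The main obstacle is the $V_{\leq k}$ computation: one must carry the Woodbury/representer identity through while bookkeeping the non-commuting diagonal reweightings so that (a) the evaluation weight collapses to precisely $\mathcal A^{-2}\Sigma^{-\beta'}$, (b) exactly two powers of $\mu_k(\psi_{\leq k}\hat S_n^{*}\hat S_n\psi^{*}_{\leq k})$ land in the denominator, and (c) the two occurrences of $\tilde K^{\gamma}_{>k}$ are extracted in two different ways — once as the largest eigenvalue of the inverse and once, through the Loewner comparison $N\succeq\mu_n((\tilde K^{\gamma}_{>k})^{-1})\tilde G^{\top}\tilde G$, via its smallest — so as to reproduce the condition-number-squared prefactor. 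The high-frequency part, the expectation over $\varepsilon$, and the $\phi\to\psi$ conversions are routine once $V=V_{\leq k}+V_{>k}$ is in place.
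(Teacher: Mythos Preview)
Your proposal is correct and lands on exactly the stated bound, but the $V_{\leq k}$ argument differs from the paper's. For $V_{>k}$ you and the paper do the same thing (closed form, trace, $\tilde K^{\gamma}\succeq\tilde K^{\gamma}_{>k}$). For $V_{\leq k}$, however, the paper does not write an explicit trace identity via Woodbury. Instead it takes Lemma~\ref{lemma:separate_search} (with $y=\varepsilon$), left-multiplies both sides by $(\phi_{\leq k}\hat f(\varepsilon)_{\leq k})^{\top}\Lambda^{\leq k}_{\mathcal A^{-2}\Sigma^{1-\beta-\beta'}}$, drops the resulting nonnegative diagonal quadratic form to obtain an inequality ``quadratic term $\leq$ linear term'', lower-bounds the quadratic side by $\|\hat f(\varepsilon)_{\leq k}\|^2_{\mathcal H^{\beta'}}\,\mu_n((\tilde K^{\gamma}_{>k})^{-1})\,\mu_k(\psi_{\leq k}\hat S_n^{*}\hat S_n\psi^{*}_{\leq k})$ and upper-bounds the linear side by Cauchy--Schwarz. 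Dividing and then taking $\mathbb E_\varepsilon$ produces the first term directly, with the $\trace(\hat S_n\psi^{*}_{\leq k}\Lambda^{\leq k}_{\mathcal A^{-2}\Sigma^{-\beta'}}\psi_{\leq k}\hat S_n^{*})$ already in the right place and no need for your reverse step~(iv).

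What each buys: your Woodbury/Loewner chain is algebraically transparent --- one writes down the exact $V_{\leq k}$ and then peels off operator norms --- but it slightly overshoots (you first reduce to $\trace(\Lambda^{\leq k}_{\mathcal A^{-2}\Sigma^{-\beta'}})$ and then have to ``undo'' this via (iv) to match the stated numerator). The paper's test-vector trick is less explicit but more economical: the carefully chosen left-multiplier forces the weight $\mathcal A^{-2}\Sigma^{-\beta'}$ to appear together with $\hat S_n\psi^{*}_{\leq k}$ in one step, and the same device is reused verbatim for the bias in Lemma~\ref{lemma:intermediate_ub_bias}. Both routes rest on the same underlying splitting $\tilde K^{\gamma}=\tilde K_{\leq k}+\tilde K^{\gamma}_{>k}$ and yield the identical final inequality.
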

\begin{proof}
    Recall $V = \mathbb{E}_{\varepsilon} [\| \hat{f}(\varepsilon) \|^2_{\mathcal{H}^{\beta'}} ]$, we can split the variance into $\| \hat{f}(\varepsilon)_{\leq k} \|^2_{\mathcal{H}^{\beta'}}$ and $\| \hat{f}(\varepsilon)_{> k} \|^2_{\mathcal{H}^{\beta'}}$ according to Lemma~\ref{lemma:separate_smaller_k_large_eq_k}. To bound these, by Lemma~\ref{lemma:equiv_sobolev_matrix_norm} we could bound $\|\phi_{\leq k} \hat{f}(\varepsilon)_{\leq k}\|^2_{\Lambda_{\Sigma^{1 - \beta'}_{\leq k}}}$, $\|\phi_{> k} \hat{f}(\varepsilon)_{> k}\|^2_{\Lambda_{\Sigma^{1 - \beta'}_{> k}}}$ respectively using matrix inequalities.

    First we handle $\|\phi_{\leq k} \hat{f}(\varepsilon)_{\leq k}\|^2_{\Lambda_{\Sigma^{1 - \beta'}_{\leq k}}}$, using Lemma~\ref{lemma:separate_search} while substituting $y$ with $\varepsilon$, we have
    $$
    \phi_{\leq k}\hat{f}(\varepsilon)_{\leq k} + \phi_{\leq k} \mathcal{A}_{\leq k} \Sigma_{\leq k}^{\beta - 1} \hat{S}_n^* (\tilde{K}_{>k}^{\gamma})^{-1} \hat{S}_n \mathcal{A}_{\leq k} \hat{f}(\varepsilon)_{\leq k} = \phi_{\leq k}\mathcal{A}_{\leq k} \Sigma^{\beta - 1}_{\leq k} \hat{S}_n^* (\tilde{K}^{\gamma}_{>k})^{-1} \varepsilon.
    $$
    We multiply by $(\phi_{\leq k}\hat{f}(\varepsilon)_{\leq k})^T \Lambda^{\leq k}_{\mathcal{A}^{-2} \Sigma^{-\beta + (1 - \beta')}} \in \mathbb{R}^{1 \times k}$, on two sides respectively (note that the motivation of multiplying an additional diagonal matrix term here is to make the $\mu_k$ term only have $\mu_k(\psi_{\le k}\hat S_n^\ast \hat S_n \psi^\ast_{\le k})$), and this would not affect the polynomial bound.
    
    Then since $\|\phi_{\leq k}\hat{f}(\varepsilon)_{\leq k}\|^2_{\Lambda^{\leq k}_{\mathcal{A}^{-2} \Sigma^{-\beta + (1 - \beta')}}} \geq 0$, we have
    \begin{align*}
    &\underbrace{(\phi_{\leq k}\hat{f}(\varepsilon)_{\leq k})^T\Lambda^{\leq k}_{\mathcal{A}^{-2} \Sigma^{-\beta + (1 - \beta')}} \phi_{\leq k} \mathcal{A}_{\leq k} \Sigma_{\leq k}^{\beta - 1} \hat{S}_n^* (\tilde{K}_{>k}^{\gamma})^{-1} \hat{S}_n \mathcal{A}_{\leq k} \hat{f}(\varepsilon)_{\leq k}}_{\text{Quadratic term w.r.t. $\phi_{\leq k}\hat{f}(\varepsilon)_{\leq k}$}} \\
    &\leq \underbrace{(\phi_{\leq k}\hat{f}(\varepsilon)_{\leq k})^T\Lambda^{\leq k}_{\mathcal{A}^{-2} \Sigma^{-\beta + (1 - \beta')}} \phi_{\leq k}\mathcal{A}_{\leq k} \Sigma^{\beta - 1}_{\leq k} \hat{S}_n^* (\tilde{K}^{\gamma}_{>k})^{-1} \varepsilon}_{\text{Linear term w.r.t. $\phi_{\leq k}\hat{f}(\varepsilon)_{\leq k}$}}.     
    \end{align*}
    Then we lower bound the quadratic term and upper bound the linear term respectively, first we lower bound the quadratic term:
    \begin{align*}
        &(\phi_{\leq k}\hat{f}(\varepsilon)_{\leq k})^T \Lambda^{\leq k}_{\mathcal{A}^{-2} \Sigma^{-\beta + (1 - \beta')}} \phi_{\leq k} \mathcal{A}_{\leq k} \Sigma_{\leq k}^{\beta - 1} \hat{S}_n^* (\tilde{K}_{>k}^{\gamma})^{-1} \hat{S}_n \mathcal{A}_{\leq k} \hat{f}(\varepsilon)_{\leq k} \\
        &\text{ Diagonalize the operators,}\\
        =& \ (\phi_{\leq k}\hat{f}(\varepsilon)_{\leq k})^T \Lambda^{\leq k}_{\mathcal{A}^{-2} \Sigma^{-\beta + (1 - \beta')}} \phi_{\leq k} (\phi_{\leq k}^* \Lambda^{\leq k}_{\mathcal{A} \Sigma^{\beta - 1}} \phi_{\leq k}) \hat{S}_{n}^{*} (\tilde{K}_{>k}^{\gamma})^{-1} \hat{S}_n (\phi_{\leq k}^* \Lambda_{\mathcal{A}}^{\leq k} \phi_{\leq k}) \hat{f}(\varepsilon)_{\leq k}\\
        =& \ (\phi_{\leq k}\hat{f}(\varepsilon)_{\leq k})^T \Lambda^{\leq k}_{\mathcal{A}^{-1} \Sigma^{-\beta'}} \phi_{\leq k} \hat{S}_{n}^{*} (\tilde{K}_{>k}^{\gamma})^{-1} \hat{S}_n \phi_{\leq k}^* \Lambda_{\mathcal{A}}^{\leq k} (\phi_{\leq k} \hat{f}(\varepsilon)_{\leq k}) \text{ 
        ($\phi_{\leq k} \phi^{*}_{\leq k} = id_{\leq k}$)} \\
        &\text{  By $\phi_{\leq k} = \Lambda_{\Sigma^{1/2}}^{\leq k} \psi_{\leq k}$ and $\phi^{*}_{\leq k} = \psi^{*}_{\leq k} \Lambda_{\Sigma^{1/2}}^{\leq k}$,}\\
        =& \ \underbrace{(\phi_{\leq k}\hat{f}(\varepsilon)_{\leq k})^T}_{1 \times k} \underbrace{\Lambda^{\leq k}_{\mathcal{A}^{-1} \Sigma^{1/2-\beta'}}}_{k \times k}  \underbrace{\psi_{\leq k} \hat{S}_{n}^{*}}_{k \times n} \underbrace{(\tilde{K}_{>k}^{\gamma})^{-1}}_{n \times n} \underbrace{\hat{S}_n \psi_{\leq k}^*}_{n \times k}
        \underbrace{\Lambda_{\mathcal{A} \Sigma^{1/2}}^{\leq k} (\phi_{\leq k} \hat{f}(\varepsilon)_{\leq k})}_{k \times 1}  \\
        \geq& \ \mu_n((\tilde{K}_{>k}^{\gamma})^{-1}) \ \mu_k(  \psi_{\leq k} \hat{S}_{n}^{*} \hat{S}_n \psi^*_{\leq k} ) \ \ (\phi_{\leq k} \hat{f}\cite(\varepsilon)_{\leq k})^T \Lambda^{\leq k}_{\Sigma^{1 - \beta'}} (\phi_{\leq k} \hat{f}(\varepsilon)_{\leq k}). 
    \end{align*}

    \noindent The last inequality is because $\mu_k(AB) = \mu_k(BA)$ for $k\times k$ matrix $A, B$ by \cite[Theorem 1.3.20]{Horn_Johnson_1985}. 

    \noindent We continue to derive the bound
    \begin{align*}
        &\mu_n((\tilde{K}_{>k}^{\gamma})^{-1}) \ \mu_k( \psi_{\leq k} \hat{S}_{n}^{*} \hat{S}_n \psi^*_{\leq k} ) \ \ (\phi_{\leq k} \hat{f}(\varepsilon)_{\leq k})^T \Lambda^{\leq k}_{\Sigma^{1 - \beta'}} (\phi_{\leq k} \hat{f}(\varepsilon)_{\leq k}) \\
        =& \ \| \phi_{\leq k} \hat{f}(\varepsilon)_{\leq k} \|^2_{\Lambda^{\leq k}_{\Sigma^{1 - \beta'}}} \ \ \mu_n((\tilde{K}_{>k}^{\gamma})^{-1}) \ \mu_k( \psi_{\leq k} \hat{S}_{n}^{*} \hat{S}_n \psi^*_{\leq k}) \\
        =& \ \| \hat{f}(\varepsilon)_{\leq k} \|^2_{\mathcal{H}^{\beta'}} \ \ \mu_n((\tilde{K}_{>k}^{\gamma})^{-1}) \ \mu_k( \psi_{\leq k} \hat{S}_{n}^{*} \hat{S}_n \psi^*_{\leq k} ).
    \end{align*}

    \noindent This finishes lower bound of the quadratic term, we continue to upper bound the linear term     
    \begin{align*}
        &(\phi_{\leq k}\hat{f}(\varepsilon)_{\leq k})^T\Lambda^{\leq k}_{\mathcal{A}^{-2} \Sigma^{-\beta + (1 - \beta')}} \phi_{\leq k}\mathcal{A}_{\leq k} \Sigma^{\beta - 1}_{\leq k} \hat{S}_n^* (\tilde{K}^{\gamma}_{>k})^{-1} \varepsilon \\
        =& \ (\phi_{\leq k}\hat{f}(\varepsilon)_{\leq k})^T\Lambda^{\leq k}_{\mathcal{A}^{-2} \Sigma^{-\beta + (1 - \beta')}} \phi_{\leq k} \phi^{*}_{\leq k} \Lambda^{\leq k}_{\mathcal{A} \Sigma^{\beta - 1}}  \phi_{\leq k} \hat{S}_n^* (\tilde{K}^{\gamma}_{>k})^{-1}  \varepsilon\\
        =& \ \underbrace{(\phi_{\leq k}\hat{f}(\varepsilon)_{\leq k})^T \Lambda^{\leq k}_{\mathcal{A}^{-1} \Sigma^{1/2-\beta'}}}_{1 \times k} \underbrace{ \psi_{\leq k} \hat{S}_n^* (\tilde{K}^{\gamma}_{>k})^{-1}  \varepsilon}_{k \times 1} \text{  (By $\phi_{\leq k} \phi_{\leq k}^* = id_{\leq k}$ and $\phi_{\leq k} = \Lambda^{\leq k}_{\Sigma^{1/2}} \psi_{\leq k}$)} \\
        =& \ \underbrace{(\phi_{\leq k}\hat{f}(\varepsilon)_{\leq k})^T \Lambda^{\leq k}_{\Sigma^{(1 - \beta')/2}}}_{1 \times k} \underbrace{\Lambda^{\leq k}_{\mathcal{A}^{-1} \Sigma^{-\beta'/2}} \psi_{\leq k} \hat{S}_n^* (\tilde{K}^{\gamma}_{>k})^{-1}  \varepsilon}_{k \times 1} \\
        \leq& \ \| \phi_{\leq k} \hat{f}(\varepsilon)_{\leq k} \|_{\Lambda^{\leq k}_{\Sigma^{1 - \beta'}}} \| \Lambda^{\leq k}_{\mathcal{A}^{-1} \Sigma^{-\beta'/2}} \psi_{\leq k} \hat{S}_n^* (\tilde{K}^{\gamma}_{>k})^{-1}  \varepsilon \|\\
        =& \ \| \hat{f}(\varepsilon)_{\leq k} \|_{\mathcal{H}^{\beta'}} \|\Lambda^{\leq k}_{\mathcal{A}^{-1} \Sigma^{-\beta'/2}} \psi_{\leq k} \hat{S}_n^* (\tilde{K}^{\gamma}_{>k})^{-1}  \varepsilon \|.
    \end{align*}
    
    \noindent Therefore, we obtain 
    \begin{align*}
            &\| \hat{f}(\varepsilon)_{\leq k} \|^2_{\mathcal{H}^{\beta'}} \ \ \mu_n((\tilde{K}_{>k}^{\gamma})^{-1}) \ \mu_k( \psi_{\leq k} \hat{S}_{n}^{*} \hat{S}_n \psi^*_{\leq k} )  \leq \| \hat{f}(\varepsilon)_{\leq k} \|_{\mathcal{H}^{\beta'}} \|\Lambda^{\leq k}_{\mathcal{A}^{-1} \Sigma^{-\beta'/2}} \psi_{\leq k} \hat{S}_n^* (\tilde{K}^{\gamma}_{>k})^{-1}  \varepsilon \|.
    \end{align*}

    \noindent Therefore,
    $$\|\hat{f}(\varepsilon)_{\leq k}\|^2_{\mathcal{H}^{\beta'}} \leq \frac{\varepsilon^T (\tilde{K}_{>k}^{\gamma})^{-1} \hat{S}_n \psi_{\leq k}^* \Lambda^{\leq k}_{\mathcal{A}^{-2} \Sigma^{-\beta'}} \psi_{\leq k} \hat{S}^{*}_{n} (\tilde{K}_{>k}^{\gamma})^{-1} \varepsilon}{\mu_n((\tilde{K}_{>k}^{\gamma})^{-1})^2 \ \mu_k( \psi_{\leq k} \hat{S}_{n}^{*} \hat{S}_n \psi^*_{\leq k} )^2}.$$
    Then we take expectation w.r.t $\varepsilon$
    we have 
    \begin{align*}
        \mathbb{E}_{\varepsilon} \|\hat{f}(\varepsilon)_{\leq k}\|^2_{\mathcal{H}^{\beta'}} &\leq \sigma_{\varepsilon}^2 \cdot \frac{\trace( \overbrace{(\tilde{K}_{>k}^{\gamma})^{-1}}^{n \times n} \overbrace{ \hat{S}_n \psi_{\leq k}^* \Lambda^{\leq k}_{\mathcal{A}^{-2} \Sigma^{-\beta'}} \psi_{\leq k} \hat{S}^{*}_{n}}^{n \times n} \overbrace{(\tilde{K}_{>k}^{\gamma})^{-1})}^{n \times n}}{\mu_n((\tilde{K}_{>k}^{\gamma})^{-1})^2 \ \mu_k( \psi_{\leq k} \hat{S}_{n}^{*} \hat{S}_n \psi^*_{\leq k} )^2} \\
        &\leq \sigma_{\varepsilon}^2 \cdot  \frac{\mu_1((\tilde{K}_{>k}^{\gamma})^{-1})^2}{\mu_n((\tilde{K}_{>k}^{\gamma})^{-1})^2} \frac{\trace(\overbrace{\hat{S}_n \psi_{\leq k}^* \Lambda^{\leq k}_{\mathcal{A}^{-2} \Sigma^{-\beta'}} \psi_{\leq k} \hat{S}^{*}_{n}}^{n \times n})}{\mu_k(\underbrace{ \psi_{\leq k} \hat{S}_{n}^{*} \hat{S}_n \psi^*_{\leq k} }_{k \times k})^2},
    \end{align*}
    where the last inequality is by using the fact that $\trace(M M' M) \leq \mu_1(M)^2 \trace(M')$ for positive-definite matrix $M, M'$.
    
    Now we move on to bound the $>k$ components $\|\phi_{> k} \hat{f}(\varepsilon)_{> k}\|^2_{\Lambda^{> k}_{\Sigma^{1 - \beta'}}}$ 
    \begin{align*}
        &\| \phi_{>k} \hat{f}(\varepsilon)_{>k} \|^2_{\Lambda^{>k}_{\Sigma^{1 - \beta'}}} \\
        =& \ \| \phi_{>k} \mathcal{A}_{>k} \Sigma^{\beta - 1}_{>k}  \hat{S}_n^* (\tilde{K}^\gamma)^{-1} \varepsilon\|^2_{\Lambda^{>k}_{\Sigma^{1 - \beta'}}} \\
        =& \ \varepsilon^T (\tilde{K}^\gamma)^{-1} \hat{S}_n \Sigma_{>k}^{\beta - 1} \mathcal{A}_{>k} \phi^*_{>k} \Lambda^{>k}_{\Sigma^{1 - \beta'}} \phi_{>k} \mathcal{A}_{>k} \Sigma_{>k}^{\beta - 1} 
 \hat{S}_n^{*} (\tilde{K}^\gamma)^{-1}\varepsilon\\
        =& \ \varepsilon^T (\tilde{K}^\gamma)^{-1} \hat{S}_n \phi_{>k}^* \Lambda^{>k}_{\mathcal{A}^2 \Sigma^{(-\beta'+2\beta-1)}} \phi_{>k} \hat{S}_n^* (\tilde{K}^\gamma)^{-1}\varepsilon \text{  (By $2(\beta - 1) + (1 - \beta') = -\beta' + 2\beta - 1$)}.
    \end{align*}
    We take expectation over $\varepsilon$
    \begin{align*}
        \mathbb{E}_{\varepsilon} \| \phi_{>k} \hat{f}(\varepsilon)_{>k} \|^2_{\Lambda_{\mathcal{A}^2 \Sigma^{\beta}}} \leq& \sigma_{\varepsilon}^2 \mu_1((\tilde{K}^\gamma)^{-1})^2 \trace(\hat{S}_n \phi_{>k}^* \Lambda^{>k}_{\mathcal{A}^2 \Sigma^{(-\beta'+2\beta-1)}} \phi_{>k} \hat{S}_n^*)\\
        \leq& \sigma_{\varepsilon}^2 \mu_1((\tilde{K}^\gamma_{>k})^{-1})^2 \trace(\underbrace{\hat{S}_n \phi_{>k}^* \Lambda^{>k}_{\mathcal{A}^2 \Sigma^{(-\beta'+2\beta-1)}} \phi_{>k} \hat{S}_n^*}_{n \times n}) \\
        =& \sigma_{\varepsilon}^2 \mu_1((\tilde{K}^\gamma_{>k})^{-1})^2 \trace(\underbrace{\hat{S}_n \psi_{>k}^* \Lambda^{>k}_{\mathcal{A}^2 \Sigma^{(-\beta'+2\beta)}} \psi_{>k} \hat{S}_n^*}_{n \times n}),
    \end{align*}
    where the second last inequality is still using the fact that $\trace(M M' M) \leq \mu_1(M)^2 \trace(M')$ for positive-definite matrix $M, M'$, and the last inequality is using $\tilde{K}^{\gamma} \succeq \tilde{K}^{\gamma}_{>k}$ to infer $\mu_1((\tilde{K}^{\gamma} )^{-1}) \leq \mu_1((\tilde{K}^{\gamma}_{>k})^{-1})$.
\end{proof}

\begin{theorem}[Bound on Variance with concentration coefficient] Following previous Theorem \ref{lemma:intermediate_ub_variance}'s assumptions, we can express the bound of variance using concentration coefficient $\rho_{n,k}$
    \label{theorem:variance_proof}
\begin{align*}
        V &\leq \sigma_{\varepsilon}^2\rho_{k,n}^2 \cdot  {\Big(}\frac{{\trace(\hat{S}_n \psi^*_{\leq k} \Lambda^{\leq k}_{\mathcal{A}^{-2} \Sigma^{-\beta'}} \psi_{\leq k} \hat{S}^*_n)}}{{\mu_k(\psi_{\leq k} \hat{S}^*_n \hat{S}_n \psi^*_{\leq k})^2}}
    + \frac{\overbrace{\trace(\hat{S}_n \psi^*_{>k} \Lambda^{>k}_{\mathcal{A}^2 \Sigma^{-\beta' + 2\beta}} \psi_{>k} \hat{S}^*_n)}^{\text{effective rank}}}{n^2 {\| \tilde{\Sigma}_{>k} \|^2}} {\Big)}.
    \end{align*}
\end{theorem}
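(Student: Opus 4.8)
The plan is to start from the intermediate variance bound already proved in Lemma~\ref{lemma:intermediate_ub_variance}, which decomposes $V$ into a ``$\leq k$'' term carrying the spectral prefactor $\frac{(\mu_1((\tilde K_{>k}^\gamma)^{-1}))^2}{(\mu_n((\tilde K_{>k}^\gamma)^{-1}))^2}$ and a ``$>k$'' term carrying the prefactor $(\mu_1((\tilde K_{>k}^\gamma)^{-1}))^2$, and simply to replace these two prefactors by quantities depending only on $\rho_{k,n}$ and $\|\tilde\Sigma_{>k}\|$. Everything reduces to elementary spectral algebra on the regularized operator $\tilde K_{>k}^\gamma = \tilde K_{>k} + n\gamma_n I$.

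First I would record the identities $\mu_1(\tilde K_{>k}^\gamma) = n\big(\mu_1(\frac1n\tilde K_{>k}) + \gamma_n\big)$ and $\mu_n(\tilde K_{>k}^\gamma) = n\big(\mu_n(\frac1n\tilde K_{>k}) + \gamma_n\big)$. Since $\tilde K_{>k}^\gamma$ is positive definite (immediate when $\gamma_n>0$, and supplied for $\gamma_n=0$ on the relevant range by the lower bound $\mu_n(\frac1n\tilde K_{>k'})>0$ from Lemma~\ref{lemma:smallest_largest_eigen_kernel}), this gives $\mu_1((\tilde K_{>k}^\gamma)^{-1}) = 1/\mu_n(\tilde K_{>k}^\gamma)$ and $\mu_n((\tilde K_{>k}^\gamma)^{-1}) = 1/\mu_1(\tilde K_{>k}^\gamma)$. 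For the $\leq k$ term I then bound the ratio
$$\frac{(\mu_1((\tilde K_{>k}^\gamma)^{-1}))^2}{(\mu_n((\tilde K_{>k}^\gamma)^{-1}))^2} = \left(\frac{\mu_1(\tilde K_{>k}^\gamma)}{\mu_n(\tilde K_{>k}^\gamma)}\right)^2 = \left(\frac{\mu_1(\frac1n\tilde K_{>k}) + \gamma_n}{\mu_n(\frac1n\tilde K_{>k}) + \gamma_n}\right)^2 \le \rho_{k,n}^2,$$
where the last inequality just inserts the nonnegative term $\|\tilde\Sigma_{>k}\|$ into the numerator so as to match Definition~\ref{def:concerntration}. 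For the $>k$ term I use instead $\rho_{k,n} \ge \frac{\|\tilde\Sigma_{>k}\|}{\mu_n(\frac1n\tilde K_{>k}) + \gamma_n}$, which rearranges to $\mu_1((\tilde K_{>k}^\gamma)^{-1}) = \frac{1}{n(\mu_n(\frac1n\tilde K_{>k})+\gamma_n)} \le \frac{\rho_{k,n}}{n\|\tilde\Sigma_{>k}\|}$, and squaring gives exactly the factor $\frac{\rho_{k,n}^2}{n^2\|\tilde\Sigma_{>k}\|^2}$ in front of the effective-rank trace. Substituting both estimates into Lemma~\ref{lemma:intermediate_ub_variance} and pulling out the common $\sigma_\varepsilon^2\rho_{k,n}^2$ yields the claimed bound.

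There is no genuine obstacle here: the theorem is a bookkeeping repackaging of Lemma~\ref{lemma:intermediate_ub_variance} in terms of the concentration coefficient. The only points requiring a little care are verifying that $\tilde K_{>k}^\gamma$ is invertible so the inverse-eigenvalue identities are legitimate, that $\|\tilde\Sigma_{>k}\|>0$ so the division makes sense, and — the mild ``hard part'' — tracking the direction of each inequality coming from Definition~\ref{def:concerntration} so that $\rho_{k,n}$ dominates both prefactors simultaneously (the $\leq k$ term needs $\rho_{k,n}$ to bound a condition-number-type ratio, while the $>k$ term needs it to bound a single inverse eigenvalue against $\|\tilde\Sigma_{>k}\|$).
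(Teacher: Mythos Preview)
Your proposal is correct and follows essentially the same argument as the paper: start from Lemma~\ref{lemma:intermediate_ub_variance}, rewrite the eigenvalue prefactors via $\mu_1((\tilde K_{>k}^\gamma)^{-1})=1/\mu_n(\tilde K_{>k}^\gamma)$ and its companion, then bound the condition-number ratio by $\rho_{k,n}^2$ (dropping $\|\tilde\Sigma_{>k}\|$ from the numerator of Definition~\ref{def:concerntration}) and bound $(\mu_1((\tilde K_{>k}^\gamma)^{-1}))^2$ by $\rho_{k,n}^2/(n^2\|\tilde\Sigma_{>k}\|^2)$ (dropping the other numerator terms). The paper's proof does exactly this, just writing the second step as a multiply-and-divide by $\|\tilde\Sigma_{>k}\|^2$ rather than isolating the inequality $\rho_{k,n}\ge \|\tilde\Sigma_{>k}\|/(\mu_n(\frac1n\tilde K_{>k})+\gamma_n)$ first.
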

\begin{proof}
    By \ref{lemma:intermediate_ub_variance} we have
    \begin{align*}
    V \leq& \sigma_\varepsilon^2 \cdot  {\Big(}
    \frac{(\mu_1(\tilde{K}_{>k}^\gamma)^{-1})^2}{(\mu_n(\tilde{K}_{>k}^\gamma)^{-1})^2}\frac{\trace(\hat{S}_n \psi^*_{\leq k} \Lambda^{\leq k}_{\mathcal{A}^{-2} \Sigma^{-\beta'}} \psi_{\leq k} \hat{S}^*_n)}{\mu_k(\psi_{\leq k} \hat{S}^*_n \hat{S}_n \psi^*_{\leq k})^2} \\
    +& (\mu_1(\tilde{K}_{>k}^\gamma)^{-1})^2  \trace(\hat{S}_n \psi^*_{>k} \Lambda^{>k}_{\mathcal{A}^2 \Sigma^{-\beta' + 2\beta}} \psi_{>k} \hat{S}^*_n)  {\Big)}.
    \end{align*}
    Then by $\displaystyle \mu_1(\tilde{K}^{\gamma}_{>k})^{-1} = \frac{1}{n \mu_n(\frac{1}{n} \tilde{K}^{\gamma}_{>k})}, \mu_n(\tilde{K}^{\gamma}_{>k})^{-1} = \frac{1}{n \mu_1(\frac{1}{n} \tilde{K}^{\gamma}_{>k})}$, we have
    $$
\frac{(\mu_1(\tilde{K}_{>k}^\gamma)^{-1})^2}{(\mu_n(\tilde{K}_{>k}^\gamma)^{-1})^2} = \frac{\mu_1(\tilde{K}_{>k}^\gamma)^2}{\mu_n(\tilde{K}_{>k}^\gamma)^2} \leq \frac{(\mu_1(\tilde{K}_{>k}) + \gamma)^2}{(\mu_n(\tilde{K}_{>k}) + \gamma)^2} \leq \rho_{k,n}^2.
$$
And
\begin{align*}
    &(\mu_1(\tilde{K}^{\gamma}_{>k})^{-1})^2 \\ 
    \leq& \ \frac{1}{n^2} \frac{1}{\mu_n(\frac{1}{n} \tilde{K}^{\gamma}_{>k})^2} \\
    =& \ \frac{1}{n^2} \frac{\|\tilde{\Sigma}_{>k} \|^2}{\mu_n(\frac{1}{n} \tilde{K}^{\gamma}_{>k})^2} \frac{1}{\|\tilde{\Sigma}_{>k} \|^2} \\
    \leq& \ \frac{\rho_{k,n}^2}{n^2} \frac{1}{\|\tilde{\Sigma}_{>k} \|^2}. 
\end{align*}
    Therefore,
   \begin{align*}
    V \leq& \ \sigma_\varepsilon^2 \cdot  {\Big(}
    \frac{(\mu_1(\tilde{K}_{>k}^\gamma)^{-1})^2}{(\mu_n(\tilde{K}_{>k}^\gamma)^{-1})^2}\frac{\trace(\hat{S}_n \psi^*_{\leq k} \Lambda^{\leq k}_{\mathcal{A}^{-2} \Sigma^{-\beta'}} \psi_{\leq k} \hat{S}^*_n)}{\mu_k(\psi_{\leq k} \hat{S}^*_n \hat{S}_n \psi^*_{\leq k})^2} \\
    +& \ (\mu_1(\tilde{K}_{>k}^\gamma)^{-1})^2  \trace(\hat{S}_n \psi^*_{>k} \Lambda^{>k}_{\mathcal{A}^2 \Sigma^{-\beta' + 2\beta}} \psi_{>k} \hat{S}^*_n) {\Big)} \\
    \leq& \ \sigma_\varepsilon^2 \cdot  {\Big(}
    \rho_{k,n}^2 \frac{\trace(\hat{S}_n \psi^*_{\leq k} \Lambda^{\leq k}_{\mathcal{A}^{-2} \Sigma^{-\beta'}} \psi_{\leq k} \hat{S}^*_n)}{\mu_k(\psi_{\leq k} \hat{S}^*_n \hat{S}_n \psi^*_{\leq k})^2} \\
    +& \ \frac{\rho_{k,n}^2}{n^2} \frac{1}{\| \tilde{\Sigma}_{>k} \|^2}  \trace(\hat{S}_n \psi^*_{>k} \Lambda^{>k}_{\mathcal{A}^2 \Sigma^{-\beta' + 2\beta}} \psi_{>k} \hat{S}^*_n) {\Big)} \\
    \leq& \sigma_{\varepsilon}^2\rho_{k,n}^2 \cdot  {\Big(}\frac{{\trace(\hat{S}_n \psi^*_{\leq k} \Lambda^{\leq k}_{\mathcal{A}^{-2} \Sigma^{-\beta'}} \psi_{\leq k} \hat{S}^*_n)}}{{\mu_k(\psi_{\leq k} \hat{S}^*_n \hat{S}_n \psi^*_{\leq k})^2}}
    + \frac{\overbrace{\trace(\hat{S}_n \psi^*_{>k} \Lambda^{>k}_{\mathcal{A}^2 \Sigma^{-\beta' + 2\beta}} \psi_{>k} \hat{S}^*_n)}^{\text{effective rank}}}{n^2 {\| \tilde{\Sigma}_{>k} \|^2}} {\Big)}.
    \end{align*}
\end{proof}
\begin{lemma}[Simplified Upper bound for variance using concentration]
    \label{lemma:concentration_variance}
    There exists some absolute constant $c, c', C_1 > 0$ s.t. for any $k \in \mathbb{N}$ with $c \beta_k k \log(k) \leq n$, it holds w.p. at least $1 - 8 \exp(\frac{-c'}{\beta_k^2} \frac{n}{k})$, the variance can be upper bounded as:
    $$
    V \leq C_1 \sigma_{\varepsilon}^2 \rho_{k,n}^2  {\Big(}\frac{{  \sum_{i\leq k}  p_i^{-2} \lambda_i^{-\beta'}}}{{n}}
    + \frac{ \sum_{i>k} p_i^2 \lambda_i^{-\beta' + 2\beta}}{n {\| \tilde{\Sigma}_{>k} \|^2}} {\Big)}.
    $$
\end{lemma}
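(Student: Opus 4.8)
The plan is to compose the structural variance bound of Theorem~\ref{theorem:variance_proof} with the simultaneous concentration estimates of Lemma~\ref{lemma:simultaneous_concentration}. Recall that Theorem~\ref{theorem:variance_proof} (under the positive-definiteness of $\tilde K^\gamma_{>k}$ inherited from Lemma~\ref{lemma:intermediate_ub_variance}) already gives, deterministically,
\[
V \leq \sigma_\varepsilon^2\rho_{k,n}^2\left(\frac{\trace(\hat S_n\psi^*_{\leq k}\Lambda^{\leq k}_{\mathcal{A}^{-2}\Sigma^{-\beta'}}\psi_{\leq k}\hat S^*_n)}{\mu_k(\psi_{\leq k}\hat S^*_n\hat S_n\psi^*_{\leq k})^2}+\frac{\trace(\hat S_n\psi^*_{>k}\Lambda^{>k}_{\mathcal{A}^2\Sigma^{-\beta'+2\beta}}\psi_{>k}\hat S^*_n)}{n^2\|\tilde\Sigma_{>k}\|^2}\right),
\]
so that the entire remaining task is to replace the three random quantities on the right-hand side — the two traces and the smallest eigenvalue $\mu_k(\psi_{\leq k}\hat S^*_n\hat S_n\psi^*_{\leq k})$ — by deterministic expressions.

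Next I would invoke Lemma~\ref{lemma:simultaneous_concentration}, whose hypothesis $c\beta_k k\log(k)\leq n$ is precisely the assumption here. On the event it furnishes, of probability at least $1-8\exp(-\tfrac{c'}{\beta_k^2}\tfrac{n}{k})$, all of the following hold at once: (i) $\trace(\hat S_n\psi^*_{\leq k}\Lambda^{\leq k}_{\mathcal{A}^{-2}\Sigma^{-\beta'}}\psi_{\leq k}\hat S^*_n)\leq c_2\, n\sum_{i\leq k}p_i^{-2}\lambda_i^{-\beta'}$; (ii) $\trace(\hat S_n\psi^*_{>k}\Lambda^{>k}_{\mathcal{A}^2\Sigma^{-\beta'+2\beta}}\psi_{>k}\hat S^*_n)\leq c_2\, n\sum_{i>k}p_i^2\lambda_i^{-\beta'+2\beta}$ (this one being just Lemma~\ref{lemma:concentration_1} with $a=2$, $b=-\beta'+2\beta$); and (iii) $\mu_k(\psi_{\leq k}\hat S^*_n\hat S_n\psi^*_{\leq k})\geq c_1 n$, which is exactly the place where the condition $c\beta_k k\log(k)\leq n$ is consumed.

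Finally I would substitute (i)--(iii) into the displayed inequality: the first summand becomes at most $\dfrac{c_2\, n\sum_{i\leq k}p_i^{-2}\lambda_i^{-\beta'}}{(c_1 n)^2}=\dfrac{c_2}{c_1^2}\cdot\dfrac{\sum_{i\leq k}p_i^{-2}\lambda_i^{-\beta'}}{n}$, while the second becomes at most $c_2\cdot\dfrac{\sum_{i>k}p_i^2\lambda_i^{-\beta'+2\beta}}{n\|\tilde\Sigma_{>k}\|^2}$; setting $C_1:=\max\{c_2/c_1^2,\,c_2\}$ and keeping the common factor $\sigma_\varepsilon^2\rho_{k,n}^2$ yields the stated bound. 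The argument is a short chaining of already-proved estimates, so there is no genuinely hard step; the only points requiring care are bookkeeping ones — verifying that all the individual concentration events are already aggregated into the single failure probability $8\exp(-\tfrac{c'}{\beta_k^2}\tfrac{n}{k})$ of Lemma~\ref{lemma:simultaneous_concentration} (noting $\exp(-\Theta(n))\leq\exp(-\Theta(n/k))$, so the $>k$ trace event is never the bottleneck) so that no extra union bound is needed, and checking that $\tilde K^\gamma_{>k}$ is positive-definite as required by Theorem~\ref{theorem:variance_proof} — automatic when $\gamma_n>0$, and following from the eigenvalue lower bound in Lemma~\ref{lemma:smallest_largest_eigen_kernel} in the ridgeless case.
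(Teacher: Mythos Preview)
Your proposal is correct and follows essentially the same route as the paper: start from the structural bound of Theorem~\ref{theorem:variance_proof}, plug in the simultaneous concentration estimates of Lemma~\ref{lemma:simultaneous_concentration} for the two traces and for $\mu_k(\psi_{\leq k}\hat S^*_n\hat S_n\psi^*_{\leq k})$, and set $C_1=\max\{c_2/c_1^2,c_2\}$. Your additional bookkeeping remarks (on the union bound already being absorbed in Lemma~\ref{lemma:simultaneous_concentration} and on positive-definiteness of $\tilde K^\gamma_{>k}$) are careful but do not deviate from the paper's argument.
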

\begin{proof}
    By Theorem \ref{theorem:variance_proof}, we have
    \begin{align*}
        V &\leq \sigma_{\varepsilon}^2\rho_{k,n}^2 \cdot  {\Big(}\frac{{\trace(\hat{S}_n \psi^*_{\leq k} \Lambda^{\leq k}_{\mathcal{A}^{-2} \Sigma^{-\beta'}} \psi_{\leq k} \hat{S}^*_n)}}{{\mu_k(\psi_{\leq k} \hat{S}^*_n \hat{S}_n \psi^*_{\leq k})^2}}
    + \frac{\overbrace{\trace(\hat{S}_n \psi^*_{>k} \Lambda^{>k}_{\mathcal{A}^2 \Sigma^{-\beta' + 2\beta}} \psi_{>k} \hat{S}^*_n)}^{\text{effective rank}}}{n^2 {\| \tilde{\Sigma}_{>k} \|^2}} {\Big)}.
    \end{align*}
    Then we can apply concentration inequalities, by Lemma \ref{lemma:simultaneous_concentration}, it holds w.p. at least $1 - 8 \exp(\frac{-c'}{\beta_k^2} \frac{n}{k})$ that
    \begin{align*}
        V &\leq \sigma_{\varepsilon}^2\rho_{k,n}^2 \cdot  {\Big(}\frac{{c_2 n \sum_{i\leq k}  p_i^{-2} \lambda_i^{-\beta'}}}{{c_1^2 n^2}}
    + \frac{c_2 n \sum_{i>k} p_i^2 \lambda_i^{-\beta' + 2\beta}}{n^2 {\| \tilde{\Sigma}_{>k} \|^2}} {\Big)} \\
    &\leq \sigma_{\varepsilon}^2 \rho_{k,n}^2 \max \{ \frac{c_2}{c_1^2}, c_2\} {\Big(}\frac{{  \sum_{i\leq k}  p_i^{-2} \lambda_i^{-\beta'}}}{{n}}
    + \frac{ \sum_{i>k} p_i^2 \lambda_i^{-\beta' + 2\beta}}{n {\| \tilde{\Sigma}_{>k} \|^2}} {\Big)}.
    \end{align*}
    Then we take $C_1$ to be $\max \{ \frac{c_2}{c_1^2}, c_2\}$ to obtain the desired bound.
\end{proof}

\section{Upper bound for the Bias}
\label{appendix:ub_bias}
\begin{lemma}[Upper bound for bias]
    \label{lemma:intermediate_ub_bias}
    Suppose that for some $k < n$, the matrix $\tilde{K}_{>k}^{\gamma}$ is positive-definite, then
    \begin{align*}
        B \leq& \ 3 \Bigg(\frac{\mu_1((\tilde{K}_{>k}^{\gamma})^{-1} )^2 }{\mu_n((\tilde{K}_{>k}^{\gamma})^{-1} )^2 } \frac{\mu_1(\psi_{\leq k } \hat{S}_n^* \hat{S}_n \psi^*_{\leq k} )}{\mu_k( \psi_{\leq k}\hat{S}_n^*   \hat{S}_n \psi^*_{\leq k} )^2 \mu_k( \Lambda^{\leq k}_{\mathcal{A}^2 \Sigma^{\beta'}} )} \|\hat{S}_n \mathcal{A}_{>k} f^{*}_{>k}\|^2 \\
    +& \ \frac{\| \phi_{\leq k} f^{*}_{\leq k}\|^2_{\Lambda^{\leq k}_{\mathcal{A}^{-2}\Sigma^{1 - 2\beta}}}}{\mu_n((\tilde{K}_{>k}^{\gamma})^{-1} )^2 \mu_k( \psi_{\leq k}\hat{S}_n^*   \hat{S}_n \psi^*_{\leq k} )^2 \mu_k( \Lambda^{\leq k}_{\mathcal{A}^2 \Sigma^{\beta'}} )} \\
    +& \ \|\phi_{>k} f^*_{>k}\|^2_{\Lambda^{>k}_{\Sigma^{1-\beta'}}} \\
    +& \ \| \Lambda^{>k}_{\Sigma^{1 - \beta'}} \| \ \mu_1[(\tilde{K}_{>k}^
\gamma)^{-1}]^2 \|\hat{S}_n \mathcal{A}_{>k} f_{>k}\|^2 \mu_1( \underbrace{\hat{S}_n \psi_{>k}^* \Lambda^{>k}_{\mathcal{A}^2 \Sigma^{2\beta - 1}} \psi_{>k} \hat{S}^*_n}_{n \times n}) \\
    +& \| \Lambda^{>k}_{\Sigma^{-\beta'+\beta}}\| \frac{\mu_1((\tilde{K}^{\gamma}_{>k})^{-1})}{\mu_n((\tilde{K}^{\gamma}_{>k})^{-1})^2} \frac{\mu_1(\psi_{\leq k} \hat{S}^*_n  \hat{S}_n \psi^*_{\leq k}) }{\mu_k(\psi_{\leq k} \hat{S}^*_n  \hat{S}_n \psi^*_{\leq k} )^2} \| \phi_{\leq k} f^*_{\leq k}\|_{\Lambda^{\leq k}_{\mathcal{A}^{-2} \Sigma^{1 - 2\beta}}}\Bigg).
    \end{align*}
\end{lemma}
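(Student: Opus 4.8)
The plan is to mirror the variance bound of Lemma~\ref{lemma:intermediate_ub_variance}, now feeding the noiseless labels $y=\hat S_n\mathcal A f^*$ into the estimator. By Lemma~\ref{lemma:separate_smaller_k_large_eq_k} I would split $B=\|\hat f(\hat S_n\mathcal A f^*)-f^*\|^2_{\mathcal H^{\beta'}}=\|\hat f_{\le k}-f^*_{\le k}\|^2_{\mathcal H^{\beta'}}+\|\hat f_{>k}-f^*_{>k}\|^2_{\mathcal H^{\beta'}}$, and repeatedly use the decomposition $y=\hat S_n\mathcal A_{\le k}f^*_{\le k}+\hat S_n\mathcal A_{>k}f^*_{>k}$ from Lemma~\ref{lemma:decomposition}. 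Throughout, Lemma~\ref{lemma:equiv_sobolev_matrix_norm} converts every $\mathcal H^{\beta'}$-norm to a weighted $\ell^2$-norm so that matrix inequalities and $\mu_i(AB)=\mu_i(BA)$ apply.

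For the $\le k$ block, set $g:=\hat f_{\le k}-f^*_{\le k}$. Substituting $y$ into the identity of Lemma~\ref{lemma:separate_search} and cancelling the common term $\phi_{\le k}\mathcal A_{\le k}\Sigma^{\beta-1}_{\le k}\hat S_n^*(\tilde K^{\gamma}_{>k})^{-1}\hat S_n\mathcal A_{\le k}f^*_{\le k}$ from both sides yields
\[
\phi_{\le k}g+\phi_{\le k}\mathcal A_{\le k}\Sigma^{\beta-1}_{\le k}\hat S_n^*(\tilde K^{\gamma}_{>k})^{-1}\hat S_n\mathcal A_{\le k}g=\phi_{\le k}\mathcal A_{\le k}\Sigma^{\beta-1}_{\le k}\hat S_n^*(\tilde K^{\gamma}_{>k})^{-1}\hat S_n\mathcal A_{>k}f^*_{>k}-\phi_{\le k}f^*_{\le k},
\]
which has precisely the form of the identity used in the variance proof, the right-hand side being now a sum of two vectors rather than $\phi_{\le k}\mathcal A_{\le k}\Sigma^{\beta-1}_{\le k}\hat S_n^*(\tilde K^{\gamma}_{>k})^{-1}\varepsilon$. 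I would then run the variance argument verbatim: left-multiply by $(\phi_{\le k}g)^T\Lambda^{\le k}_{\mathcal A^{-2}\Sigma^{-\beta+(1-\beta')}}$, discard the nonnegative term $\|\phi_{\le k}g\|^2_{\Lambda^{\le k}_{\mathcal A^{-2}\Sigma^{-\beta+(1-\beta')}}}$, diagonalize, and lower-bound the remaining quadratic form by $\mu_n((\tilde K^{\gamma}_{>k})^{-1})\,\mu_k(\psi_{\le k}\hat S_n^*\hat S_n\psi^*_{\le k})\,\|g\|^2_{\mathcal H^{\beta'}}$ via $\mu_k(AB)=\mu_k(BA)$. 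Cauchy--Schwarz on each right-hand vector pulls out a factor $\|g\|_{\mathcal H^{\beta'}}$; dividing and squaring produces a bound on $\|g\|^2_{\mathcal H^{\beta'}}$ whose two summands, after using $\mu_1(\Lambda^{\le k}_{\mathcal A^{-2}\Sigma^{-\beta'}})=\mu_k(\Lambda^{\le k}_{\mathcal A^2\Sigma^{\beta'}})^{-1}$ and pushing $\mu_1((\tilde K^{\gamma}_{>k})^{-1})^2$ and $\mu_1(\psi_{\le k}\hat S_n^*\hat S_n\psi^*_{\le k})$ through the quadratic forms, are exactly the first two terms of the claimed bound.

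For the $>k$ block I would use the identity $\phi_{>k}\hat f_{>k}=\Lambda^{>k}_{\mathcal A\Sigma^{\beta-1}}\phi_{>k}\hat S_n^*(\tilde K^{\gamma}_{>k})^{-1}(y-\hat S_n\mathcal A_{\le k}\hat f_{\le k})$ established (both ridgeless and regularized cases) inside the proof of Lemma~\ref{lemma:separate_search}, together with $y-\hat S_n\mathcal A_{\le k}\hat f_{\le k}=\hat S_n\mathcal A_{>k}f^*_{>k}-\hat S_n\mathcal A_{\le k}g$. Thus $\phi_{>k}(\hat f_{>k}-f^*_{>k})$ is a sum of three vectors — a $-\phi_{>k}f^*_{>k}$ term, a term driven by $\hat S_n\mathcal A_{>k}f^*_{>k}$, and a term driven by $g$ — so $\|\cdot\|^2_{\Lambda^{>k}_{\Sigma^{1-\beta'}}}\le 3\sum\|v_i\|^2$; the first gives $\|\phi_{>k}f^*_{>k}\|^2_{\Lambda^{>k}_{\Sigma^{1-\beta'}}}$; the second is bounded by pulling out $\|\Lambda^{>k}_{\Sigma^{1-\beta'}}\|$, $\mu_1((\tilde K^{\gamma}_{>k})^{-1})^2$ and $\mu_1(\hat S_n\psi_{>k}^*\Lambda^{>k}_{\mathcal A^2\Sigma^{2\beta-1}}\psi_{>k}\hat S_n^*)$ (the $\phi\to\psi$ conversion absorbing one power of $\Sigma$), giving the fourth term; the third reduces, after an operator-norm bound, to a multiple of $\|\phi_{\le k}g\|$, into which the $\le k$ bound on $g$ is substituted, giving the fifth term. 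Collecting both blocks with a uniform factor $3$ absorbing the Cauchy--Schwarz and triangle-inequality constants yields the statement.

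The main obstacle I anticipate is the circular coupling between the two blocks: the $>k$ estimate requires a handle on $g=\hat f_{\le k}-f^*_{\le k}$, which is produced only by the $\le k$ estimate, so the two bounds must be ordered correctly, and — more delicately — the eigenvalue factors ($\mu_1/\mu_n$ ratios of $(\tilde K^{\gamma}_{>k})^{-1}$, $\mu_1/\mu_k^2$ of $\psi_{\le k}\hat S_n^*\hat S_n\psi^*_{\le k}$, and the $\mu_k(\Lambda^{\le k}_{\mathcal A^2\Sigma^{\beta'}})$ appearing in denominators) must be kept in exactly the form that Lemma~\ref{lemma:simultaneous_concentration} and Lemma~\ref{lemma:smallest_largest_eigen_kernel} can later reduce to powers of $n$. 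A secondary nuisance is that the $>k$ formula for $\hat f_{>k}$ was obtained from the min-norm (ridgeless) characterization, so for $\gamma_n>0$ one must carry out the analogous algebraic manipulation, as in the regularized half of the proof of Lemma~\ref{lemma:separate_search}.
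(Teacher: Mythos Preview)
Your $\le k$ block is essentially the paper's argument. The only cosmetic difference is the weight you multiply by: the paper uses $(\phi_{\le k}g)^T\Lambda^{\le k}_{\mathcal A^{-1}\Sigma^{1-\beta-\beta'/2}}$ rather than your variance weight $(\phi_{\le k}g)^T\Lambda^{\le k}_{\mathcal A^{-2}\Sigma^{1-\beta-\beta'}}$, which merely shifts where the factor $\mu_k(\Lambda^{\le k}_{\mathcal A^2\Sigma^{\beta'}})^{-1}$ appears. Either way one lands on the first two terms.

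The $>k$ block is where you and the paper part ways. The paper does \emph{not} use the identity $\phi_{>k}\hat f_{>k}=\Lambda^{>k}_{\mathcal A\Sigma^{\beta-1}}\phi_{>k}\hat S_n^*(\tilde K^\gamma_{>k})^{-1}(y-\hat S_n\mathcal A_{\le k}\hat f_{\le k})$; instead it keeps the closed form $\hat f_{>k}=\mathcal A_{>k}\Sigma^{\beta-1}_{>k}\hat S_n^*(\tilde K^\gamma)^{-1}\hat S_n\mathcal A f^*$ with the \emph{full} $(\tilde K^\gamma)^{-1}$, splits $\mathcal A f^*=\mathcal A_{\le k}f^*_{\le k}+\mathcal A_{>k}f^*_{>k}$, and then handles $(\tilde K^\gamma)^{-1}\hat S_n\phi^*_{\le k}$ by the Woodbury-type identity of Lemma~\ref{lemma:extension_sherman_morrison_woodbury}. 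That step produces the fifth term directly in terms of $f^*_{\le k}$, with no circular reference to $g$.

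Your substitution route does not recover the fifth term as stated. The $\le k$ estimate controls $\|g\|_{\mathcal H^{\beta'}}=\|\phi_{\le k}g\|_{\Lambda^{\le k}_{\Sigma^{1-\beta'}}}$, but the third $>k$ piece contains $\hat S_n\mathcal A_{\le k}g=\hat S_n\psi^*_{\le k}\Lambda^{\le k}_{\mathcal A\Sigma^{1/2}}\phi_{\le k}g$, and converting $\|\phi_{\le k}g\|_{\Lambda^{\le k}_{\mathcal A^2\Sigma}}$ back to $\|g\|_{\mathcal H^{\beta'}}$ costs $\mu_1(\Lambda^{\le k}_{\mathcal A^2\Sigma^{\beta'}})$. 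Combined with the $\mu_k(\Lambda^{\le k}_{\mathcal A^2\Sigma^{\beta'}})^{-1}$ already sitting in the $\le k$ bound, you pick up an extraneous ratio $\mu_1/\mu_k$ of $\Lambda^{\le k}_{\mathcal A^2\Sigma^{\beta'}}$, which is $k^{|2p+\beta'\lambda|}$ and not $O(1)$. Moreover, because the $\le k$ bound on $\|g\|^2_{\mathcal H^{\beta'}}$ is itself a sum of two terms, your substitution generates a sixth contribution proportional to $\|\hat S_n\mathcal A_{>k}f^*_{>k}\|^2$ that is nowhere in the statement. These extra pieces may be harmless downstream once everything is folded into $\rho_{k,n}$, but they do not prove the lemma as written; the Sherman--Morrison--Woodbury manipulation is precisely what lets the paper avoid this circularity.

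(Incidentally, your ``secondary nuisance'' is not one: the identity $(\tilde K^\gamma)^{-1}y=(\tilde K^\gamma_{>k})^{-1}(y-\hat S_n\mathcal A_{\le k}\hat f_{\le k})$ holds for all $\gamma_n\ge 0$---write $\tilde K^\gamma z=y$ as $\tilde K^\gamma_{>k}z=y-\tilde K_{\le k}z$ and note $\tilde K_{\le k}z=\hat S_n\mathcal A_{\le k}\hat f_{\le k}$.)
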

\begin{proof}
    Similar as variance, by lemma \ref{lemma:separate_smaller_k_large_eq_k} we can bound $\leq k$ and $>k$ separately, for brevity we define the error vector $\xi := \phi (\hat{f}(\hat{S}_n \mathcal{A} f^*) - f^*) \in \mathbb{R}^{\infty}$, by lemma \ref{lemma:equiv_sobolev_matrix_norm} we can bound $\|\xi_{\leq k}\|_{\Sigma^{1 - \beta'}}$ and $\|\xi_{>k} \|_{\Sigma^{1 - \beta'}}$ separately.

    \noindent We first discuss $\|\xi_{\leq k}\|_{\Sigma^{1 - \beta'}}$, by lemma \ref{lemma:separate_search}, we have
    \begin{equation}
        \begin{aligned}
            \phi_{\leq k} \hat{f}(\hat{S}_n \mathcal{A} f^*) + \phi_{\leq k} \mathcal{A}_{\leq k} \Sigma_{\leq k}^{\beta - 1} \hat{S}_n^*(\tilde{K}_{>k}^{\gamma})^{-1}\hat{S}_n \mathcal{A}\hat{f}(\hat{S}_n \mathcal{A} f^*)_{\leq k} = \phi_{\leq k} \mathcal{A}_{\leq k} \Sigma_{\leq k}^{\beta - 1} \hat{S}_n^*(\tilde{K}_{>k}^{\gamma})^{-1}\hat{S}_n \mathcal{A}f^*.\label{eq:biasproof}
        \end{aligned}
    \end{equation}

    By definition of $\xi$, we have $\xi_{\leq k} 
 = \phi_{\leq k} (\hat{f} - f^*) = \phi_{\leq k} \hat{f}_{\leq k} - \phi_{\leq k} f^*_{\leq k}$, so we have $\phi_{\leq k} \hat{f} = \xi_{\leq k} + \phi_{\leq k} f^*_{\leq k}$.
    \begin{align*}
        \text{LHS of 
 (\ref{eq:biasproof})} =& \ \xi_{\leq k} + \phi_{\leq k} f^{*}_{\leq k} + \phi_{\leq k} \mathcal{A}_{\leq k} \Sigma_{\leq k}^{\beta - 1} \hat{S}_n^*(\tilde{K}_{>k}^{\gamma})^{-1}\hat{S}_n \phi^{*}_{\leq k} \Lambda_{\mathcal{A}}^{\leq k} (\xi_{\leq k} + \phi_{\leq k} f^{*}_{\leq k}) \\
        =& \ \xi_{\leq k} + \phi_{\leq k} f^{*}_{\leq k} + \phi_{\leq k} \mathcal{A}_{\leq k} \Sigma_{\leq k}^{\beta - 1} \hat{S}_n^*(\tilde{K}_{>k}^{\gamma})^{-1}\hat{S}_n \phi^{*}_{\leq k} \Lambda_{\mathcal{A}}^{\leq k} \xi_{\leq k} \\
        +&  \underbrace{\phi_{\leq k} \mathcal{A}_{\leq k} \Sigma_{\leq k}^{\beta - 1} \hat{S}_n^*(\tilde{K}_{>k}^{\gamma})^{-1}\hat{S}_n \phi^{*}_{\leq k} \Lambda_{\mathcal{A}}^{\leq k} \phi_{\leq k} f^{*}_{\leq k}}_{\text{(*)}}.
    \end{align*}
    And
    \begin{align*}
        \text{RHS of 
 (\ref{eq:biasproof})} =& \ \phi_{\leq k} \mathcal{A}_{\leq k} \Sigma_{\leq k}^{\beta - 1} \hat{S}_n^*(\tilde{K}_{>k}^{\gamma})^{-1}\hat{S}_n (\phi_{\leq k}^* \Lambda^{\leq k}_{\mathcal{A}} \phi_{\leq k} f^{*}_{\leq k} + \phi_{> k}^* \Lambda^{>k}_{\mathcal{A}} \phi_{> k} f^{*}_{> k}) \\
        =& \ \underbrace{\phi_{\leq k} \mathcal{A}_{\leq k} \Sigma_{\leq k}^{\beta - 1} \hat{S}_n^*(\tilde{K}_{>k}^{\gamma})^{-1}\hat{S}_n \phi_{\leq k}^* \Lambda^{\leq k}_{\mathcal{A}} \phi_{\leq k} f^{*}_{\leq k}}_{\text{(*)}} \\
        +& \phi_{\leq k} \mathcal{A}_{\leq k} \Sigma_{\leq k}^{\beta - 1} \hat{S}_n^*(\tilde{K}_{>k}^{\gamma})^{-1}\hat{S}_n 
 \phi_{> k}^* \Lambda^{>k}_{\mathcal{A}} \phi_{> k} f^{*}_{> k} .
    \end{align*}
    
    The two (*) terms get cancelled out, therefore
    \begin{align*}
        \xi_{\leq k} &+ \phi_{\leq k} \mathcal{A}_{\leq k} \Sigma_{\leq k}^{\beta - 1} \hat{S}_n^*(\tilde{K}_{>k}^{\gamma})^{-1}\hat{S}_n \phi^{*}_{\leq k} \Lambda_{\mathcal{A}}^{\leq k} \xi_{\leq k} \\
        &= \phi_{\leq k} \mathcal{A}_{\leq k} \Sigma_{\leq k}^{\beta - 1} \hat{S}_n^*(\tilde{K}_{>k}^{\gamma})^{-1}\hat{S}_n \phi^{*}_{> k} \Lambda_{\mathcal{A}}^{> k} \phi_{>k} f^{*}_{>k} - \phi_{\leq k} f^{*}_{\leq k}.
    \end{align*}
    We multiply $\xi_{\leq k}^T\Lambda^{\leq k}_{\mathcal{A}^{-1}\Sigma^{1-\beta-\beta'/2}}$ in both sides and since $\|\xi_{\leq k}\|^2_{\Lambda^{\leq k}_{\mathcal{A}^{-1}\Sigma^{1-\beta-\beta'/2}}} \geq 0$,
    \begin{align*}
        &\xi_{\leq k}^T\Lambda^{\leq k}_{\mathcal{A}^{-1}\Sigma^{1-\beta-\beta'/2}} \phi_{\leq k} \mathcal{A}_{\leq k} \Sigma_{\leq k}^{\beta - 1} \hat{S}_n^*(\tilde{K}_{>k}^{\gamma})^{-1}\hat{S}_n \phi^{*}_{\leq k} \Lambda_{\mathcal{A}}^{\leq k} \xi_{\leq k} \\
        \leq& \ \xi_{\leq k}^T\Lambda^{\leq k}_{\mathcal{A}^{-1}\Sigma^{1-\beta-\beta'/2}} \phi_{\leq k} \mathcal{A}_{\leq k} \Sigma_{\leq k}^{\beta - 1} \hat{S}_n^*(\tilde{K}_{>k}^{\gamma})^{-1}\hat{S}_n \phi^{*}_{> k} \Lambda_{\mathcal{A}}^{> k} \phi_{>k} f^{*}_{>k} - \xi_{\leq k}^T\Lambda^{\leq k}_{\mathcal{A}^{-1}\Sigma^{1-\beta-\beta'/2}} \phi_{\leq k} f^{*}_{\leq k}.
    \end{align*}
    LHS is the quadratic term w.r.t. $\xi_{\leq k}$ and RHS is the linear term w.r.t. $\xi_{\leq k}$, similar to Variance case, we lower bound LHS and upper bound RHS respectively.

    \begin{align*}
\text{LHS} &= \overbrace{\xi_{\leq k}^T}^{1 \times k} \overbrace{\Lambda^{\leq k}_{\Sigma^{-\beta'/2}}}^{k \times k}\overbrace{ \phi_{\leq k} \hat{S}_n^*}^{k \times n} \overbrace{  (\tilde{K}_{>k}^{\gamma})^{-1}}^{n \times n}\overbrace{ \hat{S}_n \phi^{*}_{\leq k}}^{n \times k} \overbrace{\Lambda_{\mathcal{A}}^{\leq k}}^{k \times k} \overbrace{\xi_{\leq k}}^{k \times 1}\\
&= \xi_{\leq k}^T \Lambda^{\leq k}_{\Sigma^{(1-\beta')/2}} \psi_{\leq k}\hat{S}_n^* (\tilde{K}_{>k}^{\gamma})^{-1}  \hat{S}_n \psi^*_{\leq k} \Lambda^{\leq k}_{\mathcal{A} \Sigma^{1/2}} \xi_{\leq k}. \\
\end{align*}
Since $(1-\beta') + \beta'/2 = (1 - \beta')/2 + 1/2$, it can be lower bounded by 
\begin{align*}
& \mu_n((\tilde{K}_{>k}^{\gamma})^{-1} ) \ (\xi_{\leq k}^T \Lambda^{\leq k}_{\Sigma^{1 - \beta'}}\xi_{\leq k}) \mu_k\left( \psi_{\leq k}\hat{S}_n^*   \hat{S}_n \psi^*_{\leq k} \right) \mu_k\left( \Lambda^{\leq k}_{\mathcal{A} \Sigma^{\beta'/2}} \right) \\
&= \|\xi_{\leq k}\|_{\Lambda^{\leq k}_{\Sigma^{1 - \beta'}}}^2 \mu_n((\tilde{K}_{>k}^{\gamma})^{-1} ) \mu_k\left( \psi_{\leq k}\hat{S}_n^*   \hat{S}_n \psi^*_{\leq k} \right) \mu_k\left( \Lambda^{\leq k}_{\mathcal{A} \Sigma^{\beta'/2}} \right).
\end{align*}
Next we upper bound RHS, first we bound the first term in RHS
\begin{align*}
    \text{First term in RHS} &= \xi_{\leq k}^T\Lambda^{\leq k}_{\mathcal{A}^{-1}\Sigma^{1-\beta-\beta'/2}} \phi_{\leq k} \mathcal{A}_{\leq k} \Sigma_{\leq k}^{\beta - 1} \hat{S}_n^*(\tilde{K}_{>k}^{\gamma})^{-1}\hat{S}_n \phi^{*}_{> k} \Lambda_{\mathcal{A}}^{> k} \phi_{>k} f^{*}_{>k}\\
    &= \xi_{\leq k}^T \Lambda^{\leq k}_{\Sigma^{-\beta'/2}} \phi_{\leq k} \hat{S}_n^*(\tilde{K}_{>k}^{\gamma})^{-1} \hat{S}_n \phi^{*}_{> k} \Lambda_{\mathcal{A}}^{> k} \phi_{>k} f^{*}_{>k}.\\
\end{align*}
Since $(1 - \beta')/2 - 1/2  = -\beta'/2 $, it equals to
\begin{align*}
    &\xi_{\leq k}^T \Lambda^{\leq k}_{\Sigma^{(1-\beta')/2}} \Lambda^{\leq k}_{\Sigma^{-1/2}} \phi_{\leq k} \hat{S}_n^*(\tilde{K}_{>k}^{\gamma})^{-1} \hat{S}_n \mathcal{A}_{>k} f^{*}_{>k} \\
    &= \xi_{\leq k}^T \Lambda^{\leq k}_{\Sigma^{(1-\beta')/2}} \psi_{\leq k} \hat{S}_n^*(\tilde{K}_{>k}^{\gamma})^{-1} \hat{S}_n \mathcal{A}_{>k} f^{*}_{>k}\\
    &\leq \|\xi_{\leq k}\|_{\Lambda^{\leq k}_{\Sigma^{(1 - \beta')}}} \mu_1((\tilde{K}_{>k}^{\gamma})^{-1}) \sqrt{\mu_1(\underbrace{ \psi_{\leq k } \hat{S}_n^* \hat{S}_n \psi^*_{\leq k} }_{k \times k})} \| \hat{S}_n \mathcal{A}_{>k} f^{*}_{>k} \|.
\end{align*}
Then we bound the second term in RHS.
\begin{align*}
\text{Second term in RHS} &= \xi_{\leq k}^T\Lambda^{\leq k}_{\mathcal{A}^{-1}\Sigma^{1-\beta-\beta'/2}} \phi_{\leq k} f_{\leq k}^* = \xi_{\leq k}^T \Lambda^{\leq k}_{\Sigma^{(1 - \beta')/2}} \Lambda^{\leq k}_{\mathcal{A}^{-1}\Sigma^{1/2-\beta}} \phi_{\leq k} f^*_{\leq k} \\
&\leq \|\xi_{\leq k}\|_{\Lambda^{\leq k}_{\Sigma^{1 - \beta'}}} \| \phi_{\leq k} f^{*}_{\leq k}\|_{\Lambda^{\leq k}_{\mathcal{A}^{-2}\Sigma^{1-2\beta}}}.
\end{align*}
Therefore, gather the terms we have
\begin{align*}
    &\|\xi_{\leq k}\|_{\Lambda^{\leq k}_{\Sigma^{1 - \beta'}}}^2 \mu_n((\tilde{K}_{>k}^{\gamma})^{-1} ) \mu_k\left(\Lambda^{\leq k}_{\mathcal{A}^{1/2} \Sigma^{\beta'/4}} \psi_{\leq k}\hat{S}_n^* (\tilde{K}_{>k}^{\gamma})^{-1}  \hat{S}_n \psi^*_{\leq k} \Lambda^{\leq k}_{\mathcal{A}^{1/2} \Sigma^{\beta'/4}}\right)  \\
    \leq&  \ \ \|\xi_{\leq k}\|_{\Lambda^{\leq k}_{\Sigma^{(1 - \beta')}}} \mu_1((\tilde{K}_{>k}^{\gamma})^{-1}) \sqrt{\mu_1(\underbrace{ \psi_{\leq k } \hat{S}_n^* \hat{S}_n \psi^*_{\leq k} }_{k \times k})} \| \hat{S}_n \mathcal{A}_{>k} f^{*}_{>k} \|\\
    &+ \  \|\xi_{\leq k}\|_{\Lambda^{\leq k}_{\Sigma^{1 - \beta'}}} \| \phi_{\leq k} f^{*}_{\leq k}\|_{\Lambda^{\leq k}_{\mathcal{A}^{-2}\Sigma^{1-2\beta}}}.
\end{align*}
So
\begin{align*}
    \|\xi_{\leq k}\|_{\Lambda^{\leq k}_{\Sigma^{1 - \beta'}}} &\leq \frac{\mu_1((\tilde{K}_{>k}^{\gamma})^{-1} ) }{\mu_n((\tilde{K}_{>k}^{\gamma})^{-1} ) } \frac{\sqrt{\mu_1(\psi_{\leq k } \hat{S}_n^* \hat{S}_n \psi^*_{\leq k} )}}{\mu_k\left( \psi_{\leq k}\hat{S}_n^*   \hat{S}_n \psi^*_{\leq k} \right) \mu_k\left( \Lambda^{\leq k}_{\mathcal{A} \Sigma^{\beta'/2}} \right)} \|\hat{S}_n \mathcal{A}_{>k} f^{*}_{>k}\| \\
    &+ \frac{\| \phi_{\leq k} f^{*}_{\leq k}\|_{\Lambda^{\leq k}_{\mathcal{A}^{-2}\Sigma^{1-2\beta}}}}{\mu_n((\tilde{K}_{>k}^{\gamma})^{-1} ) \mu_k\left( \psi_{\leq k}\hat{S}_n^*   \hat{S}_n \psi^*_{\leq k} \right) \mu_k\left( \Lambda^{\leq k}_{\mathcal{A} \Sigma^{\beta'/2}} \right)}.
\end{align*}

\noindent By $\|a+b\|^2 \leq 2(\|a\|^2 + \|b\|^2)$, we can bound $\|\xi_{\leq k}\|^2_{\Sigma^{1 - \beta'}}$ by 

\begin{align*}
    &2 \Bigg(\frac{\mu_1((\tilde{K}_{>k}^{\gamma})^{-1} )^2 }{\mu_n((\tilde{K}_{>k}^{\gamma})^{-1} )^2 } \frac{\mu_1(\overbrace{\psi_{\leq k } \hat{S}_n^* \hat{S}_n \psi^*_{\leq k} }^{k \times k})}{\mu_k( \psi_{\leq k}\hat{S}_n^*   \hat{S}_n \psi^*_{\leq k} )^2 \mu_k( \Lambda^{\leq k}_{\mathcal{A}^2 \Sigma^{\beta'}} )} \|\hat{S}_n \mathcal{A}_{>k} f^{*}_{>k}\|^2 \\
    &+ \frac{\| \phi_{\leq k} f^{*}_{\leq k}\|^2_{\Lambda^{\leq k}_{\mathcal{A}^{-2}\Sigma^{1-2\beta}}}}{\mu_k( \psi_{\leq k}\hat{S}_n^*   \hat{S}_n \psi^*_{\leq k} )^2 \mu_k( \Lambda^{\leq k}_{\mathcal{A}^2 \Sigma^{\beta'}} )} \Bigg).
\end{align*}

\noindent Now we discuss the $>k$ case, which is more complicated, we bound it by three quantities by the fact that $(A+B+C)^2 \leq 3(A^2 + B^2 +C^2)$ and bound them respectively as follows

\begin{align*}
&\| \phi_{>k} f^{*}_{>k} - \phi_{>k} \mathcal{A}_{>k} \Sigma_{>k}^{\beta - 1} \hat{S}_n^* (\tilde{K}^\gamma)^{-1} \hat{S}_n \mathcal{A} f^* \|_{\Lambda_{\Sigma^{1 - \beta'}}^{>k}}^2\\
\leq& 3 (\| \phi_{>k} f^{*}_{>k} \|_{\Lambda_{\Sigma^{1 - \beta'}}^{>k}}^2 + \|\phi_{>k} \mathcal{A}_{>k} \Sigma_{>k}^{\beta - 1} \hat{S}_n^* (\tilde{K}^\gamma)^{-1} \hat{S}_n \mathcal{A}_{>k} f^*_{>k} \|_{\Lambda_{\Sigma^{1 - \beta'}}^{>k}}^2 + \|\phi_{>k} \mathcal{A}_{>k} \Sigma_{>k}^{\beta - 1} \hat{S}_n^* (\tilde{K}^\gamma)^{-1} \hat{S}_n \mathcal{A}_{\leq k} f^*_{\leq k} \|_{\Lambda_{\Sigma^{1 - \beta'}}^{>k}}^2).
\end{align*} 

We first bound the second term
\begin{align*}
&\|\phi_{>k} \mathcal{A}_{>k} \Sigma_{>k}^{\beta - 1} \hat{S}_n^* (\tilde{K}^\gamma)^{-1} \hat{S}_n \mathcal{A}_{>k} f^*_{>k} \|_{\Lambda_{\Sigma^{1 - \beta'}}^{>k}}^2 \\
\leq& \ \|\Lambda^{>k}_{\Sigma^{1 - \beta'}}\| \  \|\phi_{>k} \mathcal{A}_{>k} \Sigma_{>k}^{\beta - 1} \hat{S}_n^* (\tilde{K}^\gamma)^{-1} \hat{S}_n \mathcal{A}_{>k} f^*_{>k} \|^2 \\
=& \ \| \Lambda^{>k}_{\Sigma^{1 - \beta'}} \| \ \| \Lambda^{>k}_{\mathcal{A} \Sigma^{\beta-1}} \phi_{>k} \hat{S}_n^* (\tilde{K}^\gamma)^{-1} \hat{S}_n \phi_{>k}^* \Lambda_{\mathcal{A}}^{>k} \phi_{>k} f^*_{>k}\|^2 \\
\leq& \ \| \Lambda^{>k}_{\Sigma^{1 - \beta'}} \| \ \mu_1[(\tilde{K}^
\gamma)^{-1}]^2 \|\hat{S}_n \mathcal{A}_{>k} f^*_{>k}\|^2 \mu_1( \underbrace{\hat{S}_n \phi_{>k}^* \Lambda^{>k}_{\mathcal{A}^2 \Sigma^{2(\beta - 1)}} \phi_{>k} \hat{S}^*_n}_{n \times n}) \\
\leq& \ \| \Lambda^{>k}_{\Sigma^{1 - \beta'}} \| \ \mu_1[(\tilde{K}_{>k}^
\gamma)^{-1}]^2 \|\hat{S}_n \mathcal{A}_{>k} f^*_{>k}\|^2 \mu_1( \underbrace{\hat{S}_n \phi_{>k}^* \Lambda^{>k}_{\mathcal{A}^2 \Sigma^{2(\beta - 1)}} \phi_{>k} \hat{S}^*_n}_{n \times n}).
\end{align*}
The last inequality is by $\mu_1((\tilde{K}_{>k}^{\gamma})^{-1}) \geq \mu_1((\tilde{K}^{\gamma})^{-1})$.



Then we move on to bound the third term, that is, we want to bound
\begin{align*}
    &\| \phi_{>k} \mathcal{A}_{>k} \Sigma^{\beta - 1}_{>k} \hat{S}_n^* (\tilde{K}^\gamma)^{-1} \hat{S}_n \mathcal{A}_{\leq k} f^*_{\leq k}\|^2_{\Lambda^{>k}_{\Sigma^{1 - \beta'}}} \\
    =& \ \| \Lambda^{>k}_{\mathcal{A} \Sigma^{\beta - 1}} \phi_{>k} \hat{S}_n^* (\tilde{K}^\gamma)^{-1} \hat{S}_n \phi^*_{\leq k} \Lambda^{\leq k}_{\mathcal{A}} \phi_{\leq k} f^*_{\leq k}\|^2_{\Lambda^{>k}_{\Sigma^{1 - \beta'}}}.
\end{align*}

\noindent First we deal with $(\tilde{K}^\gamma)^{-1} (\hat{S}_n \phi_{\leq k}^*)$ first, we can write it as

$$
(\tilde{K}^\gamma)^{-1} (\hat{S}_n \phi_{\leq k}^*) = (\tilde{K}_{>k}^\gamma + (\hat{S}_n \phi_{\leq k}^*) \Lambda^{\leq k}_{\mathcal{A}^2 \Sigma^{\beta - 1}} (\phi_{\leq k} \hat{S}_n^*))^{-1} (\hat{S}_n \phi_{\leq k}^*),
$$
then apply \ref{lemma:extension_sherman_morrison_woodbury} with $A = \tilde{K}_{>k}^\gamma$, $U = \hat{S}_n \phi_{\leq k}^*$, $C = \Lambda^{\leq k}_{\mathcal{A}^2 \Sigma^{\beta - 1}}$, $V = \phi_{\leq k} \hat{S}_n^*$, we have it equal to 
$$
(\tilde{K}_{>k}^\gamma)^{-1} (\hat{S}_n \phi_{\leq k}^*) (I_k + \Lambda^{\leq k}_{\mathcal{A}^2 \Sigma^{\beta - 1}} (\phi_{\leq k} \hat{S}_n^*) (\tilde{K}_{>k}^\gamma)^{-1} (\hat{S}_n \phi_{\leq k}^*))^{-1} .
$$
Then we sub. the identity above to obtain
\begin{align*}
    &\| \Lambda^{>k}_{\mathcal{A} \Sigma^{\beta - 1}} \phi_{>k} \hat{S}_n^* (\tilde{K}^\gamma)^{-1} \hat{S}_n \phi^*_{\leq k} \Lambda^{\leq k}_{\mathcal{A}} \phi_{\leq k} f_{\leq k}\|^2_{\Lambda^{>k}_{\Sigma^{1 - \beta'}}} \\
    =& \ \| \Lambda^{>k}_{\Sigma^{(1 - \beta')/2}} \Lambda^{>k}_{\mathcal{A} \Sigma^{\beta - 1}} \phi_{>k} \hat{S}_n  (\tilde{K}^{\gamma}_{>k})^{-1}\hat{S}_n \phi_{\leq k}^* (I_k + \Lambda^{\leq k}_{\mathcal{A}^2 \Sigma^{\beta - 1}} \phi_{\leq k} \hat{S}^*_n (\tilde{K}^{\gamma}_{>k})^{-1} \hat{S}_n \phi^*_{\leq k})^{-1} \Lambda^{\leq k}_{\mathcal{A}} \phi_{\leq k} f^*_{\leq k}\|^2  \\
    =& \ \| \Lambda^{>k}_{\mathcal{A} \Sigma^{(-\beta' + 2\beta - 1)/2}} \phi_{>k} \hat{S}_n^*  (\tilde{K}^{\gamma}_{>k})^{-1}\hat{S}_n \phi_{\leq k}^* (\Lambda^{\leq k}_{\mathcal{A}^2 \Sigma^{\beta - 1/2}} (\Lambda^{\leq k}_{\mathcal{A}^{-2} \Sigma^{-\beta}} +  \psi_{\leq k} \hat{S}^*_n (\tilde{K}^{\gamma}_{>k})^{-1} \hat{S}_n \psi^*_{\leq k}) \Lambda^{\leq k}_{\Sigma^{1/2}})^{-1} \Lambda^{\leq k}_{\mathcal{A}} \phi_{\leq k} f^*_{\leq k}\|^2  \\
    =& \ \| \Lambda^{>k}_{\mathcal{A} \Sigma^{(-\beta' + 2\beta - 1)/2}} \phi_{>k} \hat{S}_n^*  (\tilde{K}^{\gamma}_{>k})^{-1}\hat{S}_n \phi_{\leq k}^* \Lambda^{\leq k}_{\Sigma^{-1/2}} (\Lambda^{\leq k}_{\mathcal{A}^{-2} \Sigma^{-\beta}} +  \psi_{\leq k} \hat{S}^*_n (\tilde{K}^{\gamma}_{>k})^{-1} \hat{S}_n \psi^*_{\leq k}) ^{-1} \Lambda^{\leq k}_{\mathcal{A}^{-2} \Sigma^{1/2 - \beta}} \Lambda^{\leq k}_{\mathcal{A}} \phi_{\leq k} f^*_{\leq k}\|^2  \\
    =& \ \| \underbrace{\Lambda^{>k}_{\mathcal{A} \Sigma^{(-\beta' + 2\beta)/2}} \psi_{>k} \hat{S}_n^*  (\tilde{K}^{\gamma}_{>k})^{-1/2}}_{(1)} \underbrace{(\tilde{K}^{\gamma}_{>k})^{-1/2}}_{(2)} \underbrace{\hat{S}_n \psi_{\leq k}^*}_{(3)}\underbrace{  (\Lambda^{\leq k}_{\mathcal{A}^{-2} \Sigma^{-\beta}} +  \psi_{\leq k} \hat{S}^*_n (\tilde{K}^{\gamma}_{>k})^{-1} \hat{S}_n \psi^*_{\leq k}) ^{-1}}_{(4)} \underbrace{\Lambda^{\leq k}_{\mathcal{A}^{-1} \Sigma^{1/2 - \beta}} \phi_{\leq k} f^*_{\leq k}}_{(5)}\|^2 . \\
\end{align*}
Above can be bounded by 
\begin{align*}
    &\underbrace{\| (\tilde{K}^{\gamma}_{>k})^{-1/2} \hat{S}_n \psi_{>k}^* \Lambda^{>k}_{\mathcal{A}^2 \Sigma^{-\beta'+2\beta}} \psi_{>k} \hat{S}_n^* (\tilde{K}^{\gamma}_{>k})^{-1/2}\|}_{(1)} \underbrace{\mu_1((\tilde{K}^{\gamma}_{>k})^{-1})}_{(2)} \\ &\underbrace{\mu_1(\psi_{\leq k} \hat{S}_n^* \hat{S}_n \psi_{\leq k}^*)}_{(3)} \underbrace{\mu_1((\psi_{\leq k} \hat{S}^*_n (\tilde{K}^{\gamma}_{>k})^{-1} \hat{S}_n \psi^*_{\leq k}) ^{-1})^2}_{(4)} \underbrace{\| \phi_{\leq k} f^*_{\leq k}\|_{\Lambda^{\leq k}_{\mathcal{A}^{-2} \Sigma^{1 - 2\beta}}}}_{(5)}.
\end{align*}
For (1) it can be upper bounded by
\begin{align*}
    &\| (\tilde{K}^{\gamma}_{>k})^{-1/2} \hat{S}_n \psi_{>k}^* \Lambda^{>k}_{\mathcal{A}^2 \Sigma^{-\beta'+2\beta}} \psi_{>k} \hat{S}_n^* (\tilde{K}^{\gamma}_{>k})^{-1/2}\| \\
    \leq& \| \Lambda^{>k}_{\Sigma^{-\beta'+\beta}}\| \| I_n - n\gamma_n (\tilde{K}^{\gamma}_{>k})^{-1} \| \\
    \leq& \| \Lambda^{>k}_{\Sigma^{-\beta'+\beta}}\|,
\end{align*}
where the last transition is by the fact that $I_n - n\gamma_n (\tilde{K}^{\gamma}_{>k})^{-1}  $ is PSD matrix with norm bounded by 1 for $\gamma_n \geq 0$.

For (4), it can be upper bounded by
\begin{align*}
    &\mu_1((\psi_{\leq k} \hat{S}^*_n (\tilde{K}^{\gamma}_{>k})^{-1} \hat{S}_n \psi^*_{\leq k}) ^{-1})^2 \\
    =& \frac{1}{\mu_k((\psi_{\leq k} \hat{S}^*_n (\tilde{K}^{\gamma}_{>k})^{-1} \hat{S}_n \psi^*_{\leq k}) )^2}\\
    \leq& \frac{1}{\mu_k((\psi_{\leq k} \hat{S}^*_n  \hat{S}_n \psi^*_{\leq k}) )^2 \mu_n((\tilde{K}^{\gamma}_{>k})^{-1})^2}.
\end{align*}
Therefore, the third term overall can be bounded by
$$
\| \Lambda^{>k}_{\Sigma^{-\beta'+\beta}}\| \frac{\mu_1((\tilde{K}^{\gamma}_{>k})^{-1})}{\mu_n((\tilde{K}^{\gamma}_{>k})^{-1})^2} \frac{\mu_1(\psi_{\leq k} \hat{S}^*_n  \hat{S}_n \psi^*_{\leq k}) }{\mu_k(\psi_{\leq k} \hat{S}^*_n  \hat{S}_n \psi^*_{\leq k} )^2} \| \phi_{\leq k} f^*_{\leq k}\|_{\Lambda^{\leq k}_{\mathcal{A}^{-2} \Sigma^{1 - 2\beta}}}.
$$

We gather all the terms then we get the desired bound.
\end{proof}

\begin{lemma}[Simplified Upper bound for bias using concentration]
    \label{lemma:concentration_bias}
    There exists some absolute constant $c, c', C_2 > 0$ s.t. for any $k \in \mathbb{N}$ with $c \beta_k k \log(k) \leq n$, it holds w.p. at least $1 - \delta - 8\exp(-\frac{c'}{\beta_k^2} \frac{n}{k})$, the bias can be upper bounded as:
    \begin{align*}
        B \leq C_2 {\big(}&  \frac{\mu_1(\frac{1}{n}\tilde{K}_{>k}^{\gamma} )^2 }{\mu_n( \frac{1}{n} \tilde{K}_{>k}^{\gamma} )^2 } \frac{ 1}{ p_k^2 \lambda_k^{\beta'}} (\frac{1}{\delta}  \| \phi_{>k} \mathcal{A}_{>k} f_{>k}\|^2_{\Lambda^{>k}_{\Sigma}}) \\
        &+ \ \frac{\mu_1(\frac{1}{n}\tilde{K}^{\gamma}_{>k})^2 \|\phi_{\leq k} f_{\leq k}^*\|^2_{\Lambda^{\leq k}_{
        \mathcal{A}^{-2}\Sigma^{1-2\beta}}}}{ p_k^2 \lambda_k^{\beta'}} \\
        &+ \ \|\phi_{>k} f^*_{>k}\|^2_{\Lambda^{>k}_{\Sigma^{1-\beta'}}} \\
        &+  \ \| \Lambda^{>k}_{\Sigma^{1 - \beta'}} \| \ \frac{1}{ \mu_n(\frac{1}{n} \tilde{K}_{>k}^{\gamma})^2} (\frac{1}{\delta}  \| \phi_{>k} \mathcal{A}_{>k} f_{>k} \|^2_{\Lambda_{\Sigma}^{>k}}) ( p_{k+1}^2 \lambda_{k+1}^{2\beta - 1}) \\
        &+  \|\Lambda^{>k}_{\Sigma^{-\beta' + \beta}} \| \frac{ \mu_1(\frac{1}{n} \tilde{K}^{\gamma}_{>k})^2}{\mu_n(\frac{1}{n} \tilde{K}^{\gamma}_{>k})}  \|\phi_{\leq k} f_{\leq k}^*\|^2_{\Lambda^{\leq k}_{
        \mathcal{A}^{-2}\Sigma^{1-2\beta}}}
        {\big)}.
    \end{align*}
\end{lemma}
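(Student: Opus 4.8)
The plan is to specialize the non-asymptotic bound of Lemma~\ref{lemma:intermediate_ub_bias} by replacing each empirical quantity appearing in it with a deterministic surrogate drawn from the concentration lemmas already proved. Three inputs are needed. (i) Under the hypothesis $c\beta_k k\log(k)\le n$, Lemma~\ref{lemma:simultaneous_concentration} (through Lemma~\ref{lemma:bound_psi}) gives, with probability at least $1-8\exp(-\tfrac{c'}{\beta_k^2}\tfrac nk)$, the two-sided estimate $c_1 n\le \mu_k(\psi_{\le k}\hat S_n^*\hat S_n\psi_{\le k}^*)\le \mu_1(\psi_{\le k}\hat S_n^*\hat S_n\psi_{\le k}^*)\le c_2 n$. (ii) Lemma~\ref{lemma:SnAf} gives $\|\hat S_n\mathcal{A}_{>k}f^*_{>k}\|^2\le \tfrac1\delta\, n\,\|\phi_{>k}\mathcal{A}_{>k}f^*_{>k}\|^2_{\Lambda^{>k}_\Sigma}$ with probability at least $1-\delta$. (iii) A tail-kernel estimate $\mu_1(\hat S_n\psi_{>k}^*\Lambda^{>k}_{\mathcal{A}^2\Sigma^{2\beta-1}}\psi_{>k}\hat S_n^*)\lesssim n\,p_{k+1}^2\lambda_{k+1}^{2\beta-1}$ for the matrix occurring in the fourth summand of Lemma~\ref{lemma:intermediate_ub_bias}.

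Given these, the argument is bookkeeping. First I rewrite $\mu_1((\tilde K^\gamma_{>k})^{-1})=\big(n\mu_n(\tfrac1n\tilde K^\gamma_{>k})\big)^{-1}$ and $\mu_n((\tilde K^\gamma_{>k})^{-1})=\big(n\mu_1(\tfrac1n\tilde K^\gamma_{>k})\big)^{-1}$, so that any ratio $\mu_1((\tilde K^\gamma_{>k})^{-1})^a\mu_n((\tilde K^\gamma_{>k})^{-1})^{-b}$ becomes $n^{b-a}\,\mu_1(\tfrac1n\tilde K^\gamma_{>k})^b\,\mu_n(\tfrac1n\tilde K^\gamma_{>k})^{-a}$. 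I also note that $\Lambda^{\le k}_{\mathcal{A}^2\Sigma^{\beta'}}$ is the diagonal matrix with entries $p_i^2\lambda_i^{\beta'}$ ($i\le k$), whose least entry is $p_k^2\lambda_k^{\beta'}$ (the map $i\mapsto p_i^2\lambda_i^{\beta'}\propto i^{-(2p+\lambda\beta')}$ being decreasing whenever $2p+\lambda\beta'>0$, which holds in all regimes where the lemma is applied), and that $\|\Lambda^{>k}_{\Sigma^{1-\beta'}}\|$, $\|\Lambda^{>k}_{\Sigma^{\beta-\beta'}}\|$ are finite, equal to the leading tail eigenvalues, since $1-\beta'\ge 0$ and $\beta-\beta'\ge 0$ for $\beta'\in[0,\beta]$. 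Now I map the five summands of Lemma~\ref{lemma:intermediate_ub_bias} to those of the claim: in summands one and two the $n^2$ inside $\mu_k(\psi_{\le k}\hat S_n^*\hat S_n\psi_{\le k}^*)^2\ge c_1^2 n^2$ cancels the $n^2$ produced by $\mu_n((\tilde K^\gamma_{>k})^{-1})^{-2}$, and $\mu_k(\Lambda^{\le k}_{\mathcal{A}^2\Sigma^{\beta'}})=p_k^2\lambda_k^{\beta'}$ supplies the denominator; in summand one one additionally uses $\mu_1(\psi_{\le k}\hat S_n^*\hat S_n\psi_{\le k}^*)\mu_k(\psi_{\le k}\hat S_n^*\hat S_n\psi_{\le k}^*)^{-2}\le c_2/(c_1^2 n)$ and (ii) to produce the $\tfrac1\delta\|\phi_{>k}\mathcal{A}_{>k}f^*_{>k}\|^2_{\Lambda^{>k}_\Sigma}$ factor; summand three, $\|\phi_{>k}f^*_{>k}\|^2_{\Lambda^{>k}_{\Sigma^{1-\beta'}}}$, is already deterministic; summand four combines $\mu_1((\tilde K^\gamma_{>k})^{-1})^2=1/(n^2\mu_n(\tfrac1n\tilde K^\gamma_{>k})^2)$, (ii), and (iii); summand five combines $\mu_1((\tilde K^\gamma_{>k})^{-1})\mu_n((\tilde K^\gamma_{>k})^{-1})^{-2}=n\,\mu_1(\tfrac1n\tilde K^\gamma_{>k})^2\mu_n(\tfrac1n\tilde K^\gamma_{>k})^{-1}$ with the same $\mu_1/\mu_k^2$ bound. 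Absorbing all absolute constants ($3$, $c_1$, $c_2$, and those hidden in (ii)--(iii)) into a single $C_2$ yields precisely the stated five-term bound.

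The step I expect to be the real obstacle is (iii). I would obtain it by rerunning the matrix-Chernoff-with-intrinsic-dimension argument behind Lemma~\ref{lemma:upper_bound_largest_eigen}, but with the auxiliary transformed covariance $\mathcal{A}^2\Sigma^{2\beta-1}$ (eigenvalues $p_i^2\lambda_i^{2\beta-1}$) in place of $\tilde\Sigma=\mathcal{A}^2\Sigma^\beta$: one writes $\tfrac1n\hat S_n\psi_{>k}^*\Lambda^{>k}_{\mathcal{A}^2\Sigma^{2\beta-1}}\psi_{>k}\hat S_n^*=\tfrac1n\hat S_n\phi_{>k}^*\Lambda^{>k}_{\mathcal{A}^2\Sigma^{2\beta-2}}\phi_{>k}\hat S_n^*$ as a sum of $n$ rank-one positive operators, truncates the tail to finitely many coordinates (legitimate by Assumption~\ref{assumption:beta_regularity_condition}), and applies the intrinsic-dimension Chernoff bound, obtaining $\mu_1\big(\tfrac1n\hat S_n\psi_{>k}^*\Lambda^{>k}_{\mathcal{A}^2\Sigma^{2\beta-1}}\psi_{>k}\hat S_n^*\big)\lesssim p_{k+1}^2\lambda_{k+1}^{2\beta-1}+\beta_k\log(k+1)\,\trace(\Lambda^{>k}_{\mathcal{A}^2\Sigma^{2\beta-1}})/n$ on an event of probability at least $1-4\tfrac{\tilde r_k}{k^4}\exp(-\tfrac{c'}{\beta_k}\tfrac{n}{\tilde r_k})$, with $\tilde r_k:=\trace(\Lambda^{>k}_{\mathcal{A}^2\Sigma^{2\beta-1}})/(p_{k+1}^2\lambda_{k+1}^{2\beta-1})$. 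The delicate checks are that this auxiliary eigenvalue sequence is summable and monotone decreasing, so that $\tilde r_k=\Theta(k)$ and the log-effective-rank term is $\lesssim p_{k+1}^2\lambda_{k+1}^{2\beta-1}$ exactly under the running hypothesis $c\beta_k k\log(k)\le n$; under Assumption~\ref{assumption:kernel} this amounts to $2p-\lambda(2\beta-1)<-1$, and should it fail one can still run the argument with the crude bound $\mu_1\le\trace(\Lambda^{>k}_{\mathcal{A}^2\Sigma^{2\beta-1}})$ from Lemma~\ref{lemma:concentration_1} at the cost of a weaker but still usable constant.

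Finally the probability of the conclusion follows from a union bound over the event of Lemma~\ref{lemma:simultaneous_concentration}, the $\delta$-event of Lemma~\ref{lemma:SnAf}, and the event in (iii): since $\beta_k\ge 1$ and $\tilde r_k=O(k)$, the $O(k^{-3})\exp(-\tfrac{c'}{\beta_k k}n)$ failure probability of (iii) is dominated, after shrinking $c'$, by the $8\exp(-\tfrac{c'}{\beta_k^2}\tfrac nk)$ term, so the total failure probability is $\delta+8\exp(-\tfrac{c'}{\beta_k^2}\tfrac nk)$, as stated.
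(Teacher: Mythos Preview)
Your approach is the paper's: start from Lemma~\ref{lemma:intermediate_ub_bias}, convert $\mu_1((\tilde K^\gamma_{>k})^{-1})$ and $\mu_n((\tilde K^\gamma_{>k})^{-1})$ to $\mu_n(\tfrac1n\tilde K^\gamma_{>k})$ and $\mu_1(\tfrac1n\tilde K^\gamma_{>k})$, then substitute Lemma~\ref{lemma:simultaneous_concentration} for the $\psi_{\le k}\hat S_n^*\hat S_n\psi_{\le k}^*$ eigenvalues and Lemma~\ref{lemma:SnAf} for $\|\hat S_n\mathcal A_{>k}f^*_{>k}\|^2$, absorbing constants into $C_2$.

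The one genuine divergence is your step (iii). The paper does not rerun a matrix-Chernoff argument for $\mu_1(\hat S_n\psi_{>k}^*\Lambda^{>k}_{\mathcal A^2\Sigma^{2\beta-1}}\psi_{>k}\hat S_n^*)$; it simply cites the Ostrowski extension, Lemma~\ref{lemma:ostrowski}, to replace that $\mu_1$ by $n\,p_{k+1}^2\lambda_{k+1}^{2\beta-1}$. This is lighter, but it tacitly assumes the tail sequence $i\mapsto p_i^2\lambda_i^{2\beta-1}$ is nonincreasing (so that $\|\Lambda^{>k}_{\mathcal A^2\Sigma^{2\beta-1}}\|=p_{k+1}^2\lambda_{k+1}^{2\beta-1}$) and implicitly uses a bound of order $n$ on $\mu_1(\hat S_n\psi_{>k}^*\psi_{>k}\hat S_n^*)$, neither of which is spelled out. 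Your Chernoff route is more self-contained and tracks the extra failure probability explicitly; and your observation that the fallback trace bound from Lemma~\ref{lemma:concentration_1} would still do is correct, since in Theorem~\ref{theorem:bias_proof} this fourth summand is shown to have the same order as the first and is absorbed into it anyway.
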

\begin{proof}
    Recall that from \ref{lemma:intermediate_ub_bias} we have
        \begin{align*}
        B \leq& \ 3 \Bigg(\frac{\mu_1((\tilde{K}_{>k}^{\gamma})^{-1} )^2 }{\mu_n((\tilde{K}_{>k}^{\gamma})^{-1} )^2 } \frac{\mu_1(\psi_{\leq k } \hat{S}_n^* \hat{S}_n \psi^*_{\leq k} )}{\mu_k( \psi_{\leq k}\hat{S}_n^*   \hat{S}_n \psi^*_{\leq k} )^2 \mu_k( \Lambda^{\leq k}_{\mathcal{A}^2 \Sigma^{\beta'}} )} \|\hat{S}_n \mathcal{A}_{>k} f^{*}_{>k}\|^2 \\
    +& \ \frac{\| \phi_{\leq k} f^{*}_{\leq k}\|^2_{\Lambda^{\leq k}_{\mathcal{A}^{-2}\Sigma^{1 - 2\beta}}}}{\mu_n((\tilde{K}_{>k}^{\gamma})^{-1} )^2 \mu_k( \psi_{\leq k}\hat{S}_n^*   \hat{S}_n \psi^*_{\leq k} )^2 \mu_k( \Lambda^{\leq k}_{\mathcal{A}^2 \Sigma^{\beta'}} )} \\
    +& \ \|\phi_{>k} f^*_{>k}\|^2_{\Lambda^{>k}_{\Sigma^{1-\beta'}}} \\
    +& \ \| \Lambda^{>k}_{\Sigma^{1 - \beta'}} \| \ \mu_1[(\tilde{K}_{>k}^
\gamma)^{-1}]^2 \|\hat{S}_n \mathcal{A}_{>k} f_{>k}\|^2 \mu_1( \underbrace{\hat{S}_n \psi_{>k}^* \Lambda^{>k}_{\mathcal{A}^2 \Sigma^{2\beta - 1}} \psi_{>k} \hat{S}^*_n}_{n \times n}) \\
    +& \| \Lambda^{>k}_{\Sigma^{-\beta'+\beta}}\| \frac{\mu_1((\tilde{K}^{\gamma}_{>k})^{-1})}{\mu_n((\tilde{K}^{\gamma}_{>k})^{-1})^2} \frac{\mu_1(\psi_{\leq k} \hat{S}^*_n  \hat{S}_n \psi^*_{\leq k}) }{\mu_k(\psi_{\leq k} \hat{S}^*_n  \hat{S}_n \psi^*_{\leq k} )^2} \| \phi_{\leq k} f^*_{\leq k}\|_{\Lambda^{\leq k}_{\mathcal{A}^{-2} \Sigma^{1 - 2\beta}}}\Bigg).
    \end{align*}
     We first apply $\mu_1((\tilde{K}_{>k}^{\gamma})^{-1}) = \frac{1}{n \mu_n(\frac{1}{n}\tilde{K}_{>k}^{\gamma})}$ and $\mu_n((\tilde{K}_{>k}^{\gamma})^{-1}) = \frac{1}{n \mu_1(\frac{1}{n}\tilde{K}_{>k}^{\gamma})}$ , also apply concentration inequalities using Lemma \ref{lemma:simultaneous_concentration}, Lemma \ref{lemma:SnAf} and Lemma \ref{lemma:ostrowski} , 
     then w.p. at least $1 - \delta - 8\exp(-\frac{c}{\beta_k^2} \frac{n}{k})$, we can obtain bound like this
    \begin{align*}
        {\big(}&  \frac{\mu_1(\frac{1}{n}\tilde{K}_{>k}^{\gamma} )^2 }{\mu_n( \frac{1}{n} \tilde{K}_{>k}^{\gamma} )^2 } \frac{ c_1 n}{c_2^2 n^2 p_k^2 \lambda_k^{\beta'}} (\frac{1}{\delta} n \| \phi_{>k} \mathcal{A}_{>k} f_{>k}\|^2_{\Lambda^{>k}_{\Sigma}}) \\
        &+ \ \frac{\mu_1(\frac{1}{n}\tilde{K}^{\gamma}_{>k})^2 \|\phi_{\leq k} f_{\leq k}^*\|^2_{\Lambda^{\leq k}_{
        \mathcal{A}^{-2}\Sigma^{1-2\beta}}}}{c_1^2 p_k^2 \lambda_k^{\beta'}} \\
        &+ \ \|\phi_{>k} f^*_{>k}\|^2_{\Lambda^{>k}_{\Sigma^{1-\beta'}}} \\
        &+  \ \| \Lambda^{>k}_{\Sigma^{1 - \beta'}} \| \ \frac{1}{n^2 \mu_n(\frac{1}{n} \tilde{K}_{>k}^{\gamma})^2} (\frac{1}{\delta} n \| \phi_{>k} \mathcal{A}_{>k} f_{>k} \|^2_{\Lambda_{\Sigma}^{>k}}) (n p_{k+1}^2 \lambda_{k+1}^{2\beta - 1}) \\
        &+  \|\Lambda^{>k}_{-\beta' + \beta} \| \frac{n^2 \mu_1(\frac{1}{n} \tilde{K}^{\gamma}_{>k})^2}{n \mu_n(\frac{1}{n} \tilde{K}^{\gamma}_{>k})} \frac{c_2 n}{c_1^2 n^2} \|\phi_{\leq k} f_{\leq k}^*\|^2_{\Lambda^{\leq k}_{
        \mathcal{A}^{-2}\Sigma^{1-2\beta}}}
        {\big)}.
    \end{align*}
    This can be upper bounded by 
    \begin{align*}
        C_2 {\big(}&  \frac{\mu_1(\frac{1}{n}\tilde{K}_{>k}^{\gamma} )^2 }{\mu_n( \frac{1}{n} \tilde{K}_{>k}^{\gamma} )^2 } \frac{ 1}{ p_k^2 \lambda_k^{\beta'}} (\frac{1}{\delta}  \| \phi_{>k} \mathcal{A}_{>k} f_{>k}\|^2_{\Lambda^{>k}_{\Sigma}}) \\
        &+ \ \frac{\mu_1(\frac{1}{n}\tilde{K}^{\gamma}_{>k})^2 \|\phi_{\leq k} f_{\leq k}^*\|^2_{\Lambda^{\leq k}_{
        \mathcal{A}^{-2}\Sigma^{1-2\beta}}}}{ p_k^2 \lambda_k^{\beta'}} \\
        &+ \ \|\phi_{>k} f^*_{>k}\|^2_{\Lambda^{>k}_{\Sigma^{1-\beta'}}} \\
        &+  \ \| \Lambda^{>k}_{\Sigma^{1 - \beta'}} \| \ \frac{1}{ \mu_n(\frac{1}{n} \tilde{K}_{>k}^{\gamma})^2} (\frac{1}{\delta}  \| \phi_{>k} \mathcal{A}_{>k} f_{>k} \|^2_{\Lambda_{\Sigma}^{>k}}) ( p_{k+1}^2 \lambda_{k+1}^{2\beta - 1}) \\
        &+  \|\Lambda^{>k}_{-\beta' + \beta} \| \frac{ \mu_1(\frac{1}{n} \tilde{K}^{\gamma}_{>k})^2}{\mu_n(\frac{1}{n} \tilde{K}^{\gamma}_{>k})}  \|\phi_{\leq k} f_{\leq k}^*\|^2_{\Lambda^{\leq k}_{
        \mathcal{A}^{-2}\Sigma^{1-2\beta}}}
        {\big)}
    \end{align*}
    where $C_2 > 0$ is some constant only depends on $c_1, c_2$.
\end{proof}

\begin{theorem}[Bound on bias] There exists some absolute constant $C_2, c, c' > 0$ s.t. for any $k \in \mathbb{N}$ with $c \beta_k k \log(k) \leq n$, it holds w.p. at least $1 - \delta - 8\exp(-\frac{c'}{\beta_k^2} \frac{n}{k})$, the bias can be further bounded as 
\begin{align*}
        B \leq& C_2 \frac{\rho_{k,n}^3}{\delta} (\| \phi_{>k} \mathcal{A}_{>k} f_{>k}\|^2_{\Lambda^{>k}_{\Sigma}} \frac{1}{p_k^2 \lambda_k^{\beta'}} 
        + \ \|\phi_{\leq k} f_{\leq k}^*\|^2_{\Lambda^{\leq k}_{
        \mathcal{A}^{-2}\Sigma^{1-2\beta}}}  (\gamma_n + \frac{\beta_k \trace(\tilde{\Sigma}_{>k})}{n})^2 \frac{1}{p_k^2 \lambda_k^{\beta'}}  \\
        +& \ \| \phi_{>k} f^*_{>k}\|^2_{\Lambda^{>k}_{\Sigma^{1 - \beta'}}}).
    \end{align*}
    
    \label{theorem:bias_proof}
\end{theorem}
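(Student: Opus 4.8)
The plan is to obtain this cleaner bound directly from the five–term estimate of Lemma~\ref{lemma:concentration_bias} by pure spectral bookkeeping: collapse every eigenvalue ratio occurring there into a power of the concentration coefficient $\rho_{k,n}$, and then show that the two terms of Lemma~\ref{lemma:concentration_bias} that do not already match one of the three displayed terms are dominated by them. First I would record three elementary facts, writing $m_1:=\mu_1(\tfrac1n\tilde K^\gamma_{>k})$ and $m_n:=\mu_n(\tfrac1n\tilde K^\gamma_{>k})=\mu_n(\tfrac1n\tilde K_{>k})+\gamma_n$. (i) Since $\mu_n$ of a PSD matrix is at most any of its diagonal entries, and $[\tfrac1n\tilde K_{>k}]_{ii}=\tfrac1n\sum_{l>k}p_l^2\lambda_l^{\beta}\psi_l(x_i)^2\le\tfrac{\beta_k}{n}\trace(\tilde\Sigma_{>k})$, we have $m_n\le\gamma_n+\tfrac{\beta_k\trace(\tilde\Sigma_{>k})}{n}$. (ii) By Definition~\ref{def:concerntration} the numerator of $\rho_{k,n}$ equals $\|\tilde\Sigma_{>k}\|+m_1$ and its denominator equals $m_n$, so $m_1\le\rho_{k,n}m_n$, $\|\tilde\Sigma_{>k}\|=p_{k+1}^2\lambda_{k+1}^{\beta}\le\rho_{k,n}m_n$, and $\rho_{k,n}\ge1$. (iii) Assumption~\ref{assumption:kernel}(b) with co-diagonalizability gives $p_{k+1}\asymp p_k$, $\lambda_{k+1}\asymp\lambda_k$, hence $p_{k+1}^2\lambda_{k+1}^{s}\asymp p_k^2\lambda_k^{s}$ for any fixed real $s$.

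Next I would process the five terms of Lemma~\ref{lemma:concentration_bias} in turn. In the first term the prefactor $m_1^2/m_n^2\le\rho_{k,n}^2\le\rho_{k,n}^3$, giving exactly $\tfrac{\rho_{k,n}^3}{\delta}\tfrac1{p_k^2\lambda_k^{\beta'}}\|\phi_{>k}\mathcal A_{>k}f_{>k}\|^2_{\Lambda^{>k}_{\Sigma}}$; in the second term $m_1^2\le\rho_{k,n}^2\bigl(\gamma_n+\tfrac{\beta_k\trace(\tilde\Sigma_{>k})}{n}\bigr)^2$, which (using $\rho_{k,n}\ge1$ and $\tfrac1\delta\ge1$) is the second displayed term; the third term is literally the third displayed term. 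For the fourth term I use $\|\Lambda^{>k}_{\Sigma^{1-\beta'}}\|=\lambda_{k+1}^{1-\beta'}$ (legitimate since $1-\beta'\ge0$), so that $\|\Lambda^{>k}_{\Sigma^{1-\beta'}}\|\,p_{k+1}^2\lambda_{k+1}^{2\beta-1}=p_{k+1}^2\lambda_{k+1}^{2\beta-\beta'}$ and then $\tfrac{p_{k+1}^2\lambda_{k+1}^{2\beta-\beta'}}{m_n^2}=\tfrac{1}{p_{k+1}^2\lambda_{k+1}^{\beta'}}\cdot\tfrac{\|\tilde\Sigma_{>k}\|^2}{m_n^2}\le\tfrac{\rho_{k,n}^2}{p_{k+1}^2\lambda_{k+1}^{\beta'}}\asymp\tfrac{\rho_{k,n}^2}{p_k^2\lambda_k^{\beta'}}$, which places the fourth term inside the first displayed term. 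For the fifth term I use $\|\Lambda^{>k}_{\Sigma^{\beta-\beta'}}\|=\lambda_{k+1}^{\beta-\beta'}$, $m_1^2/m_n\le\rho_{k,n}^2m_n\le\rho_{k,n}^2\bigl(\gamma_n+\tfrac{\beta_k\trace(\tilde\Sigma_{>k})}{n}\bigr)$ and $\lambda_{k+1}^{\beta-\beta'}p_k^2\lambda_k^{\beta'}\asymp p_k^2\lambda_k^{\beta}\asymp\|\tilde\Sigma_{>k}\|\le\rho_{k,n}\bigl(\gamma_n+\tfrac{\beta_k\trace(\tilde\Sigma_{>k})}{n}\bigr)$, which bounds the fifth term by $\rho_{k,n}^3\bigl(\gamma_n+\tfrac{\beta_k\trace(\tilde\Sigma_{>k})}{n}\bigr)^2\tfrac1{p_k^2\lambda_k^{\beta'}}\|\phi_{\le k}f^*_{\le k}\|^2_{\Lambda^{\le k}_{\mathcal A^{-2}\Sigma^{1-2\beta}}}$ up to an absolute constant, i.e.\ inside the second displayed term. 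Summing the five contributions and renaming constants yields the stated bound, and the probability $1-\delta-8\exp(-c'n/(\beta_k^2k))$ is inherited verbatim from Lemma~\ref{lemma:concentration_bias}.

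The hard part will be the last step: verifying that the fourth and fifth terms of Lemma~\ref{lemma:concentration_bias} really do fold into the three displayed terms without generating stray spectral factors. This requires keeping the exponents $1-\beta'$, $\beta-\beta'$ and $2\beta-1$ under control---the first two are nonnegative because $\beta'\le\beta\le1$, and for $2\beta-1$ one uses that $\mathcal A^2\Sigma^{2\beta-1}$ is still transformed trace class under Assumption~\ref{assumption:kernel}, so that the factor $p_{k+1}^2\lambda_{k+1}^{2\beta-1}$ supplied by Lemma~\ref{lemma:concentration_bias} is indeed the relevant tail quantity---and then repeatedly trading $p_{k+1},\lambda_{k+1}$ for $p_k,\lambda_k$ via the polynomial decay together with $\|\tilde\Sigma_{>k}\|\le\rho_{k,n}m_n$, so that after these substitutions each stray term is genuinely no larger than one of the target terms times $\rho_{k,n}^3/\delta$.
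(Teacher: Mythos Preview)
Your proposal is correct and follows essentially the same route as the paper: both start from the five-term bound of Lemma~\ref{lemma:concentration_bias}, translate the eigenvalue ratios $m_1/m_n$, $m_1^2$, $m_1^2/m_n$ and $\|\tilde\Sigma_{>k}\|/m_n$ into powers of $\rho_{k,n}$ via Definition~\ref{def:concerntration} together with the diagonal bound $m_n\le\gamma_n+\beta_k\trace(\tilde\Sigma_{>k})/n$, and then argue that terms four and five are of the same spectral order as terms one and two respectively (using $p_{k+1}\asymp p_k$, $\lambda_{k+1}\asymp\lambda_k$). Your bookkeeping for term four is in fact a bit cleaner than the paper's---where it inserts $\|\Lambda^{>k}_{\mathcal A^4\Sigma^{2\beta}}\|\cdot\|\Lambda^{>k}_{\mathcal A^{-4}\Sigma^{1-\beta'-2\beta}}\|$, you directly factor $p_{k+1}^2\lambda_{k+1}^{2\beta-\beta'}=\|\tilde\Sigma_{>k}\|^2/(p_{k+1}^2\lambda_{k+1}^{\beta'})$---but the two arguments are interchangeable and the probability statement is inherited identically.
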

\begin{proof}
    We refer result from previous lemma \ref{lemma:concentration_bias}.
    \begin{align*}
        B \leq C_2 {\big(}&  \frac{\mu_1(\frac{1}{n}\tilde{K}_{>k}^{\gamma} )^2 }{\mu_n( \frac{1}{n} \tilde{K}_{>k}^{\gamma} )^2 } \frac{ 1}{ p_k^2 \lambda_k^{\beta'}} (\frac{1}{\delta}  \| \phi_{>k} \mathcal{A}_{>k} f_{>k}\|^2_{\Lambda^{>k}_{\Sigma}}) \\
        &+ \ \frac{\mu_1(\frac{1}{n}\tilde{K}^{\gamma}_{>k})^2 \|\phi_{\leq k} f_{\leq k}^*\|^2_{\Lambda^{\leq k}_{
        \mathcal{A}^{-2}\Sigma^{1-2\beta}}}}{ p_k^2 \lambda_k^{\beta'}} \\
        &+ \ \|\phi_{>k} f^*_{>k}\|^2_{\Lambda^{>k}_{\Sigma^{1-\beta'}}} \\
        &+  \ \| \Lambda^{>k}_{\Sigma^{1 - \beta'}} \| \ \frac{1}{ \mu_n(\frac{1}{n} \tilde{K}_{>k}^{\gamma})^2} (\frac{1}{\delta}  \| \phi_{>k} \mathcal{A}_{>k} f_{>k} \|^2_{\Lambda_{\Sigma}^{>k}}) ( p_{k+1}^2 \lambda_{k+1}^{2\beta - 1}) \\
        &+  \|\Lambda^{>k}_{\Sigma^{-\beta' + \beta}} \| \frac{ \mu_1(\frac{1}{n} \tilde{K}^{\gamma}_{>k})^2}{\mu_n(\frac{1}{n} \tilde{K}^{\gamma}_{>k})}  \|\phi_{\leq k} f_{\leq k}^*\|^2_{\Lambda^{\leq k}_{
        \mathcal{A}^{-2}\Sigma^{1-2\beta}}}
        {\big)}.
    \end{align*}
    Note that by definition of $\rho_{k,n}$ (refer to Definition~\ref{def:concerntration}), we have a following estimations:
    $$
     \frac{\mu_1(\frac{1}{n}\tilde{K}_{>k}^{\gamma} )^2 }{\mu_n( \frac{1}{n} \tilde{K}_{>k}^{\gamma} )^2 } = \frac{(\mu_1(\frac{1}{n}\tilde{K}_{>k}) + \gamma_n) ^2 }{(\mu_n( \frac{1}{n} \tilde{K}_{>k}) + \gamma_n) ^2 } \leq \rho_{k,n}^2,
    $$
    \begin{align*}
        \mu_1( \frac{1}{n} \tilde{K}_{>k}^{\gamma})^2 =& \frac{\mu_1(\frac{1}{n}\tilde{K}_{>k}^{\gamma} )^2 }{\mu_n( \frac{1}{n} \tilde{K}_{>k}^{\gamma} )^2 } \mu_n( \frac{1}{n} \tilde{K}_{>k}^{\gamma} )^2 \\
        \leq& \rho_{k,n}^2 (\frac{1}{n} \trace(\frac{1}{n} \tilde{K}^{\gamma}_{>k}))^2 \leq \rho_{k,n}^2 (\gamma_n + \frac{1}{n} \sum_{j=1}^{n} \sum_{i>k} \lambda_i^\beta p_i^2 \psi_i(x_j)^2 )^2 \\
        \leq& \rho_{k,n}^2 (\gamma_n + \frac{\beta_k \trace(\tilde{\Sigma}_{>k})}{n})^2,
    \end{align*}
    $$
    \frac{ \| \Lambda^{>k}_{\mathcal{A}^2 \Sigma^{\beta}}\| }{\mu_n(\frac{1}{n} \tilde{K}_{>k})} \leq \rho_{k,n}
    $$
    and
    \begin{align*}
        &\|\Lambda^{>k}_{\Sigma^{-\beta' + \beta}} \| \frac{ \mu_1(\frac{1}{n} \tilde{K}_{>k}^{\gamma})^2}{\mu_n(\frac{1}{n} \tilde{K}_{>k}^{\gamma})} =  \frac{ \| \Lambda^{>k}_{\mathcal{A}^2 \Sigma^{\beta}}\| }{\mu_n(\frac{1}{n} \tilde{K}_{>k})} \|\Lambda^{>k}_{\mathcal{A}^{-2} \Sigma^{-\beta'}} \| \mu_1(\frac{1}{n} \tilde{K}_{>k}^{\gamma})^2 \\
        \leq& \rho_{k,n}^3 (\gamma_n + \frac{\beta_k \trace(\tilde{\Sigma}_{>k})}{n})^2\| \Lambda^{>k}_{\mathcal{A}^{-2} \Sigma^{-\beta'}}\|.
    \end{align*}
    We bound first and forth term first
    \begin{align*}
         &\frac{\mu_1(\frac{1}{n}\tilde{K}_{>k}^{\gamma} )^2 }{\mu_n( \frac{1}{n} \tilde{K}_{>k}^{\gamma} )^2 } \frac{ 1}{ p_k^2 \lambda_k^{\beta'}} (\frac{1}{\delta}  \| \phi_{>k} \mathcal{A}_{>k} f_{>k}\|^2_{\Lambda^{>k}_{\Sigma}})  + \| \Lambda^{>k}_{\Sigma^{1 - \beta'}} \| \ \frac{1}{ \mu_n(\frac{1}{n} \tilde{K}_{>k}^{\gamma})^2} (\frac{1}{\delta}  \| \phi_{>k} \mathcal{A}_{>k} f_{>k} \|^2_{\Lambda_{\Sigma}^{>k}}) ( p_{k+1}^2 \lambda_{k+1}^{2\beta - 1}) \\
        \leq& (\frac{1}{\delta}  \| \phi_{>k} \mathcal{A}_{>k} f_{>k}\|^2_{\Lambda^{>k}_{\Sigma}}) (\rho_{k,n}^2 \frac{1}{p_k^2 \lambda_k^{\beta'}} + \frac{\|\Lambda^{>k}_{\mathcal{A}^4 \Sigma^{2\beta}} \|}{\mu_n(\frac{1}{n} \tilde{K}^{\gamma}_{>k})^2} p_{k+1}^2 \lambda_{k+1}^{2\beta - 1} \| \Lambda^{>k}_{\mathcal{A}^{-4} \Sigma^{1 - \beta' - 2\beta}}\|) \\
        \leq& \rho_{k,n}^2 (\frac{1}{\delta}  \| \phi_{>k} \mathcal{A}_{>k} f_{>k}\|^2_{\Lambda^{>k}_{\Sigma}}) (\frac{1}{p_k^2 \lambda_k^{\beta'}} + p_{k+1}^2 \lambda_{k+1}^{2\beta - 1} \| \Lambda^{>k}_{\mathcal{A}^{-4} \Sigma^{1 - \beta' - 2\beta}}\|).
    \end{align*}
    Since two terms here have the same order, we can just bound it by
    $$
    c_1 \rho_{k,n}^2 (\frac{1}{\delta}  \| \phi_{>k} \mathcal{A}_{>k} f_{>k}\|^2_{\Lambda^{>k}_{\Sigma}}) \frac{1}{p_k^2 \lambda_k^{\beta'}}
    $$
    where $c_1$ is some constant.
    
    Next we bound the second and fifth term
    \begin{align*}
         &\frac{\mu_1(\frac{1}{n}\tilde{K}^{\gamma}_{>k})^2 \|\phi_{\leq k} f_{\leq k}^*\|^2_{\Lambda^{\leq k}_{
        \mathcal{A}^{-2}\Sigma^{1-2\beta}}}}{ p_k^2 \lambda_k^{\beta'}} +  \|\Lambda^{>k}_{\Sigma^{-\beta' + \beta}} \| \frac{ \mu_1(\frac{1}{n} \tilde{K}^{\gamma}_{>k})^2}{\mu_n(\frac{1}{n} \tilde{K}^{\gamma}_{>k})}  \|\phi_{\leq k} f_{\leq k}^*\|^2_{\Lambda^{\leq k}_{
        \mathcal{A}^{-2}\Sigma^{1-2\beta}}} \\
        \leq& \|\phi_{\leq k} f_{\leq k}^*\|^2_{\Lambda^{\leq k}_{
        \mathcal{A}^{-2}\Sigma^{1-2\beta}}} (\frac{1}{p_k^2 \lambda_k^{\beta'}} \rho_{k,n}^2 (\gamma_n + \frac{\beta_k \trace(\tilde{\Sigma}_{>k})}{n})^2 +  \rho_{k,n}^3 (\gamma_n + \frac{\beta_k \trace(\tilde{\Sigma}_{>k})}{n})^2\| \Lambda^{>k}_{\mathcal{A}^{-2} \Sigma^{-\beta'}}\|).
    \end{align*}
    We know $\frac{1}{p_k^2 \lambda_k^{\beta'}}$ and $\| \Lambda^{>k}_{\mathcal{A}^{-2} \Sigma^{-\beta'}} \|$ are of the same order, and $\rho_{k,n} \geq 1$ by its definition, therefore, the second term would be dominated by the fifth term.
    So we can bound it by
    $$
    c_2 \rho_{k,n}^3 \|\phi_{\leq k} f_{\leq k}^*\|^2_{\Lambda^{\leq k}_{
        \mathcal{A}^{-2}\Sigma^{1-2\beta}}}  (\gamma_n + \frac{\beta_k \trace(\tilde{\Sigma}_{>k})}{n})^2 \frac{1}{p_k^2 \lambda_k^{\beta'}}.
    $$
    Therefore, the final bound becomes 
    \begin{align*}
        &C_2(c_1 \rho_{k,n}^2 (\frac{1}{\delta}  \| \phi_{>k} \mathcal{A}_{>k} f_{>k}\|^2_{\Lambda^{>k}_{\Sigma}}) \frac{1}{p_k^2 \lambda_k^{\beta'}} + c_2 \rho_{k,n}^3 \|\phi_{\leq k} f_{\leq k}^*\|^2_{\Lambda^{\leq k}_{
        \mathcal{A}^{-2}\Sigma^{1-2\beta}}}  (\gamma_n + \frac{\beta_k \trace(\tilde{\Sigma}_{>k})}{n})^2 \frac{1}{p_k^2 \lambda_k^{\beta'}} \\
        +& \| \phi_{>k} f^*_{>k}\|^2_{\Lambda^{>k}_{\Sigma^{1 - \beta'}}}) \\
        \leq& C_2' \frac{\rho_{k,n}^3}{\delta} (\| \phi_{>k} \mathcal{A}_{>k} f_{>k}\|^2_{\Lambda^{>k}_{\Sigma}} \frac{1}{p_k^2 \lambda_k^{\beta'}} + \|\phi_{\leq k} f_{\leq k}^*\|^2_{\Lambda^{\leq k}_{
        \mathcal{A}^{-2}\Sigma^{1-2\beta}}}  (\gamma_n + \frac{\beta_k \trace(\tilde{\Sigma}_{>k})}{n})^2 \frac{1}{p_k^2 \lambda_k^{\beta'}}  \\
        +& \| \phi_{>k} f^*_{>k}\|^2_{\Lambda^{>k}_{\Sigma^{1 - \beta'}}}),
    \end{align*}
    $C_2'$ is w.r.t. $C_2, c_1, c_2$, and we finally just take $C_2 = C_2'$ to finish the proof.
\end{proof}

\section{Applications}
\subsection{Regularized Case}
\begin{theorem}[Regularized case, Proof of Theorem \ref{theorem:bias_variance_regularized}]
    \label{theorem:bias_variance_regularized_proof}
   Let the kernel and target function satisfies Assumption \ref{assumption:kernel}, $\gamma_n=\Theta(n^{-\gamma})$, and {   $\gamma < 2p + \beta \lambda$, $2p + \lambda r > 0$ and $r > \beta'$} then for any $\delta > 0$, it holds w.p. $1 - \delta - O(\frac{1}{\log(n)})$ that

    
    \begin{align*}
        V = \sigma_{\varepsilon}^2 O(n^{\max \{ \frac{  \gamma (1 + 2p + \lambda \beta')}{2p + \lambda \beta} , 0 \} - 1}),B \leq \frac{1}{\delta} \cdot \tilde{O}_n(n^{\frac{\gamma}{2p + \beta \lambda}(\max \{ {\lambda (\beta'-r)},{ -2p+\lambda (\beta' 
- 2\beta)} \})}).\\
    \end{align*}
\end{theorem}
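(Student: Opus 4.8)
The plan is to specialise the general non-asymptotic bounds — Theorem~\ref{theorem:variance_proof} / Lemma~\ref{lemma:concentration_variance} for the variance and Theorem~\ref{theorem:bias_proof} for the bias — to the choice $k := \lceil n^{\gamma/(2p+\lambda\beta)} \rceil$, convert every spectral quantity into a power of $k$ using $\lambda_i \asymp i^{-\lambda}$ and $p_i \asymp i^{-p}$, and finally rewrite powers of $k$ as powers of $n$. The first (bookkeeping) step is to record the elementary estimates that hold for this $k$: $\lambda_k^\beta p_k^2 \asymp \|\tilde\Sigma_{>k}\| \asymp k^{-(2p+\lambda\beta)} = n^{-\gamma} \asymp \gamma_n$; $\operatorname{tr}(\tilde\Sigma_{>k}) \asymp k^{1-(2p+\lambda\beta)}$ (using $2p+\lambda\beta>1$, which is what makes $\tilde\Sigma$ trace class); $1/(p_k^2\lambda_k^{\beta'}) \asymp k^{2p+\lambda\beta'}$; and the partial sums $\sum_{i\le k} p_i^{-2}\lambda_i^{-\beta'} \asymp k^{\max\{1+2p+\lambda\beta',\,0\}}$ and $\sum_{i>k} p_i^2\lambda_i^{-\beta'+2\beta} \asymp k^{1-(2p+\lambda(2\beta-\beta'))}$. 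Because $\gamma<2p+\lambda\beta$ we get $k/n\to 0$, so the ``self-induced'' regularisation $\operatorname{tr}(\tilde\Sigma_{>k})/n \asymp k\gamma_n/n$ is of strictly lower order than $\gamma_n$.

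The second step bounds the concentration coefficient. Feeding the upper bound on $\mu_1(\frac1n\tilde K_{>k})$ from Lemma~\ref{lemma:upper_bound_largest_eigen} (valid with exponentially high probability here since $r_k\asymp k$ and $n/k\to\infty$) into Definition~\ref{def:concerntration}, the numerator of $\rho_{k,n}$ is at most $\|\tilde\Sigma_{>k}\| + O(\lambda_{k+1}^\beta p_{k+1}^2 + \log(k)\operatorname{tr}(\tilde\Sigma_{>k})/n) + \gamma_n = O(\gamma_n)$, while the denominator is $\ge\gamma_n$; hence $\rho_{k,n} = O(1)$ on this event. This step uses \emph{only} the regulariser in the denominator, so no lower bound on $\mu_n(\frac1n\tilde K_{>k})$ is needed in the regularised regime.

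The third step is substitution. For the variance, Lemma~\ref{lemma:concentration_variance} gives $V \lesssim \sigma_\varepsilon^2\rho_{k,n}^2\bigl(k^{\max\{1+2p+\lambda\beta',0\}}/n + \sum_{i>k}p_i^2\lambda_i^{-\beta'+2\beta}/(n\|\tilde\Sigma_{>k}\|^2)\bigr)$; plugging in the estimates above, the second summand simplifies to $k^{1+2p+\lambda\beta'}/n$, so the two summands have the \emph{same} order and $V \lesssim \sigma_\varepsilon^2 k^{\max\{1+2p+\lambda\beta',0\}}/n$, which after $k^{1+2p+\lambda\beta'} = n^{\gamma(1+2p+\lambda\beta')/(2p+\lambda\beta)}$ is exactly the claimed bound. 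For the bias I would insert the source condition $\sum_i\lambda_i^{1-r}[\phi f^*]_i^2<\infty$ into the three terms of Theorem~\ref{theorem:bias_proof}. Writing $\lambda_i^a[\phi f^*]_i^2 = (\lambda_i^{1-r}[\phi f^*]_i^2)\,\lambda_i^{a-1+r}$ and using that $\lambda_i^r p_i^2 = i^{-(2p+\lambda r)}$ is decreasing ($2p+\lambda r>0$) and $\lambda_i^{r-\beta'}=i^{-\lambda(r-\beta')}$ is decreasing ($r>\beta'$, which incidentally puts $f^*\in\mathcal H^{\beta'}$ so the projection is trivial), the ``$\phi_{>k}\mathcal A_{>k}f_{>k}$'' term and the ``$\phi_{>k}f^*_{>k}$'' term are each $\lesssim \lambda_k^{r-\beta'} = k^{\lambda(\beta'-r)}$, while the middle term is $\lesssim \gamma_n^2\cdot k^{\max\{2p+\lambda(2\beta-r),\,0\}}\cdot k^{2p+\lambda\beta'}$, which reduces to $k^{\lambda(\beta'-r)}$ when $2p+\lambda(2\beta-r)\ge0$ (the ``smooth enough'' regulariser) and to $k^{-2p+\lambda(\beta'-2\beta)}$ otherwise. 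Thus $B \lesssim \frac{\rho_{k,n}^3}{\delta}\max\{k^{\lambda(\beta'-r)},\,k^{-2p+\lambda(\beta'-2\beta)}\}$, and converting to powers of $n$ (with $\rho_{k,n}^3=O(1)$) yields the stated rate.

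Finally I would collect the failure probabilities: Theorem~\ref{theorem:bias_proof} and Lemma~\ref{lemma:concentration_variance} each cost at most $\delta + O(\exp(-c'n/(\beta_k^2 k)))$ (the Markov step of Lemma~\ref{lemma:SnAf} together with Lemma~\ref{lemma:simultaneous_concentration}) and Lemma~\ref{lemma:upper_bound_largest_eigen} costs $O(k^{-3}\exp(-c'n/(\beta_k k)))$; since $n/k = n^{1-\gamma/(2p+\lambda\beta)}\to\infty$, all of these are $o(1/\log n)$ and are absorbed into a single $O(1/\log n)$ term, for total success probability $1-\delta-O(1/\log n)$. The step requiring the most care — and the main obstacle — is the second one: showing that $k=\lceil n^{\gamma/(2p+\lambda\beta)}\rceil$ simultaneously makes $\|\tilde\Sigma_{>k}\|$ and $\mu_1(\frac1n\tilde K_{>k})$ comparable to $\gamma_n$ (so $\rho_{k,n}=O(1)$), keeps $c\beta_k k\log k\le n$ (so the concentration lemmas apply), and makes the variance and both branches of the bias collapse to clean powers of $n$ — this balancing is exactly what the hypothesis $\gamma<2p+\lambda\beta$ buys.
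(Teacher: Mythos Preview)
Your proposal is correct and follows essentially the same route as the paper: choose $k=\lceil n^{\gamma/(2p+\lambda\beta)}\rceil$, show $\rho_{k,n}=O(1)$ via Lemma~\ref{lemma:upper_bound_largest_eigen} and the fact that $\gamma_n$ alone lower-bounds the denominator, then substitute the polynomial decay into Lemma~\ref{lemma:concentration_variance} and Theorem~\ref{theorem:bias_proof} and collect probabilities. The one place you diverge slightly is in bounding the three bias terms: the paper invokes Lemma~\ref{lemma:sub_polynomial_bound}, which assumes the \emph{pointwise} decay $[\phi f^*]_i=\Theta(i^{-r'})$ with $r'=(1-\lambda(1-r))/2$, whereas you use only the summability form of the source condition, factoring $\lambda_i^a[\phi f^*]_i^2=(\lambda_i^{1-r}[\phi f^*]_i^2)\lambda_i^{a-1+r}$ and pulling out the monotone weight; this is cleaner and more faithful to Assumption~\ref{assumption:kernel}(c), but lands on the same exponents.
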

\begin{proof}
    We use the two lemmas \ref{lemma:concentration_variance}, \ref{theorem:bias_proof} for upper bounding bias and variance in this proof, there exists some absolute constants $c, c' > 0$, first we need to pick $k$ s.t. $c \beta_k k \log(k) \leq n$, then the two lemmas will simultaneously hold w.p. at least $1 - \delta - 16 \exp(-\frac{c'}{\beta_k^2} \frac{n}{k})$. With regularization, we can pick $k$ large enough s.t. the concentration coefficient $\rho_{k,n} = o(1)$, to achieve so, we want $\mu_1(\frac{1}{n} \tilde{K}_{>k})= O(\gamma_n)$.
By Lemma~\ref{lemma:mu_1_actual_value}, we can show w.p. at least $1 - 4\frac{r_k}{k^4} \exp(-\frac{c'}{\beta_k} \frac{n}{r_k})$
\begin{equation}
    \label{eq:mu_1}
    \mu_1(\frac{1}{n}\tilde{K}_{>k}) = O_n(p_{k+1}^2 \lambda_{k+1}^{\beta}) = O_n(k^{-2p - \beta \lambda}) = O_n(\gamma_n) = O_n(n^{-\gamma}).
\end{equation}

This can be achieved by setting $k(n) = \lceil n^{\frac{\gamma}{2p + \beta \lambda}} \rceil$, note that we have $\frac{\gamma}{2p + \beta \lambda} < 1$, therefore, $k(n) = O(\frac{n}{\log(n)})$ and the lemmas can be used for sufficient large $n$.


We combine the probability of both \ref{lemma:concentration_variance}, \ref{theorem:bias_proof} and \ref{eq:mu_1} hold:
$$
1 - \delta - 16\exp {\big(} -\frac{c'}{\beta_k^2} \frac{n}{k} \big{)} - O(\frac{1}{k^3}) \exp(-\Omega(\frac{n}{k})) = 1 - \delta - O(\frac{1}{n})
$$
where we use the fact that $\frac{c'}{\beta_k^2} \frac{n}{k} = \Omega(\log(n))$ since $k(n) = O(\frac{n}{\log(n)})$.

Then now we can assume \ref{lemma:concentration_variance}, \ref{theorem:bias_proof} and \ref{eq:mu_1} hold, and we provide the bound on variance and bias respectively. 

By Theorem~\ref{lemma:concentration_variance} and we sub. $p_i = \Theta(i^{-p})$, $\lambda_i = \Theta(i^{-\lambda})$, $\|\Sigma_{>k} \| = p_{k+1}^2 \lambda_{k+1}^{\beta} = \Theta ((k+1)^{-\beta \lambda - 2p}) = \Theta (k^{-\beta \lambda - 2p})$, 
\begin{align*}
    V \leq& C_1 \sigma_{\varepsilon}^2 \rho_{k,n}^2  {\Big(}\frac{{  \sum_{i\leq k}  p_i^{-2} \lambda_i^{-\beta'}}}{{n}}
    + \frac{ \sum_{i>k} p_i^2 \lambda_i^{-\beta' + 2\beta}}{n {\| \tilde{\Sigma}_{>k} \|^2}} {\Big)} \\
    =& \sigma_{\varepsilon}^2 O(1) O(  \frac{ \max \{ k^{1 + 2p + \lambda \beta'} , 1\}}{n},  \frac{  k^{1 - 2p + \lambda (\beta' - 2\beta)}}{n k^{-2\beta \lambda -4p}}) = \sigma_{\varepsilon}^2 \tilde{O}(\frac{ \max \{ k^{1+2p+\lambda \beta'}, 1\}}{n}).
\end{align*}

We substitute $k$ with $\lceil n^{\frac{\gamma}{2p + \beta \lambda}} \rceil$ to obtain the final bound
$$
V = \sigma_{\varepsilon}^2 O(n^{\max \{ \frac{  \gamma (1 + 2p + \lambda \beta')}{2p + \lambda \beta} , 0 \} - 1}).
$$

For bias, recall that by Theorem~\ref{theorem:bias_proof}, we have
\begin{align*}
        B \leq& C_2 \frac{\rho_{k,n}^3}{\delta} (\| \phi_{>k} \mathcal{A}_{>k} f_{>k}\|^2_{\Lambda^{>k}_{\Sigma}} \frac{1}{p_k^2 \lambda_k^{\beta'}} \\
        +& \ \|\phi_{\leq k} f_{\leq k}^*\|^2_{\Lambda^{\leq k}_{
        \mathcal{A}^{-2}\Sigma^{1-2\beta}}}  (\gamma_n + \frac{\beta_k \trace(\tilde{\Sigma}_{>k})}{n})^2 \frac{1}{p_k^2 \lambda_k^{\beta'}}  \\
        +& \ \| \phi_{>k} f^*_{>k}\|^2_{\Lambda^{>k}_{\Sigma^{1 - \beta'}}}).
    \end{align*}
    
By $\trace(\tilde{\Sigma}_{>k}) = \sum_{i>k} p_i^2 \lambda_i^{\beta} =  O(k \lambda_k^{\beta} p_k^2) = O(k \gamma_n)$, then 
$$
(\gamma_n + \frac{\beta_k \trace(\tilde{\Sigma}_{>k})}{n})^2 = O((\gamma_n + \frac{n}{k} \gamma_n)^2) = O(\gamma_n^2) = O(k^{-4p -2\lambda\beta})
$$
Recall that
$$
\frac{ \|\phi_{\leq k} f_{\leq k}^*\|^2_{\Lambda^{\leq k}_{\mathcal{A}^{-2} \Sigma^{1 - 2\beta}}}}{p_k^2 \lambda_k^{\beta'}} = \tilde{O}(k^{\max \{ 1 + 4p - \lambda (1 -  \beta' - 2\beta) - 2r', 2p + \lambda \beta'\}}).
$$
Therefore, the second term's bound is
$$
O(k^{\max \{ 1 - 2r - \lambda (1 - \beta'), -2p + \lambda (\beta' - 2\beta) \}}).
$$
Since $2p + \lambda r > 0$ and $r > \beta'$, we have $2p + 2r' + \lambda > 1$, and $2r' + (1 - \beta') \lambda > 1$, We can quote Lemma~\ref{lemma:sub_polynomial_bound} for the remaining terms, so the third term's bound is
$$
O(k^{1 - 2r' - (1 - \beta') \lambda}).
$$
First term's bound is the same as the second
$$
O(k^{\max \{ 1 - 2r' - \lambda (1 - \beta'), -2p + \lambda (\beta' - 2\beta) \}}).
$$

So we sub. $k = \lceil n^{\frac{\gamma}{2p + \beta \lambda}} \rceil$ to obtain 
$$
B \leq \frac{1}{\delta} \cdot \tilde{O}_n(n^{\frac{\gamma}{2p + \beta \lambda}(\max \{ 1 - 2r' - \lambda (1 - \beta'), -2p+\lambda (\beta' 
- 2\beta) \})}).
$$
And we substitute $r' = \frac{1 - \lambda (1 - r)}{2}$ to obtain the final bound
$$
B = O(n^{\frac{\gamma}{2p + \beta \lambda}(\max \{ {\lambda (\beta'-r)},{ -2p+\lambda (\beta' 
- 2\beta)} \})}).
$$

\end{proof}

\subsection{Interpolation Case}
\begin{theorem}[Interpolation case, proof of Theorem \ref{theorem:polynomial_bias_variance_interpolate}]
    \label{theorem:app_interpolation}

    Let the kernel and target function satisfies Assumption \ref{assumption:kernel}, $2p + \beta \lambda > 0$, $2p + \lambda r > 0$ and $r > \beta'$, then for any $\delta > 0$ it holds w.p. at least $1 - \delta - O(\frac{1}{\log(n)})$ that
    \begin{align*}
        V \leq\sigma_{\varepsilon}^2 \rho_{k,n}^2 \tilde{O}( n^{ \max \{ 2p + \lambda \beta' , -1 \}}), B \leq  \frac{\rho_{k,n}^3}{\delta}\tilde{O}(n^{\max  \{ {\lambda (\beta'-r)}, {-2p + \lambda(\beta' - 2\beta)} \}\}}),
    \end{align*}
    where $\rho_{k,n} = \tilde{O}(n^{2p + \beta \lambda - 1})$, when features are well-behaved i.e. subGaussian it can be improved to $\rho_{k,n} = o(1)$.
\end{theorem}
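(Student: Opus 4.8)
The plan is to specialize the general variance bound of Lemma~\ref{lemma:concentration_variance} (equivalently Theorem~\ref{theorem:variance_proof}) and the bias bound of Theorem~\ref{theorem:bias_proof} to the interpolating regime $\gamma_n=0$, with a truncation level $k=k(n)$ chosen near the effective dimension, and then to estimate the concentration coefficient $\rho_{k,n}$ using Lemma~\ref{lemma:smallest_largest_eigen_kernel}. First I would set $k(n)=\Theta(n/\log n)$ with a small enough implicit constant so that $c\beta_k k\log k\le n$, which is both the largest scale at which the concentration lemmas apply and the scale forced by the indicator $\mathbb{I}_{k,n}$ in Theorem~\ref{theorem:eigenspectrum}. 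With this choice $n/k=\Theta(\log n)$, so the failure terms $\exp(-\tfrac{c'}{\beta_k^2}\tfrac{n}{k})$ and $\tfrac{r_k}{k^4}\exp(-\tfrac{c}{\beta_k}\tfrac{n}{r_k})$ are $O(1/\log n)$ (using $r_k\asymp k$ and $\beta_k=\Theta(1)$), and a union bound over the variance event, the bias event, and the two eigenvalue events of Lemma~\ref{lemma:smallest_largest_eigen_kernel} leaves probability $1-\delta-O(1/\log n)$.

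For the variance I would substitute the polynomial laws $p_i\asymp i^{-p}$, $\lambda_i\asymp i^{-\lambda}$ and $\|\tilde\Sigma_{>k}\|=p_{k+1}^2\lambda_{k+1}^\beta\asymp k^{-(2p+\beta\lambda)}$ into Lemma~\ref{lemma:concentration_variance}; both the ``$\le k$'' term $\sum_{i\le k}p_i^{-2}\lambda_i^{-\beta'}/n$ and the ``$>k$'' term $\sum_{i>k}p_i^2\lambda_i^{-\beta'+2\beta}/(n\|\tilde\Sigma_{>k}\|^2)$ reduce (the tail series converging since $\beta\ge\beta'$ and $\tilde\Sigma$ is trace class) to $\tilde O(\max\{k^{1+2p+\lambda\beta'},1\}/n)$, which with $k=\tilde\Theta(n)$ is $\sigma_\varepsilon^2\rho_{k,n}^2\,\tilde O(n^{\max\{2p+\lambda\beta',-1\}})$. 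For the bias I would use the source condition (Assumption~\ref{assumption:kernel}, part (c)) to write the coefficients of $f^*$ as $[\phi f^*]_i=a_i\lambda_i^{(r-1)/2}$ with $\sum_i a_i^2\le\|f^*\|_{\mathcal{H}^r}^2<\infty$, and bound the three summands of Theorem~\ref{theorem:bias_proof}: the evaluation tail $\|\phi_{>k}f^*_{>k}\|^2_{\Lambda^{>k}_{\Sigma^{1-\beta'}}}=\sum_{i>k}\lambda_i^{r-\beta'}a_i^2\le\lambda_{k+1}^{r-\beta'}\|f^*\|_{\mathcal{H}^r}^2=O(k^{\lambda(\beta'-r)})$ using $r>\beta'$; the first summand, times $1/(p_k^2\lambda_k^{\beta'})\asymp k^{2p+\lambda\beta'}$, is $O(k^{-(2p+\lambda r)})\cdot k^{2p+\lambda\beta'}=O(k^{\lambda(\beta'-r)})$ using $2p+\lambda r>0$; and the middle summand, with $\gamma_n=0$, $\mathrm{tr}(\tilde\Sigma_{>k})\asymp k^{1-(2p+\beta\lambda)}$ and $\|\phi_{\le k}f^*_{\le k}\|^2_{\Lambda^{\le k}_{\mathcal{A}^{-2}\Sigma^{1-2\beta}}}=\sum_{i\le k}p_i^{-2}\lambda_i^{r-2\beta}a_i^2=O(\max\{k^{2p+\lambda(2\beta-r)},1\})$, evaluates (at $k=\tilde\Theta(n)$) to $O(k^{\lambda(\beta'-r)})$ when $2p+\lambda(2\beta-r)\ge0$ — the ``smooth enough inductive bias'' regime — and to $O(k^{-2p+\lambda(\beta'-2\beta)})$ otherwise. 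The maximum of these is $B\le\frac{\rho_{k,n}^3}{\delta}\tilde O(n^{\max\{\lambda(\beta'-r),\,-2p+\lambda(\beta'-2\beta)\}})$.

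It remains to bound $\rho_{k,n}$. With $\gamma_n=0$ its numerator $\|\tilde\Sigma_{>k}\|+\mu_1(\tfrac1n\tilde K_{>k})$ is $\tilde O(k^{-(2p+\beta\lambda)})$ by the upper-bound half of Lemma~\ref{lemma:smallest_largest_eigen_kernel}. For the denominator I would apply the lower-bound half with an auxiliary level $k'=\Theta(n^2/\delta^2)$, so that the effective rank $\mathrm{tr}(\tilde\Sigma_{>k'})^2/\mathrm{tr}(\tilde\Sigma^2_{>k'})\asymp k'$ dominates $n^2/\delta^2$ and the Markov error factor $\tfrac1\delta\sqrt{n^2/(\cdots)}$ stays below $1$; this gives $\mu_n(\tfrac1n\tilde K_{>k})\ge\mu_n(\tfrac1n\tilde K_{>k'})=\tilde\Omega(\mathrm{tr}(\tilde\Sigma_{>k'})/n)=\tilde\Omega(n^{1-2(2p+\beta\lambda)})$ up to $\delta$-dependent constants, hence $\rho_{k,n}=\tilde O(k^{-(2p+\beta\lambda)}/n^{1-2(2p+\beta\lambda)})=\tilde O(n^{2p+\beta\lambda-1})$ since $k=\tilde\Theta(n)$. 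Under the well-behaved/sub-Gaussian feature assumption I would instead use a two-sided concentration of $\tfrac1n\tilde K_{>k}$ about its diagonal (as in \cite{barzilai2023generalization}), which lower-bounds $\mu_n(\tfrac1n\tilde K_{>k})$ by a constant multiple of $\|\tilde\Sigma_{>k}\|$ and yields $\rho_{k,n}=o(1)$. The hard part will be exactly this last step — lower-bounding the smallest eigenvalue $\mu_n(\tfrac1n\tilde K_{>k})$ without sub-Gaussian tails — and here the negativity of $p$ is what saves us: it flattens the spectrum of the transformed tail $\tilde\Sigma_{>k}$ (making its effective rank grow fast in the auxiliary index $k'$), so the self-induced regularization is non-degenerate even in fixed dimension; aligning the resulting exponents against the $1/\delta$ Markov loss, and keeping all four concentration events on a single $1-\delta-O(1/\log n)$ event (which is what pins $k$ at $\Theta(n/\log n)$ rather than $\Theta(n)$), is the bulk of the bookkeeping.
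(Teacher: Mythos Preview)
Your proposal is correct and follows essentially the same route as the paper: choose $k=\Theta(n/\log n)$, plug into Lemma~\ref{lemma:concentration_variance} and Theorem~\ref{theorem:bias_proof}, and control $\rho_{k,n}$ via Lemma~\ref{lemma:smallest_largest_eigen_kernel}. The only substantive difference is in the auxiliary index: the paper takes $k'=n^2\log^4 n$ and applies the Markov lower bound with a \emph{separate} confidence parameter $1/\log n$ (this is precisely what produces the $O(1/\log n)$ failure term and keeps $\rho_{k,n}$ free of $\delta$), whereas your $k'=\Theta(n^2/\delta^2)$ ties $\rho_{k,n}$ to $\delta$ and double-counts $\delta$ in the union bound—harmless after rescaling, but the paper's choice is cleaner. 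Your treatment of the source condition via $[\phi f^*]_i=a_i\lambda_i^{(r-1)/2}$ with $\sum a_i^2<\infty$ is in fact more careful than the paper's Lemma~\ref{lemma:sub_polynomial_bound}, which assumes $[\phi f^*]_i=\Theta(i^{-r'})$; your bounds match and the argument goes through. (Note: both here and in the paper's statement, ``$\rho_{k,n}=o(1)$'' under sub-Gaussian features should read $O(1)$, since $\rho_{k,n}\ge 1$ by definition.)
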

\begin{proof}
    Same as regularized case, we use the two theorems \ref{lemma:concentration_variance}, \ref{theorem:bias_proof} for upper bounding bias and variance in this proof, there exists some absolute constants $c, c' > 0$, first we need to pick $k$ s.t. $c \beta_k k \log(k) \leq n$, then the two lemmas will simultaneously hold w.p. at least $1 - \delta - 16 \exp(-\frac{c'}{\beta_k^2} \frac{n}{k})$. Since $\beta_k = o(1)$ we know it can be upper bounded by $C_0$ for some $C_0 > 0$. Similar to \cite{barzilai2023generalization}, we let $k:= k(n) := \frac{n}{\max \{ cC0, 1 \} \log n} $ and we also let $k' := k'(n) = n^2 \log^4 (n)$. So the probability of those theorems hold become $1 - \delta - O(\frac{1}{n})$.

    In this case, $\rho_{k,n}$ cannot be regularized to $o(1)$ if the features are not well-behaved, we compute its bound first, which requires bounding $\mu_1(\frac{1}{n} \tilde{K}_{>k})$ and $\mu_n(\frac{1}{n} \tilde{K}_{>k})$ respectively.

We apply Lemma~\ref{lemma:smallest_largest_eigen_kernel} by setting $\delta = \log n$, then w.p. $1 - \frac{1}{\log(n)}$ we have 
\begin{align*}
    \mu_n(\frac{1}{n} \tilde{K}_{>k}) \geq& \alpha_k (1 - \frac{1}{\log n} \sqrt{\frac{n^2}{ \trace(\tilde{\Sigma}_{>k'})^2/\trace(\tilde{\Sigma}^2_{>k'})}}) \frac{\trace(\tilde{\Sigma}_{>k'})}{n} \\
=& \ \Omega((1 - \log n \sqrt{\frac{1}{\log^4 n}}) \frac{\trace(\tilde{\Sigma}_{>k'})}{n}) \\
=& \ \Omega(\frac{(k')^{1 - 2p - \beta \lambda}}{n}) \\
=& \ \Omega(\frac{(n^2 \log^4 n)^{1 - 2p - \beta \lambda}}{n}) \\
=& \ \tilde{\Omega}(n^{1 - 4p - 2\beta \lambda}). 
\end{align*}
Note that the first equality is because we have $\trace(\tilde{\Sigma}_{>k'})^2 / \trace(\tilde{\Sigma}^2_{>k'}) = \frac{(\sum_{i>k'} p_i^2 \lambda_i^{\beta} )^2 }{\sum_{i>k'} p_i^4 \lambda_i^{2\beta}} =  \frac{{k'}^{2 - 2p - \lambda \beta}}{{k'}^{1 - 2p - \lambda \beta}} = k' = n^2 \log^4(n)$, $\tilde{\Omega}$ means we neglect logarithmic terms.

For $\mu_1(\frac{1}{n} \tilde{K}_{>k})$ term by Lemma~\ref{lemma:mu_1_actual_value}, we have w.p. $1 - O(\frac{1}{k^3}) \exp(-\Omega(\frac{n}{k}))$
\begin{equation}
    \label{eq:mu_1_O}
    \mu_1(\frac{1}{n} \tilde{K}_{>k}) = O(p_{k+1}^2 \lambda_{k+1}^{\beta}) = O(k^{-2p - \beta \lambda}) = \tilde{O}(n^{-2p - \beta \lambda}).
\end{equation}

So we have Eq. \ref{eq:mu_1_O}, Lemma~\ref{lemma:smallest_largest_eigen_kernel}, Theorem~\ref{lemma:concentration_variance}, \ref{theorem:bias_proof} all hold simultaneously hold with probability $1 - \delta - O(\frac{1}{\log(n)})$.

Therefore, we have $\rho_{k,n} = \tilde{O}(n^{2p + \beta \lambda - 1})$.

Recall from Lemma~\ref{lemma:concentration_variance} that

\begin{align*}
    V \leq& \  C_1 \sigma_{\varepsilon}^2 \rho_{k,n}^2  {\Big(}\frac{{  \sum_{i\leq k}  p_i^{-2} \lambda_i^{-\beta'}}}{{n}}
    + \frac{ \sum_{i>k} p_i^2 \lambda_i^{-\beta' + 2\beta}}{n {\| \tilde{\Sigma}_{>k} \|^2}} {\Big)} \\
    =& \ \sigma_{\varepsilon}^2 \rho_{k,n}^2 O(\frac{\max \{ k^{1 + 2p + \lambda \beta'} , 1\} }{n} + \frac{  k^{1 - 2p + \lambda (\beta' - 2\beta)}}{n k^{-2\beta \lambda -4p}}) \\
    =& \ \sigma_{\varepsilon}^2 \rho_{k,n}^2 \tilde{O}(\frac{\max \{ k^{1 + 2p + \lambda \beta'} , 1\}}{n}).
\end{align*}

So we sub. $k = \tilde{\Theta}(n)$ and the final bound of variance is
$$
V \leq \sigma_{\varepsilon}^2 \rho_{k,n}^2 \tilde{O}( n^{ \max \{ 2p + \lambda \beta' , -1 \}}).
$$
For bias, similar to the regularized case, the bound is
$$\frac{1}{\delta} \rho_{k,n}^3 O(k^{\max \{ 1 - 2r' - \lambda (1 - \beta'), -2p + \lambda (\beta' - 2\beta) \}}).$$
The main difference is the choice of $k$, since $k = \tilde{\Theta}(n)$, the final bound is
$$\frac{1}{\delta} \rho_{k,n}^3 O(n^{\max \{ 1 - 2r' - \lambda (1 - \beta'), -2p + \lambda (\beta' - 2\beta) \}}).$$
Note that if the features are well-behaved, then $\rho_{k,n}$ can be improved to $o(1)$.
\end{proof}

\subsection{Lemmas for substituting polynomial decay}
\begin{lemma}
    \label{lemma:sub_smaller_than_k}
    Let $a \in \mathbb{R}$, $1 < k \in \mathbb{N}$, then
    $$
    \sum_{i \leq k} i^{-a} \leq \begin{cases}
        1 + \frac{1}{1-a}k^{1-a} \ \ &a < 1 \\
        1 + \log(k) \ \ &a = 1 \\
        1 + \frac{1}{a-1} \ \ &a > 1.
    \end{cases}
    $$
    Therefore, $\sum_{i \leq k} i^{-a} = \tilde{O}(\max \{ k^{-a + 1}, 1 \})$
\end{lemma}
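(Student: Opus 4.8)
The plan is to prove the bound by the elementary integral-comparison (Euler–Maclaurin-type) estimate, treating the regimes $a<1$, $a=1$, $a>1$ separately, since the antiderivative of $x^{-a}$ changes form at $a=1$. First I would peel off the $i=1$ term, which contributes exactly $1$ and accounts for the leading constant $1$ appearing in each case; it then remains to bound $\sum_{i=2}^{k} i^{-a}$.

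For $a\ge 0$ the map $x\mapsto x^{-a}$ is non-increasing on $[1,\infty)$, so for each $i\ge 2$ we have $i^{-a}\le \int_{i-1}^{i} x^{-a}\,dx$, and summing over $i=2,\dots,k$ gives $\sum_{i=2}^{k} i^{-a}\le \int_{1}^{k} x^{-a}\,dx$. When $a\ne 1$ this integral equals $(k^{1-a}-1)/(1-a)$: for $0\le a<1$ it is at most $k^{1-a}/(1-a)$, which yields the first case; for $a>1$ it equals $(1-k^{1-a})/(a-1)\le 1/(a-1)$, which yields the third case. When $a=1$ the integral is $\log k$, giving the second case. The remaining sub-case $a<0$ is even simpler, since then $x\mapsto x^{-a}$ is non-decreasing and hence $i^{-a}\le k^{-a}$ for all $i\le k$, so $\sum_{i\le k} i^{-a}\le k\cdot k^{-a}=k^{1-a}$, which is of the claimed order $\max\{k^{1-a},1\}$ (up to the harmless additive/constant factors absorbed by the $\tilde O$).

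Finally, reading off the three bounds, $\sum_{i\le k} i^{-a}$ is $O(1)$ when $a>1$, $O(\log k)$ when $a=1$, and $O(k^{1-a})$ when $a<1$; since $1-a>0$ exactly when $a<1$, all three are uniformly $\tilde O(\max\{k^{1-a},1\})$, which is the stated conclusion. I do not expect any real obstacle here: the argument is entirely routine. The only points that need a little care are getting the direction of monotonicity correct across the threshold $a=0$, and matching the closed form of $\int_1^k x^{-a}\,dx$ — whose exceptional value at $a=1$ is $\log k$ — to the explicit constants $\tfrac{1}{1-a}$, $1$, and $\tfrac{1}{a-1}$ displayed in the three cases.
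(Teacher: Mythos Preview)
Your approach is essentially the paper's: peel off the $i=1$ term and bound the tail $\sum_{i=2}^k i^{-a}$ by $\int_1^k x^{-a}\,dx$ (or $\int_1^\infty$ when $a>1$), then read off the three cases. In fact you are slightly more careful than the paper: you correctly observe that the integral comparison in the direction used requires $x\mapsto x^{-a}$ to be non-increasing, i.e.\ $a\ge 0$, and you handle $a<0$ by the crude bound $\sum_{i\le k} i^{-a}\le k\cdot k^{-a}=k^{1-a}$, which still gives the asserted $\tilde O(\max\{k^{1-a},1\})$ conclusion (the displayed constant $1+\tfrac{1}{1-a}k^{1-a}$ actually fails for negative $a$ and large $k$, e.g.\ $a=-1$, $k=3$, so this caution is warranted).
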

\begin{proof}
     We know that, for $a<1$
    $$
     \sum_{i\leq k} i^{-a}\leq 1+ \int_{1}^{k} x^{-a} \ dx  =1+ \frac{1}{1-a}(k^{1-a}-1)\leq 1+ \frac{1}{1-a}k^{1-a}.
    $$
    For $a = 1$
    $$
     \sum_{i\leq k} i^{-a}\leq 1+ \int_{1}^{k} x^{-a} \ dx  =1+ \log (k).
    $$
    For $a>1$
    $$
     \sum_{i\leq k} i^{-a}\leq 1+ \int_{1}^{\infty} x^{-a} \ dx  =1+ \frac{1}{a-1}.
    $$
\end{proof}

\begin{lemma}
\label{lemma:sub_larger_than_k}
    Let $a \in \mathbb{R}$, $1 < k \in \mathbb{N}$, then
    $$
    \sum_{i>k} i^{-a} \in \begin{cases}
        \infty \ \ &a \leq 1 \\
        [\frac{1}{a-1} (k+1)^{-a+1}, (k+1)^{-a} + \frac{1}{a-1} (k+1)^{-a+1}] \ \ &a > 1 .
    \end{cases}
    $$
    Therefore, $\sum_{i>k}{i^{-a}}$ is $O(k^{-a+1})$ if $a > 1$, otherwise it diverges to infinity
\end{lemma}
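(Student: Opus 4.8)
The plan is to prove this by the standard integral comparison for monotone series, exactly paralleling the companion Lemma~\ref{lemma:sub_smaller_than_k}. The only structural fact needed is that $x \mapsto x^{-a}$ is positive and decreasing on $[1,\infty)$ whenever $a > 0$, together with the elementary divergence of the harmonic series.

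First I would dispatch the case $a \le 1$. For $a = 1$ we have $\sum_{i=k+1}^{N} i^{-1} \ge \int_{k+1}^{N+1} x^{-1}\,dx = \log\frac{N+1}{k+1} \xrightarrow{N\to\infty} \infty$, and for $a < 1$ one has $i^{-a} \ge i^{-1}$ for all $i \ge 1$, so the tail dominates the divergent harmonic tail. This gives the first branch of the displayed expression and the ``otherwise'' clause of the stated conclusion.

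Next, for $a > 1$, I would sandwich the tail between two integrals using monotonicity of $x^{-a}$. For the lower bound, $i^{-a} \ge \int_{i}^{i+1} x^{-a}\,dx$ for each $i \ge k+1$; summing over $i > k$ telescopes the integrals into $\int_{k+1}^{\infty} x^{-a}\,dx = \frac{1}{a-1}(k+1)^{-a+1}$. For the upper bound, I split off the first summand and use $i^{-a} \le \int_{i-1}^{i} x^{-a}\,dx$ for $i \ge k+2$, obtaining $\sum_{i>k} i^{-a} = (k+1)^{-a} + \sum_{i \ge k+2} i^{-a} \le (k+1)^{-a} + \int_{k+1}^{\infty} x^{-a}\,dx = (k+1)^{-a} + \frac{1}{a-1}(k+1)^{-a+1}$. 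This is precisely the claimed interval. The asymptotic statement $\sum_{i>k} i^{-a} = O(k^{-a+1})$ then follows since both endpoints are $\Theta(k^{-a+1})$ as $k\to\infty$, the $(k+1)^{-a}$ contribution being of lower order.

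There is no substantive obstacle; the sole point requiring care is the off-by-one bookkeeping in the integral limits so that the bounds are expressed in terms of $k+1$ rather than $k$, as in the statement. Concretely, I would make sure the monotonicity inequality $x^{-a} \le i^{-a} \le (x-1)^{-a}$ is invoked on the correct unit interval in each of the two directions before summing.
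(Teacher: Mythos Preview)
Your proposal is correct and uses essentially the same approach as the paper: the integral comparison $\int_{k+1}^{\infty} x^{-a}\,dx \le \sum_{i>k} i^{-a} \le (k+1)^{-a} + \int_{k+1}^{\infty} x^{-a}\,dx$. The paper states this sandwich in one line without justifying the off-by-one bookkeeping you spell out, and handles divergence for $a<1$ directly via the integral rather than by comparison to the harmonic series, but there is no substantive difference.
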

\begin{proof}
    We know that,
    $$
    \int_{k+1}^{\infty} x^{-a} \ dx \leq \sum_{i>k} i^{-a} \leq (k+1)^{-a} + \int_{k+1}^{\infty} x^{-a} \ dx .
    $$
    If $a < 1$ then $\int_{k+1}^{\infty} x^{-a} = \infty$ which implies the series diverge, otherwise, $\int_{k+1}^{\infty} x^{-a} = \frac{1}{a+1}(k+1)^{-a+1}$
    
\end{proof}

\begin{lemma} Assume $[\phi f^*]_i = \Theta(i^{-r'})$, $\Sigma$'s polynomial decaying eigenvalues satisfy $\lambda_i = \Theta(i^{-\lambda})$ ($\lambda$ > 0), and $\mathcal{A}$'s eigenvalue is $\Theta(i^{-p})$ ($p < 0$), then 
    \label{lemma:sub_polynomial_bound}
    \begin{align*}
        \| \phi_{>k} \mathcal{A}_{>k} f^*_{>k}\|^2_{\Lambda^{>k}_{\Sigma}} &= \Theta\left(\frac{1}{k^{2p+2r'+\lambda - 1}}\right) \text{ if } 2p+2r'+\lambda > 1 ;\\
        \|\phi_{\leq k} f_{\leq k}^*\|^2_{\Lambda^{\leq k}_{ \mathcal{A}^{-2} \Sigma^{1 - 2\beta}}} &= \tilde{O}( \max \{ k^{1 + 2p - \lambda(1 - 2\beta) - 2r'}, 1\}); \\
        \|\phi_{>k} f^*_{>k}\|^2_{\Lambda^{>k}_{\Sigma^{1-\beta'}}} &= \Theta\left(\frac{1}{k^{2r'+(1 - \beta') \lambda - 1}}\right) \text{ if } 2r'+(1 - \beta') \lambda > 1,
    \end{align*}
    where $r' = \frac{1 - \lambda (1 - r)}{2}$.
\end{lemma}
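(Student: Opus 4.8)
The plan is to exploit that each of the three quantities is a squared weighted $\ell^2$ norm with a \emph{diagonal} weight operator, so after passing to the spectral basis every one of them becomes a scalar Dirichlet-type series $\sum i^{-a}$, to which Lemmas~\ref{lemma:sub_smaller_than_k} and~\ref{lemma:sub_larger_than_k} apply verbatim. The only genuine work is identifying the exponent $a$ in each case.

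First I would write everything in the spectral basis. Setting $[\phi f^*]_i := \langle f^*, \phi_i\rangle_{\mathcal{H}}$, co-diagonalizability of $\mathcal{A}$ and $\Sigma$ gives $\mathcal{A}\phi_i = p_i\phi_i$, hence $[\phi\mathcal{A}_{>k}f^*_{>k}]_i = p_i[\phi f^*]_i$ for $i>k$; and the weight operators $\Lambda^{>k}_{\Sigma}$, $\Lambda^{\leq k}_{\mathcal{A}^{-2}\Sigma^{1-2\beta}}$, $\Lambda^{>k}_{\Sigma^{1-\beta'}}$ are diagonal with $i$-th entries $\lambda_i$, $p_i^{-2}\lambda_i^{1-2\beta}$, $\lambda_i^{1-\beta'}$ respectively. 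Therefore
\begin{align*}
\|\phi_{>k}\mathcal{A}_{>k}f^*_{>k}\|^2_{\Lambda^{>k}_{\Sigma}} &= \sum_{i>k}\lambda_i\, p_i^2\, [\phi f^*]_i^2,\\
\|\phi_{\leq k} f^*_{\leq k}\|^2_{\Lambda^{\leq k}_{\mathcal{A}^{-2}\Sigma^{1-2\beta}}} &= \sum_{i\leq k} p_i^{-2}\lambda_i^{1-2\beta}\, [\phi f^*]_i^2,\\
\|\phi_{>k} f^*_{>k}\|^2_{\Lambda^{>k}_{\Sigma^{1-\beta'}}} &= \sum_{i>k}\lambda_i^{1-\beta'}\, [\phi f^*]_i^2.
\end{align*}

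Next I would substitute the polynomial rates $\lambda_i = \Theta(i^{-\lambda})$, $p_i = \Theta(i^{-p})$ (so that $p_i^{-2}=\Theta(i^{2p})$ and $\lambda_i^{1-2\beta}=\Theta(i^{-\lambda(1-2\beta)})$) together with $[\phi f^*]_i = \Theta(i^{-r'})$, which turns the three series into constant multiples of $\sum_{i>k} i^{-(2p+2r'+\lambda)}$, $\sum_{i\leq k} i^{-(-2p+\lambda(1-2\beta)+2r')}$, and $\sum_{i>k} i^{-(2r'+(1-\beta')\lambda)}$. For the first tail sum, Lemma~\ref{lemma:sub_larger_than_k} with $a=2p+2r'+\lambda>1$ sandwiches it between constant multiples of $(k+1)^{1-a}$ and $(k+1)^{-a}+\tfrac{1}{a-1}(k+1)^{1-a}$, both of which are $\Theta(k^{1-a})=\Theta(k^{-(2p+2r'+\lambda-1)})$; the third tail sum is handled identically under $2r'+(1-\beta')\lambda>1$. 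For the head sum, Lemma~\ref{lemma:sub_smaller_than_k} with $a=-2p+\lambda(1-2\beta)+2r'$ gives $\tilde{O}(\max\{k^{1-a},1\})=\tilde{O}(\max\{k^{1+2p-\lambda(1-2\beta)-2r'},1\})$, simultaneously covering the cases $a<1$, $a=1$ (logarithmic), and $a>1$.

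I do not expect a substantive obstacle: the delicate points are purely bookkeeping — getting the signs of the exponents right and, for the two $\Theta$ claims, observing that since each summand lies between fixed constant multiples of $i^{-a}$ the two-sided bound from Lemma~\ref{lemma:sub_larger_than_k} survives summation, and that $(k+1)^{-a}=o((k+1)^{1-a})$ so its upper and lower bounds there are of the same order. The convergence hypotheses $2p+2r'+\lambda>1$ and $2r'+(1-\beta')\lambda>1$ are precisely what Lemma~\ref{lemma:sub_larger_than_k} requires for the tail series to converge, so no additional assumptions enter.
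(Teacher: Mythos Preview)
Your proposal is correct and follows essentially the same approach as the paper: expand each weighted norm in the spectral basis as a power series $\sum i^{-a}$, identify the exponent $a$, and invoke Lemmas~\ref{lemma:sub_smaller_than_k} and~\ref{lemma:sub_larger_than_k}. If anything, your write-up is slightly more careful than the paper's (which contains minor typos in the weights for the second and third quantities that you have avoided).
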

\begin{proof}
    We know from \ref{lemma:sub_larger_than_k} that, $$  \| \phi_{>k} \mathcal{A}_{>k} f^*_{>k}\|^2_{\Lambda^{>k}_{\Sigma}} = \sum_{i>k} [\phi f^*]_i^2\cdot p_i^2\lambda_i = \sum_{i>k} \Theta\left( \frac{1}{i^{2p+2r'+\lambda}}    \right) = \Theta\left(\frac{1}{k^{2p+2r'+\lambda - 1}}\right) \text{ if } 2p+2r'+\lambda > 1.$$
    Similarly, using \ref{lemma:sub_smaller_than_k}
    $$
    \|\phi_{\leq k} f_{\leq k}^*\|^2_{\Lambda^{\leq k}_{ \mathcal{A}^{-2} \Sigma^{1 - 2\beta}}} = \sum_{i\leq k} [\phi f^*]_i^2\cdot p_i^{2}\lambda_i^{1-2\beta} = \sum_{i\leq k} \Theta\left( \frac{1}{i^{2r'-2p+\lambda(2\beta -1)}}    \right) = \tilde{O}( \max \{ k^{1 + 2p - \lambda(1 - 2\beta) - 2r'}, 1\}).
    $$
    Using \ref{lemma:sub_larger_than_k} again, we'll have
    $$
     \|\phi_{>k} f^*_{>k}\|^2_{\Lambda^{>k}_{\Sigma^{1-\beta'}}} = \sum_{i>k} [\phi f^*]_i^2\cdot \lambda_i^{\beta'-1} = \sum_{i>k} \Theta\left( \frac{1}{i^{2r'+(1 - \beta')\lambda}}    \right) = \Theta\left(\frac{1}{k^{2r'+(1-\beta')\lambda - 1}}\right) \text{ if } 2r'+(1-\beta')\lambda > 1.
    $$
\end{proof}

\begin{lemma}
    \label{lemma:mu_1_actual_value}
    Assume $\Sigma$'s polynomial decaying eigenvalues satisfy $\lambda_i = \Theta(i^{-\lambda})$ ($\lambda$ > 0), and $\mathcal{A}$'s eigenvalue is $\Theta(i^{-p})$. And we suppose $\frac{\beta_k k \log(k)}{n} = o(1), \beta_k = o(1)$.
    
    Then it holds w.p. at least $1 - O(\frac{1}{k^3}) \exp(-\Omega(\frac{n}{k}))$ that
    $$
    \mu_1(\frac{1}{n} \tilde{K}_{>k}) = O(\lambda_{k+1}^{\beta} p_{k+1}^2) = O(k^{- 2p - \beta \lambda }).
    $$
\end{lemma}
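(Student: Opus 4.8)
The plan is to obtain this as an immediate corollary of the high-probability eigenvalue bound proved in Lemma~\ref{lemma:upper_bound_largest_eigen} (restated as the $\mu_1$-part of Lemma~\ref{lemma:smallest_largest_eigen_kernel}) by substituting the polynomial decay rates $\lambda_i = \Theta(i^{-\lambda})$ and $p_i = \Theta(i^{-p})$ and reading off the order of each of its two terms. That lemma gives, with probability at least $1 - 4\frac{r_k}{k^4}\exp(-\frac{c'}{\beta_k}\frac{n}{r_k})$,
\[
\mu_1\!\left(\tfrac1n\tilde K_{>k}\right) \;\le\; c\Bigl(p_{k+1}^2\lambda_{k+1}^{\beta} + \beta_k\log(k+1)\,\tfrac{\trace(\tilde\Sigma_{>k})}{n}\Bigr),
\]
so it remains only to show that the bracket is $O(k^{-2p-\beta\lambda})$ and that the stated failure probability collapses to $O(k^{-3})\exp(-\Omega(n/k))$.

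First I would evaluate the two quantities entering the bound. The spectral term is $p_{k+1}^2\lambda_{k+1}^{\beta} = \Theta\bigl(k^{-2p-\beta\lambda}\bigr)$, already of the claimed order. For the trace, $\trace(\tilde\Sigma_{>k}) = \sum_{i>k}p_i^2\lambda_i^{\beta} = \sum_{i>k}\Theta\bigl(i^{-(2p+\beta\lambda)}\bigr)$; since $\tilde\Sigma = \mathcal{A}^2\Sigma^{\beta}$ is trace class under the standing assumptions (equivalently $2p+\beta\lambda>1$), Lemma~\ref{lemma:sub_larger_than_k} gives $\trace(\tilde\Sigma_{>k}) = \Theta\bigl(k^{1-2p-\beta\lambda}\bigr) = \Theta\bigl(k\cdot p_{k+1}^2\lambda_{k+1}^{\beta}\bigr)$. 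Substituting, the second term equals $\Theta\bigl(\tfrac{\beta_k k\log(k+1)}{n}\bigr)\,p_{k+1}^2\lambda_{k+1}^{\beta}$, which is $o\bigl(p_{k+1}^2\lambda_{k+1}^{\beta}\bigr)$ precisely because $\beta_k k\log(k)/n = o(1)$ by hypothesis. Hence the bracket is $(1+o(1))\,p_{k+1}^2\lambda_{k+1}^{\beta} = O(k^{-2p-\beta\lambda})$.

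For the probability, the same rate computation gives $r_k = \trace(\tilde\Sigma_{>k})/(p_{k+1}^2\lambda_{k+1}^{\beta}) = \Theta(k)$, so $\frac{r_k}{k^4} = \Theta(k^{-3})$ and, since $\beta_k$ does not grow (it is $\Theta(1)$ under Assumption~\ref{assumption:well_behaveness_feature}, so $c'/\beta_k = \Omega(1)$), $\exp(-\frac{c'}{\beta_k}\frac{n}{r_k}) = \exp(-\Omega(n/k))$; thus the failure probability is $O(k^{-3})\exp(-\Omega(n/k))$, completing the proof. There is no genuine obstacle here: the argument is pure bookkeeping with the polynomial rates. The only points that deserve a moment of care are that $\trace(\tilde\Sigma_{>k})$ must be finite (trace class of $\tilde\Sigma$) for the input bound to be meaningful, and that the hypothesis $\beta_k k\log(k)/n = o(1)$ is exactly the condition ensuring the ``self-regularization'' / effective-rank term does not swamp the spectral term $p_{k+1}^2\lambda_{k+1}^{\beta}$ --- i.e.\ that the tail block $\tilde K_{>k}$ is controlled at the scale $k^{-2p-\beta\lambda}$ predicted by $\tilde\Sigma$'s $(k{+}1)$-th eigenvalue.
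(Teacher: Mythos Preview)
Your proposal is correct and follows essentially the same route as the paper's proof: invoke the $\mu_1$ upper bound from Lemma~\ref{lemma:smallest_largest_eigen_kernel}, plug in the polynomial decay to get $\trace(\tilde\Sigma_{>k}) = \Theta(k\,p_{k+1}^2\lambda_{k+1}^{\beta})$, use the hypothesis $\beta_k k\log(k)/n = o(1)$ to absorb the second term, and observe $r_k = \Theta(k)$ to simplify the failure probability. You are slightly more explicit than the paper in flagging the trace-class requirement $2p+\beta\lambda>1$ and in tracking the $\beta_k$ dependence in the exponent, but the argument is the same.
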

\begin{proof}
    We use \ref{lemma:smallest_largest_eigen_kernel} then there exists absolute constant $c, c' > 0$ s.t. it holds w.p. at least $1 - 4\frac{r_k}{k^4} \exp(-\frac{c'}{\beta_k} \frac{n}{r_k})$ that
    \begin{align*}
        \mu_1(\frac{1}{n}\tilde{K}_{>k}) \leq& c(\lambda_{k+1}^\beta p_{k+1}^2 + \beta_k \log(k+1) \frac{\trace(\tilde{\Sigma}_{>k})}{n}) \\
        =& \ O(\lambda_{k+1}^\beta p_{k+1}^2 (1 + \beta_k \log(k+1) \frac{k}{n})) \\
        =& O(\lambda_{k+1}^{\beta} p_{k+1}^2),
    \end{align*}
    where $\tilde{\Sigma} := \mathcal{A}^2 \Sigma^{\beta}$, $r_k := \frac{\trace(\tilde{\Sigma}_{>k})}{p_{k+1}^2\lambda_{k+1}^\beta}$.\\
    The last inequality is because $\frac{k \log(k+1)}{n} = o(1)$.

    Now we bound the probability of this holds, we can derive $r_k =  \frac{k^{1 - 2p - \lambda \beta}}{(k+1)^{-2p - \lambda \beta}}= \Theta(k)$,  $1 - 4\frac{r_k}{k^4} \exp(\frac{-c'}{\beta_k} \frac{n}{r_k}) = 1 - O(\frac{1}{k^3}) \exp(-\Omega(\frac{n}{k}))$.
\end{proof}


\section{Implementation Details of Experiments}
\label{appendix:exp} 

\paragraph{Experimental Setup} In all experiments, we follow a similar setup as in \cite{mallinar2022benign}. We use Adam optimizer with learning rate 5e-3 for regression problems and 1e-4 for PINN problems, where both are optimally tuned. Following \cite{mallinar2022benign}, we do not apply weight decay, dropout or other regularization techniques, to encourage interpolation. We train with full-batch gradient descent, where the learning rate schedule is StepLR with step size 3000 and gamma 0.8. In both experiments, we train for 100000 iterations to allow convergence. All models considered are sufficiently overparametrized.
 
\paragraph{Poisson Equation} we consider the Poisson equation $u = \Delta f$ on $\Omega = [0,2]^2$ with Dirichlet boundary condition on $\partial \Omega$, where the ground truth $f(x_1,x_2) = \sin(\pi x_1) \sin(\pi x_2)$, where the training data points $\{(x_i, y_i)\}_{i=1}^{n}$ are sampled uniformly from $\Omega$, and $y_i = \Delta f(x_i) + \varepsilon$ with $\varepsilon \sim \mathcal{N}(0, \sigma^2)$. The training loss function is $\min_{\theta} \hat{L}(\theta) := \frac{1}{n} \sum_{i = 1}^{n}{(\Delta 
 \hat{f}(x_i; \theta) - y_i)^2}$. To satisfy the boundary condition, we enforce $\hat{f}(x) = x_1 (x_1 - 2) x_2 (x_2 - 2) f_{\text{NN}}(x)$, where $f_{\text{NN}}$ is the neural network  \cite{liang2021reproducingactivationfunctiondeep}. For clean test loss, we use $\frac{1}{n} \sum_{i = 1}^{n}(\hat{f}(x_i, \theta) - f(x_i))^2$ to match the definition of excess risk, where $\{ (x_i, y_i) \}$ is re-sampled from $\Omega$. We perform Kaiming Initialization on neural networks.


For the experiment verifying the effect of smoothness of the inductive bias, we uses the one-layer wide neural network with width 10000 (we choose one-layer here to avoid explosion of output due to $\text{ReLU}^4$), and vary different activation functions $\text{ReLU}$,$\text{ReLU}^2$,$\text{ReLU}^3$ and $\text{ReLU}^4$. Noise level $\sigma^2$ is set as 0.1. We vary sample size 50, 100, 500, 1000 and plot the convergence rate using different activation functions.

For the experiment verifying benign over-fitting of Physics-Informed interpolator, we train sufficient iterations to ensure interpolation into the noise.
The used learning model here is a two-layer wide neural network with hidden size 1024, with sample size 500, using ReLU as activation function. We vary noise variance 1e-1, 3e-1, 5e-1, 1e+0, 3e+0, 5e+0, and plot the clean test loss against noise variance.

For the figure of visualizing landscape, we use a two-layer wide neural network with hidden size 1024, with sample size 500, using ReLU as activation function and with noise variance 5 and train it until it interpolates into the noise. We using the 100x100 grid on $[0,2]^2$ to display landscape of ground truth $f$ and model output $\hat{f}$, also we display $\Delta f$ and $\Delta \hat{f}$, where red dots are the training set points.


\paragraph{Verifying the Benign Overfitting Beyond Co-diagonalization Assumption}  We provide additional experiments on the PDE 
\[
-\nabla \cdot (|x| \nabla u) = f \quad \text{for } x \in \Omega \text{ and } u = 0 \text{ for } x \in \partial \Omega
\]
where the commutative assumption no longer holds. Our result demonstrates that it still verifies our two findings. Here we consider solving a solution \( u(x) = \sin(2\pi(1 - |x|)) \) defined on 
\(\Omega = \{ x : |x| < 1 \}\). $\hat{u}(x; \theta) = (1 - |x|)\ u_{\text{NN}}(x; \theta) $ to automatically satisfy the boundary condition, where \( u_{\text{NN}} \) is the neural network. We maintain the same configurations as previous experiments.

\begin{figure}
    \centering
    \includegraphics[width=0.5\linewidth]{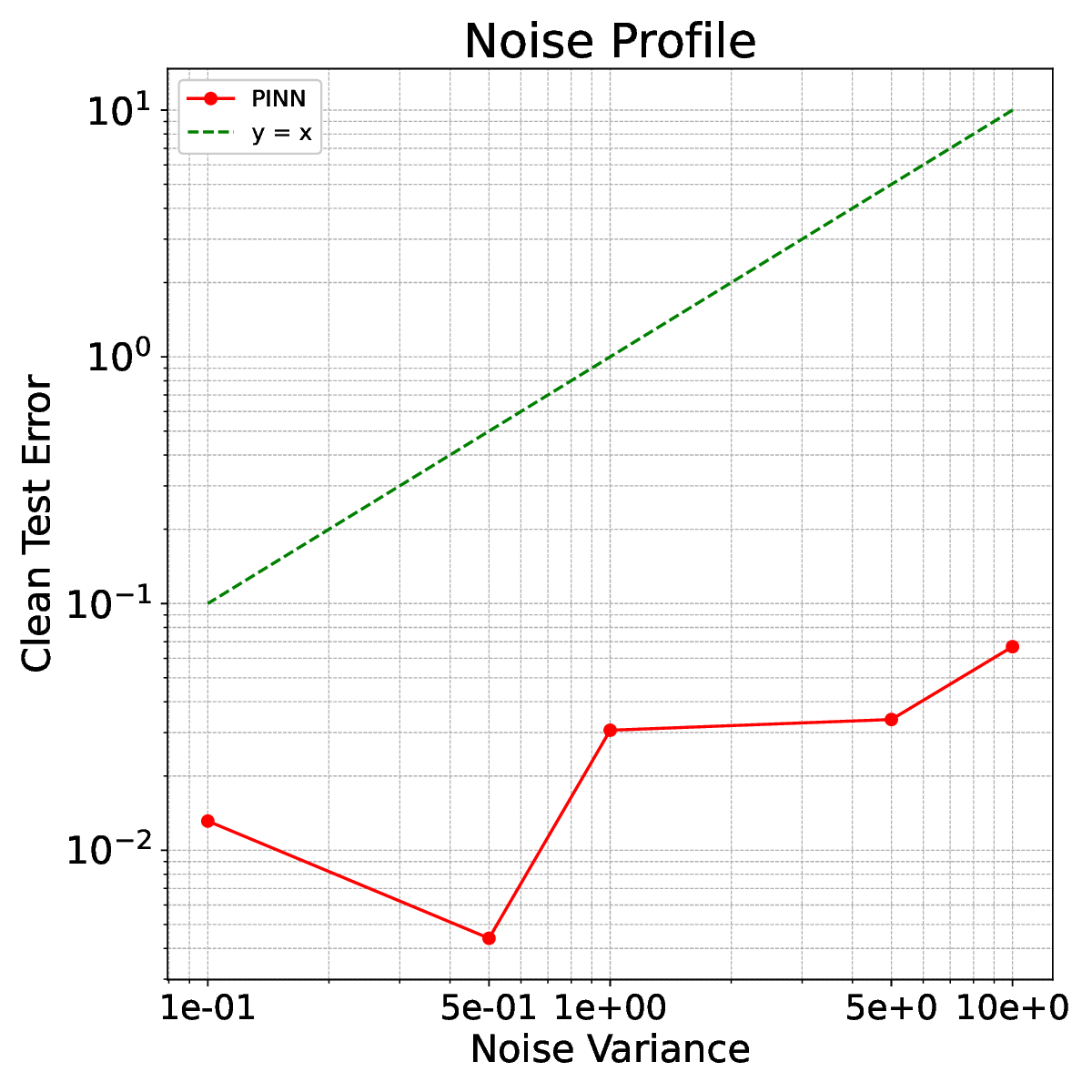}
    \caption{  We again verified our findings using PDE with solution of low regularity at the origin. The noise profile of Physics-informed interpolator exhibits benign overfitting, unlike the regression interpolator. }
\end{figure}

\end{document}